\newcolumntype{?}{!{\vrule width 1pt}}
\title{\papertitle}
\author{%
Richard Nock \\
    Google Research \\
    \texttt{richardnock@google.com}
    \And
    Mathieu Guillame-Bert \\
    Google \\
    \texttt{gbm@google.com}
  % Coauthor \\
  % Affiliation \\
  % Address \\
  % \texttt{email} \\
  % \And
  % Coauthor \\
  % Affiliation \\
  % Address \\
  % \texttt{email} \\
  % \And
  % Coauthor \\
  % Affiliation \\
  % Address \\
  % \texttt{email} \\
  % \And
  % Coauthor \\
  % Affiliation \\
  % Address \\
  % \texttt{email} \\
}
\begin{document}

\maketitle

\begin{abstract}
  We focus on generative AI for a type of data that still represent one of the most prevalent form of data: tabular data. Our paper introduces two key contributions: a new powerful class of forest-based models fit for such tasks and a simple training algorithm with strong convergence guarantees in a boosting model that parallels that of the original weak / strong supervised learning setting. This algorithm can be implemented by a few tweaks to the most popular induction scheme for decision tree induction (\textit{i.e. supervised learning}) with two classes. Experiments on the quality of generated data display substantial improvements compared to the state of the art. The losses our algorithm minimize and the structure of our models make them practical for related tasks that require fast estimation of a density given a generative model and an observation (even partially specified): such tasks include missing data imputation and density estimation. Additional experiments on these tasks reveal that our models can be notably good contenders to diverse state of the art methods, relying on models as diverse as (or mixing elements of) trees, neural nets, kernels or graphical models. 

\end{abstract}

\section{Introduction}\label{sec-intro}

  \setlength\tabcolsep{1pt}
  \newcommand{\onethird}{0.15}

  There is a substantial resurgence of interest in the ML community around tabular data, not just because it is still one of the most prominent kind of data available \cite{cmm+NF}: it is recurrently a place of sometimes heated debates on what are the best model architectures to solve related problems. For example, even for well-posed problems with decades of theoretical formalization like supervised learning \cite{sEO}, after more than a decade of deep learning disruption \cite{kshIC}, there is still much ink spilled in the debate decision forests vs neural nets \cite{mkvprgwWD}. Generative AI\footnote{We use this now common parlance expression on purpose, to avoid confusion with the other "generative" problem that consists in modelling densities \cite{cpdJI,rwID}.} makes no exception. Where the consensus has long been established on the best categories of architectures for data like image and text (neural nets), tabular data still flourishes with a variety of model architectures building up -- or mixing -- elements from knowledge representation, logics, kernel methods, graph theory, and of course neural nets \cite{cATOK,dALA,gpmxwocbGA,ngGT,pOE,tavCA,vpcPC,wbkwAR} (see Section \ref{sec-rel}). Because of the remarkable nature of tabular data where a single variable can bring considerable information about a target to model, each of these classes can be a relevant choice at least in \textit{some} cases (such is is the conclusion of \cite{mkvprgwWD} in the context of supervised learning). A key differentiator between model classes is then training and the formal guarantees it can provide \cite{ngGT,wbkwAR}.\\
\noindent \textbf{In this paper}, we introduce new generative models based on sets of trees that we denote as \textit{generative forests} (\geot), along with a training algorithm which has two remarkable features: it is extremely simple and brings strong convergence guarantees in a weak / strong learning model that parallels that of the original boosting model of Valiant's PAC learning \cite{kTO}. These guarantees improve upon the best state of the art guarantees \cite{wbkwAR,ngGT}. Our training algorithm, \topdownGT, supports training from data with missing values and is simple enough to be implementable by a few tweaks on the popular induction scheme for \textit{decision tree induction} with two classes, which is supported by a huge number of repositories / ML software implementing algorithms like CART or C4.5 \cite{bfosCA,scikit-learn,cgXA,qC4,wfDM}. From the model standpoint, generative forests bring a sizeable combinatorial advantage over its closest competitors, generative trees \cite{ngGT} (see Table \ref{circgauss-intro} for an example) and adversarial random forests \cite{wbkwAR}.\\
Experiments on a variety of simulated or readily available domains display that our models can substantially improve upon state of the art, with models of ours as simple as a set of stumps potentially competing with other approaches building much more complex models.\\
The models we build have an additional benefit: it is computationally easy to compute the full density given an observation, \textit{even partially specified}; hence, our generative models can also be used for side tasks like missing data imputation or density estimation. Additional experiments clearly display that our approach can be a good contender to the state of the art. To save space and preserve readability, all proofs and additional experiments and results are given in an Appendix. 
\begin{table}
  \centering
  \resizebox{0.5\columnwidth}{!}{\begin{tabular}{ccc} \Xhline{2pt}
    domain & 1 tree, 50 splits & \geot, 50 stumps\\
    \includegraphics[trim=0bp 0bp 0bp 0bp,clip,width=0.2\textwidth]{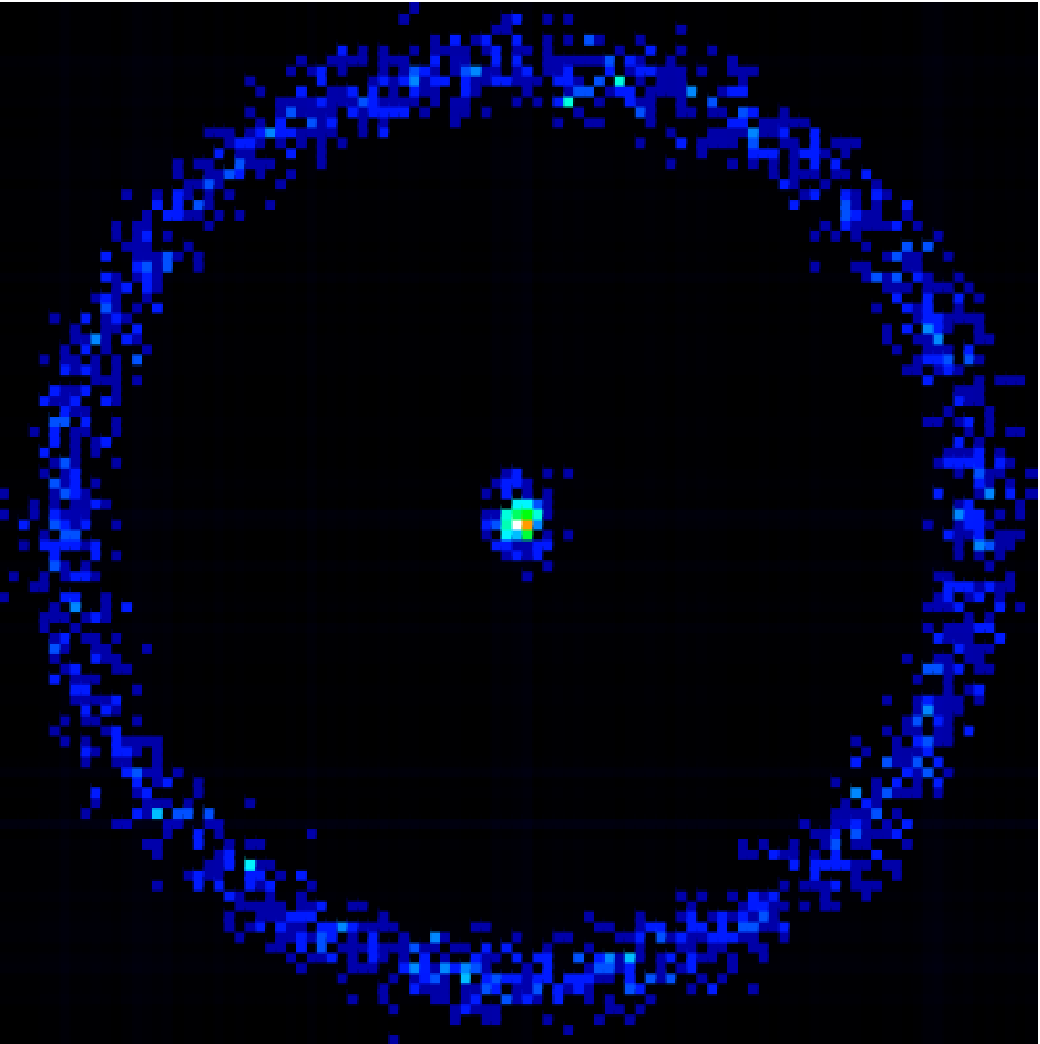}
    & \includegraphics[trim=0bp 0bp 0bp 0bp,clip,width=0.2\textwidth]{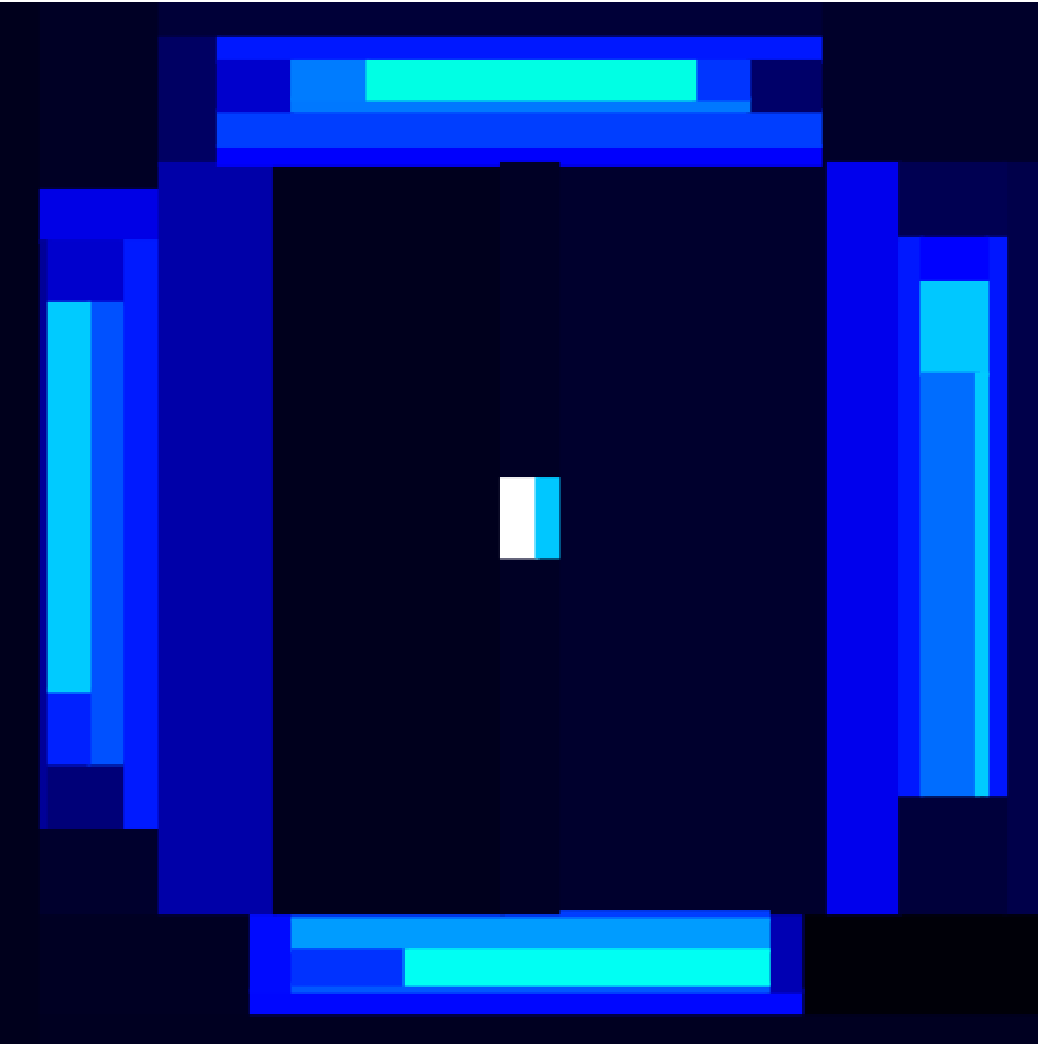}& \includegraphics[trim=0bp 0bp 0bp 0bp,clip,width=0.2\textwidth]{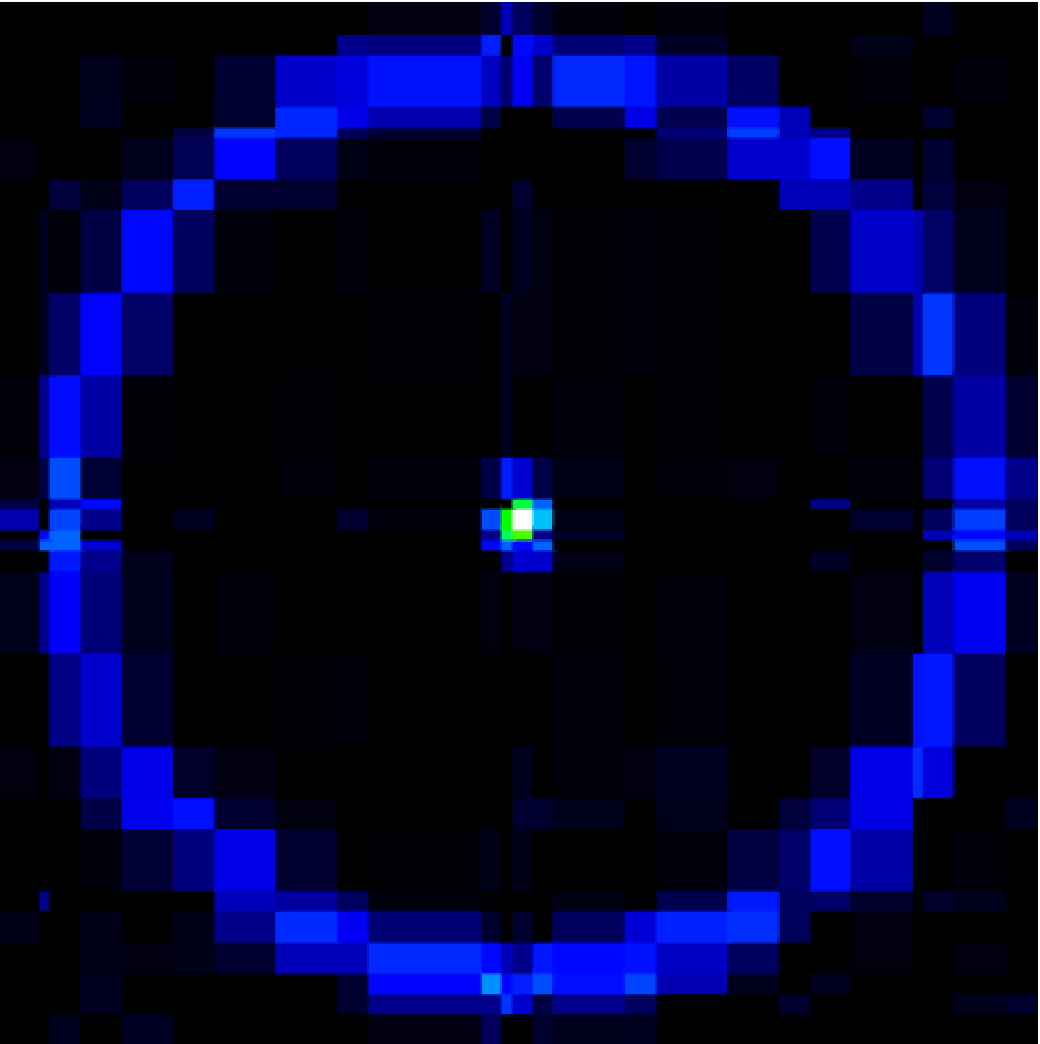}\\  \Xhline{2pt}
    \end{tabular}}
    \caption{\textit{Left}: domain \texttt{circgauss}. \textit{Center}: density learned by a generative forest (\geot) consisting of a single tree, boosted for a small number (50) of iterations. \textit{Right}: density learned by a \geot~consisting of 50 boosted tree stumps (\textit{Center} and \textit{Right} learned using \topdownGT). In a domain $\mathcal{X}$ with dimension $d$, a single tree with $n$ splits can only partition the domain in $n+1$ parts. On the other hand, a set of $n$ stumps in a \geot~can boost this number to $n^{\tilde{\Omega}(d)}$ (the tilda omits $\log$ dependencies), which explains why the right density appears so much better than the central one, even when each tree is just a stump.}
    \label{circgauss-intro}
\bignegspace
\bignegspace
\negspace
  \end{table}

\bignegspace
\section{Related work}\label{sec-rel}
\bignegspace

It would not do justice to the large amount of work in the field of "Generative AI" for tabular data to just sample a few of them, so we devote a part of the Appendix to an extensive review of the state of the art. Let us just mention that, unlike for unstructured data like images, there is a huge variety of model types, based on trees \cite{cpdJI,ngGT,wbkwAR}, neural networks \cite{gpmxwocbGA,geBG,kwAE,xscvMT}, probabilistic circuits \cite{cvvPC,vpcPC,spdSP}, kernel methods \cite{cATOK,pOE}, graphical models \cite{tavCA} (among others: note that some are in fact hybrid models). The closest approaches to ours are \cite{wbkwAR} and \cite{ngGT}, because the models include trees with a stochastic activation of edges to pick leaves, and a leaf-dependent data generation process. While \cite{ngGT} learn a single tree, \cite{wbkwAR} use a way to generate data from a set of trees -- called an adversarial random forest -- which is simple: sample a tree, and then sample an observation from the tree. The model is thus simple but not economical: each observation is generated by a single tree only, each of them thus having to represent a good sampler in its own. In our case, generating one observation makes use of \textit{all} trees (Figure \ref{fig:generation-sketch}, see also Section \ref{sec-mod}). We also note that the theory part of \cite{wbkwAR} is limited because it resorts to statistical consistency (infinite sample) and Lipschitz continuity of the target density, with second derivative continuous, square integrable and monotonic. Similarly, \cite{ngGT} resort to a wide set of proper losses, but with a symmetry constraint impeding in a generative context. As we shall see, our theoretical framework does not suffer from such impediments.

\setlength\tabcolsep{3pt}
\begin{figure}
  \centering
  \resizebox{0.6\columnwidth}{!}{\begin{tabular}{?c?c?}\Xhline{2pt}
    ARFs \cite{wbkwAR} & \geot~(this paper)\\
    \includegraphics[trim=80bp 38bp 30bp 80bp,clip,width=0.45\columnwidth]{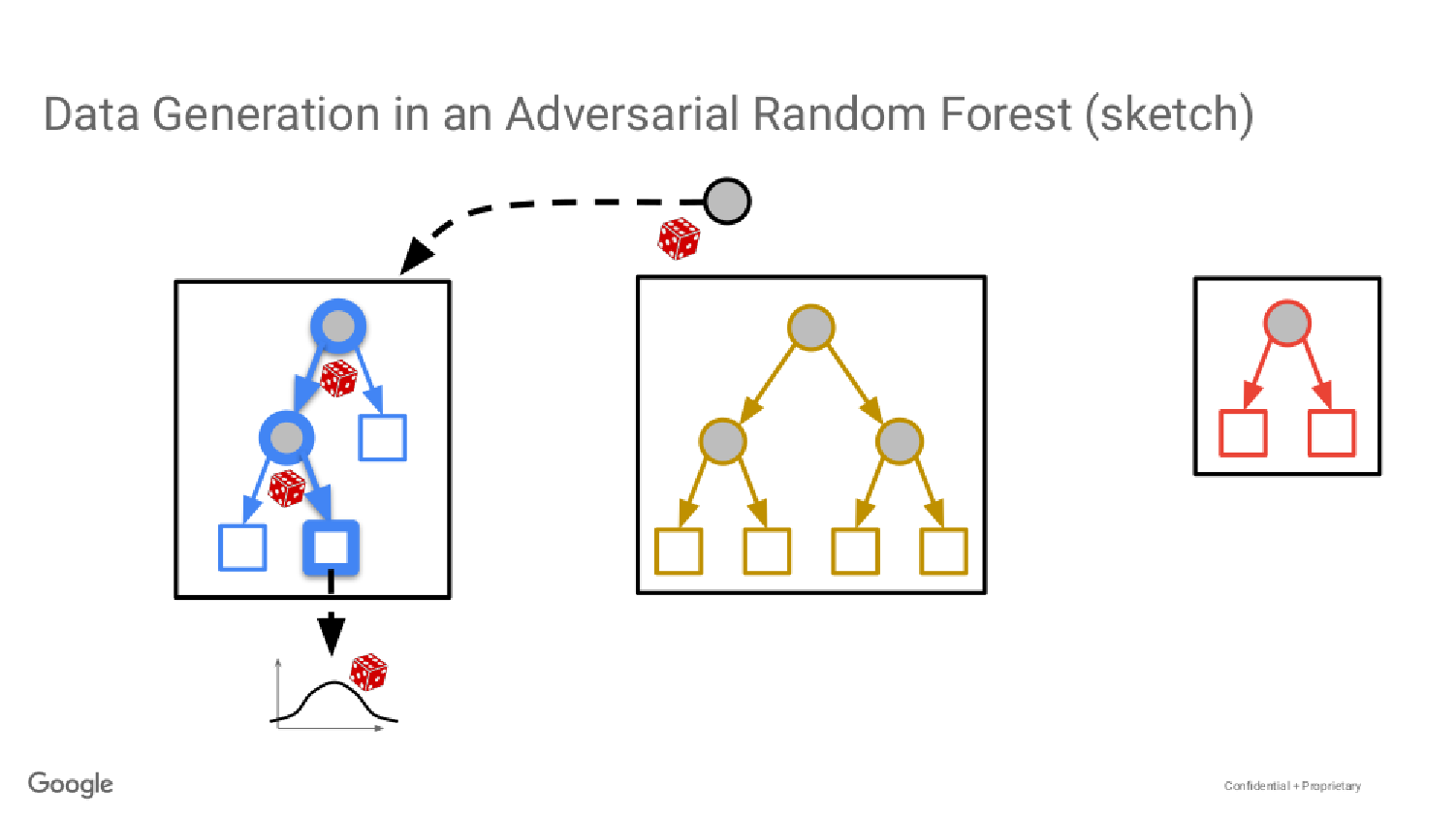} & \includegraphics[trim=80bp 38bp 30bp 80bp,clip,width=0.45\columnwidth]{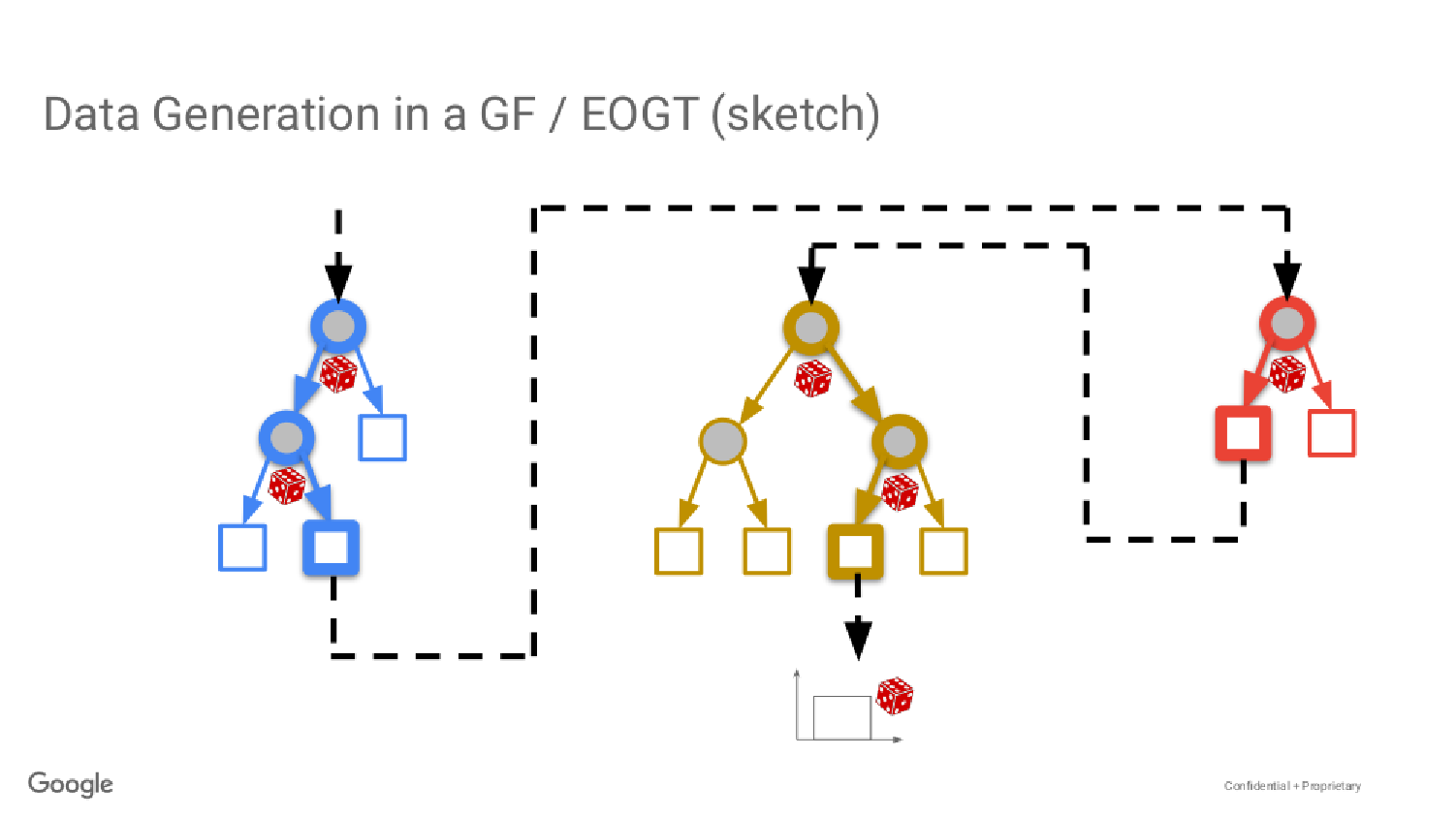} \\ \Xhline{2pt}
    \end{tabular}}
\negspace
    \caption{Sketch of comparison of two approaches to generate one observation, using Adversarial Random Forests \cite{wbkwAR} (left) and using generative forests, \geot~(right, this paper). In the case of Adversarial Random Forests, a tree is sampled uniformly at random, then a leaf is sampled in the tree and finally an observation is sampled according to the distribution "attached" to the leaf. Hence, only one tree is used to generate an observation. In our case, we leverage the combinatorial power of the trees in the forest: \textit{all trees} are used to generate one observation, as each is contributing to one leaf. Figure \ref{fig:generation-gt-eogt-smplesupport} provides more details on generation using \geot.}
    \label{fig:generation-sketch}
\bignegspace
\bignegspace
\negspace
  \end{figure}

\bignegspace
\section{Basic definitions}\label{sec-def}
\bignegspace

Perhaps surprisingly at first glance, this Section introduces generative and \textit{supervised} loss functions. Indeed, our algorithm, which trains a data generator and whose overall convergence shall be shown on a generative loss, turns out to locally optimize a \textit{supervised} loss. For the interested reader, the key link, which is of independent interest since it links in our context losses for supervised and generative learning, is established in Lemma \ref{lem-losses} (\supplement).\\
\noindent $\forall k\in \mathbb{N}_{>0}$, let $[k] \defeq \{1, 2, ..., k\}$. Our notations follow \cite{rwID}. Let $\binartask \defeq (\prior, \meas{A}, \meas{B})$ denote a \textit{binary task}, where $\meas{A}, \meas{B}$ (and any other measure defined hereafter) are probability measures with the same support, also called \textit{domain}, $\mathcal{X}$, and $\prior \in [0,1]$ is a \textit{prior}. $\meas{M} \defeq \prior \cdot \meas{A} + (1-\prior) \cdot \meas{B}$ is the corresponding mixture measure. For the sake of simplicity, we assume $\mathcal{X}$ bounded hereafter, and note that tricks can be used to remove this assumption \cite[Remark 3.3]{ngGT}. In tabular data, each of the $\mathrm{dim}(\mathcal{X})$ features can be of various \textit{types}, including categorical, numerical, etc., and associated to a natural measure (counting, Lebesgue, etc.) so we naturally associate $\mathcal{X}$ to the product measure, which can thus be of mixed type. We also write $\mathcal{X} \defeq \times_{i=1}^d \mathcal{X}_i$, where $\mathcal{X}_i$ is the set of values that can take on variable $i$. Several essential measures will be used in this paper, including $\coloru$, the uniform measure, $\colorg$, the measure associated to a generator that we learn, $\colorr$, the \textit{empirical} measure corresponding to a training sample of observations. Like in \cite{ngGT}, we do not investigate generalisation properties.\\
\noindent \textbf{Loss functions} There is a natural problem associated to binary task $\binartask$, that of estimating the probability that an arbitrary observation $\ve{x} \in \mathcal{X}$ was sampled from $\meas{A}$ -- call such \textit{positive} -- or $\meas{B}$ -- call these \textit{negative} --. To learn a \textit{supervised} model $\mathcal{X} \rightarrow [0,1]$ for such a \textit{class probability estimation} (\cpe) problem, one usually has access to a set of examples where each is a couple (observation, class), the class being in set $\mathcal{Y} \defeq \{-1,1\}$ (=$\{$negative, positive$\}$). Examples are drawn i.i.d. according to $\binartask$. Learning a model is done by minimizing a loss function: when it comes to \cpe, any such \cpe~loss \cite{bssLF} is some $\ell : \mathcal{Y} \times [0,1]
\rightarrow \mathbb{R}$ whose expression can be split according to
\textit{partial} losses $\partialloss{1}, \partialloss{-1}$, $\ell(y,u) \defeq \iver{y=1}\cdot \partialloss{1}(u) +
                     \iver{y =-1}\cdot \partialloss{-1}(u)$. Its (pointwise) \emph{Bayes risk} function
is
the best achievable loss
when labels are drawn with a particular positive base-rate,
\begin{eqnarray}
\poibayesrisk(p) \defeq
\inf_u \E_{\Y\sim \bernoulli(p)} \properloss(\Y, u),\label{defPOIBAYESRISK}
\end{eqnarray}
where $\bernoulli\defeq$ Bernoulli random variable for class 1. A fundamental property for a \cpe~loss is \textit{\textbf{properness}}, encouraging to guess ground truth: $\ell$ is proper iff $\poibayesrisk(p) = \E_{\Y\sim \bernoulli(p)} \properloss(\Y, p), \forall p
\in [0,1]$,
and \textit{strictly} proper if $\poibayesrisk(p) < \E_{\Y\sim \bernoulli(p)} \properloss(\Y, u), \forall u \neq p$. Strict properness implies strict concavity of Bayes risk. For example, the square loss has $\partialsqloss{1}(u) \defeq (1-u)^2$, $\partialsqloss{-1}(u) \defeq u^2$ , and, being strictly proper, Bayes risk $\bayessqrisk(u) \defeq u(1-u)$. Popular strictly proper ML include the log and Matusita's losses. All these losses are \textit{\textbf{symmetric}} since $\partialsqloss{1}(u) = \partialsqloss{-1}(1-u), \forall u \in (0,1)$ and \textit{\textbf{differentiable}} because both partial losses are differentiable. \\
In addition to \cpe~losses, we introduce a set of losses relevant to generative approaches, that are popular in density ratio estimation \cite{moLL,skML}. For any differentiable and convex $F: \mathbb{R} \rightarrow \mathbb{R}$, the Bregman divergence with generator $F$ is $D_F(z\|z') \defeq F(z) - F(z') - (z-z')F'(z')$. Given function $g : \mathbb{R} \rightarrow \mathbb{R}$, the generalized perspective transform of $F$ given $g$ is $\check{F}(z) \defeq g(z) \cdot F  \left(z/g(z)\right)$, $g$ being implicit in notation $\check{F}$ \cite{mOAI,mOAII,nmoAS}.  The \textit{Likelihood ratio risk} of $\meas{B}$ with respect to $\meas{A}$ for loss $\ell$ is
  \begin{eqnarray}
    \likelihoodratiorisk_{\ell}\left(\meas{A}, \meas{B}\right) & \defeq & \prior\cdot \expect_{\coloru} \left[ D_{\perspectivepobayesrisk}\left(\frac{\dmeas{\meas{A}}}{\dmeas{\coloru}}\left\|\frac{\dmeas{\meas{B}}}{\dmeas{\coloru}}\right. \right) \right],\label{eqLRR}
  \end{eqnarray}
  with $g(z) \defeq z + (1-\prior)/\prior$ in the generalized perspective transform. The prior multiplication is for technical convenience. $\likelihoodratiorisk_{\ell}$ is non-negative; strict properness is necessary for a key property of $\likelihoodratiorisk_{\ell}$: $\likelihoodratiorisk_{\ell} = 0$ iff $\meas{A} = \meas{B}$ almost everywhere \cite{ngGT}.

\bignegspace
\section{Generative forests: models and data generation}\label{sec-mod}
\begin{figure}
  \centering
\includegraphics[trim=0bp 30bp 0bp 130bp,clip,width=0.6\columnwidth]{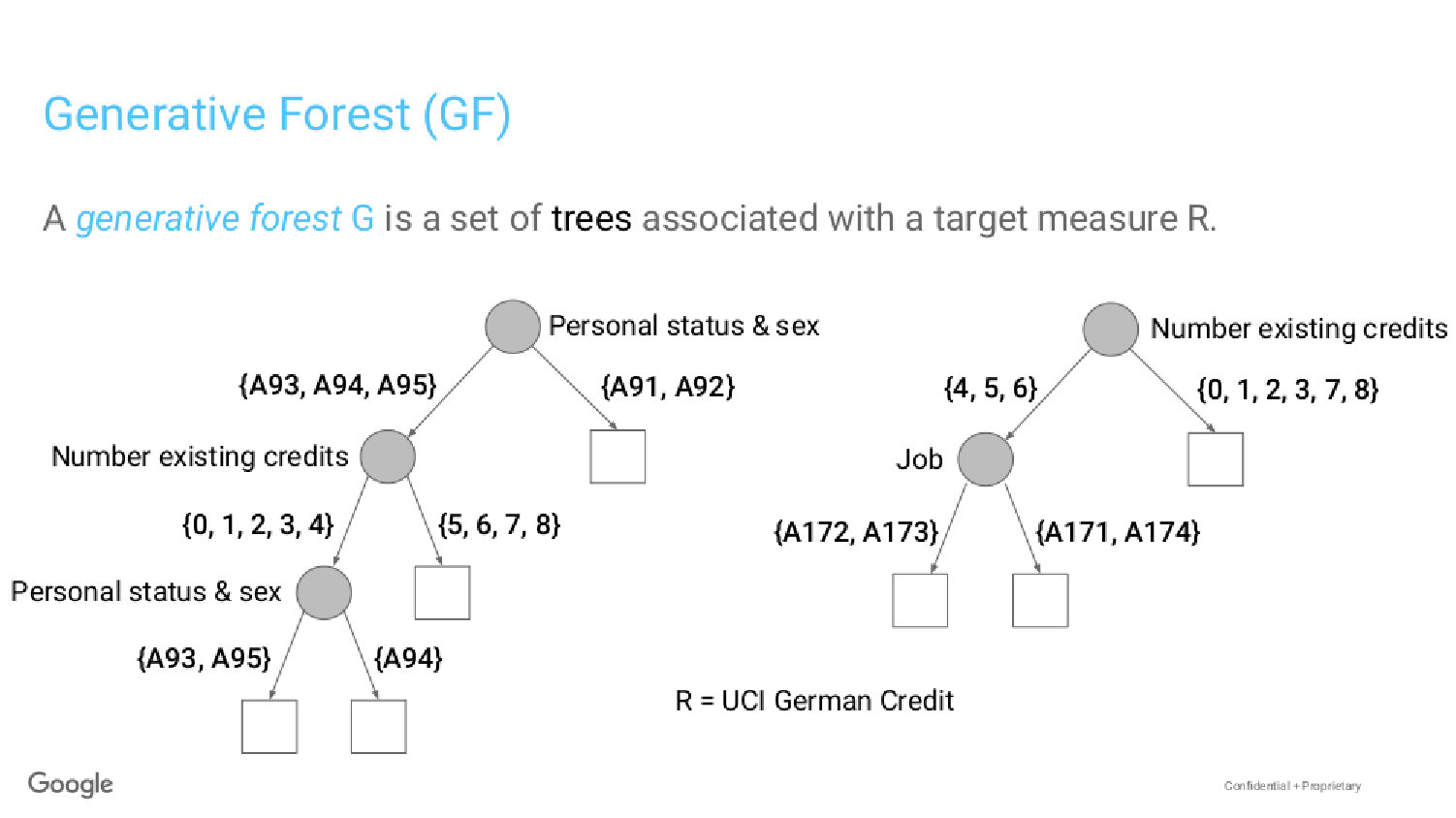} 
\negspace
    \caption{A \geot~($T=2$) associated to UCI German Credit. Constraint \textbf{(C)} (see text) implies that the domain of "Number existing credits" is $\{0, 1, ..., 8\}$, that of "Job" is $\{$A171, A172, A173, A174$\}$, etc. .}
    \label{fig:gf-1}
\bignegspace
  \end{figure}
\begin{table*}
\centering
\makebox[0pt][c]{\parbox{\textwidth}{%
    \resizebox{.9\columnwidth}{!}{    \begin{minipage}[t]{0.45\hsize}\centering
        \begin{algorithm}[H]
\caption{\initsampling($\{\tree_t\}_{t=1}^{T}$)}\label{alg-initsampling}
\begin{algorithmic}
  \STATE  \textbf{Input:} Trees $\{\tree_t\}_{t=1}^{T}$ of a \geot;
  \STATE  Step 1 : \textbf{for} $t\in [T]$
  \STATE  \hspace{1cm} $\tree_t.\node^\star \leftarrow \mathrm{root}_t$;
  \STATE  \hspace{1cm} $\tree_t.\texttt{done}\leftarrow \iver{\tree_t.\node^\star \in \leafset(\tree_t)};$
\end{algorithmic}
\end{algorithm}
    \end{minipage}
    \hfill
    \begin{minipage}[t]{0.55\hsize}\centering
        \begin{algorithm}[H]
\caption{\starupdate($\Upsilon, \mathcal{C}, \colorr$)}\label{alg-starupdate}
\begin{algorithmic}
    \STATE  \textbf{Input:} Tree $\Upsilon$, subset $\mathcal{C} \subseteq \mathcal{X}$, measure $\colorr$; 
  \STATE  Step 1 : \textbf{if} $\bigbernoulli\left(p_{\colorr}\left[\xxset{\Upsilon.\rightchild{\node}^\star} \cap \mathcal{C} | \mathcal{C} \right]\right) = 1$ \textbf{then}
 \STATE  \hspace{1.5cm} $\mathcal{C} \leftarrow  \xxset{\Upsilon.\rightchild{\node}^\star} \cap \mathcal{C} $; $\Upsilon.\node^\star \leftarrow \Upsilon.\rightchild{\node}^\star$;
 \STATE  \hspace{1.1cm} \textbf{else}
 \STATE  \hspace{1.5cm} $\mathcal{C} \leftarrow \xxset{\Upsilon.\leftchild{\node}^\star} \cap \mathcal{C} $; $\Upsilon.\node^\star \leftarrow \Upsilon.\leftchild{\node}^\star$;
 \STATE  Step 2 : \textbf{if} $\Upsilon.\node^\star\in \Upsilon.\leafset$ \textbf{then} $\Upsilon.\texttt{done}\leftarrow \texttt{true};$
\end{algorithmic}
\end{algorithm}
    \end{minipage}}%
}}
\bignegspace
\end{table*}
\noindent\textbf{Architecture} We first introduce the basic building block of our models, \textit{trees}.
\negspace
\begin{definition}\label{def-tree}
    A \textbf{tree} $\tree$ is a binary directed tree whose internal nodes are labeled with an observation variable and arcs are \emph{consistently} labeled with subsets of their tail node's variable domain.
  \end{definition}
\bignegspace
\textit{Consistency} is an important generative notion; informally, it postulates that the arcs' labels define a partition of the measure's support. To make this notion formal, we proceed need a key definition. For any node $\node\in\nodeset(\tree)$ (the whole set of nodes of $\tree$, including leaves), we denote $\xxset{\node} \subseteq \mathcal{X}$ the \textit{support} of the node. The root has $\xxset{\node} = \mathcal{X}$. To get $\xxset{\node}$ for any other node, we initialize it to $\mathcal{X}$ and then descend the tree from the root, progressively updating $\xxset{\node}$ by intersecting an arc's observation variable's domain in $\xxset{\node}$ with the sub-domain labelling the arc until we reach $\node$. Then, a labeling of arcs in a tree is \textit{consistent} iff it complies with one constraint \textbf{(C)}:
\negspace
\negspace
  \begin{itemize}[noitemsep,topsep=0pt]
\item [\textbf{(C)}] for each internal node $\node$ and its left and right children $\leftchild{\node}, \rightchild{\node}$ (respectively), $\xxset{\node} = \xxset{\leftchild{\node}} \cup \xxset{\rightchild{\node}}$ and the measure of $\xxset{\leftchild{\node}} \cap \xxset{\rightchild{\node}}$ with respect to $\colorg$ is zero.
  \end{itemize}
For example, the first split at the root of a tree is such that the union of the domains at the two arcs equals the domain of the feature labeling the split (see Figure \ref{fig:gf-1}). We define our generative models.
 \begin{definition}\label{def-df-gf}
  A \textbf{generative forest} (\geot), $\colorg$, is a set of trees, $\{\tree_t\}_{t=1}^T$, associated to measure $\colorr$ (implicit in notation).
\end{definition}
\bignegspace
Figure \ref{fig:gf-1} shows an example of \geot. Following the consistency requirement, any single tree defines a recursive partition of $\mathcal{X}$ according to the splits induced by the inner nodes. Such is \textit{also} the case for a set of trees, where \textit{intersections} of the supports of tuples of leaves (1 for each tree) define the subsets:
\begin{eqnarray}
  \mathcal{P} (\colorg) \defeq \left\{\cap_{i=1}^{T}{\mathcal{X}}_{\leaf_i} \mbox{ s.t. } \leaf_i \in \leafset(\Upsilon_i), \forall i\right\} \label{defPART}
\end{eqnarray}
($\leafset(\Upsilon) \subseteq \nodeset(\tree)$ is the set of leaves of $\tree$). Notably, we can construct the elements of $\mathcal{P} (\colorg)$ using the same algorithm that would compute it for 1 tree. First, considering a first tree $\tree_1$, we compute the support of a leaf, say $\mathcal{X}_{\leaf_1}$, using the algorithm described for the consistency property above. Then, we start again with a second tree $\tree_2$ \textit{but} replacing the initial $\mathcal{X}$ by $\mathcal{X}_{\leaf_1}$, yielding $\mathcal{X}_{\leaf_1}\cap \mathcal{X}_{\leaf_2}$. Then we repeat with a third tree $\tree_3$ replacing $\mathcal{X}$ by $\mathcal{X}_{\leaf_1}\cap \mathcal{X}_{\leaf_2}$, and so on until the last tree is processed. This yields one element of $\mathcal{P} (\colorg)$.\\
\noindent \textbf{Generating one observation} Generating one observation relies on a \textit{stochastic} version of the procedure just described. It ends up in an element of $\mathcal{P} (\colorg)$ of positive measure, from which we sample uniformly one observation, and then repeat the process for another observation. To describe the process at length, we make use of two key routines, \initsampling~and \starupdate, see Algorithms \ref{alg-initsampling} and \ref{alg-starupdate}. 
\initsampling~initializes "special" nodes in each tree, that are called \textit{star nodes}, to the root of each tree (notation for a variable $v$ relative to a tree $\tree$ is $\tree.v$). Stochastic activation, performed in \starupdate, progressively makes star nodes descend in trees. When all star nodes have reached a leaf in their respective tree, an observation is sampled from the intersection of the leaves' domains (which is an element of $\mathcal{P} (\colorg)$). A Boolean flag, \texttt{done} takes value \texttt{true} when the star node is in the leaf set of the tree, indicating no more \starupdate~calls for the tree ($\iver{.}$ is Iverson's bracket, \cite{kTN}).\\
\noindent \starupdate~is called with a tree of the \geot~for which $\texttt{done}$ is false, a subset $\mathcal{C}$ of the whole domain and measure $\colorr$. The first call of this procedure is done with $\mathcal{C} = \mathcal{X}$. When all trees are marked \texttt{done}, $\mathcal{C}$ has been "reduced" to some $\samplesupport\in \mathcal{P} (\colorg)$, where the index reminds that this is the last $\mathcal{C}$ we obtain, from which we \textbf{s}ample an observation, uniformly at random in $\samplesupport$. Step 1 in \starupdate~is fundamental: it relies on tossing an unfair coin (a Bernoulli event noted $\bigbernoulli\left(p\right)$), where the head probability $p$ is just the mass of $\colorr$ in $\xxset{\Upsilon.\rightchild{\node}^\star} \cap \mathcal{C}$ \textit{relative to} $\mathcal{C}$. Hence, if $\xxset{\Upsilon.\rightchild{\node}^\star} \cap \mathcal{C} = \mathcal{C}$, $p=1$. There is a simple but important invariant (proof omitted).
\negspace
\begin{lemma}\label{lemsub}
In \starupdate, it always holds that the input $\mathcal{C}$ satisfies $\mathcal{C} \subseteq \xxset{\Upsilon.\node^\star}$.
  \end{lemma}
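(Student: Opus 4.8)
The plan is to prove Lemma~\ref{lemsub} by induction on the sequence of \starupdate~calls that act on a fixed tree $\Upsilon$, tracking the pair $(\Upsilon.\node^\star, \mathcal{C})$ across calls. The invariant to maintain is: \emph{whenever} \starupdate~is invoked on $\Upsilon$ with current argument $\mathcal{C}$, we have $\mathcal{C} \subseteq \xxset{\Upsilon.\node^\star}$, where $\node^\star$ is the star node of $\Upsilon$ \emph{at the time of that call} (i.e.\ before Step~1 moves it). Note that between two consecutive calls on $\Upsilon$ there may be interleaved calls on \emph{other} trees; those shrink $\mathcal{C}$ further but leave $\Upsilon.\node^\star$ untouched, so the invariant is only strengthened by them and it suffices to reason about the effect of a single \starupdate~call on $\Upsilon$ plus the monotone shrinking of $\mathcal{C}$.

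For the base case, after \initsampling~we have $\Upsilon.\node^\star = \mathrm{root}_t$ and the first ever \starupdate~is called with $\mathcal{C} = \mathcal{X}$; since $\xxset{\mathrm{root}_t} = \mathcal{X}$ by definition of the support, $\mathcal{C} = \mathcal{X} \subseteq \mathcal{X} = \xxset{\Upsilon.\node^\star}$ holds trivially. (More generally, any interleaved shrinking of $\mathcal{C}$ before the first call on $\Upsilon$ only makes $\mathcal{C} \subseteq \mathcal{X}$, which still gives the claim.) For the inductive step, suppose the invariant holds at the start of some call on $\Upsilon$, so $\mathcal{C} \subseteq \xxset{\Upsilon.\node^\star}$ with $\node^\star$ an internal node (if it were a leaf, \texttt{done} would be true and no call occurs). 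Step~1 sets the new pair to either $(\Upsilon.\rightchild{\node}^\star,\ \xxset{\Upsilon.\rightchild{\node}^\star}\cap\mathcal{C})$ or $(\Upsilon.\leftchild{\node}^\star,\ \xxset{\Upsilon.\leftchild{\node}^\star}\cap\mathcal{C})$. In either branch the new $\mathcal{C}'$ is an intersection with $\xxset{\cdot}$ of the new star node, so $\mathcal{C}' \subseteq \xxset{\Upsilon.\node^\star_{\mathrm{new}}}$ immediately. Any further interleaved \starupdate~calls on other trees replace $\mathcal{C}'$ by $\mathcal{C}'' \subseteq \mathcal{C}'$ while not changing $\Upsilon.\node^\star_{\mathrm{new}}$, so $\mathcal{C}'' \subseteq \xxset{\Upsilon.\node^\star_{\mathrm{new}}}$ still holds when the next call on $\Upsilon$ begins, closing the induction.

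The only mild subtlety --- and the one point worth spelling out rather than hand-waving --- is that the invariant must be phrased to survive the interleaving of calls across the $T$ trees (the sequence of tree choices being arbitrary, as emphasized around Figure~\ref{fig:generation-gt-eogt-smplesupport}), which is handled cleanly by the observation that $\mathcal{C}$ is monotonically non-increasing under set inclusion across \emph{all} \starupdate~calls (each call replaces $\mathcal{C}$ by a subset of itself, since $\xxset{\rightchild{\node}^\star}\cap\mathcal{C}\subseteq\mathcal{C}$ and likewise for the left child), whereas $\Upsilon.\node^\star$ changes \emph{only} during calls on $\Upsilon$ itself. Thus there is no real obstacle here; the proof is a one-line induction once the right invariant is stated, which is presumably why the paper omits it. I would present it in three short sentences: base case from $\xxset{\mathrm{root}} = \mathcal{X}$, inductive step from the intersection in Step~1, and the remark that interleaved calls only shrink $\mathcal{C}$.
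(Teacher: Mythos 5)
Your proof is correct. The paper explicitly omits a proof of this lemma (``proof omitted''), so there is nothing to compare against; the induction you give --- base case from $\xxset{\mathrm{root}} = \mathcal{X}$, inductive step from the intersection in Step~1 of \starupdate, together with the observation that interleaved calls on other trees only shrink $\mathcal{C}$ while leaving $\Upsilon.\node^\star$ fixed --- is complete, handles the arbitrary tree-choice sequence correctly, and is almost certainly the argument the authors had in mind.
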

\negspace
We have made no comment about the \textit{sequence} of tree choices over which \starupdate~is called. Let us call \textit{admissible} such a sequence that ends up with \textit{some} $\samplesupport \subseteq \mathcal{X}$. $T$ being the number of trees (see \initsampling), for any sequence $\ve{v}\in [T]^{D(\samplesupport)}$, where $D(\samplesupport)$ is the sum of the depths of all the star \textit{leaves} whose support intersection is $\samplesupport$, we say that $\ve{v}$ is \textit{admissible for} $\samplesupport$ if there exits a sequence of branchings in Step 1 of \starupdate, whose corresponding sequence of trees follows the indexes in $\ve{v}$, such that at the end of the sequence all trees are marked \texttt{done} and the last $\mathcal{C} = \samplesupport$. Crucially, the probability to end up in $\samplesupport$ using \starupdate, given any of its admissible sequences, is the \textit{same} and equals its mass with respect to $\colorr$.
\negspace
\begin{figure*}
  \centering
  \resizebox{0.8\textwidth}{!}{{\tiny\begin{tabular}{?c?c?c?}\Xhline{2pt}
    After \initsampling & \starupdate~on $\node^\star = {\color{blue} \symqueen}$ & \starupdate~on $\node^\star = {\color{red} \symrook}$ \\
    \includegraphics[trim=0bp 50bp 40bp 80bp,clip,width=0.3\columnwidth]{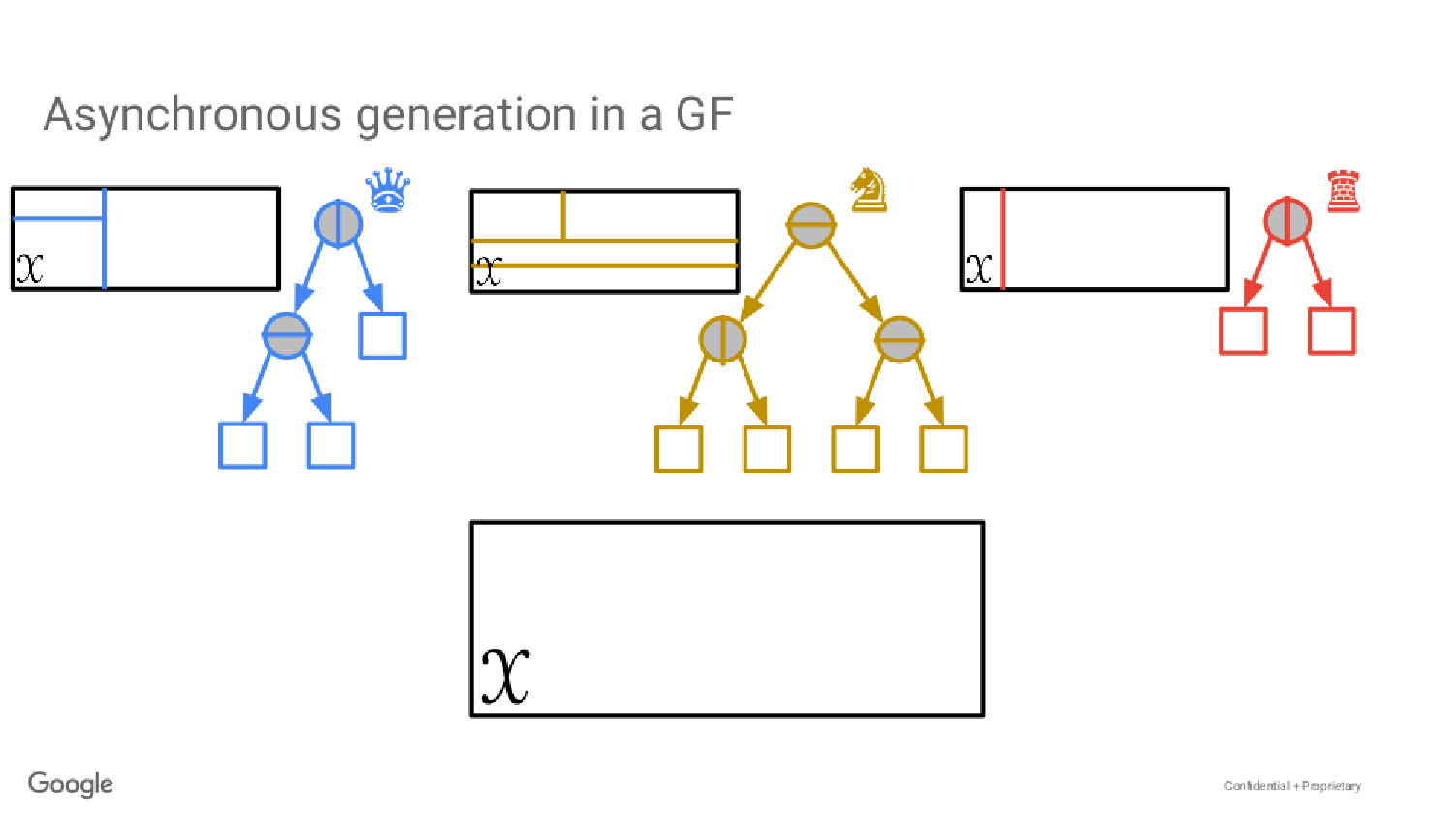} & \includegraphics[trim=0bp 50bp 40bp 80bp,clip,width=0.3\columnwidth]{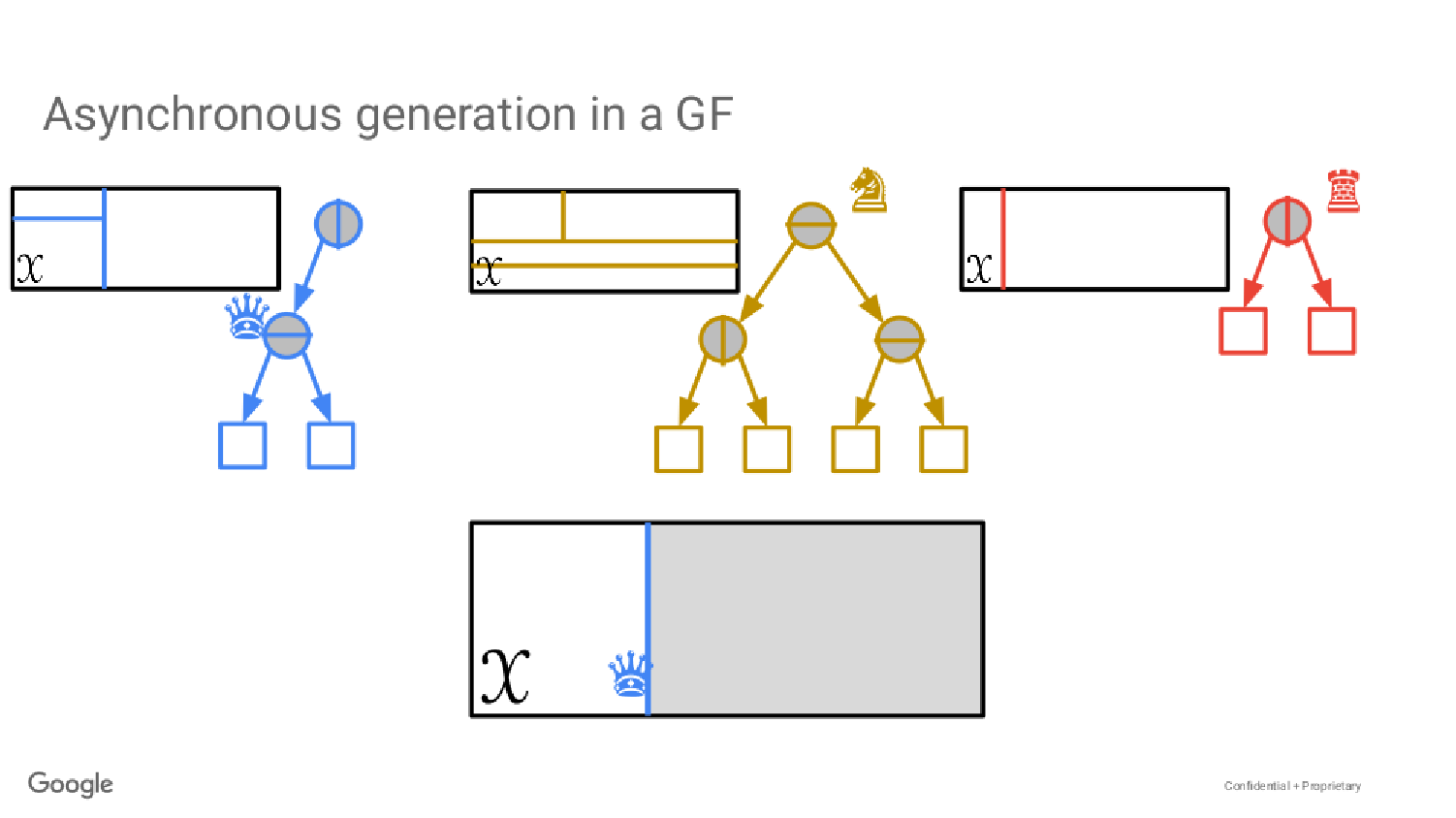} & \includegraphics[trim=0bp 50bp 40bp 80bp,clip,width=0.3\columnwidth]{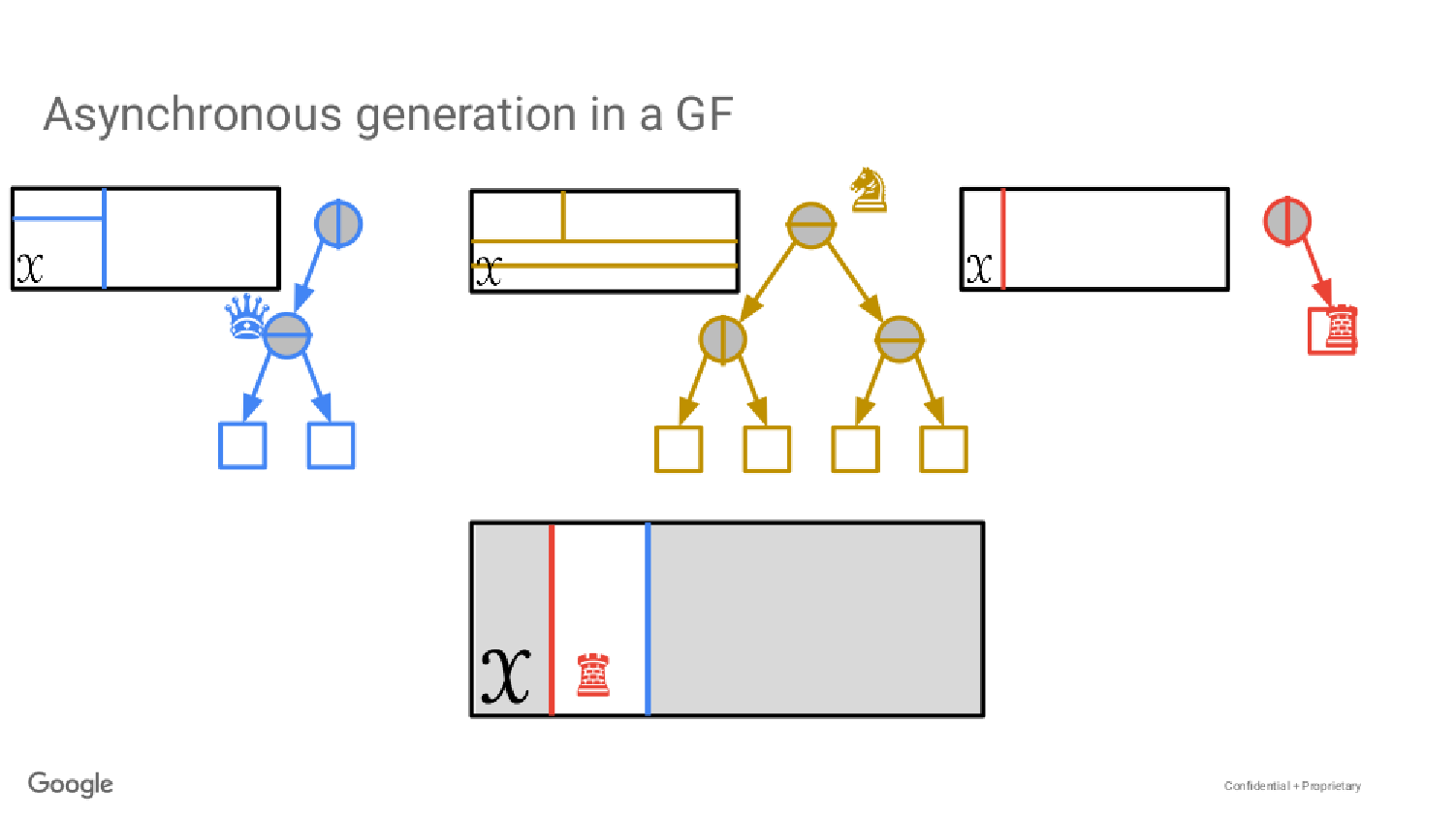} \\\Xhline{2pt}
    \starupdate~on $\node^\star = {\color{orange} \symknight}$ & \starupdate~on $\node^\star = {\color{orange} \symknight}$  & \starupdate~on $\node^\star = {\color{blue} \symqueen}$ \\
    \includegraphics[trim=0bp 50bp 40bp 80bp,clip,width=0.3\columnwidth]{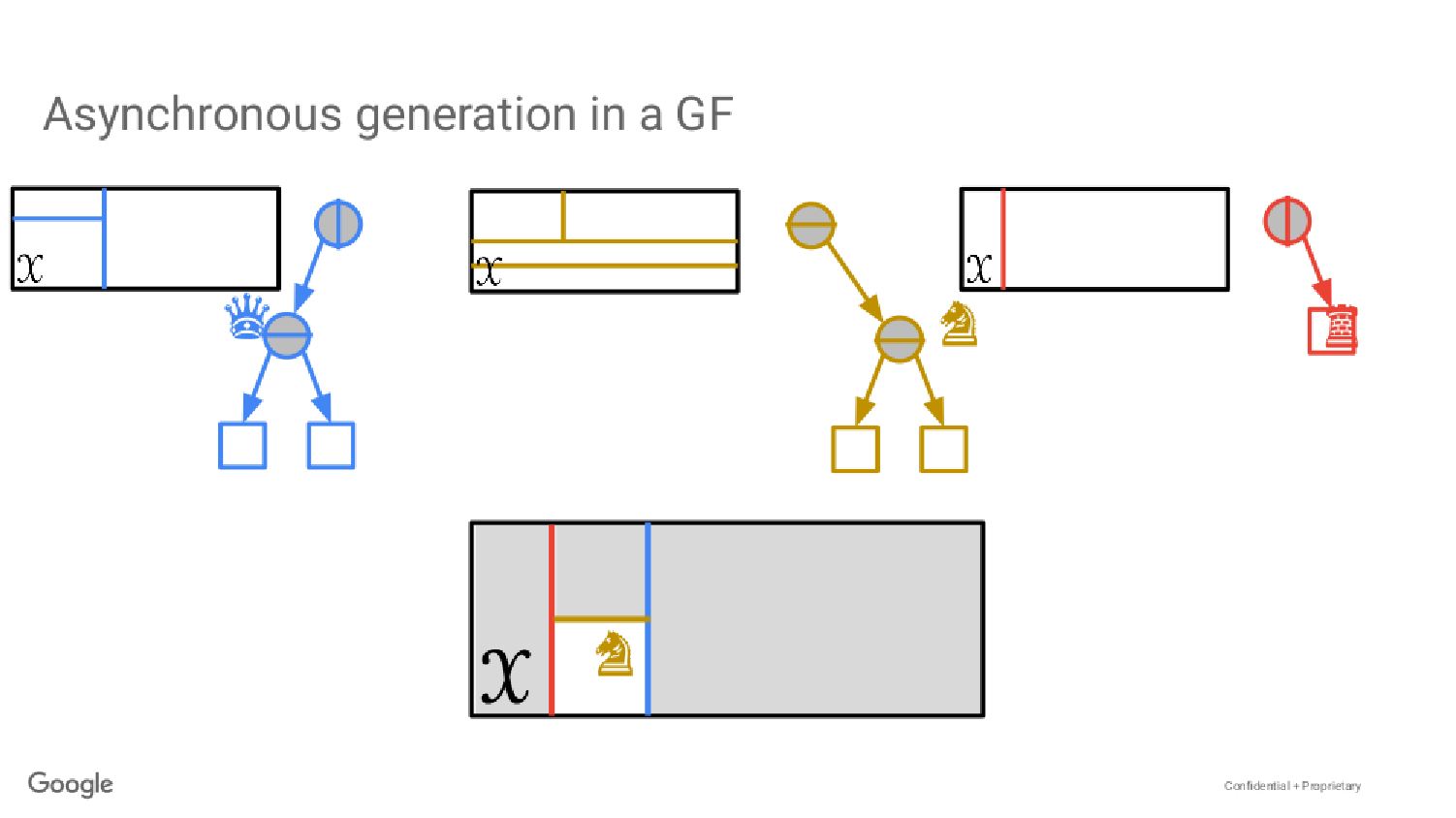} & \includegraphics[trim=0bp 50bp 40bp 80bp,clip,width=0.3\columnwidth]{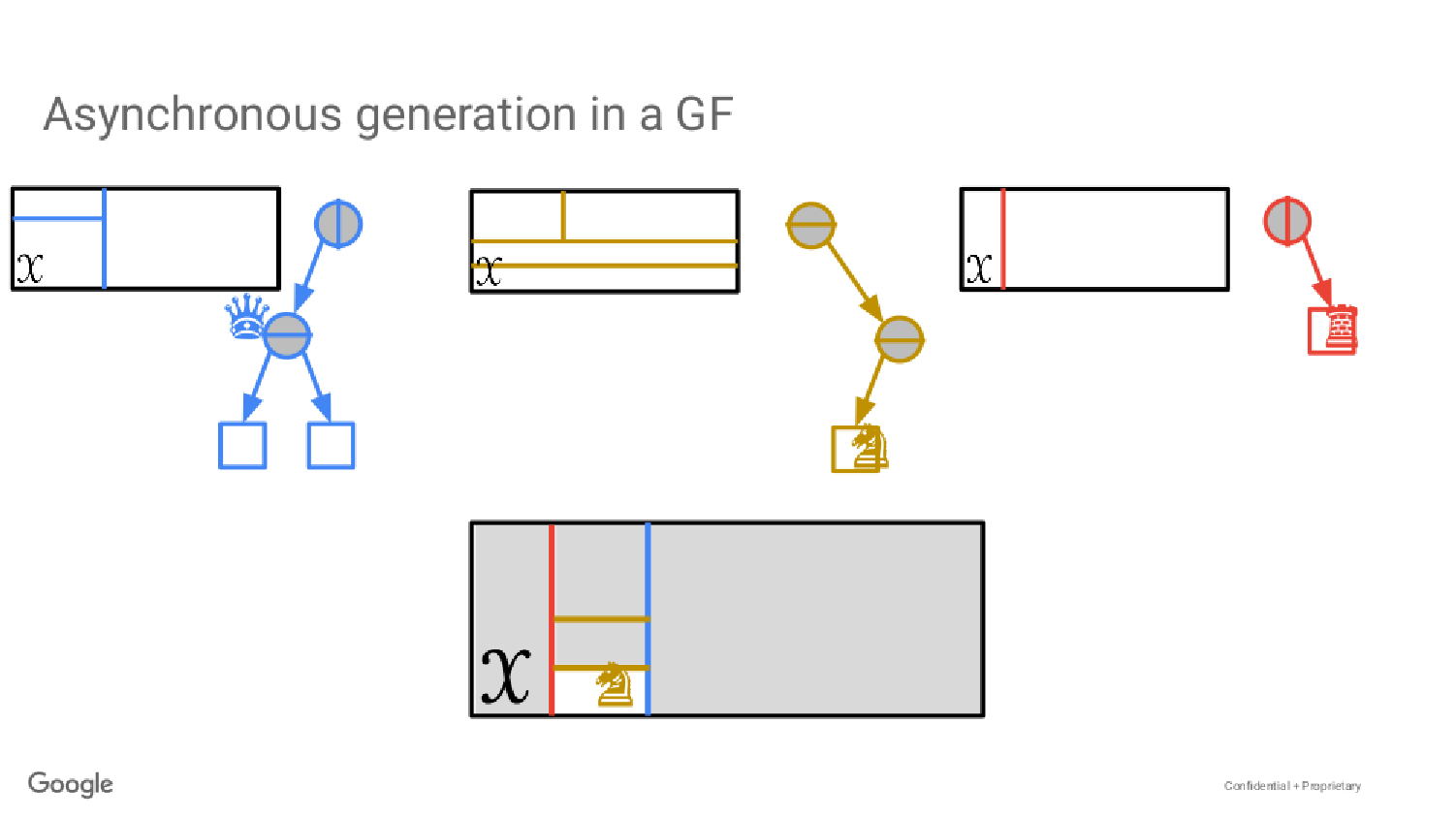} & \includegraphics[trim=0bp 50bp 40bp 80bp,clip,width=0.3\columnwidth]{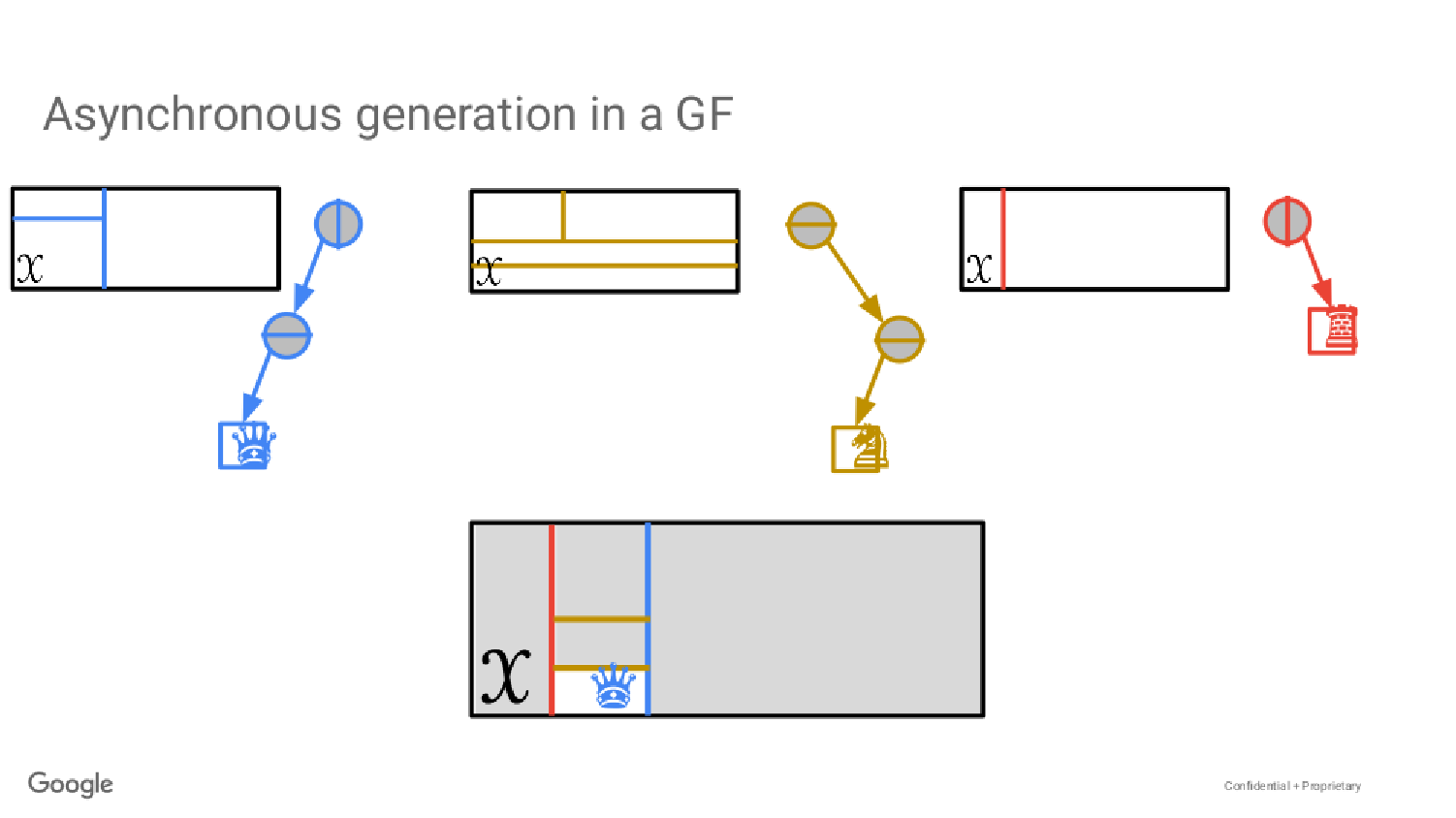} \\ \Xhline{2pt}
    \end{tabular}}}
\negspace
    \caption{From left to right and top to bottom: updates of the argument $\mathcal{C}$ of \starupdate~through a sequence of run of \starupdate~in a generative forest consisting of three trees (the partition of the domain induced by each tree is also depicted, alongside the nature of splits, vertical or horizontal, at each internal node) whose star nodes are indicated with chess pieces  ({\color{blue} \symqueen}, {\color{red} \symrook}, {\color{orange} \symknight}). In each picture, $\mathcal{C}$ is represented at the bottom of the picture (hence, $\mathcal{C} = \mathcal{X}$ after \initsampling). In the bottom-right picture, all star nodes are leaves and thus $\samplesupport=\mathcal{C}$ displays the portion of the domain in which an observation is sampled. Remark that the last star node update produced no change in $\mathcal{C}$.}
    \label{fig:generation-gt-eogt-smplesupport}
\bignegspace
\bignegspace
\end{figure*}
\begin{lemma}\label{leminv}
  For any $\samplesupport \in \mathcal{P} (\colorg)$ and admissible sequence $\ve{v}\in [T]^{D(\samplesupport)}$ for $\samplesupport$, $p_{{\color{blue}{\meas{G}}}}\left[\samplesupport | \ve{v}\right] = p_{\colorr}\left[\samplesupport\right]$.
\end{lemma}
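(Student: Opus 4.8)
The plan is to reduce $p_{{\color{blue}{\meas{G}}}}[\samplesupport \mid \ve{v}]$ to a probability about the terminal value of the running set $\mathcal{C}$ and then evaluate it as a telescoping product of the conditional empirical masses tossed in Step~1 of \starupdate. First, observe that after \initsampling~we have $\mathcal{C} = \mathcal{X}$, and after the $L \defeq D(\samplesupport)$ updates prescribed by $\ve{v}$ all trees are \texttt{done} and $\mathcal{C}$ has become some $\samplesupport' \in \mathcal{P}(\colorg)$, out of which the observation is drawn uniformly. Since $\samplesupport'$ is a full leaf–intersection, hence an atom of $\mathcal{P}(\colorg)$, the generated observation lands in the cell $\samplesupport$ iff $\samplesupport' = \samplesupport$ ($\colorg$–a.e.); both sides of the claim vanish when $p_{\colorg}[\samplesupport] = 0$, so we may assume $p_{\colorg}[\samplesupport] > 0$, in which case for each tree $i$ there is a \emph{unique} leaf $\leaf_i \in \leafset(\tree_i)$ with $\samplesupport \subseteq \xxset{\leaf_i}$ and $\samplesupport = \cap_{i=1}^T \xxset{\leaf_i}$. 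It thus suffices to prove $\Pr[\,\mathcal{C}\text{ ends at }\samplesupport \mid \ve{v}\,] = p_{\colorr}[\samplesupport]$, the randomness being over the coin tosses of Step~1 of \starupdate.

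Next, I would argue that the event $\{\mathcal{C}\text{ ends at }\samplesupport\}$ is carried by a single deterministic nested chain. Write $\ve{v} = (v_1, \dots, v_L)$; since $\ve{v}$ is admissible for $\samplesupport$, index $i$ occurs exactly $\mathrm{depth}(\leaf_i)$ times in $\ve{v}$. Proceeding by induction on the step $k$: if all branchings on tree $v_k$ so far descended towards $\leaf_{v_k}$, then (using Lemma~\ref{lemsub}, $\mathcal{C} \subseteq \xxset{\Upsilon.\node^\star}$) the star node $\node^\star$ of $\tree_{v_k}$ before step $k$ is a \emph{strict} ancestor of $\leaf_{v_k}$, so exactly one of its children, say $c$, satisfies $\xxset{c} \supseteq \xxset{\leaf_{v_k}} \supseteq \samplesupport$; by consistency (C), the sibling $c'$ has $\xxset{c'}\cap\xxset{c}$ of $\colorg$–measure zero, so $\xxset{c'} \not\supseteq \samplesupport$, and because $\mathcal{C}$ only ever shrinks, choosing $c'$ makes it impossible for $\samplesupport \subseteq \mathcal{C}$ to hold again — hence impossible to end at $\samplesupport$. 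Therefore $\{\mathcal{C}\text{ ends at }\samplesupport\}$ equals $\bigcap_{k=1}^L\{\mathcal{C} = \mathcal{C}^\star_k\}$ for the unique chain $\mathcal{X} = \mathcal{C}^\star_0 \supseteq \mathcal{C}^\star_1 \supseteq \cdots \supseteq \mathcal{C}^\star_L = \samplesupport$ obtained by always taking the forced child $c_k$, i.e. $\mathcal{C}^\star_k = \xxset{c_k} \cap \mathcal{C}^\star_{k-1}$.

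For the telescoping step, conditioned on $\{\mathcal{C} = \mathcal{C}^\star_{k-1}\}$, Step~1 of \starupdate~on $\tree_{v_k}$ moves $\mathcal{C}$ to $\xxset{\rightchild{\node^\star}} \cap \mathcal{C}^\star_{k-1}$ with probability $p_{\colorr}[\xxset{\rightchild{\node^\star}} \cap \mathcal{C}^\star_{k-1} \mid \mathcal{C}^\star_{k-1}]$ and to $\xxset{\leftchild{\node^\star}} \cap \mathcal{C}^\star_{k-1}$ with the complementary probability; by (C) these two sets partition $\mathcal{C}^\star_{k-1}$ ($\colorr$–a.e., sibling boundaries carrying no $\colorr$–mass, as for the axis–aligned splits we use), so whichever of them is $\mathcal{C}^\star_k$, the transition probability is $p_{\colorr}[\mathcal{C}^\star_k \mid \mathcal{C}^\star_{k-1}] = p_{\colorr}[\mathcal{C}^\star_k]/p_{\colorr}[\mathcal{C}^\star_{k-1}]$. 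Multiplying over $k=1,\dots,L$ gives
\[ \Pr[\,\mathcal{C}\text{ ends at }\samplesupport \mid \ve{v}\,] = \prod_{k=1}^{L} \frac{p_{\colorr}[\mathcal{C}^\star_k]}{p_{\colorr}[\mathcal{C}^\star_{k-1}]} = \frac{p_{\colorr}[\mathcal{C}^\star_L]}{p_{\colorr}[\mathcal{C}^\star_0]} = \frac{p_{\colorr}[\samplesupport]}{p_{\colorr}[\mathcal{X}]} = p_{\colorr}[\samplesupport], \]
since $\colorr$ is a probability measure on $\mathcal{X}$; combined with the reduction this is the claim, and the right–hand side is visibly independent of the admissible $\ve{v}$, which is exactly the order/tree–choice independence announced before the lemma.

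The main obstacle is the "forced branching" argument: one must establish that, for a fixed $\ve{v}$ admissible for $\samplesupport$, the terminal event is realized by a \emph{unique} monotone chain of $\mathcal{C}$–values, since otherwise the telescoping product would overcount the cell's mass. This is precisely where Lemma~\ref{lemsub} (the star node dominates the current $\mathcal{C}$) and consistency (C) (sibling supports partition their parent up to a $\colorg$–null overlap, hence the two Bernoulli parameters are complementary and the "wrong" branch is irrecoverable) are both needed; the only additional care is the mild measure–theoretic point that sibling boundaries carry no $\colorr$–mass, and everything else is routine bookkeeping.
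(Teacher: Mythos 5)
Your proof is correct and follows essentially the same route as the paper's: identify the Markov chain of running supports $\mathcal{C}$ along the admissible sequence, observe that each Bernoulli transition in Step~1 of \starupdate~contributes a factor $p_{\colorr}[\mathcal{C}_{k}]/p_{\colorr}[\mathcal{C}_{k-1}]$, and telescope down to $p_{\colorr}[\samplesupport]/p_{\colorr}[\mathcal{X}] = p_{\colorr}[\samplesupport]$. The only difference is that you spell out the ``forced branching'' argument (via Lemma~\ref{lemsub} and consistency (C)) to justify that a single deterministic chain of $\mathcal{C}$–values carries the event, a point the paper's terse proof leaves implicit.
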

\bignegspace
\negspace
\negspace
The Lemma is simple to prove but fundamental in our context as the way one computes the sequence -- and thus the way one picks the trees -- does not bias generation: the sequence of tree choices could thus be iterative, randomized, concurrent (\textit{e.g.} if trees were distributed), etc., this would not change generation's properties from the standpoint of Lemma \ref{leminv}. We defer to \supplement~(Section \ref{add-cont}) three examples of sequence choice. Figure \ref{fig:generation-gt-eogt-smplesupport} illustrates a sequence and the resulting $\samplesupport$.\\
\noindent  \textbf{Missing data imputation and density estimation with \geot s} A generative forest is not just a generator: it models a density and the exact value of this density at any observation is easily available. Figure \ref{fig:density-estimation-mda} (top row) shows how to compute it. Note that if carried out in parallel, the complexity to get this density is cheap, of order $O(\max_t \mathrm{depth}(\tree_t))$. If one wants to prevent predicting zero density, one can stop the descent if the next step creates a support with zero empirical measure. A \geot~thus also models an easily tractable density, but this fact alone is not enough to make it a \textit{good} density estimator. To get there, one has to factor the loss minimized during training. In our case, as we shall see in the following Section, we train a \geot~by minimizing an information-theoretic loss directly formulated on this density \eqref{eqLRR}. So, using a \geot~also for density estimation can be a reasonable additional benefit of training \geot s.\\
  The process described above that finds leaves reached by an observation can be extended to missing values in the observation using standard procedures for classification using decision trees. We obtain a simple procedure to carry out \textit{missing data imputation} instead of density estimation: once a tuple of leaves is reached, one uniformly samples the missing features in the corresponding support. This is fast, but at the expense of a bit more computations, we can have a much better procedure, as explained in Figure \ref{fig:density-estimation-mda} (bottom row). In a first step, we compute the density of each support subset corresponding to \textit{all} (not just 1 as for density estimation) tuples of leaves reached in each tree. This provides us with the \textit{full density} over the missing values \textit{given} (i) a partial assignment of the tabular domain's variables and (ii) the \geot. We then keep the elements corresponding to the maximal density value and finally simultaneously sample all missing features uniformly in the corresponding domain. Overall, the whole procedure is $O(d\cdot (\sum_t \mathrm{depth}(\tree_t))^2)$.\\
\begin{figure}[t]
  \centering
  \resizebox{0.8\columnwidth}{!}{\begin{tabular}{?c?c?c?}\Xhline{2pt}
    \rotatebox{90}{{\scriptsize density estimation}} & \includegraphics[trim=0bp 50bp 40bp 80bp,clip,width=0.45\columnwidth]{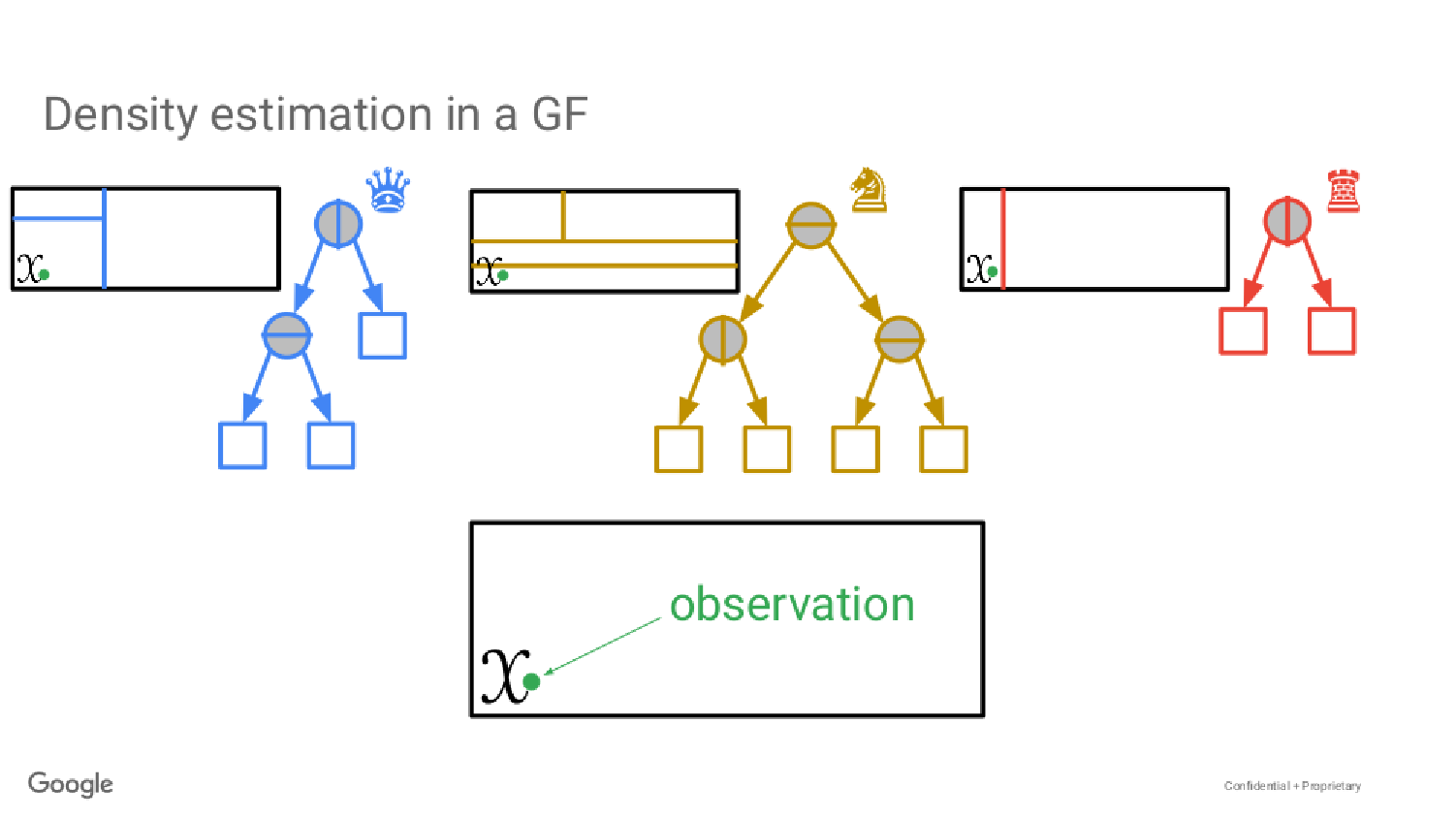} & \includegraphics[trim=0bp 50bp 40bp 80bp,clip,width=0.45\columnwidth]{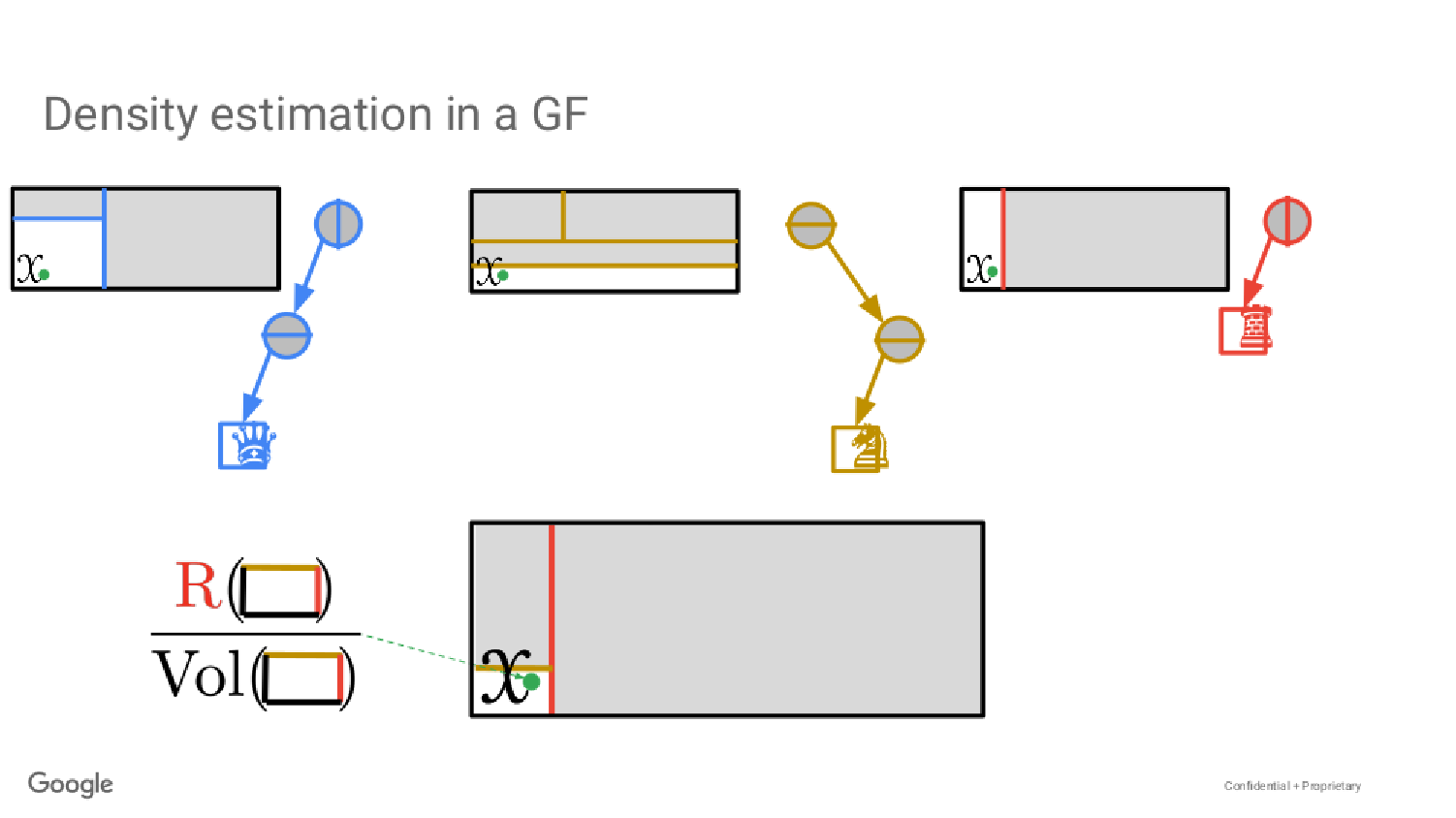} \\ \Xhline{2pt}
    \rotatebox{90}{{\scriptsize missing data imputation}} & \includegraphics[trim=0bp 27bp 40bp 80bp,clip,width=0.45\columnwidth]{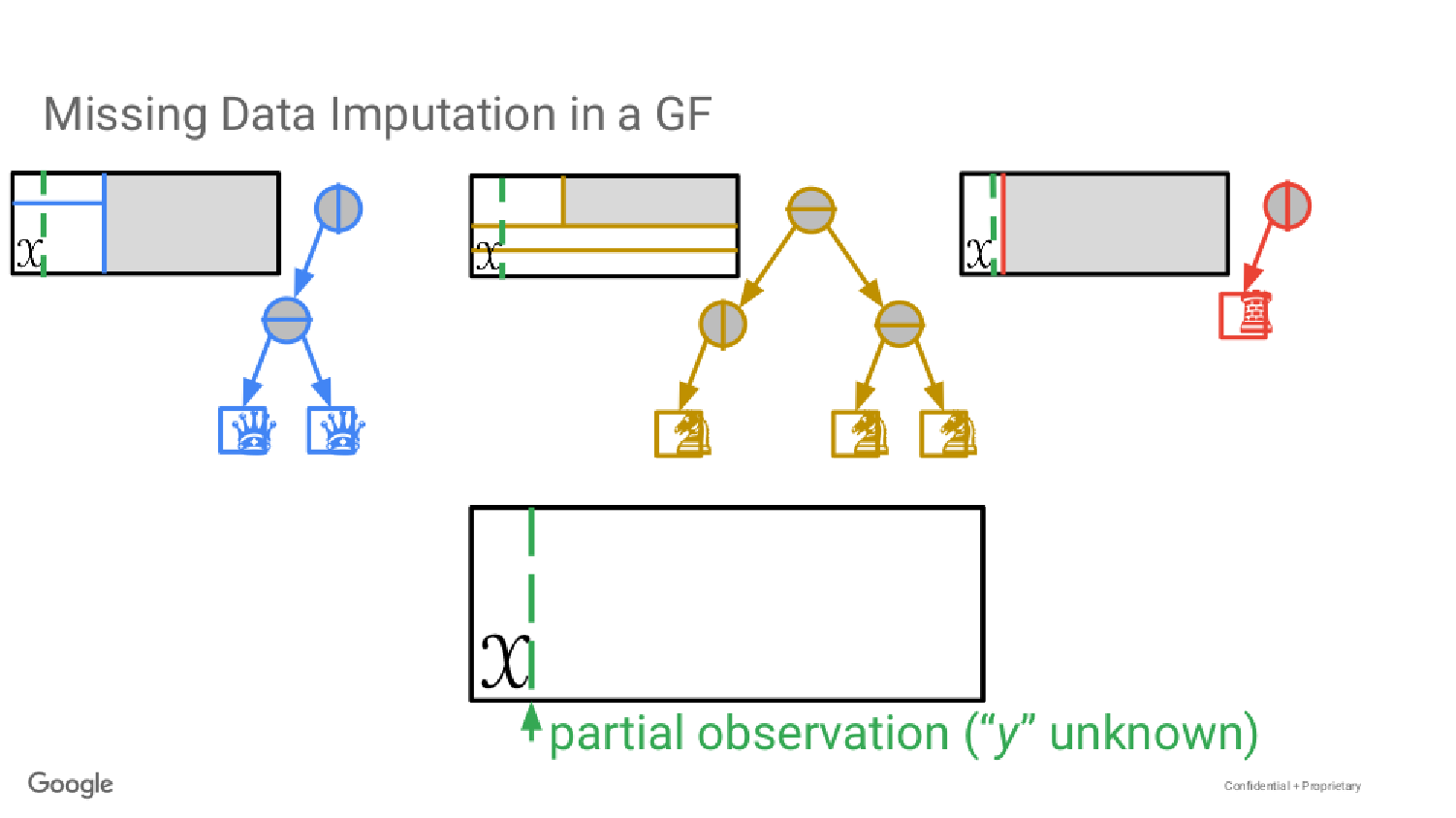} & \includegraphics[trim=0bp 27bp 40bp 80bp,clip,width=0.45\columnwidth]{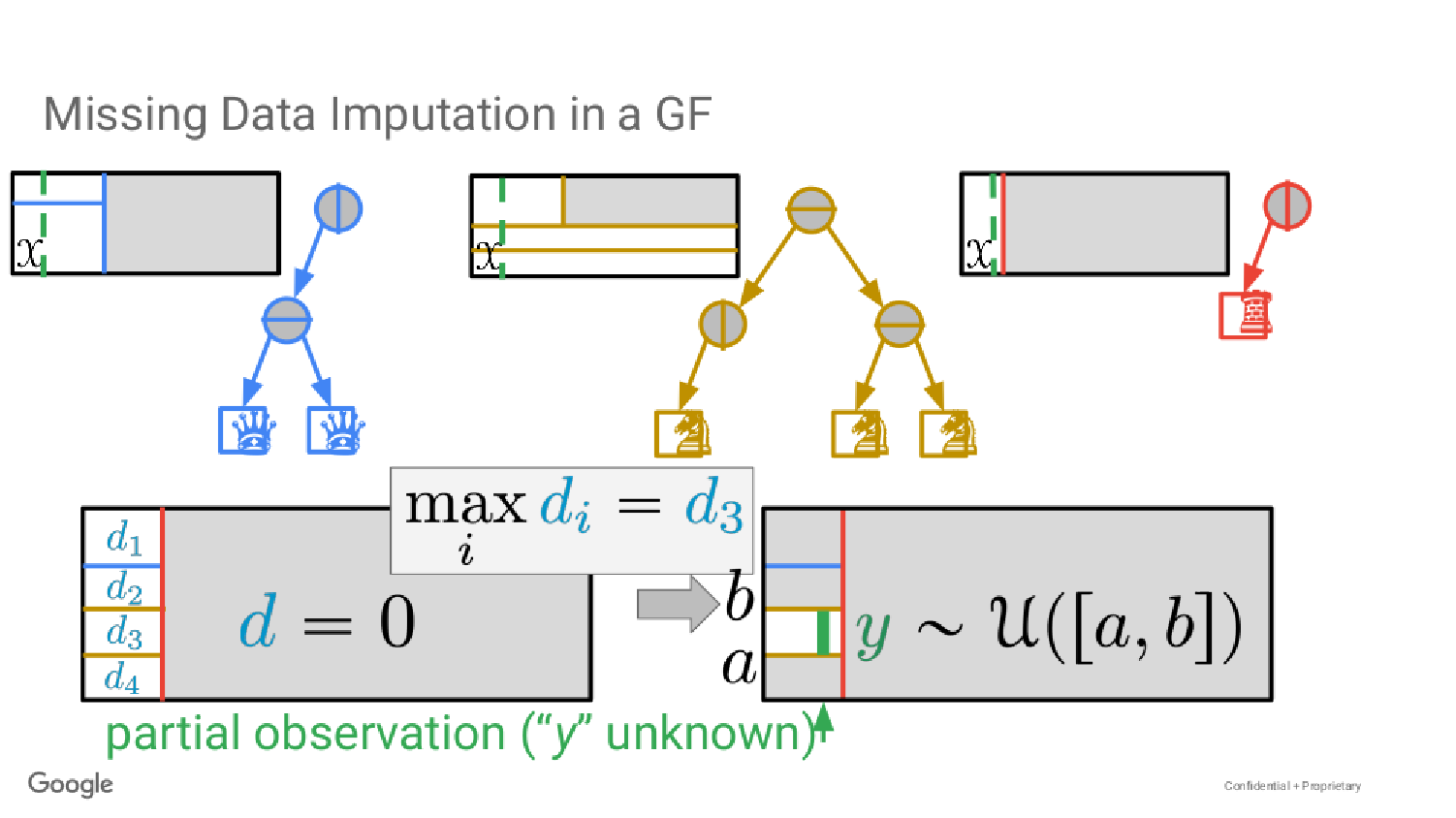} \\ \Xhline{2pt}
    \end{tabular}}
\bignegspace
    \caption{(\textbf{Top row}) Density estimation using a \geot, on an observation indicated by ${\color{darkgreen} \bullet}$ (\textit{Left}). In each tree, the leaf reached by the observation is found and the intersection of all leaves' supports is computed. The estimated density at ${\color{darkgreen} \bullet}$ is computed as the empirical measure in this intersection over its volume (\textit{Right}). (\textbf{Bottom row}) Missing data imputation using the same \geot, and an observation with one missing value (if $\mathcal{X} \subset \mathbb{R}^2$, then $y$ is missing). We first proceed like in density estimation, finding in each tree \textit{all} leaves \textit{potentially} reached by the observation if $y$ were known (\textit{Left}); then, we compute the density in \textit{each} non-empty intersection of all leaves' supports; among the corresponding elements with maximal density, we get the missing value(s) by uniform sampling (\textit{Right}).}
    \label{fig:density-estimation-mda}
 \bignegspace
\end{figure}

\bignegspace
\bignegspace
\bignegspace
\section{Learning generative forests using supervised boosting}\label{sec-boost}
\bignegspace

\begin{algorithm}[t]
\resizebox{.9\columnwidth}{!}{    \caption{\topdownGT($\colorr$, $J$, $T$)}\label{alg-topdownGT}
\begin{minipage}{\columnwidth}\begin{algorithmic}
  \STATE  \textbf{Input:} measure $\colorr$,  $\#$iters $J$, $\#$trees $T$; 
    \STATE  \textbf{Output:} trees $\{\tree_t\}_{t=1}^T$ of \geot~$\df{\colorg}$; 
  \STATE  Step 1 : $\mathcal{T} \leftarrow \{\tree_t = \mbox{ (root)}\}_{t=1}^T$; 
  \STATE  Step 2 : \textbf{for} $j=1$ \textbf{to} $J$
  \STATE  \hspace{0.3cm} Step 2.1 : $\tree_* \leftarrow$ \texttt{tree} ($\mathcal{T}$);
  \STATE  \hspace{0.3cm} Step 2.2 : $\candidateleaf \leftarrow $ \texttt{leaf} ($\tree_*$);
 \STATE  \hspace{0.3cm} Step 2.3 : $\texttt{p} \leftarrow$ \texttt{splitPred} $(\candidateleaf, \mathcal{T}, \colorr)$; 
 \STATE  \hspace{0.3cm} Step 2.4 : $\texttt{split}(\tree_*,\candidateleaf,\texttt{p})$; 
    \STATE  \textbf{return} $\mathcal{T}$; 
\end{algorithmic}
\end{minipage}}
\end{algorithm}

\noindent \textbf{From the GAN framework to a fully supervised training of generative models} It can appear quite unusual to train generative models using a supervised learning framework, so before introducing our algorithm, we provide details on its filiation in the generative world, starting from the popular GAN training framework \cite{gpmxwocbGA}. In this framework, one trains a generative model against a discriminator model which has no purpose other than to parameterize the generative loss optimized. As shown in \cite{nctFG}, there is a generally inevitable slack between the generator and the discriminator losses with neural networks, which translates into uncertainty for training. \cite{ngGT} show that the slack disappears for calibrated models, a property satisfied by their generative trees (and also by our generative forests). Moreover, \cite{ngGT} also show that the GAN training can be simplified and made much more efficient for generative trees by having the discriminator (a decision tree) copy the tree structure of the generator, a setting defined as \textit{copycat}. Hence, one gets reduced uncertainty and more efficient training. Training still involves two models but it becomes arguably very close to the celebrated boosting framework for top-down induction of decision trees \cite{kmOT}, up to the crucial detail that the generator turns out to implements boosting's leveraged "hard" distribution. This training gives guarantees on the likelihood ratio risk we define in \eqref{eqLRR}. Our paper closes the gap with boosting: copycat training can be equivalently simplified to training a \textit{single generative model} in a \textit{supervised} (2 classes) framework. The two classes involved are the observed data and the uniform distribution. Using the uniform distribution is allowed by the assumption that the domain is closed, which is reasonable for tabular data but can also be alleviated by reparameterization \cite[Remark 3.3]{ngGT}. The (supervised) loss involved for training reduces to the popular concave "entropic"-style losses used in CART, C4.5 and in popular packages like \texttt{scikit-learn}. And of course, minimizing such losses \textit{still} provides guarantee on the generative loss \eqref{eqLRR}. While it is out of scope to show how copycat training does ultimately simplify, we provide all components of the "end product": algorithm, loss optimized and the link between the minimization of the supervised and generative losses via boosting. \\
\noindent \textbf{The algorithm} To learn a \geot, we just have to learn its set of trees. Our training algorithm, \topdownGT~(Algorithm \ref{alg-topdownGT}), performs a greedy top-down induction. In Step 1, we initialize the set of $T$ trees to $T$ roots. Steps 2.1 and 2.2 choose a tree ($\tree_*$) and a leaf to split ($\candidateleaf$) in the tree. In our implementation, we pick the leaf among all trees which is the heaviest with respect to $\colorr$. Hence, we merge Steps 2.1 and 2.2. Picking the heaviest leaf is standard to grant boosting in decision tree induction \cite{kmOT}; in our case, there is a second benefit: we tend to learn size-balanced models. For example, during the first $J=T$ iterations, each of the $T$ roots gets one split because each root has a larger mass (1) than any leaf in a tree with depth $>0$. Step 2.4 splits $\tree_*$ by replacing $\candidateleaf$ by a stump whose corresponding splitting predicate, \texttt{p}, is returned in Step 2.3 using a weak splitter oracle called \texttt{splitPred}. "weak" refers to boosting's weak/strong learning setting \cite{kTO} and means that we shall only require lightweight assumptions about this oracle; in decision tree induction, this oracle is the key to boosting from such weak assumptions \cite{kmOT}. This will also be the case for our generative models. We now investigate Step 2.3 and \texttt{splitPred}.\\
\noindent \textbf{The weak splitter oracle \texttt{splitPred}} In decision tree induction, a splitting predicate is chosen to reduce an expected Bayes risk \eqref{defPOIBAYESRISK} (\textit{e.g.} that of the log loss \cite{qC4}, square loss \cite{bfosCA}, etc.). In our case, \texttt{splitPred} does about the same \textit{with a catch in the binary task it adresses}, which is\footnote{The prior is chosen by the user: without reasons to do otherwise, a balanced approach suggests $\prior = 0.5$.} $\binartaskgen \defeq (\prior, \colorr, \coloru)$ (our mixture measure is thus $\colorm{\small} \defeq\prior \cdot \colorr + (1-\prior) \cdot \coloru$). The corresponding expected Bayes risk that \texttt{splitPred} seeks to minimize is just:
\begin{eqnarray}
\poprisk(\mathcal{T}) & \defeq & \sum_{\mathcal{C} \in \mathcal{P}(\mathcal{T})} p_{\colorm{\scriptsize}}[\mathcal{C}] \cdot \poibayesrisk\left(\frac{\prior p_{\meas{\colorr}}[\mathcal{C}]}{p_{\colorm{\scriptsize}}[\mathcal{C}]}\right).\label{defPOPRISK}
\end{eqnarray}
The concavity of $\poibayesrisk$ implies $\poprisk(\mathcal{T}) \leq \poibayesrisk(\prior)$. Notation $\mathcal{P}(.)$ overloads that in \eqref{defPART} by depending explicitly on the set of trees of $\colorg$ instead of $\colorg$ itself. Regarding the minimization of \eqref{defPOPRISK}, there are three main differences with classical decision tree induction. The first is computational: in the latter case, \eqref{defPOPRISK} is optimized over a single tree: there is thus a single element in $\mathcal{P}(\mathcal{T})$ which is affected by the split, $\mathcal{X}_{\candidateleaf}$. In our case, multiple elements in $\mathcal{P}(\mathcal{T})$ can be affected by one split, so the optimisation is more computationally demanding but a simple trick allows to keep it largely tractable: we do not need to care about keeping in $\mathcal{P}(\mathcal{T})$ support elements with zero empirical measure since they will generate no data. Keeping only elements with strictly positive empirical measure guarantees a size $|\mathcal{P}(\mathcal{T})|$ never bigger than the number of observations defining $\colorr$. The second difference plays to our advantage: compared to classical decision tree induction, a single split generally buys a substantially bigger slack in $\poprisk(\mathcal{T})$ in our case. To see this, remark that for the candidate leaf $\candidateleaf$,
\begin{eqnarray*}
   \sum_{{\tiny \begin{array}{c}
          \mathcal{C} \in \mathcal{P}(\mathcal{T}) \\
          \mathcal{C} \subseteq \mathcal{X}_{\candidateleaf}
          \end{array}}}   p_{\colorm{\scriptsize}}[\mathcal{C}] \cdot \poibayesrisk\left(\frac{\prior p_{\meas{\colorr}}[\mathcal{C}]}{p_{\colorm{\scriptsize}}[\mathcal{C}]}\right) & = &  p_{\colorm{\scriptsize}}[\mathcal{X}_{\candidateleaf}]  \cdot \sum_{{\tiny \begin{array}{c}
          \mathcal{C} \in \mathcal{P}(\mathcal{T}) \\
          \mathcal{C} \subseteq \mathcal{X}_{\candidateleaf} \end{array}}}   \frac{p_{\colorm{\scriptsize}}[\mathcal{C}]}{p_{\colorm{\scriptsize}}[\mathcal{X}_{\candidateleaf}]} \cdot \poibayesrisk\left(\frac{\prior p_{\meas{\colorr}}[\mathcal{C}]}{p_{\colorm{\scriptsize}}[\mathcal{C}]}\right) \\
  & \leq & p_{\colorm{\scriptsize}}[\mathcal{X}_{\candidateleaf}] \cdot \poibayesrisk\left(\frac{\prior p_{\meas{\colorr}}[\mathcal{X}_{\candidateleaf}]}{p_{\colorm{\scriptsize}}[\mathcal{X}_{\candidateleaf}]}\right)
\end{eqnarray*}
(because $\poibayesrisk$ is concave and $\sum_{\mathcal{C} \in \mathcal{P}(\mathcal{T}), \mathcal{C} \subseteq \mathcal{X}_{\candidateleaf}} p_{\meas{\colorr}}[\mathcal{C}] = p_{\meas{\colorr}}[\mathcal{X}_{\candidateleaf}]$). The top-left term is the contribution of $\candidateleaf$ to $\poprisk(\mathcal{T})$, the bottom-right one its contribution to $\poprisk(\{\tree_*\})$ (the decision tree case), so the slack gives a proxy to our potential advantage after splitting. The third difference with decision tree induction is the possibility to converge much faster to a good solution in our case. Scrutinizing the two terms, we indeed get that in the decision tree case,  the split only gets two new terms. In our case however, there can be up to $2\cdot \mathrm{Card}(\{\mathcal{C} \in \mathcal{P}(\mathcal{T}) : \mathcal{C} \subseteq \mathcal{X}_{\candidateleaf}\})$ new elements in $\mathcal{P}(\mathcal{T})$.\\
  \noindent\textbf{Boosting} Two questions remain: can we quantify the slack in decrease and of course what quality guarantee does it bring for the \textit{generative} model $\colorg$ whose set of trees is learned by \topdownGT~?  We answer both questions in a single Theorem, which relies on a weak learning assumption that parallels the classical weak learning assumption of boosting:
\begin{definition}\label{defWLAG}
 \textbf{(WLA$(\upgamma, \upkappa)$)} There exists $\upgamma > 0, \upkappa >0$ such that at each iteration of \topdownGT, the couple $(\candidateleaf, \texttt{p})$ chosen in Steps 2.2, 2.3 of \topdownGT~satisfies the following properties: \textbf{(a)} $\candidateleaf$ is not skinny: $p_{\colorm{\scriptsize}}[\mathcal{X}_{\candidateleaf}] \geq 1/\mathrm{Card}(\leafset(\tree_*))$, \textbf{(b)} truth values of $\texttt{p}$ moderately correlates with $\binartaskgen$ at $\candidateleaf$: $\left| p_{\meas{\colorr}}[\mbox{\texttt{p}}|\mathcal{X}_{\candidateleaf}] - p_{\meas{\coloru}}[\mbox{\texttt{p}}|\mathcal{X}_{\candidateleaf}] \right| \geq \upgamma$, and finally \textbf{(c)} there is a minimal proportion of real data at $\candidateleaf$: $\prior p_{\meas{\colorr}}[\mathcal{X}_{\candidateleaf}] / p_{\colorm{\scriptsize}}[\mathcal{X}_{\candidateleaf}] \geq \upkappa$.
\end{definition}
\bignegspace
The convergence proof of \cite{kmOT} reveals that both \textbf{(a)} and \textbf{(b)} are jointly made at any split, where our \textbf{(b)} is equivalent to their \textit{weak hypothesis assumption} where their $\gamma$ parameter is twice ours. \textbf{(c)} postulates that the leaf split has empirical measure at least a fraction of its uniform measure -- thus, of its relative volume. \textbf{(c)} is important to avoid splitting leaves that would essentially be useless for our tasks: a leaf for which \textbf{(c)} is invalid would indeed locally model comparatively tiny density values.
\negspace
\begin{theorem}\label{th-boost}
  Suppose the loss $\ell$ is \textbf{strictly proper} and \textbf{differentiable}. Let $\colorg_0$ (= $\coloru$) denote the initial \geot~with $T$ roots in its trees (Step 1, \topdownGT) and $\colorg_J$ the final \geot, assuming wlog that the number of boosting iterations $J$ is a multiple of $T$. Under WLA$(\upgamma, \upkappa)$, we get the following on likelihood ratio risks: $\likelihoodratiorisk_{\ell}\left(\colorr, \colorg_J\right) \leq \likelihoodratiorisk_{\ell}\left(\colorr, \colorg_{0}\right) - \frac{\kappa \upgamma^2 \upkappa^2}{8} \cdot T\log\left(1 + \frac{J}{T}\right)$, for some constant $\kappa > 0$.
\end{theorem}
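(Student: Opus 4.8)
The plan is to track how the likelihood ratio risk $\likelihoodratiorisk_{\ell}(\colorr, \colorg)$ decreases across boosting iterations, by first relating it to the expected Bayes-risk functional $\poprisk(\mathcal{T})$ and then quantifying the per-iteration decrease of the latter under WLA$(\upgamma,\upkappa)$. The first step is a bridging identity: show that, up to a sign and an additive constant independent of $\colorg$, $\likelihoodratiorisk_{\ell}(\colorr, \colorg)$ equals $\poibayesrisk(\prior) - \poprisk(\mathcal{T})$ (or a monotone transform thereof) when $\colorg$'s density on each cell $\mathcal{C}\in\mathcal{P}(\colorg)$ is the piecewise-constant density $p_{\colorr}[\mathcal{C}]/\lambda(\mathcal{C})$ induced by Lemma \ref{leminv}. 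This is where the definitions of the Bregman divergence, the generalized perspective transform with $g(z)=z+(1-\prior)/\prior$, and \eqref{eqLRR} must be unwound; one should recognize $D_{\perspectivepobayesrisk}$ evaluated at the two density ratios as exactly the gap between the mixture-weighted Bayes risk of the refined partition and that of the trivial partition. So maximizing the drop in $\likelihoodratiorisk_{\ell}$ is the same as driving $\poprisk(\mathcal{T})$ down from $\poibayesrisk(\prior)$ toward $0$.

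Next I would establish a one-split lower bound on the decrease of $\poprisk(\mathcal{T})$. Fix the candidate leaf $\candidateleaf$ and let $\mathcal{C}\subseteq\mathcal{X}_{\candidateleaf}$ range over the currently affected cells. Using strict concavity of $\poibayesrisk$ together with the quantitative strong-concavity-type hypothesis $\inf\{\partialloss{-1}'-\partialloss{1}'\}\geq\kappa$ (which lower-bounds $-\poibayesrisk''$, hence gives a quadratic modulus of concavity), a split with predicate $\texttt{p}$ produces a decrease on each affected $\mathcal{C}$ of order $\kappa \cdot p_{\colorm}[\mathcal{C}] \cdot (\text{local correlation of }\texttt{p})^2$ scaled by the conditional real/mixture ratio. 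Here WLA parts \textbf{(b)} and \textbf{(c)} enter: \textbf{(b)} gives $|p_{\colorr}[\texttt{p}|\mathcal{X}_{\candidateleaf}]-p_{\coloru}[\texttt{p}|\mathcal{X}_{\candidateleaf}]|\geq\upgamma$, which after the prior-weighting bookkeeping becomes a bound of the form $\upgamma^2\upkappa^2$ on the squared deviation of the posterior $\prior p_{\colorr}[\cdot]/p_{\colorm}[\cdot]$ between the two children (the factor $\upkappa^2$ and the constant $8$ come from converting the measure-$\colorr$/measure-$\coloru$ gap into a gap of the $\colorm$-posterior and from the standard $1/8$ in the square-loss-style concavity estimate), and \textbf{(c)} guarantees the real mass at $\candidateleaf$ is bounded away from $0$ so the contribution is not vacuous. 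WLA part \textbf{(a)}, non-skinniness $p_{\colorm}[\mathcal{X}_{\candidateleaf}]\geq 1/\mathrm{Card}(\leafset(\tree_*))$, converts this into: one split in tree $\tree_*$ decreases $\poprisk$ by at least $\tfrac{\kappa\upgamma^2\upkappa^2}{8}\cdot\tfrac{1}{\mathrm{Card}(\leafset(\tree_*))}$.

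Finally I would sum over iterations. Since $J$ is a multiple of $T$ and we may assume round-robin over the $T$ trees (or invoke Lemma \ref{leminv} to note the tree-choice order is irrelevant), each tree receives $J/T$ splits, so when a given tree makes its $k$-th split its leaf count is $k$, giving a decrease of at least $\tfrac{\kappa\upgamma^2\upkappa^2}{8}\cdot\tfrac1k$. Summing $\sum_{k=1}^{J/T}\tfrac1k \geq \log(1+J/T)$ over all $T$ trees yields total decrease at least $\tfrac{\kappa\upgamma^2\upkappa^2}{8}\cdot T\log(1+J/T)$; combined with the bridging identity and $\poprisk(\{\text{roots}\})=\poibayesrisk(\prior)$ this is exactly the claimed bound. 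The main obstacle I expect is the first step — the precise bookkeeping that turns $\likelihoodratiorisk_{\ell}$ (a Bregman divergence of density ratios under $\coloru$, with the perspective transform and prior factor) into a clean telescoping of partition-wise Bayes risks under $\colorm$, and in particular pinning down that the correct quadratic constant, after all the $\prior$- and $g$-reweightings, is $\kappa\upgamma^2\upkappa^2/8$ rather than something off by a measure-dependent factor; the per-split concavity estimate and the harmonic-sum aggregation are comparatively routine.
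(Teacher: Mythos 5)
Your three‑phase skeleton (bridge the generative loss to a Bayes‑risk objective, lower‑bound the per‑split decrease from WLA, aggregate by a harmonic sum over round‑robin splits) is the same architecture the paper uses, and your final harmonic‑sum computation $\sum_{k\le J/T}1/k \ge \log(1+J/T)$ over $T$ trees, together with $\poprisk(\{\text{roots}\})=\poibayesrisk(\prior)$, does reproduce the claimed bound. Two points, however, deserve pushback.

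First, the ``bridging identity'' you describe as the main obstacle is actually the \emph{clean} step in the paper: Lemma~B there shows the \emph{exact equality} $\likelihoodratiorisk_{\ell}(\colorr,\colorg_{j-1})-\likelihoodratiorisk_{\ell}(\colorr,\colorg_{j})=\poprisk(\colorg_{j-1})-\poprisk(\colorg_{j})$, established by combining the Jensen‑gap representation of $\likelihoodratiorisk_\ell$ from \cite[Lemma~5.6]{ngGT} with the normalization fact that $\int_{\mathcal{X}'}\dmeas{\colorg}_j=\int_{\mathcal{X}'}\dmeas{\colorr}$ for every cell $\mathcal{X}'\in\mathcal{P}_j$. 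In particular $\likelihoodratiorisk_\ell$ and $\poprisk$ differ by a constant with the \emph{same} sign, whereas you wrote $\likelihoodratiorisk_\ell\approx\poibayesrisk(\prior)-\poprisk$, which is the opposite sign (your later ``maximizing the drop in $\likelihoodratiorisk_\ell$ = driving $\poprisk$ down'' is the correct statement, so this is an internal inconsistency rather than a fatal error, but it should be fixed).

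Second, and this is the genuine gap, the step you wave off as ``comparatively routine'' — the per‑split concavity estimate — is precisely where the difficulty sits for a \geot. One split of a leaf $\candidateleaf$ of one tree simultaneously refines \emph{every} cell $\mathcal{C}\in\mathcal{P}_{\candidateleaf}(\colorg_j)$, and WLA \textbf{(a)}–\textbf{(c)} constrain the split only at the \emph{leaf} level (correlation of $\texttt{p}$ across $\mathcal{X}_{\candidateleaf}$, mass of $\mathcal{X}_{\candidateleaf}$, posterior at $\mathcal{X}_{\candidateleaf}$), not at the level of each $\mathcal{C}$. Your sketch writes a per‑cell decrease ``of order $\kappa\cdot p_{\colorm}[\mathcal{C}]\cdot(\text{local correlation of }\texttt{p})^2$'' and then plugs in the leaf‑level $\upgamma,\upkappa$ — but the local correlation of $\texttt{p}$ within an arbitrary $\mathcal{C}$ is not controlled by the assumption, so that substitution is not justified. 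The paper resolves this by reducing \topdownGT~to the \topdowngen~algorithm of \cite{mnwWK}: their framework shows that the simultaneous refinement of all cells inside $\mathcal{X}_{\candidateleaf}$ collapses, via the expectation identity $\sum_{\mathcal{C}} p_{\colorm}[\mathcal{C}]\E_{\colorm|\mathcal{C}}[(\Delta w)^2] = p_{\colorm}[\mathcal{X}_{\candidateleaf}]\E_{\colorm|\mathcal{X}_{\candidateleaf}}[(\Delta w)^2]$, to a \emph{single} application of the weak learning assumption at $\mathcal{X}_{\candidateleaf}$, after which the rate is imported wholesale from \cite[Theorem~B, eq.~(29)]{mnwWK}. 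Without that aggregation argument (or an equivalent one), your per‑iteration lower bound does not follow from WLA as stated, even though the final form $\frac{\kappa\upgamma^2\upkappa^2}{8\,\mathrm{Card}(\leafset(\tree_*))}$ is the right target.
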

\bignegspace
It comes from \cite[Remark 1]{mnwRC} that we can choose $\kappa = \inf\{\partialloss{-1}' - \partialloss{1}'\}$, which is $>0$ if $\ell$ is strictly proper and differentiable. Strict properness is also essential for the loss to guarantee that minimization gets to a good generator (Section \ref{sec-def}).

\bignegspace
\bignegspace
 \setlength\tabcolsep{4pt}

\setlength\tabcolsep{0.5pt}

\begin{table}[t]
  \centering
  \resizebox{0.8\columnwidth}{!}{{\tiny\begin{tabular}{cccc?c}\Xhline{2pt}
$T$=$J$=20($^*$) & $T$=$J$=100($^*$) & $T$=200,$J$=500 & $T$=200,$J$=1500 & {Ground truth}\\ \Xhline{2pt}
\includegraphics[trim=0bp 0bp 0bp 0bp,clip,width=\cinq\columnwidth]{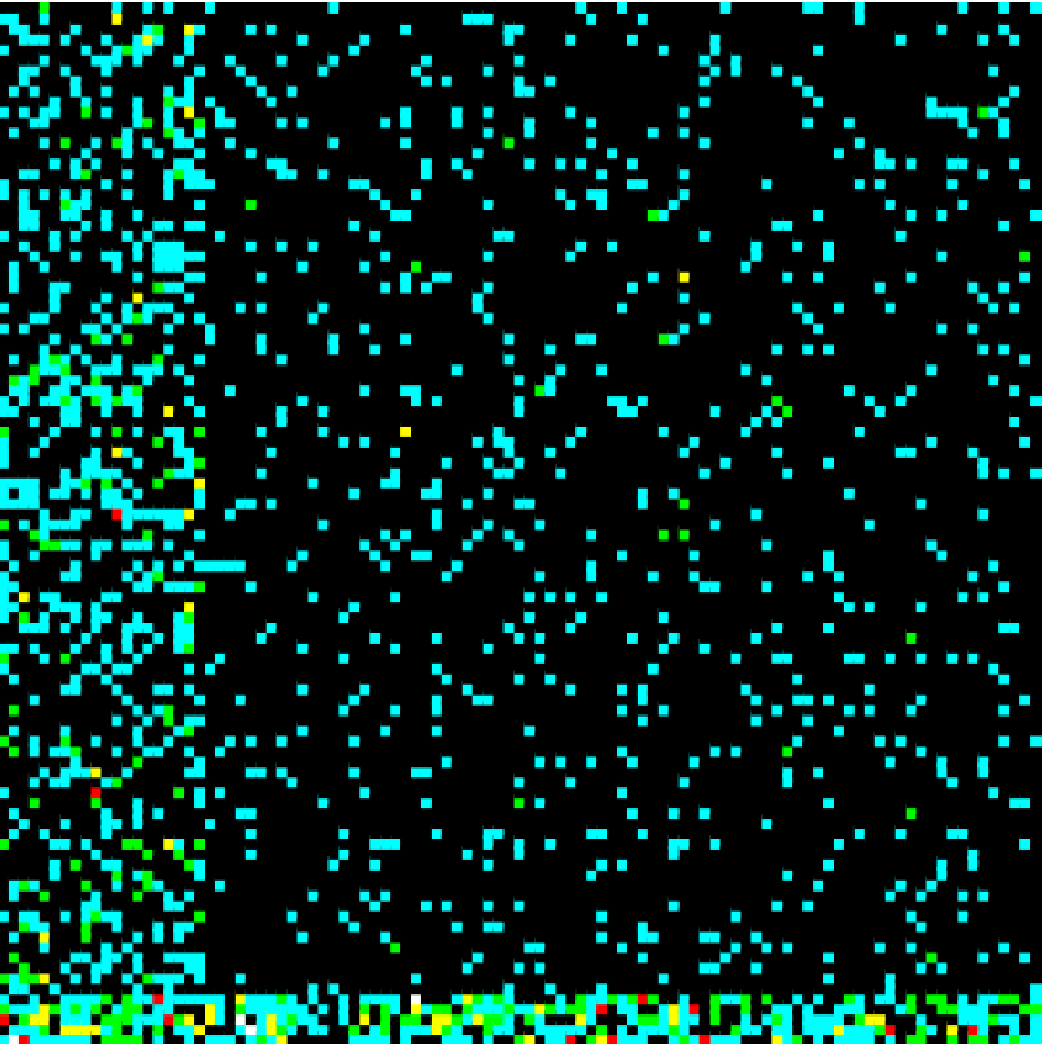} & \includegraphics[trim=0bp 0bp 0bp 0bp,clip,width=\cinq\columnwidth]{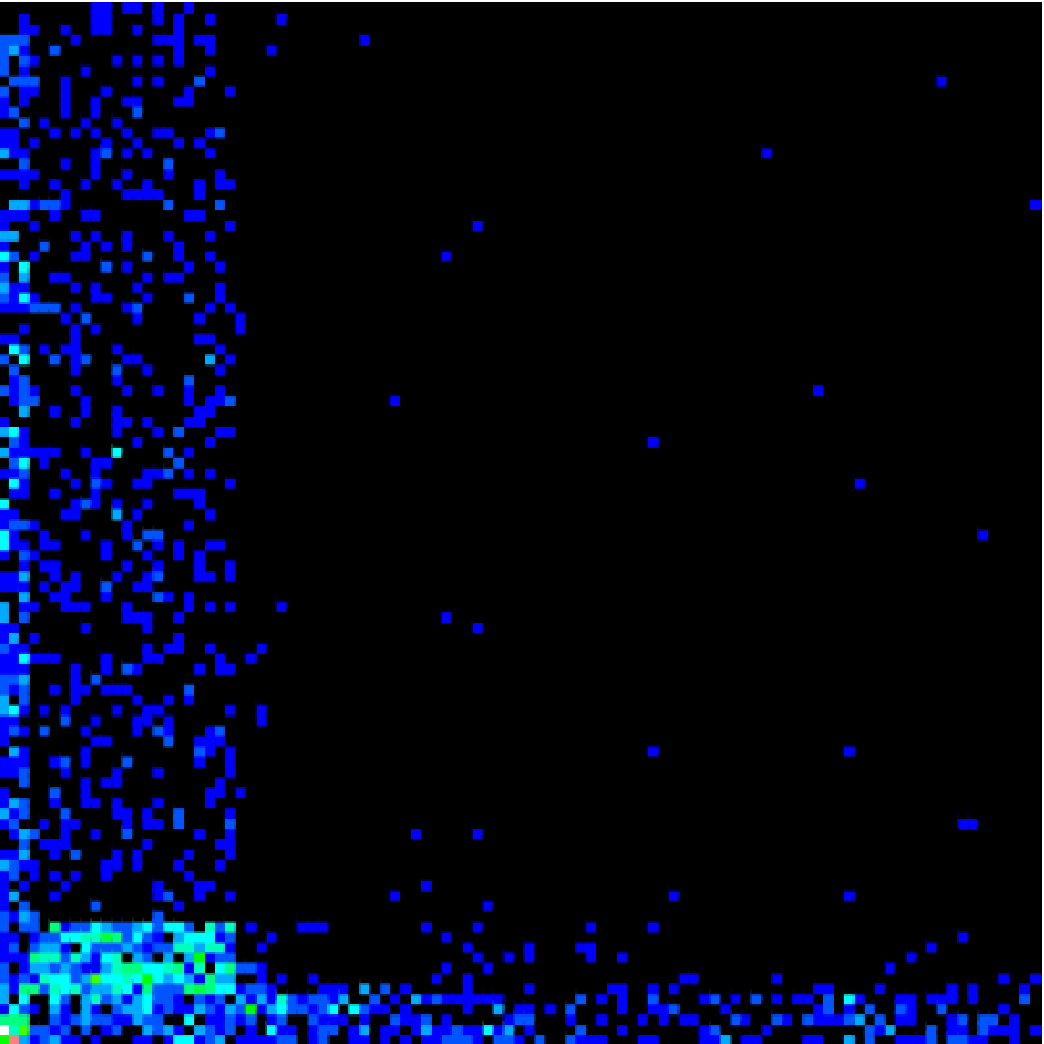} & \includegraphics[trim=0bp 0bp 0bp 0bp,clip,width=\cinq\columnwidth]{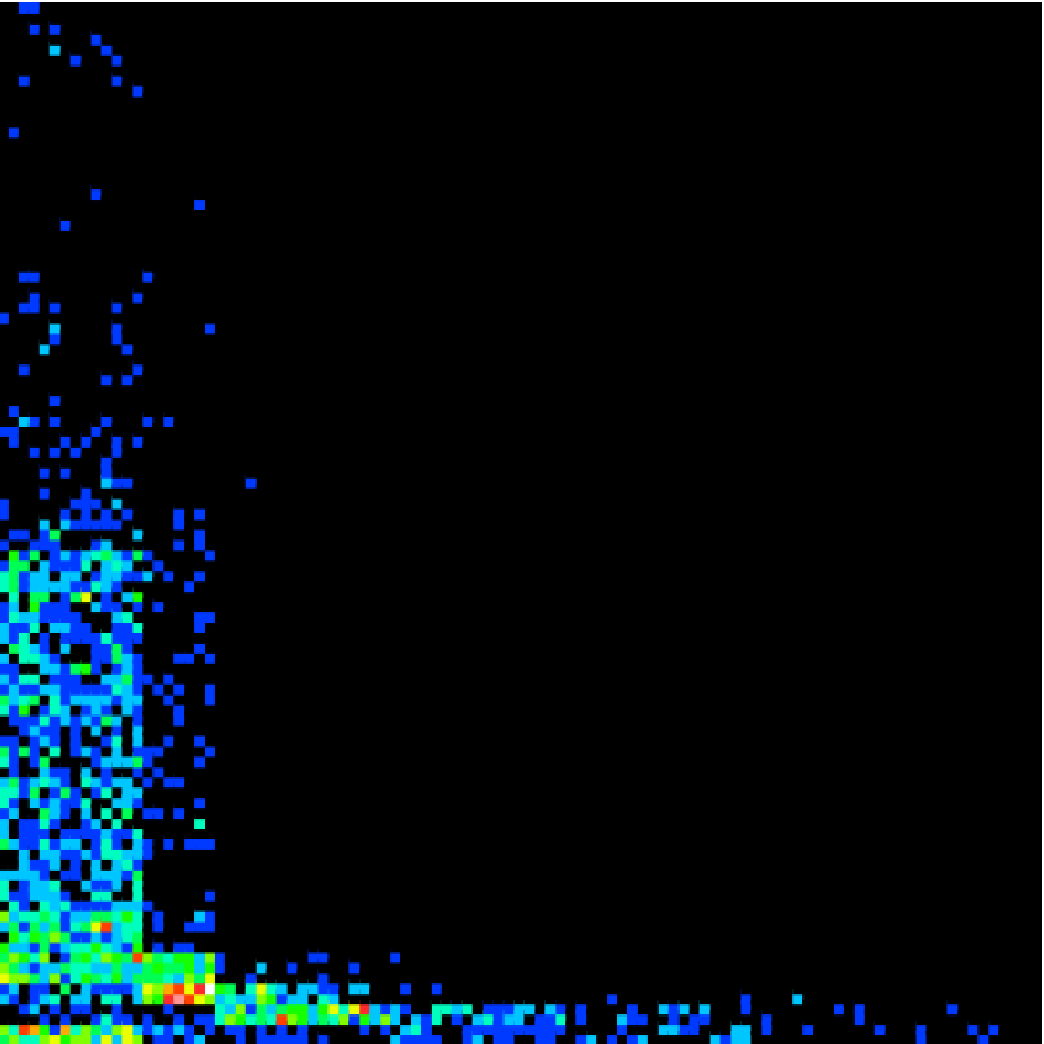} & \includegraphics[trim=0bp 0bp 0bp 0bp,clip,width=\cinq\columnwidth]{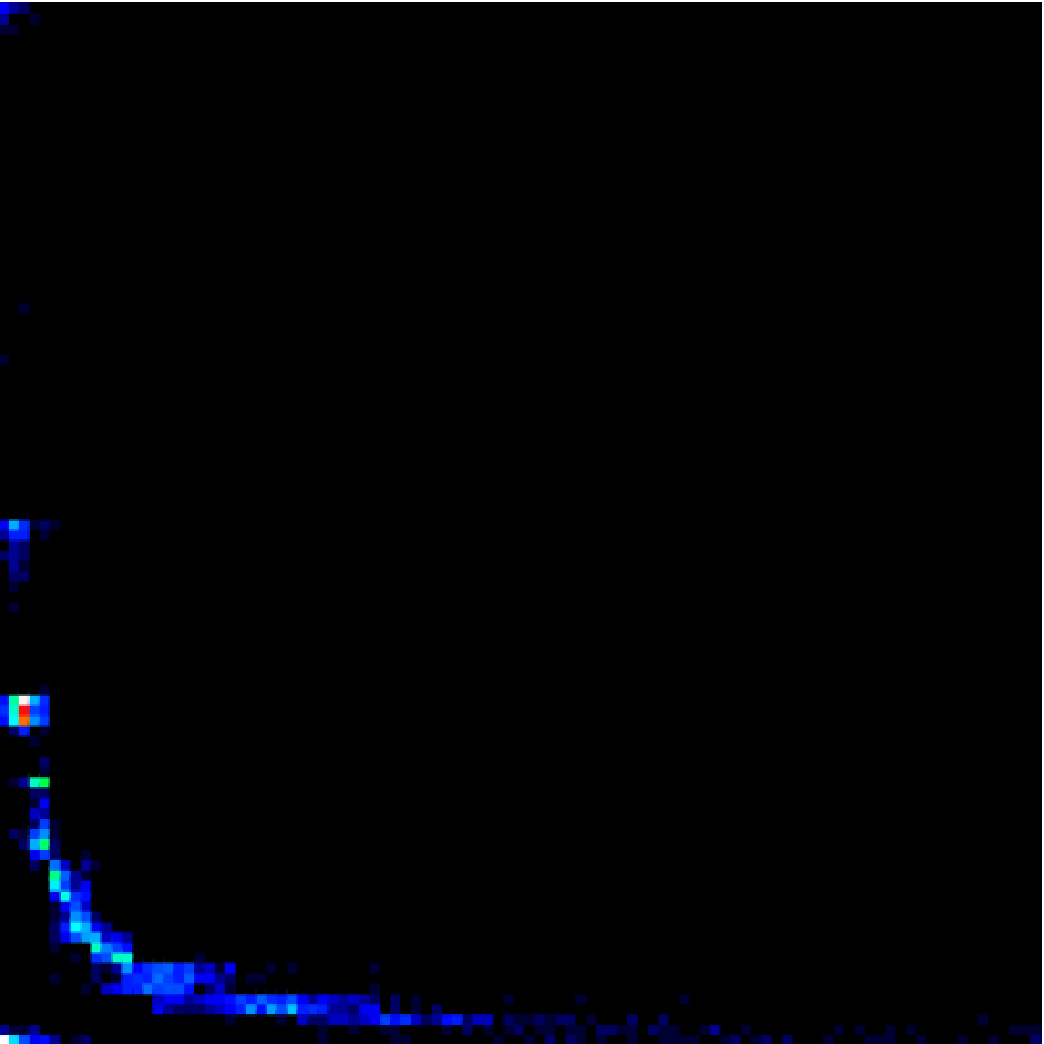} & \includegraphics[trim=0bp 0bp 0bp 0bp,clip,width=\cinq\columnwidth]{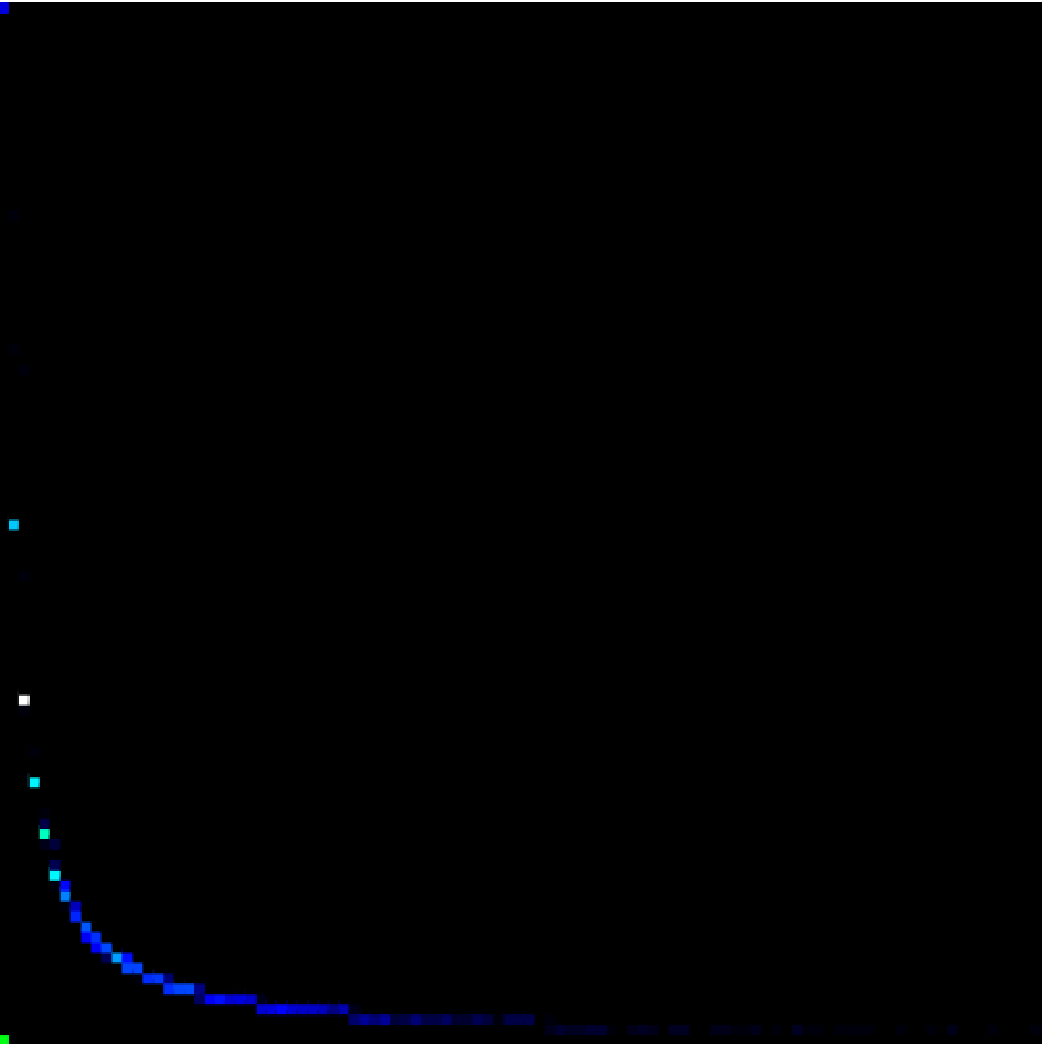} \\\Xhline{2pt}
  \end{tabular}}}
  \caption{2D density plots generated on OpenML \texttt{kc1} ($x=d$ and $y=l$) with \geot, for varying number of trees $T$ and number of splits $J$ (columns). "$^*$" = all trees are stumps. The rightmost column recalls the domain ground truth for comparison. Each generated dataset contains $m=2000$ observations. The two variables (among $d=22$) were chosen because of their deterministic dependence.}
    \label{tab:gen-full-kc1}
 \bignegspace
 \bignegspace
 \bignegspace
 \negspace
\end{table}
\begin{table}[t]
  \centering
  \begin{tabular}{c|c|c}\Xhline{2pt}
    \geot~$\gg$ KDE & neither & KDE $\gg$ \geot\\\hline
    9 & 5 & 3 \\\Xhline{2pt}
  \end{tabular}
  \caption{\textsc{density}: comparison between us and KDE (summary), counting the number of domains for which we are statistically better (left), or worse (right). The central column counts the remaining domains, for which no statistical significance ever holds.}
    \label{tab:density}
 \bignegspace
 \bignegspace
  \end{table}
\setlength\tabcolsep{1pt}

\begin{table*}[t]
  \centering
 \resizebox{0.92\textwidth}{!}{
  {\tiny \begin{tabular}{c|cc?cc}\Xhline{2pt}
    & \multicolumn{2}{c?}{UCI \domainname{abalone}} & \multicolumn{2}{c}{OpenML \domainname{analcatdata$\_$supreme}} \\ 
    & \textbf{perr} (cat. variables) & \textbf{rmse} (num. variables) & \textbf{perr} & \textbf{rmse} \\ \hline
\rotatebox{90}{vs \textsc{gf}, $J=20$} &   \includegraphics[trim=30bp 0bp 20bp 0bp,clip,width=\quatre\columnwidth]{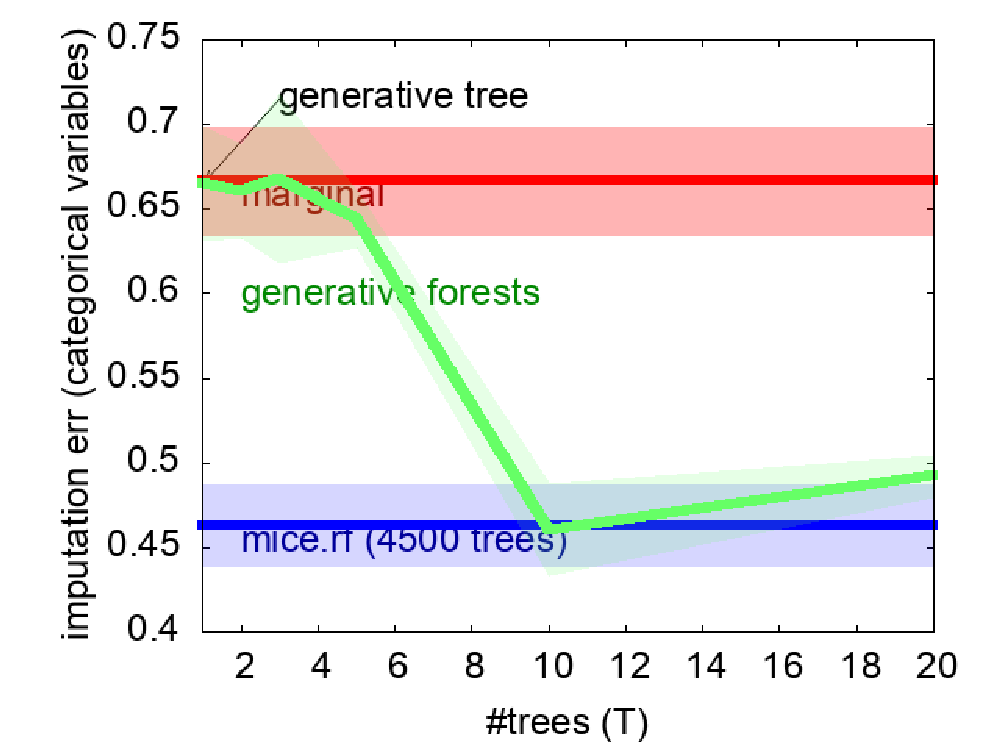} & \includegraphics[trim=30bp 0bp 20bp 0bp,clip,width=\quatre\columnwidth]{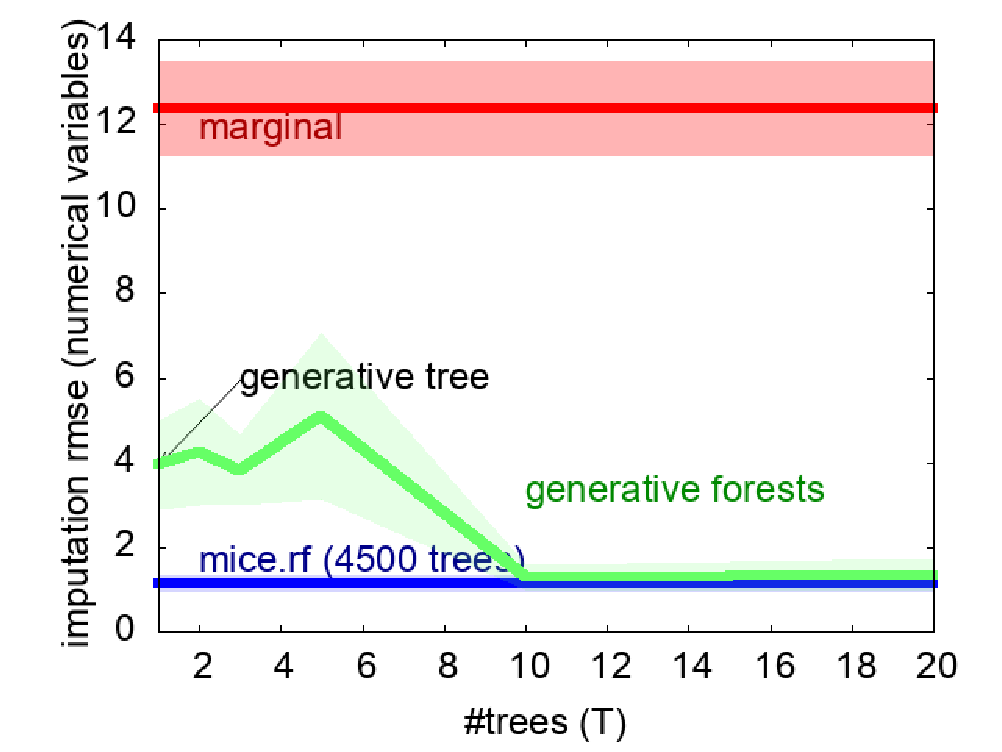} & \includegraphics[trim=30bp 0bp 20bp 0bp,clip,width=\quatre\columnwidth]{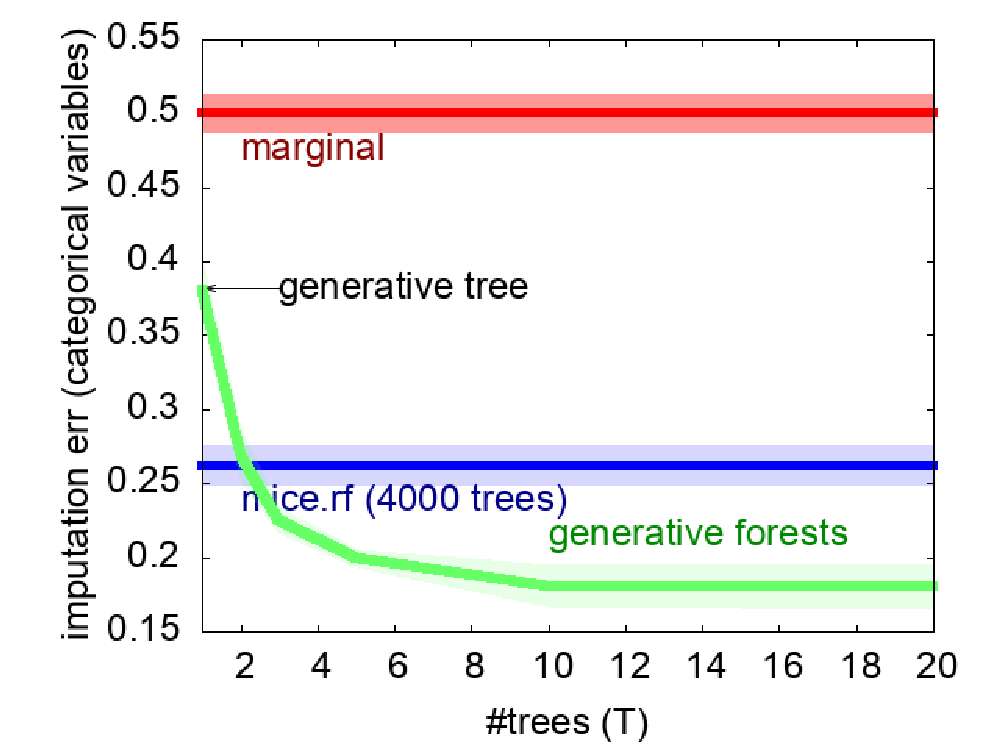} & \includegraphics[trim=30bp 0bp 20bp 0bp,clip,width=\quatre\columnwidth]{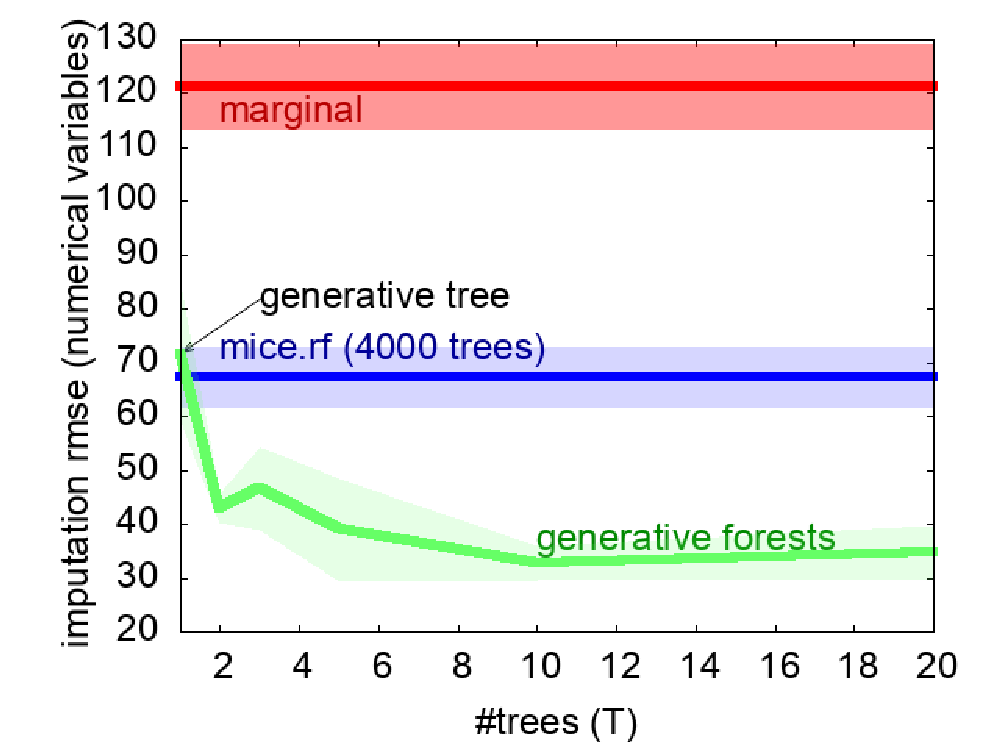} \\\Xhline{2pt}
  \end{tabular}}
  }
\bignegspace
  \caption{\textsc{impute}: results on two domains (left and right pane). Since both domain contain categorical and numerical variables, we compute for each \textbf{perr} (categorical variables) and the \textbf{rmse} (numerical variables). In each plot, we display both the corresponding results of \topdownGT, the results of \mice.\textsc{rf}~(indicating the total number of trees used to impute the dataset) and the result of a fast imputation baseline using the \texttt{marginal}s computed from the training sample. The $x$ axis displays the number of trees $T$ in \topdownGT~and each curve is displayed with its average $\pm$ standard deviation in shaded color. The result for $T=1$ equivalently indicates the performance of a single generative tree (\gt) with $J$ splits \cite{ngGT}, shown with an arrow (see text).}
    \label{tab:mdi-abaanalcat}
\end{table*}

\section{Experiments}\label{sec-exp}

\setlength\tabcolsep{4pt}

\begin{table*}[h]
    \centering
    \resizebox{1.0\textwidth}{!}{
      \begin{tabular}{l?rrr?rrr?rrr?rrr}\Xhline{2pt}
       Domain & \multicolumn{3}{c?}{Sinkhorn$\downarrow$} & \multicolumn{3}{c?}{Coverage$\uparrow$} & \multicolumn{3}{c?}{Density$\uparrow$} & \multicolumn{3}{c}{F1 measure$\downarrow$} \\
 & \multicolumn{1}{c}{us (\geot)} & \multicolumn{1}{c}{ARF} & pval & \multicolumn{1}{c}{us (\geot)} & \multicolumn{1}{c}{ARF} & pval & \multicolumn{1}{c}{us (\geot)} & ARF  & pval & \multicolumn{1}{c}{us (\geot)} & \multicolumn{1}{c}{ARF} & pval \\\hline
\domainname{ring} & \better{0.285$\pm$0.008} & 0.286$\pm$0.007 & 0.62 & \better{0.968$\pm$0.010} & 0.960$\pm$0.008 & 0.05 & \better{1.031$\pm$0.049} & 0.976$\pm$0.030 & 0.11 & \better{0.070$\pm$0.030} & 0.086$\pm$0.047 & 0.54\\\hline
\domainname{circ} & \better{0.351$\pm$0.005} & 0.355$\pm$0.005 & 0.33 & 0.945$\pm$0.015 & \better{0.962$\pm$0.021} & 0.35 & \better{0.993$\pm$0.045} & 0.954$\pm$0.011 & 0.05 & 0.514$\pm$0.529 & \better{0.507$\pm$0.032} & 0.92\\\hline
\domainname{grid} & \better{0.390$\pm$0.002} & 0.394$\pm$0.002 & 0.02 & \better{0.975$\pm$0.009} & 0.908$\pm$0.010 & \tugreen & \better{0.995$\pm$0.077} & 0.630$\pm$0.050 & \tugreen & \better{0.017$\pm$0.015} & 0.043$\pm$0.012 & 0.04\\\hline
\domainname{rand}  & \better{0.286$\pm$0.003} & 0.288$\pm$0.002 & 0.36 & \better{0.961$\pm$0.012} & 0.953$\pm$0.004 & 0.29 & \better{0.979$\pm$0.023} & 0.940$\pm$0.027 & \tugreen & \better{0.014$\pm$0.013} & 0.029$\pm$0.010 & \tugreen \\\hline
\domainname{wred}  & \better{0.980$\pm$0.032} & 1.099$\pm$0.031 & \tugreen & \better{0.962$\pm$0.028} & 0.929$\pm$0.010 & 0.14 & \better{0.961$\pm$0.026} & 0.801$\pm$0.028 & \tugreen & \better{0.459$\pm$0.031} & 0.531$\pm$0.028 & \tugreen \\\hline
\domainname{wwhi} & \better{1.064$\pm$0.003} & 1.150$\pm$0.021 & \tugreen & \better{0.963$\pm$0.002} & 0.946$\pm$0.012 & 0.10 & \better{0.989$\pm$0.017} & 0.941$\pm$0.013 & \tugreen & \better{0.492$\pm$0.037} & 0.500$\pm$0.027 & 0.04\\\hline
\domainname{comp} & \better{0.532$\pm$0.008} & 0.535$\pm$0.033 & 0.86 & 0.556$\pm$0.021 & \better{0.560$\pm$0.037} & 0.75 & 0.430$\pm$0.008 & \better{0.440$\pm$0.011} & 0.11 & \better{0.496$\pm$0.021} & 0.520$\pm$0.020 & 0.26\\\hline
        \domainname{arti} & \better{0.821$\pm$0.004} & 0.849$\pm$0.010 & \tugreen & \better{0.947$\pm$0.003} & 0.892$\pm$0.014 & \tugreen & \better{0.967$\pm$0.013} & 0.747$\pm$0.013 & \tugreen & \better{0.429$\pm$0.056} & 0.512$\pm$0.037 & \tugreen\\\Xhline{2pt}
         & \multicolumn{1}{c}{us (\geot)} & \multicolumn{1}{c}{CT} & pval & \multicolumn{1}{c}{us (\geot)} & \multicolumn{1}{c}{CT} & pval & \multicolumn{1}{c}{us (\geot)} & \multicolumn{1}{c}{CT}  & pval & \multicolumn{1}{c}{us (\geot)} & \multicolumn{1}{c}{CT} & pval \\\hline
\domainname{ring} & \better{0.285$\pm$0.008} & 0.351$\pm$0.042 & 0.01 & \better{0.968$\pm$0.010} & 0.798$\pm$0.050 &\tugreen & \better{1.031$\pm$0.049} & 0.757$\pm$0.025 &\tugreen & \better{0.070$\pm$0.030} & 0.100$\pm$0.073 & 0.40 \\\hline
\domainname{circ} & \better{0.351$\pm$0.005} &  0.435$\pm$0.075 & 0.07 & \better{0.945$\pm$0.015} & 0.734$\pm$0.041 &\tugreen & \better{0.993$\pm$0.045} & 0.401$\pm$0.033 &\tugreen & \better{0.514$\pm$0.529} & 0.746$\pm$0.033 &\tugreen   \\\hline
\domainname{grid} & \better{0.390$\pm$0.002} & 0.408$\pm$0.019  & 0.01 & \better{0.975$\pm$0.009} & 0.828$\pm$0.053 &\tugreen & \better{0.995$\pm$0.077} & 0.649$\pm$0.058 &\tugreen & \better{0.017$\pm$0.015} & 0.034$\pm$0.011 & 0.21 \\\hline
\domainname{rand} & \better{0.286$\pm$0.003} & 0.327$\pm$0.024 & 0.03 & \better{0.961$\pm$0.012} & 0.659$\pm$0.049 &\tugreen & \better{0.979$\pm$0.023} & 0.582$\pm$0.035 &\tugreen & \better{0.014$\pm$0.013} & 0.079$\pm$0.053 & 0.06 \\\hline
\domainname{wred} & \better{0.980$\pm$0.032} & 1.384$\pm$0.047 & \tugreen & \better{0.962$\pm$0.028} & 0.808$\pm$0.016 &\tugreen & \better{0.961$\pm$0.026} & 0.589$\pm$0.129 &\tugreen & \better{0.459$\pm$0.031} & 0.654$\pm$0.030 &\tugreen \\\hline
        \domainname{wwhi} & \better{1.064$\pm$0.003} & 1.158$\pm$0.009 & \tugreen & \better{0.963$\pm$0.002} & 0.894$\pm$0.026 &\tugreen & \better{0.989$\pm$0.017} & 0.953$\pm$0.035 & 0.05 & \better{0.492$\pm$0.037} & 0.581$\pm$0.043 &\tugreen \\\Xhline{2pt}
         & \multicolumn{1}{c}{us (\geot)} & \multicolumn{1}{c}{FF} & pval & \multicolumn{1}{c}{us (\geot)} & \multicolumn{1}{c}{FF} & pval & \multicolumn{1}{c}{us (\geot)} & \multicolumn{1}{c}{FF}  & pval & \multicolumn{1}{c}{us (\geot)} & \multicolumn{1}{c}{FF} & pval \\\hline
\domainname{ring} & 0.285$\pm$0.008 & \better{0.276$\pm$0.001} & 0.07 & \better{0.968$\pm$0.010} & 0.957$\pm$0.020 & 0.09 & 1.031$\pm$0.049 & \better{1.045$\pm$0.020} & 0.28 & 0.070$\pm$0.030 & \better{0.051$\pm$0.031} & 0.07  \\\hline
\domainname{circ} & \better{0.351$\pm$0.005} & 0.354$\pm$0.003 & 0.27 & 0.945$\pm$0.020 & \better{0.956$\pm$0.015} & 0.25 & \better{0.993$\pm$0.045} & 0.989$\pm$0.027 & 0.79 & \better{0.514$\pm$0.529} & 0.530$\pm$0.028 & 0.46  \\\hline
\domainname{grid} & \better{0.390$\pm$0.002} & 0.393$\pm$0.001 & 0.04 & \better{0.975$\pm$0.009} & 0.954$\pm$0.012 & 0.01 & 0.995$\pm$0.077 & \better{1.013$\pm$0.050} & 0.21 & \better{0.017$\pm$0.015} & 0.045$\pm$0.007 & \tugreen \\\hline
\domainname{rand} & \better{0.286$\pm$0.003} & 0.287$\pm$0.001 & 0.43 & \better{0.961$\pm$0.012} & 0.927$\pm$0.011 & 0.02 & 0.979$\pm$0.023 & \better{1.000$\pm$0.008} & 0.05 & \better{0.014$\pm$0.013} & 0.028$\pm$0.008 & 0.09  \\\hline
\domainname{wred} & \better{0.980$\pm$0.032} & 1.030$\pm$0.029 & \tugreen & \better{0.962$\pm$0.028} & 0.954$\pm$0.021 & 0.71 & 0.961$\pm$0.026 & \better{1.001$\pm$0.034} & 0.04 & 0.459$\pm$0.031 & \better{0.458$\pm$0.052} & 0.97  \\\hline
\domainname{wwhi} & \better{1.064$\pm$0.003} & 1.097$\pm$0.007 & \tugreen & \better{0.963$\pm$0.002} & 0.945$\pm$0.007 & \tugreen & \better{0.989$\pm$0.017} & 0.970$\pm$0.023 & 0.13 & \better{0.492$\pm$0.037} & 0.498$\pm$0.041 & 0.78  \\\hline
\domainname{comp} & \better{0.532$\pm$0.007} & 0.891$\pm$0.007 & \tugreen & \better{0.556$\pm$0.021} & 0.548$\pm$0.027 & 0.32 & \better{0.430$\pm$0.008} & 0.404$\pm$0.013 & \tugreen & \better{0.496$\pm$0.021} & 0.526$\pm$0.018 & 0.04  \\\hline
        \domainname{arti}  & \better{0.821$\pm$0.004} & 0.834$\pm$0.016 & 0.12 & \better{0.947$\pm$0.003} & 0.879$\pm$0.008 &\tugreen &  \better{0.967$\pm$0.013} & 0.774$\pm$0.016 &\tugreen &  \better{0.429$\pm$0.056} & 0.530$\pm$0.032 & \tugreen  \\\Xhline{2pt}
        & \multicolumn{1}{c}{us (\geot)} & \multicolumn{1}{c}{VC-G} & pval & \multicolumn{1}{c}{us (\geot)} & \multicolumn{1}{c}{VC-G} & pval & \multicolumn{1}{c}{us (\geot)} & \multicolumn{1}{c}{VC-G}  & pval & \multicolumn{1}{c}{us (\geot)} & \multicolumn{1}{c}{VC-G} & pval \\\hline
\domainname{ring} & \better{0.285$\pm$0.008} & 0.392$\pm$0.005 & \tugreen & \better{0.968$\pm$0.010} &  0.364$\pm$0.037 &\tugreen & \better{1.031$\pm$0.049} & 0.132$\pm$0.013  &\tugreen & \better{0.070$\pm$0.030} &  0.291$\pm$0.035 & \tugreen \\\hline
\domainname{circ} & \better{0.351$\pm$0.005} & 0.415$\pm$0.012  & \tugreen & \better{0.945$\pm$0.015} & 0.620$\pm$0.020  &\tugreen & \better{0.993$\pm$0.045} & 0.265$\pm$0.015  &\tugreen & \better{0.514$\pm$0.529} &  0.805$\pm$0.012 &\tugreen   \\\hline
\domainname{grid} & \better{0.390$\pm$0.002} & 0.400$\pm$0.003  & \tugreen  & \better{0.975$\pm$0.009} &  0.165$\pm$0.037 &\tugreen & \better{0.995$\pm$0.077} & 0.042$\pm$0.012 &\tugreen & \better{0.017$\pm$0.015} &  0.065$\pm$0.002 & \tugreen \\\hline
\domainname{rand} & \better{0.286$\pm$0.003} & 0.414$\pm$0.003 & \tugreen & \better{0.961$\pm$0.012} &  0.317$\pm$0.033 &\tugreen & \better{0.979$\pm$0.023} &  0.156$\pm$0.018 &\tugreen & \better{0.014$\pm$0.013} & 0.224$\pm$0.019 & \tugreen \\\hline
\domainname{wred} & \better{0.980$\pm$0.032} & 1.105$\pm$0.042 & \tugreen & \better{0.962$\pm$0.028} &  0.913$\pm$0.013 & 0.04 & \better{0.961$\pm$0.026} & 0.821$\pm$0.041  &\tugreen & \better{0.459$\pm$0.031} &  0.537$\pm$0.007 &\tugreen \\\hline
        \domainname{wwhi} & \better{1.064$\pm$0.003} & 1.181$\pm$0.019 & \tugreen & \better{0.963$\pm$0.002} & 0.938$\pm$0.008  &\tugreen & \better{0.989$\pm$0.017} & 0.907$\pm$0.026  & \tugreen & \better{0.492}$\pm$0.037 &  0.527$\pm$0.028 &\tugreen \\\Xhline{2pt}
        & \multicolumn{1}{c}{us (\geot)} & \multicolumn{1}{c}{VC-D} & pval & \multicolumn{1}{c}{us (\geot)} & \multicolumn{1}{c}{VC-D} & pval & \multicolumn{1}{c}{us (\geot)} & \multicolumn{1}{c}{VC-D}  & pval & \multicolumn{1}{c}{us (\geot)} & \multicolumn{1}{c}{VC-D} & pval \\\hline
\domainname{ring} & \better{0.285$\pm$0.008} & 0.390$\pm$0.011 & \tugreen & \better{0.968$\pm$0.010} & 0.331$\pm$0.067  &\tugreen & \better{1.031$\pm$0.049} & 0.122$\pm$0.028  &\tugreen & \better{0.070$\pm$0.030} &  0.319$\pm$0.037 & \tugreen \\\hline
\domainname{circ} & \better{0.351$\pm$0.005} & 0.411$\pm$0.004  & \tugreen & \better{0.945$\pm$0.015} &  0.649$\pm$0.055 &\tugreen & \better{0.993$\pm$0.045} & 0.269$\pm$0.026  &\tugreen & \better{0.514$\pm$0.529} &  0.813$\pm$0.018 &\tugreen   \\\hline
\domainname{grid} & \better{0.390$\pm$0.002} &  0.398$\pm$0.002 & \tugreen  & \better{0.975$\pm$0.009} & 0.162$\pm$0.034  &\tugreen & \better{0.995$\pm$0.077} & 0.043$\pm$0.009 &\tugreen & \better{0.017$\pm$0.015} & 0.064$\pm$0.000 & \tugreen \\\hline
\domainname{rand} & \better{0.286$\pm$0.003} & 0.414$\pm$0.003 & \tugreen & \better{0.961$\pm$0.012} &  0.312$\pm$0.040 &\tugreen & \better{0.979$\pm$0.023} & 0.149$\pm$0.018  &\tugreen & \better{0.014$\pm$0.013} & 0.225$\pm$0.017 & \tugreen \\\hline
\domainname{wred} & \better{0.980$\pm$0.032} & 1.383$\pm$0.037 & \tugreen & \better{0.962$\pm$0.028} & 0.868$\pm$0.042  & 0.02 & \better{0.961$\pm$0.026} & 0.738$\pm$0.030  &\tugreen & \better{0.459$\pm$0.031} &  0.587$\pm$0.019 &\tugreen \\\hline
        \domainname{wwhi} & \better{1.064$\pm$0.003} & 1.484$\pm$0.056 & \tugreen & \better{0.963$\pm$0.002} & 0.876$\pm$0.027  &\tugreen & \better{0.989$\pm$0.017} & 0.891$\pm$0.010  & \tugreen & \better{0.492$\pm$0.037} &  0.608$\pm$0.037 &\tugreen \\\Xhline{2pt}
         & \multicolumn{1}{c}{us (\geot)} & \multicolumn{1}{c}{VC-R} & pval & \multicolumn{1}{c}{us (\geot)} & \multicolumn{1}{c}{VC-R} & pval & \multicolumn{1}{c}{us (\geot)} & \multicolumn{1}{c}{VC-R}  & pval & \multicolumn{1}{c}{us (\geot)} & \multicolumn{1}{c}{VC-R} & pval \\\hline
\domainname{ring} & \better{0.285$\pm$0.008} & 0.388$\pm$0.004 & \tugreen & \better{0.968$\pm$0.010} &  0.331$\pm$0.065 &\tugreen & \better{1.031$\pm$0.049} &  0.124$\pm$0.022 &\tugreen & \better{0.070$\pm$0.030} & 0.322$\pm$0.023  & \tugreen \\\hline
\domainname{circ} & \better{0.351$\pm$0.005} & 0.402$\pm$0.003  & \tugreen & \better{0.945$\pm$0.015} & 0.664$\pm$0.017  &\tugreen & \better{0.993$\pm$0.045} & 0.274$\pm$0.006  &\tugreen & \better{0.514$\pm$0.529} &  0.806$\pm$0.022 &\tugreen   \\\hline
\domainname{grid} & \better{0.390$\pm$0.002} & 0.399$\pm$0.003  & \tugreen  & \better{0.975$\pm$0.009} & 0.153$\pm$0.032  &\tugreen & \better{0.995$\pm$0.077} & 0.038$\pm$0.006 &\tugreen & \better{0.017$\pm$0.015} & 0.065$\pm$0.001  & \tugreen \\\hline
\domainname{rand} & \better{0.286$\pm$0.003} & 0.417$\pm$0.009 & \tugreen & \better{0.961$\pm$0.012} & 0.315$\pm$0.023  &\tugreen & \better{0.979$\pm$0.023} &  0.145$\pm$0.021 &\tugreen & \better{0.014$\pm$0.013} & 0.229$\pm$0.022 & \tugreen \\\hline
\domainname{wred} & \better{0.980$\pm$0.032} & 1.365$\pm$0.098 & \tugreen & \better{0.962$\pm$0.028} & 0.839$\pm$0.027  &\tugreen & \better{0.961$\pm$0.026} & 0.716$\pm$0.072  &\tugreen & \better{0.459$\pm$0.031} &  0.596$\pm$0.047 &\tugreen \\\hline
\domainname{wwhi} & \better{1.064$\pm$0.003} & 1.353$\pm$0.038 & \tugreen & \better{0.963$\pm$0.002} & 0.747$\pm$0.014  &\tugreen & \better{0.989$\pm$0.017} & 0.942$\pm$0.029  & 0.02 & \better{0.492$\pm$0.037} &  0.729$\pm$0.024 &\tugreen \\\Xhline{2pt}  
\end{tabular} 
  }
 \bignegspace
 \caption{\textsc{lifelike}: comparison of Generative Forests (us, \geot), with $T=500$ trees, $J=$2 000 total splits to contenders: Adversarial Random Forests (ARF, 200 trees \cite{wbkwAR}), CT-GAN (CT, 1 000 epochs \cite{xscvMT}), Forest Flows (FF, 50 trees \cite{jmfkGA}) and Vine copulas autoencoders (VCAE, VC-* where *=G for Gaussian, D for Direct and R for Regular \cite{tavCA}). Metrics used are regularized OT (Sinkhorn), Coverage, Density and F1 measure. Values shown = average over the 5-folds $\pm$ std dev. . The best average for us vs contender is shown with a star "$\star$". Convention for $p$-values: computed using a Student pared $t$-test; if $p<0.01$, value is replaced by \tugreen (we beat the contender) or \tdred (the contender beats us). See text for details. }
    \label{tab:gen-us-vs-all-big}
 \negspace
\end{table*}

Our code is provided and commented in \supplement, Section \ref{sec-algos}. The main setting of our experiments is realistic data generation ('\textsc{lifelike}'), but we have also tested our method for missing data imputation ('\textsc{impute}') and density estimation ('\textsc{density}'): for this reason, we have selected a broad panel of state of the art approaches to test against, relying on models as diverse as (or mixing elements of) trees, neural nets, kernels or graphical models, with \mice~\cite{vgMM}, adversarial random forests (ARFs \cite{wbkwAR}), CT-GANs \cite{xscvMT}, Forest Flows \cite{jmfkGA}, Vine copulas auto-encoders (VCAE, \cite{tavCA}) and Kernel density estimation (KDE, \cite{cATOK,pOE}). All algorithms \textit{not} using neural networks were ran on a low-end CPU laptop -- this was purposely done for our technique. Neural network-based algorithms are run on a bigger machine (technical specs in Appendix, Section \ref{sec-algos}). We carried out experiments on a total of 21 datasets, from UCI \cite{dgUM}, Kaggle, OpenML, the Stanford Open Policing Project, or simulated. All are presented in \supplement, Section \ref{sec-doms}. We summarize results that are presented \textit{in extenso} in \supplement. Before starting, we complete the 2D heatmap of Table \ref{circgauss-intro} by another one showing our models can also learn deterministic dependences in real-world domains (Table \ref{tab:gen-full-kc1}). The Appendix also provides an example experiment on interpreting our models for a sensitive domain (in Section \ref{sec-interpreting}).\\
\noindent \textbf{Generation capabilities of our models: \textsc{lifelike}} The objective of the experiment is to evaluate whether a generative model is able to create "realistic" data. The evaluation pipeline is simple: we create for each domain a 5-fold stratified experiment. Evaluation is done via four metrics: a regularized optimal transport ("Sinkhorn", \cite{cSD}), coverage and density \cite{noucyRF} and the F1 score \cite{jmfkGA}. All metrics are obtained after generating a sample of the same size as the test fold. Sinkhorn evaluates an optimal transport distance between generated and real and F1 estimates the error of a classifier (a 5-nearest neighbor) to distinguish generated vs real (smaller is better for both). Coverage and density are refinements of precision and recall for generative models (the higher, the better). Due to size constraint, we provide results on one set of parameters for "medium-sized" generative forests with $T=500$ trees, $J=$2 000 total splits (Table \ref{tab:gen-us-vs-all-big}), thus learning very small trees with an average 4 splits per tree. In the Appendix, we provide additional results on even smaller models ($T=200$, $J=500$) and additional results on one metric (Sinkhorn) against contenders with various parameterizations on more domains. In Table \ref{tab:gen-us-vs-all-big}, contenders are parameterized as follows: ARFs learn sets of 200 trees. Since each tree has to be separately a good generative model, we end up with big models in general. CT-GANs are trained for 1 000 epochs. Forest Flows and VCAEs are run with otherwise default parameters. Table \ref{tab:gen-us-vs-all-big} contains just a subset of all our domains, furthermore not the same for each contender. This benchmark was crafted to compare against Forest Flows, our best \textsc{lifelike} contender but a method requiring to re-encode categorical variables. Since our method and ARFs process data natively, we selected only domains with numerical or binary variables (8 domains). On some of these domains, CT-GANs or VCAE crashed on some folds so we do not provide the results for the related domains. Globally, Generative Forests are very competitive against each contender, significantly beating all VCAEs on all metrics and CT-GANs on coverage and density. Tree-based contender methods appear to perform substantially better than neural networks, with FFs performing better than ARFs. Ultimately, our models compare favorably or very favorably against both FFs and ARFs, while on average being much smaller -- for example, each FF model can contain up to 6 750 total splits. As our experiments demonstrate (Appendix, Table \ref{tab:gen-us-vs-all-small}), learning much smaller generative forests with 500 total splits can still buy an advantage in most cases, a notable counterexample being density for FFs. From the memory standpoint, our code (Java) is not optimized yet managed to crunch domains of up to $m \times d \approx$1.5 M while always using less than 1.5Gb memory to train our biggest models. Finally, it clearly emerges from our experiments that there is a domain-dependent "ideal" size ($T,J$) for the best generative forests. "Guessing" the right size is a prospect for future work.\\
\noindent \textbf{Missing data imputation with our models: \textsc{impute}}  We compared our method against a powerful contender, \mice~ \cite{vgMM}, which relies on round-robin prediction of missing values using supervised models. We optimized \mice~by choosing as supervised models trees (\textsc{cart}) and random forests (\textsc{rf}s, we increased the number of trees to 100 for better results). We create a 5-fold experiment; in each fold, we remove a fixed $\%$ of observed values (data missing completely at random, MCAR). We then use the data \textit{with missing values} as input to \mice~or us to learn models which are then used to predict the missing values. We compute a per-feature discrepancy, the average error probability (\textbf{perr}, for categorical variables), and the root mean square error (\textbf{rmse}, numerical variables). We also compute one of the simplest baselines, which consists in imputing each variable from its empirical \texttt{marginal}.\\
\noindent \textit{Results summary} Table \ref{tab:mdi-abaanalcat} puts the spotlight on two domains. The \supplement~provides many more plots. From all results, we conclude that generative forests can compete or beat \mice.\textsc{rf}~while using hundred times less trees (eventually using just stumps, when $J=T=20$). From all our results, we also conclude that there is a risk of overfitting if $J$ and / or $T$ are too large. This could motivate further work on pruning generative models. For \geot, an unexpected pattern is that the pattern "small number of small trees works well" can work on real-world domains as well (see also \supplement), thereby demonstrating that the nice result displayed in Table \ref{circgauss-intro} generalises to bigger domains.\\
  \noindent \textbf{Density estimation: \textsc{density}} We compared \geot~vs  kernel density estimation (KDE) \cite{cATOK,lrNE,pOE}. The experimental setting is the same as for \textsc{lifelike}: in each of the 5-fold stratified cross validation fold, we use the training sample to learn a model (with KDE or \topdownGT) which is then used to predict the observation's density in the the test sample. The higher the density, the better the model. The \geot~models we consider are the same as in \textsc{lifelike} ($T=500$, $J=$2 000). \\
  \noindent \textit{Results summary} Table (\ref{tab:density}, right) summarizes the results (plots: \supplement, Section \ref{sec-full-comp-kde}). The leftmost column ("\geot~$\gg$ KDE") counts domains for which there exists an iteration $j$ for \geot~after which we tend to be statistically better (and never statistically worse) than KDE. The rightmost column ("KDE $\gg$ \geot") counts domains for which KDE is always statistically better than \geot. The last type of plots appears to be those for which there is no such statistical difference (central column). The results support the conclusion that \geot~can also be good models to carry out density estimation.

 \bignegspace
 \bignegspace
\bignegspace
\bignegspace
\section{Conclusion}\label{sec-conc}
 \bignegspace

 In this paper, we have introduced new generative models based on ensemble of trees and a top-down boosting-compliant training algorithm which casts training in a supervised 2-classes framework. Those generative forests allow not just efficient data generation: they also allow efficient missing data imputation and density estimation. Experiments have been carried out on all three problems against a variety of contenders, demonstrating the ability of small generative forests to perform well against potentially much more complex contenders. A responsible use of such models necessarily includes restriction to tabular data (Section \ref{sec-intro}). Experiments clearly show that there is a domain-dependent "best fit" size for our models; estimating it can be done via cross-validation but an important question, left for future work, is how to directly "guess" the right model size. A pruning algorithm with good generalization seems like a very promising direction.

\section*{Acknowledgments}

The authors thank the reviewers for engaging discussions and many suggestions that helped to improve the paper's content. Code availability: see \url{https://richardnock.github.io/}

%\begin{ack} 
%\end{ack}

\bibliographystyle{plain}
\bibliography{bibgen}

\begin{thebibliography}{10}

\bibitem{bfosCA}
L.~Breiman, J.~H. Freidman, R.~A. Olshen, and C.~J. Stone.
\newblock {\em Classification and regression trees}.
\newblock Wadsworth, 1984.

\bibitem{bssLF}
A.~Buja, W.~Stuetzle, and Y.~Shen.
\newblock Loss functions for binary class probability estimation ans
  classification: structure and applications, 2005.
\newblock Technical Report, University of Pennsylvania.

\bibitem{cgXA}
T.~Chen and C.~Guestrin.
\newblock {XGBoost}: A scalable tree boosting system.
\newblock In {\em 22$^{nd}$ KDD}, pages 785--794, 2016.

\bibitem{cATOK}
Y.-C. Chen.
\newblock A tutorial on kernel density estimation and recent advances.
\newblock {\em Biostatistics $\&$ Epidemiology}, 1:161--187, 2017.

\bibitem{cvvPC}
Y.~Choi, A.~Vergari, and G.~{Van den Broeck}.
\newblock Probabilistic circuits: a unifying framework for tractable
  probabilistic models, 2020.
\newblock \url{http://starai.cs.ucla.edu/papers/ProbCirc20.pdf}.

\bibitem{cmm+NF}
M.~Chui, J.~Manyika, M.~Miremadi, N.~Henke, R.~Chung~P. Nel, and S.~Malhotra.
\newblock {\em Notes from the {AI} frontier}.
\newblock McKinsey Global Institute, 2018.

\bibitem{cpdJI}
A.-H.-C. Correia, R.~Peharz, and C.-P. {de Campos}.
\newblock Joints in random forests.
\newblock In {\em NeurIPS'20}, 2020.

\bibitem{cSD}
M.~Cuturi.
\newblock Sinkhorn distances: lightspeed computation of optimal transport.
\newblock In {\em NIPS*26}, pages 2292--2300, 2013.

\bibitem{dALA}
A.~Darwiche.
\newblock A logical approach to factoring belief networks.
\newblock In Dieter Fensel, Fausto Giunchiglia, Deborah~L. McGuinness, and
  Mary{-}Anne Williams, editors, {\em KR'02}, pages 409--420. Morgan Kaufmann,
  2002.

\bibitem{dgUM}
D.~Dua and C.~Graff.
\newblock {UCI} machine learning repository, 2021.

\bibitem{dbplamcAL}
V.~Dumoulin, I.~Belghazi, B.~Poole, A.~Lamb, M.~Arjovsky, O.~Mastropietro, and
  A.-C. Courville.
\newblock Adversarially learned inference.
\newblock In {\em ICLR'17}. OpenReview.net, 2017.

\bibitem{gpmxwocbGA}
I.~Goodfellow, J.~Pouget-Abadie, M.~Mirza, B.~Xu, D.~Warde-Farley, S.~Ozair,
  A.~Courville, and Y.~Bengio.
\newblock Generative adversarial nets.
\newblock In {\em NIPS*27}, pages 2672--2680, 2014.

\bibitem{govWD}
L.~Grinsztajn, E.~Oyallon, and G.~Varoquaux.
\newblock Why do tree-based models still outperform deep learning on tabular
  data?
\newblock In {\em NeurIPS'22 Datasets and Benchmarks}, 2022.

\bibitem{geBG}
A.~Grover and S.~Ermon.
\newblock Boosted generative models.
\newblock In {\em AAAI'18}, pages 3077--3084. {AAAI} Press, 2018.

\bibitem{hTF}
G.-E. Hinton.
\newblock The forward-forward algorithm: Some preliminary investigations.
\newblock {\em CoRR}, abs/2212.13345, 2022.

\bibitem{hVS}
R.C. Holte.
\newblock Very simple classification rules perform well on most commonly used
  datasets.
\newblock {\em MLJ}, 11:63--91, 1993.

\bibitem{jmfkGA}
A.~Jolicoeur{-}Martineau, K.~Fatras, and T.~Kachman.
\newblock Generating and imputing tabular data via diffusion and flow-based
  gradient-boosted trees.
\newblock In {\em AISTATS'24}, volume 238, pages 1288--1296. {PMLR}, 2024.

\bibitem{kTO}
M.J. Kearns.
\newblock Thoughts on hypothesis boosting, 1988.
\newblock ML class project.

\bibitem{kmOT}
M.J. Kearns and Y.~Mansour.
\newblock On the boosting ability of top-down decision tree learning
  algorithms.
\newblock In {\em Proc.\ of the 28$^{~th}$ ACM STOC}, pages 459--468, 1996.

\bibitem{kwAE}
D.-P. Kingma and M.~Welling.
\newblock Auto-encoding variational bayes.
\newblock In {\em ICLR'14}, 2014.

\bibitem{kTN}
D.-E. Knuth.
\newblock Two notes on notation.
\newblock {\em The American Mathematical Monthly}, 99(5):403--422, 1992.

\bibitem{kshIC}
A.~Krizhevsky, I.~Sutskever, and G.-E. Hinton.
\newblock {ImageNet} classification with deep convolutional neural networks.
\newblock In {\em NIPS*25}, pages 1106--1114, 2012.

\bibitem{lmvpkEP}
S.~Lang, M.~Mundt, F.~Ventola, R.~Peharz, and K.~Kersting.
\newblock Elevating perceptual sample quality in pcs through differentiable
  sampling.
\newblock In {\em NeurIPS 2021 Workshop on Pre-Registration in Machine
  Learning, 13 December 2021, Virtual}, volume 181 of {\em Proceedings of
  Machine Learning Research}, pages 1--25. {PMLR}, 2021.

\bibitem{lrNE}
Q.~Li and J.-S. Racine.
\newblock Nonparametric estimation of distributions with categorical and
  continuous data.
\newblock {\em Journal of Multivariate Analysis}, 86:266--292, 2003.

\bibitem{mnwRC}
Y.~Mansour, R.~Nock, and R.-C. Williamson.
\newblock Random classification noise does not defeat all convex potential
  boosters irrespective of model choice.
\newblock In {\em ICML'23}, 2023.

\bibitem{mOAI}
P.~Mar{\'e}chal.
\newblock On a functional operation generating convex functions, part 1:
  duality.
\newblock {\em J. of Optimization Theory and Applications}, 126:175--189, 2005.

\bibitem{mOAII}
P.~Mar{\'e}chal.
\newblock On a functional operation generating convex functions, part 2:
  algebraic properties.
\newblock {\em J. of Optimization Theory and Applications}, 126:375--366, 2005.

\bibitem{mkvprgwWD}
D.~McElfresh, S.~Khandagale, J.~Valverde, V.~{Prasad C}, G.~Ramakrishnan,
  M.~Goldblum, and C.~White.
\newblock When do neural nets outperform boosted trees on tabular data?
\newblock In {\em NeurIPS'23 Datasets and Benchmarks}, 2023.

\bibitem{moLL}
A.~Menon and C.-S. Ong.
\newblock Linking losses for density ratio and class-probability estimation.
\newblock In {\em 33$^{rd}$ ICML}, pages 304--313, 2016.

\bibitem{noucyRF}
M.-F. Naeem, S.-J. Oh, Y.~Uh, Y.~Choi, and J.~Yoo.
\newblock Reliable fidelity and diversity metrics for generative models.
\newblock In {\em ICML'20}, volume 119 of {\em Proceedings of Machine Learning
  Research}, pages 7176--7185. {PMLR}, 2020.

\bibitem{ngGT}
R.~Nock and M.~{Guillame-Bert}.
\newblock Generative trees: Adversarial and copycat.
\newblock In {\em 39$^{th}$ ICML}, pages 16906--16951, 2022.

\bibitem{nmoAS}
R.~Nock, A.-K. Menon, and C.-S. Ong.
\newblock A scaled {B}regman theorem with applications.
\newblock In {\em NIPS*29}, pages 19--27, 2016.

\bibitem{nctFG}
S.~Nowozin, B.~Cseke, and R.~Tomioka.
\newblock {$f$-GAN}: training generative neural samplers using variational
  divergence minimization.
\newblock In {\em NIPS*29}, pages 271--279, 2016.

\bibitem{pOE}
E.~Parzen.
\newblock On estimation of a probability density function and mode.
\newblock {\em The Annals of Mathematical Statistics}, 33:1065--1076, 1962.

\bibitem{scikit-learn}
F.~Pedregosa, G.~Varoquaux, A.~Gramfort, V.~Michel, B.~Thirion, O.~Grisel,
  M.~Blondel, P.~Prettenhofer, R.~Weiss, V.~Dubourg, J.~Vanderplas, A.~Passos,
  D.~Cournapeau, M.~Brucher, M.~Perrot, and E.~Duchesnay.
\newblock Scikit-learn: Machine learning in {P}ython.
\newblock {\em Journal of Machine Learning Research}, 12:2825--2830, 2011.

\bibitem{qC4}
J.~R. Quinlan.
\newblock {\em C4.5 : programs for machine learning}.
\newblock Morgan Kaufmann, 1993.

\bibitem{rwID}
M.-D. Reid and R.-C. Williamson.
\newblock Information, divergence and risk for binary experiments.
\newblock {\em JMLR}, 12:731--817, 2011.

\bibitem{spdSP}
R.~S{\'{a}}nchez{-}Cauce, I.~Par{\'{\i}}s, and F.-J. D{\'{\i}}ez.
\newblock Sum-product networks: {A} survey.
\newblock {\em IEEE Trans.PAMI}, 44(7):3821--3839, 2022.

\bibitem{sEO}
L.-J. Savage.
\newblock Elicitation of personal probabilities and expectations.
\newblock {\em J. of the Am. Stat. Assoc.}, pages 783--801, 1971.

\bibitem{skML}
M.~Sugiyama and M.~Kawanabe.
\newblock {\em Machine Learning in Non-Stationary Environments - Introduction
  to Covariate Shift Adaptation}.
\newblock Adaptive computation and machine learning. {MIT} Press, 2012.

\bibitem{tavCA}
N.~Tagasovska, D.~Ackerer, and T.~Vatter.
\newblock Copulas as high-dimensional generative models: Vine copula
  autoencoders.
\newblock In {\em NeurIPS*32}, pages 6525--6537, 2019.

\bibitem{vgMM}
S.~{van Buuren} and K.~Groothuis-Oudshoorn.
\newblock mice: {Multivariate Imputation by Chained Equations in R}.
\newblock {\em Journal of Statistical Software}, 45(3):1–67, 2011.

\bibitem{vpcPC}
A.~Vergari, Y.~Choi, and R.~Peharz.
\newblock Probabilistic circuits: Representations, inference, learning and
  applications, 2022.
\newblock NeurIPS'22 tutorials.

\bibitem{wbkwAR}
D.-S. Watson, K.~Blesch, J.~Kapar, and M.-N. Wright.
\newblock Adversarial random forests for density and generative modeling.
\newblock In {\em AISTATS'23}, Proceedings of Machine Learning Research.
  {PMLR}, 2023.

\bibitem{wfDM}
I.~Witten and E.~Frank.
\newblock {\em {D}ata {M}ining: Practical Machine Learning Tools and Techniques
  with {J}ava Implementations}.
\newblock Morgan Kaufmann, 1999.

\bibitem{xzzBG}
C.~Xiao, P.~Zhong, and C.~Zheng.
\newblock {BourGAN}: Generative networks with metric embeddings.
\newblock In {\em NeurIPS'18}, pages 2275--2286, 2018.

\bibitem{xscvMT}
L.~Xu, M.~Skoularidou, A.~Cuesta{-}Infante, and K.~Veeramachaneni.
\newblock Modeling tabular data using conditional {GAN}.
\newblock In {\em NeurIPS*32}, pages 7333--7343, 2019.

\end{thebibliography}

%%%%%%%%%%%%%%%%%%%%%%%%%%%%%%%%%%%%%%%%%%%%%%%%%%%%%%%%%%%%

\clearpage
\pagebreak
\appendix

\newcommand{\eogt}{\textsc{eogt}}
\newcommand{\eogtp}{\eogt.\textsc{p}}

\counterwithin{theorem}{section}
\counterwithin{figure}{section}
\counterwithin{table}{section}

\renewcommand\thesection{\Roman{section}}
\renewcommand\thesubsection{\thesection.\arabic{subsection}}
\renewcommand\thesubsubsection{\thesection.\thesubsection.\arabic{subsubsection}}

\renewcommand*{\thetheorem}{\Alph{theorem}}
\renewcommand*{\thelemma}{\Alph{lemma}}
\renewcommand*{\thecorollary}{\Alph{corollary}}

\renewcommand{\thetable}{A\arabic{table}}

\begin{center}
\Huge{Appendix}
\end{center}

To
differentiate with the numberings in the main file, the numbering of
Theorems, etc. is letter-based (A, B, ...).

\section*{Table of contents}

\noindent \textbf{Related work} \hrulefill Pg \pageref{add-relwork}\\

\noindent \textbf{Additional content} \hrulefill Pg \pageref{add-cont}\\

\noindent \textbf{Supplementary material on proofs} \hrulefill Pg
\pageref{sec-sup-pro}\\
\noindent $\hookrightarrow$ Proof of Lemma \ref{leminv} \hrulefill Pg \pageref{proof-leminv}\\
\noindent $\hookrightarrow$ Proof of Theorem \ref{th-boost} \hrulefill Pg \pageref{proof-th-boost}\\

\noindent \textbf{Simpler models: ensembles of generative trees} \hrulefill Pg \pageref{sec-modtwo}\\

\noindent \textbf{Supplementary material on experiments} \hrulefill Pg
\pageref{app-exps}\\
\noindent $\hookrightarrow$ Domains \hrulefill Pg \pageref{sec-doms}\\
\noindent $\hookrightarrow$ Algorithms configuration and choice of parameters \hrulefill Pg \pageref{sec-algos}\\
\noindent $\hookrightarrow$ Interpreting our models: '\textsc{scrutinize}' \hrulefill Pg \pageref{sec-interpreting}\\
\noindent $\hookrightarrow$ More examples of Table \ref{circgauss-intro} (\mainfile) \hrulefill Pg \pageref{sec-tables}\\
\noindent $\hookrightarrow$ The generative forest of Table \ref{circgauss-intro} (\mainfile) developed further \hrulefill Pg \pageref{sec-gf}\\
\noindent $\hookrightarrow$ Full comparisons with \mice~on missing data imputation \hrulefill Pg \pageref{sec-full-comp-mdi}\\
\noindent $\hookrightarrow$ Experiment \textsc{lifelike} \textit{in extenso} \hrulefill Pg \pageref{sec-tables-lifelike}\\
\noindent $\hookrightarrow$ Comparison with "the optimal generator": \textsc{gen-discrim} \hrulefill Pg \pageref{sec-full-comp-ideal}

\newpage

\section{Related work}\label{add-relwork}

The typical Machine Learning (ML) problem usually contains at least three parts: (i) a training algorithm minimizes (ii) a loss function to output an object whose key part is (iii) a model. The main problem we are interested in is data generation as generally captured by "Generative AI".  The type of data we are interested in still represents one of the most prevalent form of data: tabular data \cite{cmm+NF}. When it comes to tabular data, a singular phenomenon of the data world can be observed: there is a fierce competition on part (iii) above, the \textit{models}. When data has other forms, like images, the ML community has generally converged to a broad idea of what the best models look like at a high-level for many generic tasks: neural networks\footnote{Whether such a perception is caused by the models themselves or the dazzling amount of optimisation that has progressively wrapped the models, as advocated for the loss in \cite{lmvpkEP}, is not the focus of our paper.}. Tabular data offers no such consensus yet, even on well-defined tasks like supervised learning \cite{govWD}. In fact, even on such "simple tasks" the consensus is rather that there is no such consensus \cite{mkvprgwWD}. It is an understatement to state that in the broader context of all tasks of interest to us, a sense of a truly intense competition emerges, whose "gradient" clearly points towards simultaneously the most complex / expressive / tractable models, as shown in \cite[Slides 27, 53]{vpcPC}. One can thus end up finding models based on trees \cite{cpdJI,ngGT,wbkwAR}, neural networks \cite{gpmxwocbGA,geBG,kwAE,xscvMT}, probabilistic circuits \cite{cvvPC,vpcPC,spdSP}, kernel methods \cite{cATOK,pOE}, graphical models \cite{tavCA} (among others: note that some are in fact hybrid models).

So the tabular data world is blessed with a variety of possible models to solve problems like the ones we are interested in, \textit{but} -- and this is another singular phenomenon of the tabular data world --, getting the best solutions is not necessarily a matter of competing on size or compute. In fact, it can be the opposite: striving for model simplicity or (non exclusive) lightweight tuning can substantially pay off \cite{mkvprgwWD}. In relative terms, this phenomenon is not new in the tabular data world: it has been known for more than two decades \cite{hVS}. That it has endured all major revolutions in ML points to the fact that lasting solutions can be conveniently addressing \textit{all} three parts (i--iii) above at once on models, algorithms and losses.

From this standpoint, the closest approaches to ours are \cite{wbkwAR} and \cite{ngGT}, first and foremost because the models include trees with a stochastic activation of edges to pick leaves, and a leaf-dependent data generation process. While \cite{ngGT} learn a single tree, \cite{wbkwAR} use a way to generate data from a set of trees -- called an adversarial random forest -- which is simple: sample a tree, and then sample an observation from the tree. Hence, the distribution is a convex combination of the trees' density. This is simple but it suffers several drawbacks: (i) each tree has to be accurate enough and thus big enough to model "tricky" data for tree-based models (tricky can be low-dimensional, see the 2D data of Table \ref{circgauss-intro}, main file); (ii) if leaves' samplers are simple, which is the case for \cite{ngGT} and our approach, it makes it tricky to learn sets of simple models, such as when trees are stumps (we do not have this issue, see Table \ref{circgauss-intro}). In our case, while our models include sets of trees, generating one observation makes use of leaves in \textit{all} trees instead of just one as in \cite{wbkwAR} (Figure \ref{fig:generation-sketch}, main file). We note that the primary goal of that latter work is in fact not to generate data \cite[Section 4]{wbkwAR}. 

Theoretical results that are relevant to data generation are in general scarce compared to the flurry of existing methods if we omit the independent convergence rates of the toolbox often used, such as for (stochastic) gradient descent. Specific convergence results are given in \cite{wbkwAR}, but they are relevant to statistical consistency (infinite sample) and they also rely on assumptions that are not realistic for real world domains, such as Lipschitz continuity of the target density, with second derivative continuous, square integrable and monotonic. The assumption made on models, that splitting probabilities on trees is lowerbounded by a constant, is also impeding to model real world data.

  In the generative trees of \cite{ngGT}, leaf generation is the simplest possible: it is uniform. This requires big trees to model real-world or tricky data. On the algorithms side, \cite{ngGT} introduce two training algorithms in the generative adversarial networks (GAN) framework \cite{gpmxwocbGA}, thus involving the generator to train but also a discriminator, which is a decision tree in \cite{ngGT}. The GAN framework is very convenient to tackle (ii) above in the context of our tasks because it allows to tie using first principles the problem of learning a density or a sampler and that of training a model (a "discriminator") to distinguish fakes from real, model that parameterizes the loss optimized. An issue with neural networks is that this parameterization has an uncontrollable slack unless the discriminator is extremely complex \cite{nctFG}. The advantage of using trees as in \cite{ngGT} is that for such classifiers, the slack disappears because the models are calibrated, so there is a tight link between training the generator and the discriminator. \cite{ngGT} go further, showing that one can replace the adversarial training by a \textit{copycat} training, involving copying parts of the discriminator in the generator to speed-up training (also discussed in \cite{hTF} for neural nets), with strong rates on training in the original boosting model. There is however a limitation in the convergence analysis of \cite{ngGT} as losses have to be symmetric, a property which is not desirable for data generation since it ties the misclassification costs of real and fakes with no argument to do so in general.

  Our paper lifts the whole setting of \cite{ngGT} to models that can fit more complex densities using simpler models (Table \ref{circgauss-intro}), using sets of trees that can be much smaller than those of \cite{wbkwAR} (Figure \ref{fig:generation-sketch}); training such models is achieved by merging the two steps of copycat training into a single one where only the generator is trained, furthermore keeping strong rates on training via the original boosting model, all this while getting rid of the undesirable symmetry assumption of \cite{ngGT} for the loss at hand.

\section{Additional content}\label{add-cont}

In this additional content, we provide the three ways to pick the trees in a Generative Forest~to generate one observation (sequential, concurrent, randomized), and then give a proof that the optimal splitting threshold on a continuous variable when training a generative forest using \topdownGT~is always an observed value if there is one tree, but can be another value if there are more (thus highlighting some technical difficulties of one wants to stick to the optimal choice of splitting).

\subsection{Sequentially choosing trees in a \geot~for the generation of observations}

\begin{algorithm}[t]
\caption{\iterativeupdatesupport($\{\Upsilon_t\}_{t=1}^{T}$)}\label{alg-iterativeupdatesupport}
\begin{algorithmic}
    \STATE  \textbf{Input:} Trees $\{\Upsilon_t\}_{t=1}^{T}$ of a \geot; 
    \STATE  \textbf{Output:} sampling support $\samplesupport$ for one observation; 
    \STATE  Step 1 : $\samplesupport \leftarrow \mathcal{X}$;
    \STATE  Step 2 : \initsampling($\{\Upsilon_t\}_{t=1}^{T}$);
    \STATE  Step 3 : \textbf{for} $t\in [T]$
  \STATE  \hspace{0.5cm} Step 2.1 : \textbf{while} !\texttt{$\Upsilon_t$.done}
  \STATE  \hspace{1cm} Step 2.1.1 : \starupdate($\Upsilon_t, \samplesupport, \colorr$);
    \STATE  \textbf{return} $\samplesupport$; 
\end{algorithmic}
\end{algorithm}

\subsection{Concurrent generation of observations}

\begin{algorithm}[t]
\caption{\concurrentupdatesupport}\label{alg-concurrentupdatesupport}
\begin{algorithmic}
  \STATE  Step 1 : \textbf{while} !\texttt{done}
  \STATE  \hspace{0.5cm} Step 1.1 : $\mathsf{P}$[\texttt{accessible}] // locks $\samplesupport$
  \STATE  \hspace{0.5cm} Step 1.2 : \starupdate(\texttt{this},$\samplesupport$, $\colorr$);
 \STATE  \hspace{0.5cm} Step 1.3 : $\mathsf{V}$[\texttt{accessible}] // unlocks $\samplesupport$
\end{algorithmic}
\end{algorithm}

We provide a concurrent generation using Algorithm
\ref{alg-concurrentupdatesupport}, which differs from Algorithm \ref{alg-iterativeupdatesupport} (main file). In concurrent generation, each tree runs concurrently algorithm \updatesupport~(hence the use of the Java-style \texttt{this} handler), with an additional global variables (in addition to $\samplesupport$, initialized to $\mathcal{X}$): a Boolean semaphore \texttt{accessible} implementing a lock, whose value 1 means $\mathcal{C}$ is available for an update (and otherwise it is locked by a tree in Steps 1.2/1.3 in the set of trees of the \geot). We assume that $\initsampling$ has been run beforehand (eventually locally).

\subsection{Randomized generation of observations}
\begin{algorithm}[t]
\caption{\randomizedupdatesupport}\label{alg-randomizedupdatesupport}
\begin{algorithmic}
  \STATE  Step 1 : \initsampling($\{\Upsilon_t\}_{t=1}^{T}$); $\mathbb{I} \leftarrow \{1, 2, ..., T\}$;
  \STATE  Step 2 : \textbf{while} ($\mathbb{I}$ != $\emptyset$)
  \STATE  \hspace{0.5cm} Step 2.1 : $i \leftarrow \textsc{random}(\mathbb{I})$;
  \STATE  \hspace{0.5cm} Step 2.2 : \starupdate($\tree_i$,$\samplesupport$, $\colorr$);
 \STATE  \hspace{0.5cm} Step 2.3 : \textbf{if} ($\tree_i.\texttt{done}$) \textbf{then} $\mathbb{I} \leftarrow \mathbb{I} \backslash \{i\}$;
\end{algorithmic}
\end{algorithm}

We provide a general randomized choice of the sequence of trees for
generation, in Algorithm \ref{alg-randomizedupdatesupport}.

\section{Supplementary material on proofs}\label{sec-sup-pro}

%\noteRN{Replace $r$, $l$ by $\texttt{t}, \texttt{f}$ by analogy with GTs}

\subsection{Proof of Lemma \ref{leminv}}\label{proof-leminv}

Given sequence $\ve{v}$ of dimension $\mathrm{dim}(\ve{v})$, denote $\{\mathcal{C}_j\}_{j \in [1+\mathrm{dim}(\ve{v})]}$ the sequence of subsets of the domain appearing in the parameters of \updatesupport~through sequence $\ve{v}$, to which we add a last element, $\samplesupport$ (and its first element is $\mathcal{X}$). If we let $\mathcal{X}_{j}$ denote the support of the corresponding star node whose Bernoulli event is triggered at index $j$ in sequence $\ve{v}$ (for example, $\mathcal{X}_{1} = \mathcal{X}$), then we easily get
\begin{eqnarray}
\mathcal{C}_j & \subseteq & \mathcal{X}_{j}, \forall j \in [\mathrm{dim}(\ve{v})],
\end{eqnarray}
indicating the Bernoulli probability in Step 1.1 of \starupdate~is always defined. We then compute
\begin{eqnarray}
  p_{{\color{blue}{\meas{G}}}}\left[\samplesupport | \ve{v}\right] & = & p_{{\color{blue}{\meas{G}}}}(\cap_j  \mathcal{C}_j)\\
  & = & \prod_{j=1}^N p_{{\color{blue}{\meas{G}}}}\left[\mathcal{C}_{j+1} | \mathcal{C}_j\right] \label{eq2}\\
                                                        & = & \prod_{j=1}^N \frac{p_{{\color{red}{\meas{R}}}}\left[\mathcal{C}_{i+1}\right]}{p_{{\color{red}{\meas{R}}}}\left[\mathcal{C}_{i}\right]} \label{eq3}\\
                                                        & = & \frac{p_{{\color{red}{\meas{R}}}}\left[\mathcal{C}_{N+1}\right]}{p_{{\color{red}{\meas{R}}}}\left[\mathcal{C}_{0}\right]}\label{eq4}\\
                                                        & = & \frac{p_{{\color{red}{\meas{R}}}}\left[\samplesupport\right]}{p_{{\color{red}{\meas{R}}}}\left[\mathcal{X}\right]}\nonumber\\
  & = & p_{{\color{red}{\meas{R}}}}\left[\samplesupport\right].
  \end{eqnarray}
  \eqref{eq2} holds because updating the support generation is Markovian in a \geot, \eqref{eq3} holds because of Step 3.2 and \eqref{eq4} is a simplification through cancelling terms in products.

\subsection{Proof of Theorem  \ref{th-boost}}\label{proof-th-boost}

Notations: we iteratively learn generators $\colorg_0, \colorg_1, ..., \colorg_J$ where $\colorg_0$ is just a root. We let $\mathcal{P}(\colorg_j)$ denote the partition induced by the set of trees of $\colorg_j$, recalling that each element is the (non-empty) intersection of the support of a set of leaves, one for each tree (for example, $\mathcal{P}(\colorg_0) = \{\mathcal{X}\}$). The criterion minimized to build $\colorg_{j+1}$ from $\colorg_j$ is
\begin{eqnarray}
\poprisk(\colorg_{j}) & \defeq & \sum_{\mathcal{C} \in \mathcal{P}(\colorg_j)} p_{\colorm{\scriptsize}}[\mathcal{C}] \cdot \poibayesrisk\left(\frac{\prior p_{\meas{\colorr}}[\mathcal{C}]}{p_{\colorm{\scriptsize}}[\mathcal{C}]}\right).
\end{eqnarray}
The proof entails fundamental notions on loss functions, models and boosting. We start with loss functions. A key function we use is
\begin{eqnarray*}
  \gprior (t) \defeq  (\prior t + 1 - \prior)\cdot\poibayesrisk\left(\frac{\prior t}{\prior t + 1 - \prior}\right), \forall t\in \mathbb{R}_+,\label{defgprior}
\end{eqnarray*}
which is concave \cite[Appendix A.3]{rwID}. 
\begin{lemma}\label{lem-losses}
\begin{eqnarray}
   \likelihoodratiorisk_{\ell}\left(\colorr, \colorg_{J-1}\right) -  \likelihoodratiorisk_{\ell}\left(\colorr, \colorg_J\right) & = & \poprisk(\colorg_{J-1}) - \poprisk(\colorg_{J}).
\end{eqnarray}
\end{lemma}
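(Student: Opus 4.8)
The plan is to establish the stronger, per-forest identity $\likelihoodratiorisk_{\ell}(\colorr, \colorg) = \poprisk(\colorg) + C$, where $C$ depends only on $\colorr$ and $\coloru$ -- in particular not on the set of trees of $\colorg$, hence not on the boosting iteration. The Lemma then follows at once by instantiating this identity at $\colorg_{J-1}$ and at $\colorg_{J}$ and subtracting, the additive constant cancelling.

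I would first pin down the Radon--Nikodym derivative $\dmeas{\colorg}/\dmeas{\coloru}$. Once a generative forest has reached a cell $\mathcal{C}\in\mathcal{P}(\colorg)$ it samples uniformly inside $\mathcal{C}$, and by Lemma~\ref{leminv} it reaches $\mathcal{C}$ with probability exactly $p_{\colorr}[\mathcal{C}]$ (independently of the admissible order of tree processing). Hence the restriction of $\colorg$ to $\mathcal{C}$ is $p_{\colorr}[\mathcal{C}]\cdot\coloru(\cdot\,|\,\mathcal{C})$, so $\dmeas{\colorg}/\dmeas{\coloru}$ is constant on each cell, equal to $r_{\mathcal{C}}\defeq p_{\colorr}[\mathcal{C}]/p_{\coloru}[\mathcal{C}]$. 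I would then write $\likelihoodratiorisk_{\ell}(\colorr,\colorg)$ from \eqref{eqLRR} as an integral over $\mathcal{X}$, split it along $\mathcal{P}(\colorg)$, and expand the Bregman divergence on $\mathcal{C}$ around the \emph{constant} second argument $r_{\mathcal{C}}$: its affine-in-the-first-argument part contributes a multiple of $\int_{\mathcal{C}}\left(\dmeas{\colorr}/\dmeas{\coloru}-r_{\mathcal{C}}\right)\dmeas{\coloru}=p_{\colorr}[\mathcal{C}]-r_{\mathcal{C}}\,p_{\coloru}[\mathcal{C}]=0$, i.e. the linear term vanishes because $r_{\mathcal{C}}$ is exactly the $\coloru$-conditional mean on $\mathcal{C}$ of $\dmeas{\colorr}/\dmeas{\coloru}$ -- and this is precisely what absorbs the fact that $\dmeas{\colorr}/\dmeas{\coloru}$ itself need not be constant on $\mathcal{C}$. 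What is left, after collecting terms, is $\sum_{\mathcal{C}\in\mathcal{P}(\colorg)}p_{\coloru}[\mathcal{C}]\,\gprior(r_{\mathcal{C}})$ minus $\int_{\mathcal{X}}\gprior\!\left(\dmeas{\colorr}/\dmeas{\coloru}\right)\dmeas{\coloru}$ (with $\gprior$ the function introduced just above, which packages the Bayes risk, its perspective transform $\perspectivepobayesrisk$ and the prior factor of \eqref{eqLRR}); the second piece is manifestly independent of $\colorg$, and is $-C$.

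It then remains to identify the first sum with $\poprisk(\colorg)$. With $g(z)=z+(1-\prior)/\prior$ one has $\prior r_{\mathcal{C}}+1-\prior=p_{\colorm{\scriptsize}}[\mathcal{C}]/p_{\coloru}[\mathcal{C}]$ and $\prior r_{\mathcal{C}}/(\prior r_{\mathcal{C}}+1-\prior)=\prior p_{\colorr}[\mathcal{C}]/p_{\colorm{\scriptsize}}[\mathcal{C}]$, so straight from the definition of the perspective transform / of $\gprior$,
\begin{eqnarray*}
p_{\coloru}[\mathcal{C}]\cdot\gprior\!\left(r_{\mathcal{C}}\right) & = & p_{\colorm{\scriptsize}}[\mathcal{C}]\cdot\poibayesrisk\!\left(\frac{\prior\,p_{\colorr}[\mathcal{C}]}{p_{\colorm{\scriptsize}}[\mathcal{C}]}\right),
\end{eqnarray*}
and summing over $\mathcal{C}\in\mathcal{P}(\colorg)$ returns exactly $\poprisk(\colorg)$ via \eqref{defPOPRISK}. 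This yields the per-forest identity, and hence the Lemma. The one genuinely delicate step is this last bookkeeping: the sign and normalisation conventions of $\perspectivepobayesrisk$, $\gprior$ and the $\prior$-multiplication built into \eqref{eqLRR} must be tracked accurately enough that the two appearances of the Bayes risk coincide \emph{exactly} -- an off-by-$\prior$ discrepancy would already break the equality (a useful consistency check is that the same manipulations, combined with Jensen's inequality for the concave $\gprior$, recover the stated non-negativity of $\likelihoodratiorisk_{\ell}$). Everything else is routine: one uses only $\colorr\ll\coloru$ (which is what makes \eqref{eqLRR} well defined), the cell-wise uniformity of $\colorg$ together with Lemma~\ref{leminv}, and the mean-matching cancellation of the linear Bregman term.
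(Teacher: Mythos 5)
Your argument is correct and follows essentially the same route as the paper's. The paper proves the lemma by citing the representation of $\likelihoodratiorisk_\ell(\colorr, \colorg_J)$ from the proof of \cite[Lemma 5.6]{ngGT} together with the fact \textbf{(F1)} that $\int_{\mathcal{X}'} \dmeas{\colorg}_J = \int_{\mathcal{X}'} \dmeas{\colorr}$ on each $\mathcal{X}' \in \mathcal{P}_J$, and then computes the difference $\likelihoodratiorisk_\ell(\colorr,\colorg_{J-1}) - \likelihoodratiorisk_\ell(\colorr,\colorg_J)$ term by term, noting that all cells not touched by the split cancel. What you do is re-derive that cited representation from first principles: you identify $\dmeas{\colorg}/\dmeas{\coloru}$ as the cellwise constant $r_\mathcal{C}$ via Lemma~\ref{leminv}, expand the Bregman divergence around that constant, observe that the linear term vanishes because $r_\mathcal{C}$ is the $\coloru$-conditional mean of $\dmeas{\colorr}/\dmeas{\coloru}$ on $\mathcal{C}$ (this \emph{is} precisely fact \textbf{(F1)}), and track the perspective/prior bookkeeping to recover $\poprisk$; the leftover $\int_\mathcal{X} \gprior(\dmeas{\colorr}/\dmeas{\coloru})\,\dmeas{\coloru}$ is the partition-independent constant. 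That makes the per-forest identity $\likelihoodratiorisk_\ell(\colorr,\colorg) = \poprisk(\colorg) + C$ explicit, where the paper leaves it implicit inside the term-by-term cancellation; and it makes explicit why the change in $\likelihoodratiorisk$ is exactly a sum of Jensen slacks, a remark the paper makes in passing after \eqref{eqDIFF}. The one thing to be slightly careful about, which you flagged, is the sign/normalisation conventions linking $\gprior$, $\perspectivepobayesrisk$ and the $\prior$-multiplier in \eqref{eqLRR}: the paper's own display \eqref{eqDIFF} is internally consistent with the sign needed for the Lemma (and for the stated non-negativity of $\likelihoodratiorisk_\ell$, which is your Jensen sanity check), so your identity holds as written once the conventions are matched to theirs.
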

\begin{proof}
  We make use of the following important fact about \geot s:
  \begin{enumerate}
\item [\textbf{(F1)}] For any $\mathcal{X}' \in \mathcal{P}_J$, $\int_{\mathcal{X}'} \dmeas{\colorg}_J = \int_{\mathcal{X}'} \dmeas{\colorr}$.
    \end{enumerate}
  We have from the proof of \cite[Lemma 5.6]{ngGT},
\begin{eqnarray}
   \likelihoodratiorisk_{\ell}\left(\colorr, \colorg_J\right) & = & \sum_{\mathcal{X}' \in \mathcal{P}_J} \left\{\gprior\left(\expect_{\coloru|\mathcal{X}'}\left[\frac{\dmeas{\colorg}_J}{\dmeas{\coloru}}\right]\right) - \expect_{\coloru|\mathcal{X}'}\left[\gprior\left(\frac{\dmeas{\colorr}}{\dmeas{\coloru}}\right)\right]\right\}.\label{eqDIFF}
\end{eqnarray}
We note that \textbf{(F1)} implies \eqref{eqDIFF} is just a sum of slacks of Jensen's inequality. We observe
\begin{eqnarray*}
  \lefteqn{\likelihoodratiorisk_{\ell}\left(\colorr, \colorg_{J-1}\right) - \likelihoodratiorisk_{\ell}\left(\colorr, \colorg_J\right)}\\
  & = & \sum_{\mathcal{X}' \in \mathcal{P}_{J-1}} \left\{\gprior\left(\expect_{\coloru|\mathcal{X}'}\left[\frac{\dmeas{\colorg}_{J-1}}{\dmeas{\coloru}}\right]\right) - \expect_{\coloru|\mathcal{X}'}\left[\gprior\left(\frac{\dmeas{\colorr}}{\dmeas{\coloru}}\right)\right]\right\} \\
  & & - \sum_{\mathcal{X}' \in \mathcal{P}_J} \left\{\gprior\left(\expect_{\coloru|\mathcal{X}'}\left[\frac{\dmeas{\colorg}_J}{\dmeas{\coloru}}\right]\right) - \expect_{\coloru|\mathcal{X}'}\left[\gprior\left(\frac{\dmeas{\colorr}}{\dmeas{\coloru}}\right)\right]\right\}\\
  & = & \sum_{(\xxset{\texttt{t}}, \xxset{\texttt{f}}) \in \mathcal{P}^{\mbox{\tiny{split}}}_{J}} \left\{\gprior\left(\expect_{\coloru|\xxset{\texttt{t}}\cup \xxset{\texttt{f}}}\left[\frac{\dmeas{\colorg}_{J-1}}{\dmeas{\coloru}}\right]\right) - \sum_{\texttt{v} \in \{\texttt{t}, \texttt{f}\}} \gprior\left(\expect_{\coloru|\xxset{\texttt{v}}}\left[\frac{\dmeas{\colorg}_{J}}{\dmeas{\coloru}}\right]\right)\right\},
\end{eqnarray*}
where $\mathcal{P}^{\mbox{\tiny{split}}}_{J}$ contains all couples $(\xxset{\texttt{t}}, \xxset{\texttt{f}})$ such that $\xxset{\texttt{t}}, \xxset{\texttt{f}} \in \mathcal{P}_J$
and $\xxset{\texttt{t}} \cup \xxset{\texttt{f}} \in \mathcal{P}_{J-1}$. These unions were subsets of the partition of $\mathcal{X}$ induced by the set of trees of $\colorg_{J-1}$ \textit{and} that were cut by the predicate $\texttt{p}$ put at the leaf $\leaf_*$ that created $\colorg_{J}$ from $\colorg_{J-1}$. To save space, denote $\xxset{\texttt{tf}} \defeq \xxset{\texttt{t}}\cup \xxset{\texttt{f}}$.
\begin{eqnarray}
  \lefteqn{\gprior\left(\expect_{\coloru|\xxset{\texttt{tf}}}\left[\frac{\dmeas{\colorg}_{J-1}}{\dmeas{\coloru}}\right]\right) - \sum_{\texttt{v} \in \{\texttt{t}, \texttt{f}\}} \gprior\left(\expect_{\coloru|\xxset{\texttt{v}}}\left[\frac{\dmeas{\colorg}_{J}}{\dmeas{\coloru}}\right]\right)}\nonumber\\
  & = & \left(\prior \int_{\xxset{\texttt{tf}}} \dmeas{\colorg}_{J-1} + (1 - \prior)  \int_{\xxset{\texttt{tf}}} \dmeas{\coloru}\right) \cdot\poibayesrisk\left(\frac{\prior \int_{\xxset{\texttt{tf}}} \dmeas{\colorg}_{J-1}}{\prior \int_{\xxset{\texttt{tf}}} \dmeas{\colorg}_{J-1} + (1 - \prior)  \int_{\xxset{\texttt{tf}}} \dmeas{\coloru}}\right)\nonumber\\
  & & - \left(\prior \int_{\xxset{\texttt{t}}} \dmeas{\colorg}_{J} + (1 - \prior)  \int_{\xxset{\texttt{t}}} \dmeas{\coloru}\right) \cdot\poibayesrisk\left(\frac{\prior \int_{\xxset{\texttt{t}}} \dmeas{\colorg}_{J}}{\prior \int_{\xxset{\texttt{t}}} \dmeas{\colorg}_{J} + (1 - \prior)  \int_{\xxset{\texttt{t}}} \dmeas{\coloru}}\right)\nonumber\\
  & & - \left(\prior \int_{\xxset{\texttt{f}}} \dmeas{\colorg}_{J} + (1 - \prior)  \int_{\xxset{\texttt{f}}} \dmeas{\coloru}\right) \cdot\poibayesrisk\left(\frac{\prior \int_{\xxset{\texttt{f}}} \dmeas{\colorg}_{J}}{\prior \int_{\xxset{\texttt{f}}} \dmeas{\colorg}_{J} + (1 - \prior)  \int_{\xxset{\texttt{f}}} \dmeas{\coloru}}\right)\label{keyID}.
\end{eqnarray}
We now work on \eqref{keyID}. Using \textbf{(F1)}, we note $\int_{\xxset{\texttt{tf}}} \dmeas{\colorg}_{J-1} = \int_{\xxset{\texttt{tf}}} \dmeas{\colorr}$ since $\xxset{\texttt{tf}} \in \mathcal{P}_{J-1}$. Similarly, $\int_{\xxset{\texttt{v}}} \dmeas{\colorg}_{J} = \int_{\xxset{\texttt{v}}} \dmeas{\colorr}, \forall \texttt{v} \in \{\texttt{t}, \texttt{f}\}$, so we make appear $\colorr$ in \eqref{keyID} and get:
\begin{eqnarray}
  \lefteqn{\gprior\left(\expect_{\coloru|\xxset{\texttt{tf}}}\left[\frac{\dmeas{\colorg}_{J-1}}{\dmeas{\coloru}}\right]\right) - \sum_{\texttt{v} \in \{\texttt{t}, \texttt{f}\}} \gprior\left(\expect_{\coloru|\xxset{\texttt{v}}}\left[\frac{\dmeas{\colorg}_{J}}{\dmeas{\coloru}}\right]\right)}\nonumber\\
  & = & p(\texttt{tf}) \cdot\left\{\poibayesrisk\left(\frac{\prior \int_{\xxset{\texttt{tf}}} \dmeas{\colorr}}{p(\texttt{tf})}\right) - \frac{p(\texttt{t})}{p(\texttt{tf})} \cdot \poibayesrisk\left(\frac{\prior \int_{\xxset{\texttt{t}}} \dmeas{\colorr}}{p(\texttt{t})}\right) - \frac{p(\texttt{f})}{p(\texttt{tf})} \cdot \poibayesrisk\left(\frac{\prior \int_{\xxset{\texttt{f}}} \dmeas{\colorr}}{p(\texttt{f})}\right) \right\}\label{keyID2},
\end{eqnarray}
where we let for short
\begin{eqnarray}
p(\texttt{v}) & \defeq & \prior \int_{\xxset{\texttt{v}}} \dmeas{\colorr} + (1 - \prior)  \int_{\xxset{\texttt{v}}} \dmeas{\coloru}, \forall \texttt{v} \in \{\texttt{t}, \texttt{f}, \texttt{tf}\}.
\end{eqnarray}
We finally get
\begin{eqnarray*}
  \lefteqn{ \likelihoodratiorisk_{\ell}\left(\colorr, \colorg_{J-1}\right) -  \likelihoodratiorisk_{\ell}\left(\colorr, \colorg_J\right)}\\
  & = & \sum_{\mathcal{C} \in \mathcal{P}_{J-1}} p_{\colorm{\scriptsize}}[\mathcal{C}] \cdot \poibayesrisk\left(\frac{\prior p_{\meas{\colorr}}[\mathcal{C}]}{p_{\colorm{\scriptsize}}[\mathcal{C}]}\right)   - \sum_{\mathcal{C} \in \mathcal{P}_{J-1}} p_{\colorm{\scriptsize}}[\mathcal{C}] \cdot \sum_{\texttt{v} \in \{\texttt{t}, \texttt{f}\}} \frac{p_{\colorm{\scriptsize}}[\mathcal{C}_{\texttt{p}_{J-1}, \texttt{v}}]}{p_{\colorm{\scriptsize}}[\mathcal{C}]} \cdot \poibayesrisk\left(\frac{\prior p_{\meas{\colorr}}[\mathcal{C}_{\texttt{p}_{J-1}, \texttt{v}}]}{p_{\colorm{\scriptsize}}[\mathcal{C}_{\texttt{p}_{J-1}, \texttt{v}}]}\right)\\
  & = & \poprisk(\colorg_{J-1}) - \poprisk(\colorg_{J}),
\end{eqnarray*}
as claimed. The last identity comes from the fact that the contribution to $\poprisk(.)$ is the same outside $\mathcal{P}_{J-1}$ for both $\colorg_{J-1}$ and $\colorg_{J}$.
  \end{proof}
  We now come to models and boosting. Suppose we have $t$ trees $\{\tree_1, \tree_2, ..., \tree_t\}$ in $\colorg_j$. We want to split leaf $\leaf \in \tree_t$ into two new leaves, $\leftchild{\leaf}, \rightchild{\leaf}$. Let $\mathcal{P}_{\leaf}(\colorg_j)$ denote the subset of $\mathcal{P}(\colorg_j)$ containing only the subsets defined by intersection with the support of $\leaf$, $\mathcal{X}_\leaf$. The criterion we minimize can be reformulated as
  \begin{eqnarray*}
  \poprisk(\colorg_{j}) & = & \sum_{\leaf \in \leafset(\tree)} p_{\colorm{\scriptsize}}[\mathcal{X}_\leaf] \cdot \sum_{\mathcal{C} \in \mathcal{P}_{\leaf}(\colorg_j)} \frac{p_{\colorm{\scriptsize}}[\mathcal{C}]}{p_{\colorm{\scriptsize}}[\mathcal{X}_\leaf]} \cdot \poibayesrisk\left(\frac{\prior p_{\meas{\colorr}}[\mathcal{C}]}{p_{\colorm{\scriptsize}}[\mathcal{C}]}\right), \tree \in \{\tree_1, \tree_2, ..., \tree_t\}.
  \end{eqnarray*}
  Here, $\tree$ is any tree in the set of trees, since $\cup_{\leaf \in \leafset(\tree)} \mathcal{P}_{\leaf}(\colorg_j)$ covers the complete partition of $\mathcal{X}$ induced by $\{\tree_1, \tree_2, ..., \tree_t\}$. If we had a single tree, the inner sum would disappear since we would have $\mathcal{P}_{\leaf}(\colorg_j) = \{\mathcal{X}_\leaf\}$ and so one iteration would split one of these subsets. In our case however, with a set of trees, we still split $\mathcal{X}_\leaf$ but the impact on the reduction of $\poprisk$ can be substantially better as we may simultaneously split as many subsets as there are in $\mathcal{P}_{\leaf}(\colorg_j)$. The reduction in $\poprisk$ can be obtained by summing all reductions to which contribute each of the subsets.

  To analyze it, we make use of a reduction to the \topdowngen~algorithm of \cite{mnwRC}. We present the algorithm in Algorithm \ref{splitMF}.
\begin{algorithm}[t]
\caption{\topdowngen $(\colorr, \loss, \weaklearner, J, T)$}\label{splitMF}
\begin{algorithmic}
  \STATE  \textbf{Input:} measure $\colorr$, \spd~loss $\ell$, weak learner \weaklearner, iteration number $J\geq 1$, number of trees $T$;
  \STATE  \textbf{Output:} \pbls~(partition-linear model) $H_J$ with $T$ trees;
  \STATE  Step 1 : $\Gamma_0 \defeq \{\tree_t\}_{t=1}^{T} \leftarrow \{\mathrm{root}, \mathrm{root}, ..., \mathrm{root}\}$; $\mathcal{P}(\Gamma_{0}) \leftarrow \{\mathcal{X}\}$;
  \STATE  Step 2 : \textbf{for} $j=1, 2, ..., J$
  \STATE \hspace{1cm} Step 2.1 : pick $t \in [T]$;
  \STATE \hspace{1cm} Step 2.2 : pick $\leaf \in \leafset(\tree_t)$; 
  \STATE \hspace{1cm} Step 2.3 : $h_{j} \leftarrow \weaklearner (\mathcal{X}_{\leaf})$;
  \STATE \hspace{2.5cm} // call to the weak learner for some $h_j$ of the type $h_j \defeq 1_{\texttt{p}(.)} \cdot K_j$, 
   \STATE \hspace{2.5cm} // $K_j$ constant, $\texttt{p}(.)$ a splitting predicate (\textit{e.g.} $x_i \geq a$)
   \STATE \hspace{1cm} Step 2.4 : \textbf{for} $\mathcal{C} \in \mathcal{P}_{\leaf}(\Gamma_{j-1})$,
   \begin{eqnarray}
     \mathcal{C}^{\texttt{p}} & \leftarrow & \mathcal{C} \cap \{\ve{x} : \texttt{p}(\ve{x}) \mbox{ is \texttt{true}}\},\label{defcpi}\\
     \alpha(\mathcal{C}^{\texttt{p}}) & \leftarrow & ({-\poibayesrisk'})\left(\frac{\prior p_{\meas{\colorr}}[\mathcal{C}^{\texttt{p}}]}{p_{\colorm{\scriptsize}}[\mathcal{C}^{\texttt{p}}]}\right),\\
          \mathcal{C}^{\neg\texttt{p}} & \leftarrow & \mathcal{C} \cap \{\ve{x} : \texttt{p}(\ve{x}) \mbox{ is \texttt{false}}\},\label{defnotcpi}\\
     \alpha(\mathcal{C}^{\neg\texttt{p}}) & \leftarrow &- ({-\poibayesrisk'})\left(\frac{\prior p_{\meas{\colorr}}[\mathcal{C}^{\neg\texttt{p}}]}{p_{\colorm{\scriptsize}}[\mathcal{C}^{\neg\texttt{p}}]}\right).
   \end{eqnarray}
   \STATE \hspace{1cm} Step 2.5 : // update $\mathcal{P}(\Gamma_{.})$:
   \begin{eqnarray}
\mathcal{P}(\Gamma_{j}) & \leftarrow & \left(\mathcal{P}(\Gamma_{j-1}) \backslash \mathcal{P}_{\leaf}(\Gamma_{j-1}) \right) \cup \left(\cup_{\mathcal{C} \in \mathcal{P}_{\leaf}(\Gamma_{j-1}): \mathcal{C}^{\texttt{p}} \neq \emptyset} \mathcal{C}^{\texttt{p}}\right)\cup \left(\cup_{\mathcal{C} \in \mathcal{P}_{\leaf}(\Gamma_{j-1}): \mathcal{C}^{\neg \texttt{p}} \neq \emptyset} \mathcal{C}^{\neg \texttt{p}}\right)
   \end{eqnarray}
   \STATE \hspace{1cm} Step 2.6 : split leaf $\leaf$ at $\tree_t$ using predicate $\texttt{p}$;
        \STATE  \textbf{return} $H_J(\ve{x}) \defeq \alpha\left(\mathcal{C}(\ve{x})\right)$ with $\mathcal{C}(\ve{x}) \defeq \mathcal{C} \in \mathcal{P}(\Gamma_{J})$ such that $\ve{x}\in \mathcal{C}$;
\end{algorithmic}
\end{algorithm}
The original \topdowngen~algorithm trains \textit{partition-linear models}, \textit{i.e.} models whose output is defined by a sum of reals over a set of partitions to which the observation to be classified belongs to. Training means then both learning the organisation of partitions and the reals -- which are just the output of a weak classifier given by a weak learner, times a leveraging constant computed by \topdowngen. As in the classical boosting literature, the original \topdowngen~includes the computation and updates of \textit{weights}.

In our case, the structure of partition learned is that of a set of trees, each weak classifier $h$ is of the form
\begin{eqnarray*}
h(\ve{x}) & \defeq & K \cdot \texttt{p}(\ve{x}),
\end{eqnarray*}
where $K$ is a real and $ \texttt{p}$ is a Boolean predicate, usually doing an axis-parallel split of $\mathcal{X}$ on an observed feature, such as $\texttt{p}(\ve{x}) \equiv 1_{x_i \geq a}$. 

From now on, it will be interesting to dissociate our generator $\colorg_j$ -- which includes a model structure and an algorithm to generate data from this structure-- from its set of trees -- \textit{i.e.} its model structure --  which we denote $\Gamma_j \defeq \{\tree_t\}_{t=1}^T$. \topdowngen~learns both $\Gamma_j$ but also predictions for each possible outcome, predictions that we shall not use since the loss we are minimizing can be made to depend only on the model structure $\Gamma_j$. Given the simplicity of the weak classifiers $h$ and the particular nature of the partitions learned, the original \topdowngen~of \cite{mnwRC} can be simplified to our presentation in Algorithm \ref{splitMF}. Among others, the simplification discards the weights from the presentation of the algorithm. What we show entangles two objectives on models and loss functions as we show that
\begin{center}
\textit{\topdowngen~learns the same structure $\Gamma$ as our \topdownGT, and yields a guaranteed decrease of $\poprisk(\colorg)$,}
\end{center}
and it is achieved via the following Theorem.
\begin{figure}
  \centering
\includegraphics[trim=110bp 80bp 90bp 80bp,clip,width=0.8\columnwidth]{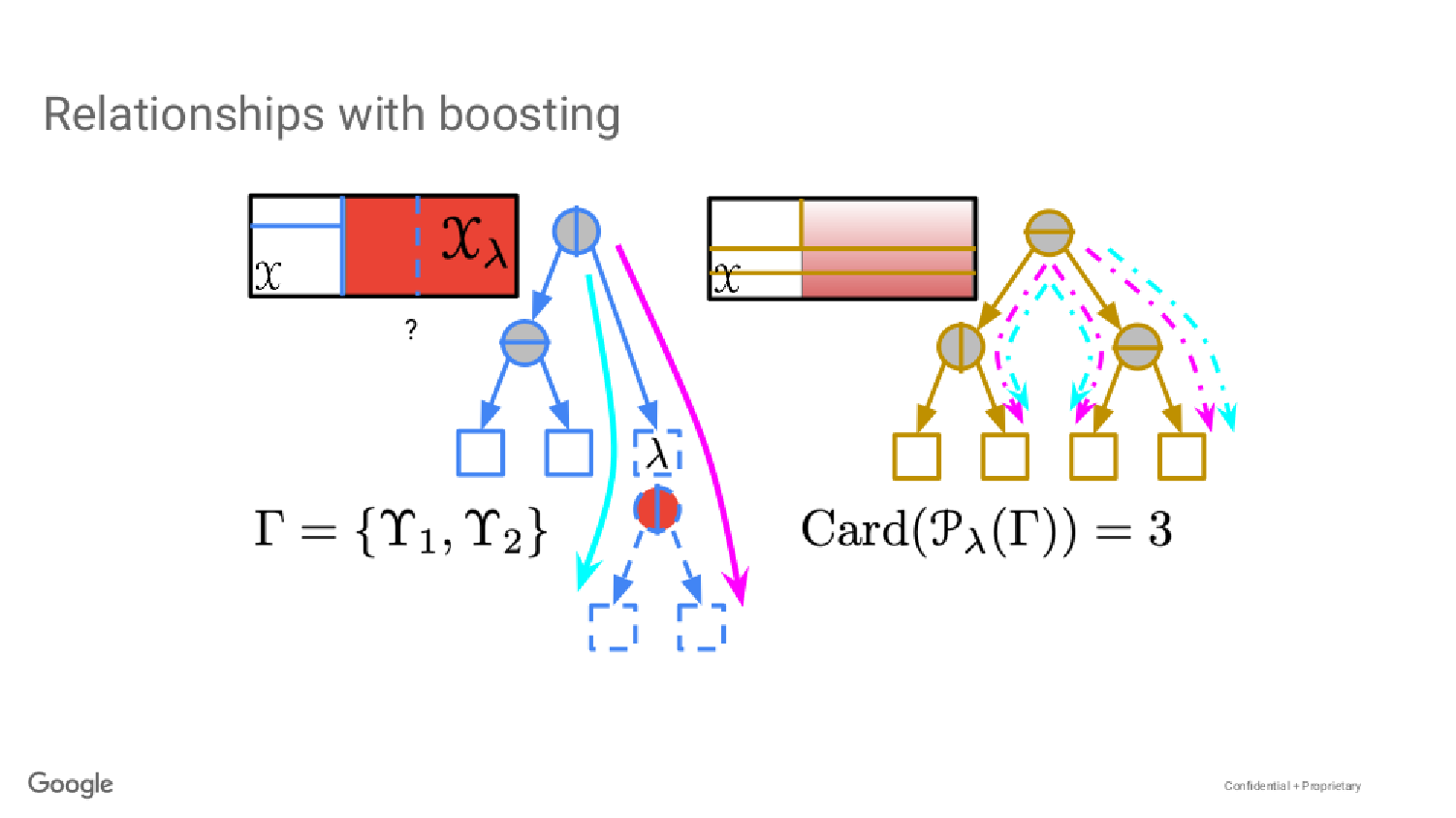} 
\caption{Learning a set of two trees $\Gamma$ with \topdowngen. If we want a split at leaf $\leaf$ indicated, then it would takes two steps (but a \textit{single} application of the weak learning assumption, \cite[Lemma 5]{mnwRC}) of the original \topdowngen~to learn $\tree_1$ alone \cite[Section 4]{mnwRC}. Each step, figured with a plain arrow, ends with adding a new leaf. In our case however, we have to take into account the partition induced by the leaves of $\tree_2$, which results in considering not one but three subsets in $\mathcal{P}_\leaf$, thus adding not one but three weak hypotheses simultaneously (one for each of the dashed arrows on $\tree_2$). Thanks to Jensen's inequality, a guaranteed decrease of the loss at hand can be obtained by a single application of the weak learning assumption at $\leaf$, just like in the original \topdowngen.}
    \label{fig:topdowngen}
  \end{figure}
\begin{theorem}
  \topdowngen~greedily minimizes the following loss function:
  \begin{eqnarray*}
  \poprisk(\Gamma_{j}) & = & \sum_{\leaf \in \leafset(\tree)} p_{\colorm{\scriptsize}}[\mathcal{X}_\leaf] \cdot \sum_{\mathcal{C} \in \mathcal{P}_{\leaf}(\Gamma_j)} \frac{p_{\colorm{\scriptsize}}[\mathcal{C}]}{p_{\colorm{\scriptsize}}[\mathcal{X}_\leaf]} \cdot \poibayesrisk\left(\frac{\prior p_{\meas{\colorr}}[\mathcal{C}]}{p_{\colorm{\scriptsize}}[\mathcal{C}]}\right), \tree \in \Gamma_j.
  \end{eqnarray*}
  Furthermore, suppose there exists $\upgamma > 0, \upkappa >0$ such that at each iteration $j$, the predicate $\texttt{p}$ splits $\mathcal{X}_{\leaf}$ of leaf $\leaf \in \leafset(\tree)$ into $\mathcal{X}^{\texttt{p}}_{\leaf}$ and $\mathcal{X}^{\neg \texttt{p}}_{\leaf}$ (same nomenclature as in \eqref{defcpi}, \eqref{defnotcpi}) such that:
  \begin{eqnarray}
    p_{\colorm{\scriptsize}}[\mathcal{X}_\leaf] & \geq & \frac{1}{\mathrm{Card}(\leafset(\tree))},\label{eqMSA}\\
    \left| \frac{\int_{\mathcal{X}^\texttt{p}_{\leaf}} \dmeas{\colorr}}{\int_{\mathcal{X}_{\leaf}} \dmeas{\colorr}} - \frac{\int_{\mathcal{X}^\texttt{p}_{\leaf}} \dmeas{\coloru}}{\int_{\mathcal{X}_{\leaf}} \dmeas{\coloru}} \right| & \geq & \upgamma, \label{eqWLA2}\\
    \min\left\{\frac{\prior p_{\meas{\colorr}}[\mathcal{X}_{\leaf}]}{p_{\colorm{\scriptsize}}[\mathcal{X}_{\leaf}]}, 1 - \frac{\prior p_{\meas{\colorr}}[\mathcal{X}_{\leaf}]}{p_{\colorm{\scriptsize}}[\mathcal{X}_{\leaf}]}\right\} & \geq & \upkappa. \label{eqWPA2}
  \end{eqnarray}
  Then we have the following guaranteed slack between two successive models:
  \begin{eqnarray}
\poprisk(\Gamma_{j+1})  - \poprisk(\Gamma_{j}) & \leq & - \frac{\kappa \upgamma^2 \upkappa^2}{8\mathrm{Card}(\leafset(\tree))},
  \end{eqnarray}
  where $\kappa$ is such that $0<\kappa \leq \inf\{\partialloss{-1}' - \partialloss{1}'\}$.
  \end{theorem}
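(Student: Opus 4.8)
The plan is to prove the two assertions in turn: first identify $\poprisk$ as the loss that \topdowngen's greedy refinement decreases, then quantify the decrease of a single iteration under (eqMSA)--(eqWPA2). Throughout I write $\widehat{y}_{\mathcal{S}}\defeq \prior p_{\colorr}[\mathcal{S}]/p_{\colorm}[\mathcal{S}]$ for the local positive rate on a region $\mathcal{S}$, and $\Delta(\mathcal{S},\texttt{p})\defeq p_{\colorm}[\mathcal{S}]\,\poibayesrisk(\widehat{y}_{\mathcal{S}}) - \mucp(\mathcal{S},\texttt{p})$ for the drop of the Bayes-risk contribution of $\mathcal{S}$ caused by splitting it with $\texttt{p}$.

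\textbf{Greedy minimization of $\poprisk$.} \topdowngen\ is the specialization of the model-adaptive \topdowngen\ of \cite{mnwWK} to the partition-linear class whose partition structure is a set of $T$ trees and whose weak hypotheses are the axis-parallel indicators $h_j=K_j\cdot 1_{\texttt{p}(\cdot)}$. The point to check is that the reals $\alpha(\mathcal{C}^{\texttt{p}}),\alpha(\mathcal{C}^{\neg\texttt{p}})$ attached to the new cells in Step 2.4 are, up to sign, $(-\poibayesrisk')$ evaluated at the local positive rate, i.e. the pointwise Bayes-optimal predictions for the proper loss $\ell$ on $\binartaskgen$ restricted to each cell; hence after the update of the partition in Step 2.5 the loss of the returned model equals the conditional Bayes risk summed cell by cell, which is exactly $\poprisk(\Gamma_j)$ in the form stated. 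Since one iteration only refines the cells inside $\mathcal{X}_\leaf$ and replaces each such cell's term by the (never larger, by concavity of $\poibayesrisk$) sum of its two children's terms, $\poprisk$ is non-increasing; together with the fact that \topdowngen\ grows the same trees as \topdownGT, this proves the first assertion.

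\textbf{Quantifying one iteration.} The engine is that strict properness and differentiability force $\poibayesrisk$ to be $\kappa$-strongly concave: the stationarity identity $p\,\partialloss{1}'(p)+(1-p)\,\partialloss{-1}'(p)=0$ gives $\poibayesrisk'=\partialloss{1}-\partialloss{-1}$, hence $\poibayesrisk''=\partialloss{1}'-\partialloss{-1}'\le-\kappa$. By Lemma \ref{lem-losses} (and the reformulation of $\poprisk$ as a sum over $\mathcal{P}(\Gamma_j)$), the decrease at iteration $j$ equals $\sum_{\mathcal{C}\in\mathcal{P}_\leaf(\Gamma_{j-1})}\Delta(\mathcal{C},\texttt{p})$. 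The first step is to pass from this multi-cell sum to the single leaf-level quantity $\Delta(\mathcal{X}_\leaf,\texttt{p})$: using concavity of the perspective transform $\gprior$ (as in Lemma \ref{lem-losses}) together with the fact that the \emph{same} predicate $\texttt{p}$ cuts every cell of $\mathcal{P}_\leaf$, a Jensen/refinement argument (the content of Figure \ref{fig:topdowngen}) lower bounds $\sum_{\mathcal{C}}\Delta(\mathcal{C},\texttt{p})$ by $\Delta(\mathcal{X}_\leaf,\texttt{p})$, so that a single application of the leaf-level assumptions suffices. Next, strong concavity applied to the two-point identity $\widehat{y}_{\mathcal{X}_\leaf}=\pi^{\texttt{t}}\widehat{y}_{\mathcal{X}_\leaf^{\texttt{t}}}+\pi^{\texttt{f}}\widehat{y}_{\mathcal{X}_\leaf^{\texttt{f}}}$, where $\pi^{\texttt{v}}=p_{\colorm}[\mathcal{X}_\leaf^{\texttt{v}}]/p_{\colorm}[\mathcal{X}_\leaf]$, gives
\[
\Delta(\mathcal{X}_\leaf,\texttt{p})\ \ge\ \frac{\kappa}{2}\,p_{\colorm}[\mathcal{X}_\leaf]\,\pi^{\texttt{t}}\pi^{\texttt{f}}\bigl(\widehat{y}_{\mathcal{X}_\leaf^{\texttt{t}}}-\widehat{y}_{\mathcal{X}_\leaf^{\texttt{f}}}\bigr)^2 .
\]
Writing $A\defeq\prior p_{\colorr}[\mathcal{X}_\leaf]$, $B\defeq(1-\prior)p_{\coloru}[\mathcal{X}_\leaf]$, $r\defeq p_{\colorr}[\texttt{p}\mid\mathcal{X}_\leaf]$, $u\defeq p_{\coloru}[\texttt{p}\mid\mathcal{X}_\leaf]$, a direct computation gives the identity $\pi^{\texttt{t}}\pi^{\texttt{f}}(\widehat{y}_{\mathcal{X}_\leaf^{\texttt{t}}}-\widehat{y}_{\mathcal{X}_\leaf^{\texttt{f}}})^2 = A^2B^2(r-u)^2\,\big/\,\big((A+B)^2(rA+uB)((1-r)A+(1-u)B)\big)$, and bounding $(rA+uB)((1-r)A+(1-u)B)\le(A+B)^2$ lowers this to $\widehat{y}_{\mathcal{X}_\leaf}^2(1-\widehat{y}_{\mathcal{X}_\leaf})^2(r-u)^2$. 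Finally (eqWLA2) gives $(r-u)^2\ge\upgamma^2$, (eqWPA2) gives $\widehat{y}_{\mathcal{X}_\leaf}(1-\widehat{y}_{\mathcal{X}_\leaf})\ge\upkappa/2$ (using $\upkappa\le 1/2$), and (eqMSA) gives $p_{\colorm}[\mathcal{X}_\leaf]\ge 1/\mathrm{Card}(\leafset(\tree))$; chaining these yields the per-iteration slack $\poprisk(\Gamma_{j+1})-\poprisk(\Gamma_{j})\le-\kappa\upgamma^2\upkappa^2/(8\,\mathrm{Card}(\leafset(\tree)))$.

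\textbf{Main obstacle.} The strong-concavity bound and the algebraic identity are routine; the delicate point is the reduction from $\sum_{\mathcal{C}\in\mathcal{P}_\leaf}\Delta(\mathcal{C},\texttt{p})$ to $\Delta(\mathcal{X}_\leaf,\texttt{p})$, so that the hypotheses (eqWLA2)/(eqWPA2) -- stated at the level of $\mathcal{X}_\leaf$ rather than of the finer cells of $\mathcal{P}_\leaf$ -- can be invoked exactly once, as in the single-tree algorithm of \cite{mnwWK}. This is where the concavity of $\gprior$ and the fact that one axis-parallel predicate cuts all cells of $\mathcal{P}_\leaf$ simultaneously must be used carefully, and it is also what makes the final constant come out as $8$.
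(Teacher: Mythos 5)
Your strong-concavity bookkeeping is correct: the stationarity identity does give $\poibayesrisk'' = \partialloss{1}' - \partialloss{-1}' \leq -\kappa$, the two-point variance identity and the algebraic computation of $\pi^{\texttt{t}}\pi^{\texttt{f}}(\widehat{y}_{\mathcal{X}_\leaf^{\texttt{t}}}-\widehat{y}_{\mathcal{X}_\leaf^{\texttt{f}}})^2$ are right, the bound $(rA+uB)((1-r)A+(1-u)B)\le(A+B)^2$ is valid, and $\widehat{y}_{\mathcal{X}_\leaf}(1-\widehat{y}_{\mathcal{X}_\leaf})\ge\upkappa/2$ does follow from \eqref{eqWPA2} since $\upkappa\le 1/2$ always. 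But the step you yourself flag as the delicate one is wrong, and it is not a detail: the claimed inequality $\sum_{\mathcal{C}\in\mathcal{P}_\leaf}\Delta(\mathcal{C},\texttt{p}) \geq \Delta(\mathcal{X}_\leaf,\texttt{p})$ does not follow from concavity, and is in fact false in general.

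To see why Jensen cannot deliver it, write $f(\mathcal{S}) \defeq p_{\colorm{\scriptsize}}[\mathcal{S}]\,\poibayesrisk(\widehat{y}_{\mathcal{S}})$, so that $\Delta(\mathcal{S},\texttt{p}) = f(\mathcal{S}) - f(\mathcal{S}^{\texttt{t}}) - f(\mathcal{S}^{\texttt{f}})$. Concavity of $\poibayesrisk$ makes $f$ superadditive on disjoint unions, which gives $f(\mathcal{X}_\leaf) \geq \sum_{\mathcal{C}} f(\mathcal{C})$ -- but it \emph{also} gives $f(\mathcal{X}_\leaf^{\texttt{v}}) \geq \sum_{\mathcal{C}} f(\mathcal{C}^{\texttt{v}})$ for each $\texttt{v}\in\{\texttt{t},\texttt{f}\}$, and the three superadditivity gaps enter the difference $\Delta(\mathcal{X}_\leaf,\texttt{p}) - \sum_{\mathcal{C}}\Delta(\mathcal{C},\texttt{p})$ with opposite signs, so no sign can be inferred. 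Worse, there are concrete configurations where your inequality reverses strictly: take $\mathcal{P}_\leaf = \{\mathcal{C}_1,\mathcal{C}_2\}$ with $\widehat{y}_{\mathcal{C}_1}\neq\widehat{y}_{\mathcal{C}_2}$ and a predicate $\texttt{p}$ that happens to have $\mathcal{C}_1\subseteq\{\texttt{p}\}$ and $\mathcal{C}_2\subseteq\{\neg\texttt{p}\}$ (geometrically realizable when the cut feature and threshold coincide with a split already present in another tree). Then $\mathcal{C}_i^{\texttt{t}}$ or $\mathcal{C}_i^{\texttt{f}}$ is empty for each $i$, so every $\Delta(\mathcal{C}_i,\texttt{p})=0$, while $\Delta(\mathcal{X}_\leaf,\texttt{p}) = f(\mathcal{X}_\leaf) - f(\mathcal{C}_1) - f(\mathcal{C}_2) > 0$ by strict concavity, and the leaf-level WLA \eqref{eqWLA2} can still be satisfied (one gets $r = \widehat{y}_{\mathcal{C}_1}$-type quantities and $u$ its complement, so $|r-u|$ is bounded away from $0$). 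So the aggregate drop $\sum_{\mathcal{C}}\Delta(\mathcal{C},\texttt{p})$ can be strictly smaller than $\Delta(\mathcal{X}_\leaf,\texttt{p})$ -- your reduction cannot go through as stated.

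The paper's own proof avoids precisely this: it does not try to lower-bound the multi-cell Bayes-risk drop by the would-be single-cell drop. Instead it passes through the weight formalism of \cite{mnwWK}: the per-iteration decrease is controlled by $\frac{\kappa}{2}\sum_{\mathcal{C}}p_{\colorm{\scriptsize}}[\mathcal{C}]\,\E_{\colorm{\scriptsize}|\mathcal{C}}[(w_{j+1}-w_j)^2]$, which is aggregated trivially over $\mathcal{X}_\leaf$ (it is a literal sum of non-negative terms, not a Jensen step), and then the proof of \cite[Theorem~B]{mnwWK} is re-used to convert the leaf-level weak-learning correlation into a lower bound on that weight variance. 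The Jensen language in the caption of Figure \ref{fig:topdowngen} refers loosely to that machinery, not to the specific convexity inequality you invoke, and that machinery is genuinely different from -- and not replaceable by -- your direct strong-concavity-on-$\poibayesrisk$ argument. So the rest of your computation (which would reproduce the right constant once you have $\Delta(\mathcal{X}_\leaf,\texttt{p})$ in hand) is fine, but the proposal does not prove the theorem because the bridge from the loss that actually decreases to the quantity your computation bounds is missing.
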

  \begin{proofsketch}
The loss function comes directly from \cite[Lemma 7]{mnwRC}.  At each iteration, \topdowngen~makes a number of updates that guarantee, once Step 2.4 is completed (because the elements in $\mathcal{P}_\leaf(\Gamma_j)$ are disjoint)
\begin{eqnarray}
  \poprisk(\Gamma_{j+1})  - \poprisk(\Gamma_{j}) & \leq & - \frac{\kappa}{2}\cdot \sum_{\mathcal{C}\in \mathcal{P}_\leaf(\Gamma_j)} p_{\colorm{\scriptsize}}[\mathcal{C}] \cdot \expect_{\colorm{\scriptsize} | \mathcal{C}}\left[(w_{j+1} - w_j)^2\right]\label{eqsimpl1}\\
  & = & - \frac{\kappa}{2}\cdot p_{\colorm{\scriptsize}}[\mathcal{X}_\leaf] \expect_{\colorm{\scriptsize} | \mathcal{X}_\leaf}\left[(w_{j+1} - w_j)^2\right],
\end{eqnarray}
where the weights are given in \cite{mnwRC}. Each expression in the summand of \eqref{eqsimpl1} is exactly the guarantee of \cite[ineq. before (69)]{mnwRC}; all such expressions are not important; what is more important is \eqref{eqsimpl1}: all steps occurring in Step 2.4 are equivalent to a single step of \topdowngen~carried out over the whole $\mathcal{X}_\leaf$. Overall, the number of "aggregated" steps match the counter $j$ in \topdowngen. We then just have to reuse the proof of \cite[Theorem B]{mnwRC}, which implies, in lieu of their \cite[eq. (74)]{mnwRC}
\begin{eqnarray}
\poprisk(\Gamma_{j+1})  - \poprisk(\Gamma_{j}) & \leq & - 2 \kappa\cdot p_{\colorm{\scriptsize}}[\mathcal{X}_\leaf] \cdot \left(\frac{1}{2}\cdot \left(\frac{\int_{\mathcal{X}^\texttt{p}_{\leaf}} \dmeas{\colorr}}{\int_{\mathcal{X}_{\leaf}} \dmeas{\colorr}} - \frac{\int_{\mathcal{X}^\texttt{p}_{\leaf}} \dmeas{\coloru}}{\int_{\mathcal{X}_{\leaf}} \dmeas{\coloru}} \right)\right)^2 \cdot \bayessqrisk \left(\frac{\prior p_{\meas{\colorr}}[\mathcal{X}_{\leaf}]}{p_{\colorm{\scriptsize}}[\mathcal{X}_{\leaf}]}\right)^2,
\end{eqnarray}
with $\bayessqrisk(u) \defeq u(1-u)$. Noting $2 \bayessqrisk(u) \geq \min\{u, 1-u\}$, we then use \eqref{eqMSA} -- \eqref{eqWPA2}, which yields the statement of the Theorem.
\end{proofsketch}
    As a consequence, if we assume that the total number of boosting iterations $J$ is a multiple of the number of trees $T$, it comes from \cite[eq. (29)]{mnwRC} that after $J$ iterations, we have
    \begin{eqnarray}
\poprisk(\Gamma_{J})  - \poprisk(\Gamma_{0}) & \leq & - \frac{\kappa \upgamma^2 \upkappa^2}{8} \cdot T \log\left(1 + \frac{J}{T}\right).
    \end{eqnarray}
    Using Lemma \ref{lem-losses} and the fact that the induction of the sets of trees in \topdowngen~is done in the same way as the induction of the set of trees of our generator $\colorg$ in \topdownGT, we get:
\begin{eqnarray*}
  \prior \cdot \likelihoodratiorisk_{\ell}\left(\colorr, \colorg_J\right) & = & \prior \cdot \likelihoodratiorisk_{\ell}\left(\colorr, \colorg_{0}\right) + \left(\poprisk(\colorg_{J}) - \poprisk(\colorg_{0})\right)\\
                                                                                   & = & \prior \cdot \likelihoodratiorisk_{\ell}\left(\colorr, \colorg_{0}\right) + \left(\poprisk(\Gamma_{J})  - \poprisk(\Gamma_{0})\right)\\
  & \leq & \prior \cdot \likelihoodratiorisk_{\ell}\left(\colorr, \colorg_{0}\right) - \frac{\kappa \upgamma^2 \upkappa^2}{8} \cdot T\log\left(1 + \frac{J}{T}\right),
\end{eqnarray*}
    as claimed.

    \begin{remark}
      One may wonder what is the relationship between the loss that we minimize and "conventional" losses used to train generative models. The advantage of our Theorem  \ref{th-boost} is that by choosing different proper losses, one can get convergence guarantees for different "conventional" losses. Let us illustrate this with the \textsc{kl} divergence between measures $\meas{A}$ and $\meas{B}$, noted $\textsc{kl}(\meas{A} \| \meas{B})$.
      \begin{lemma}\label{lemKL1}
        Suppose the prior satisfies $\prior > 0$ and $\meas{\colorr}$ absolutely continuous with respect to $\meas{\colorg}$. Then for the choice $\ell \defeq \logloss$ = log-loss, we get
        \begin{eqnarray}
            \likelihoodratiorisk_{\logloss}\left(\colorr, \colorg\right) & = & \prior \cdot \textsc{kl} \left(\colorr \| \colorg\right) - \textsc{kl} \left(\prior \dmeas{\colorr} + (1-\prior)\dmeas{\coloru}  \| \prior \dmeas{\colorg} + (1-\prior)\dmeas{\coloru} \right) .
            \end{eqnarray}
        \end{lemma}
    \begin{proof}
      We first recall that for any convex $F$, we have with our choice of $g$ (assuming $\prior > 0$),
      \begin{eqnarray}
        \check{F}(z) & = & \frac{\prior z + 1-\prior}{\prior} \cdot F \left(\frac{\prior z}{\prior z + 1-\prior}\right),\\
        \check{F}'(z) & = & F \left(\frac{\prior z}{\prior z + 1-\prior}\right) + \frac{1-\prior}{\prior z + 1-\prior} \cdot F'\left(\frac{\prior z}{\prior z + 1-\prior}\right). \label{der2}
        \end{eqnarray}
        The log-loss has $\partiallogloss{1}(u) = -\log u, \partiallogloss{-1}(u) = -\log(1-u)$. It is strictly proper and so it comes
        \begin{eqnarray*}
\perspectivepobayesrisk(z) & = & z \cdot \log \left(\frac{\prior z}{\prior z + 1-\prior}\right) + \frac{1-\prior}{\prior} \cdot \log \left(\frac{1-\prior}{\prior z + 1-\prior}\right).
          \end{eqnarray*}
          We thus get, in our case
          \begin{eqnarray}
\perspectivepobayesrisk\left(\frac{\dmeas{\colorr}}{\dmeas{\coloru}}\right) & = & \frac{\dmeas{\colorr}}{\dmeas{\coloru}} \cdot \log\left(\frac{\prior \dmeas{\colorr}}{\prior \dmeas{\colorr} + (1-\prior)\dmeas{\coloru}}\right) + \frac{1-\prior}{\prior} \cdot \log\left(\frac{(1-\prior)\dmeas{\coloru}}{\prior \dmeas{\colorr} + (1-\prior)\dmeas{\coloru}}\right), \label{persplog1}
          \end{eqnarray}
          and using \eqref{der2} and the fact that for the log-loss $(-\poibayesrisk)'(u) = \log(u/(1-u))$,
          \begin{eqnarray}
            \perspectivepobayesrisk'\left(\frac{\dmeas{\colorg}}{\dmeas{\coloru}}\right) & = & \frac{\prior \dmeas{\colorg}}{\prior \dmeas{\colorg} + (1-\prior)\dmeas{\coloru}} \cdot \log\left(\frac{\prior \dmeas{\colorg}}{\prior \dmeas{\colorg} + (1-\prior)\dmeas{\coloru}}\right) \nonumber\\
            & & + \frac{(1-\prior)\dmeas{\coloru}}{\prior \dmeas{\colorg} + (1-\prior)\dmeas{\coloru}} \cdot \log\left(\frac{(1-\prior)\dmeas{\coloru}}{\prior \dmeas{\colorg} + (1-\prior)\dmeas{\coloru}}\right)\nonumber\\
            & & + \frac{(1-\prior)\dmeas{\coloru}}{\prior \dmeas{\colorg} + (1-\prior)\dmeas{\coloru}} \cdot \log\left(\frac{\prior\dmeas{\colorg}}{(1-\prior)\dmeas{\coloru}}\right)  = \log\left(\frac{\prior \dmeas{\colorg}}{\prior \dmeas{\colorg} + (1-\prior)\dmeas{\coloru}}\right). \label{poibaiesder}
          \end{eqnarray}
          Finally, we get, assuming $\meas{\colorr}$ absolutely continuous with respect to $\meas{\colorg}$,
          \begin{eqnarray}
            \lefteqn{\likelihoodratiorisk_{\logloss}\left(\colorr, \colorg\right)}\nonumber\\
            & \defeq & \prior \cdot \int_{\mathcal{X}} \dmeas{\coloru} \cdot \left(\perspectivepobayesrisk\left(\frac{\dmeas{\colorr}}{\dmeas{\coloru}}\right) - \perspectivepobayesrisk\left(\frac{\dmeas{\colorg}}{\dmeas{\coloru}}\right) - \left(\frac{\dmeas{\colorr}-\dmeas{\colorg}}{\dmeas{\coloru}}\right) \cdot \perspectivepobayesrisk'\left(\frac{\dmeas{\colorg}}{\dmeas{\coloru}}\right)\right)\nonumber\\
            & = & \int_{\mathcal{X}} \left\{ \begin{array}{l}
                                               \prior \dmeas{\colorr}\cdot \log\left(\frac{\prior \dmeas{\colorr}}{\prior \dmeas{\colorr} + (1-\prior)\dmeas{\coloru}}\right) + (1-\prior) \dmeas{\coloru} \cdot \log\left(\frac{(1-\prior)\dmeas{\coloru}}{\prior \dmeas{\colorr} + (1-\prior)\dmeas{\coloru}}\right)\\
                                               -\prior \dmeas{\colorg}\cdot \log\left(\frac{\prior \dmeas{\colorg}}{\prior \dmeas{\colorg} - (1-\prior)\dmeas{\coloru}}\right) - (1-\prior) \dmeas{\coloru} \cdot \log\left(\frac{(1-\prior)\dmeas{\coloru}}{\prior \dmeas{\colorg} + (1-\prior)\dmeas{\coloru}}\right)\\
                                               - \prior \dmeas{\colorr}\cdot \log\left(\frac{\prior \dmeas{\colorg}}{\prior \dmeas{\colorg} + (1-\prior)\dmeas{\coloru}}\right) +  \prior \dmeas{\colorg}\cdot \log\left(\frac{\prior \dmeas{\colorg}}{\prior \dmeas{\colorg} + (1-\prior)\dmeas{\coloru}}\right) 
                                             \end{array}\right.\\
            & = & \int_{\mathcal{X}} \left\{ \begin{array}{l}
                                               \prior \dmeas{\colorr}\cdot \log\left(\frac{\dmeas{\colorr}}{\dmeas{\colorg}}\right)\\
                                               - \left( \prior \dmeas{\colorr} + (1-\prior)\dmeas{\coloru} \right)\cdot \log\left(\prior \dmeas{\colorr} + (1-\prior)\dmeas{\coloru}\right) \\
                                               + \left( \prior \dmeas{\colorr} + (1-\prior)\dmeas{\coloru} \right)\cdot \log\left(\prior \dmeas{\colorg} + (1-\prior)\dmeas{\coloru}\right) 
                                             \end{array}\right.          \\
            & =& \prior \cdot \textsc{kl} \left(\colorr \| \colorg\right) - \textsc{kl} \left(\prior \dmeas{\colorr} + (1-\prior)\dmeas{\coloru}  \| \prior \dmeas{\colorg} + (1-\prior)\dmeas{\coloru} \right) ,
          \end{eqnarray}
          as claimed (end of the proof of Lemma \ref{lemKL1}).
        \end{proof}
        Because of the joint convexity of KL divergence, we always have
        \begin{eqnarray*}
          \textsc{kl} \left(\prior \dmeas{\colorr} + (1-\prior)\dmeas{\coloru}  \| \prior \dmeas{\colorg} + (1-\prior)\dmeas{\coloru} \right) & \leq & \prior \textsc{kl} \left(\colorr \| \colorg\right) +  (1-\prior) \textsc{kl} \left(\dmeas{\coloru} \| \dmeas{\coloru}\right) \\
                                                                                                                                              & & = \prior \textsc{kl} \left(\colorr \| \colorg\right),
        \end{eqnarray*}
        which yields (another proof) that our loss, $\likelihoodratiorisk_{\logloss}\left(\colorr, \colorg\right)$, is lowerbounded by 0. In general, if we let $\gamma > 0$ such that $\textsc{kl} \left(\prior \dmeas{\colorr} + (1-\prior)\dmeas{\coloru}  \| \prior \dmeas{\colorg} + (1-\prior)\dmeas{\coloru} \right) \leq (1-\gamma) \cdot \prior \textsc{kl} \left(\colorr \| \colorg\right)$ (The strict positivity of $\gamma$ is also a weak assumption as long as $\meas{\colorr}, \meas{\colorg}$ sufficiently differ from the uniform distribution), then
        \begin{eqnarray}
            \likelihoodratiorisk_{\logloss}\left(\colorr, \colorg\right) & \geq &  \gamma\prior\cdot \textsc{kl} \left(\colorr \| \colorg\right),
        \end{eqnarray}
        so any upperbound on $\likelihoodratiorisk_{\logloss}\left(\colorr, \colorg\right)$ (such as obtained from Theorem  \ref{th-boost}) translates to an upperbound on the KL divergence. Note that the condition for absolute continuity  in Lemma \ref{lemKL1} is always met with the models we learn (both generative forests and ensembles of generative trees). 
          \end{remark}

\section{Simpler models: ensembles of generative trees}\label{sec-modtwo}

\begin{figure}
  \centering
\includegraphics[trim=110bp 30bp 210bp 140bp,clip,width=0.4\columnwidth]{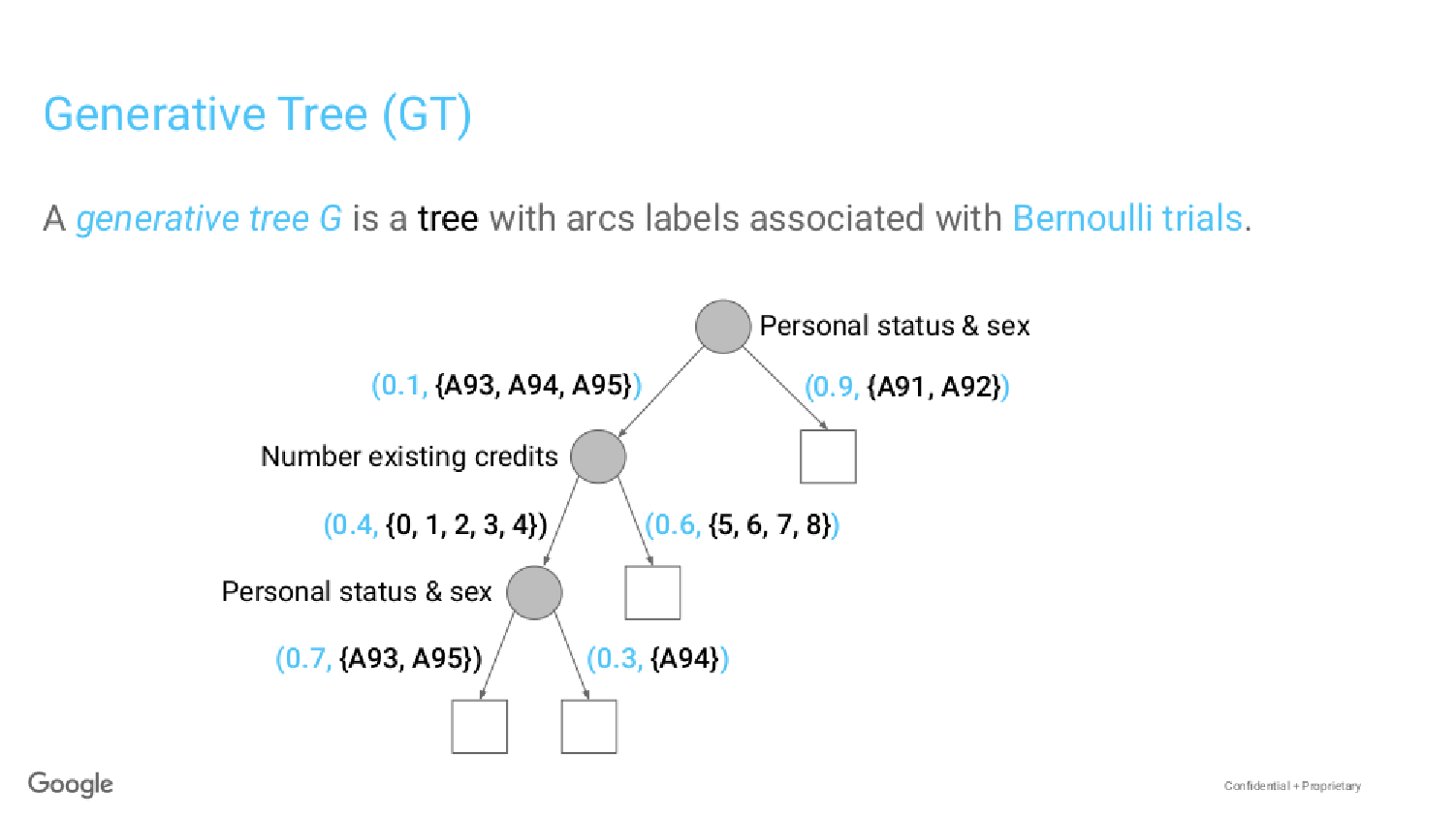} 
    \caption{A generative tree (\gt) associated to UCI German Credit.}
    \label{fig:gt-def-1}
  \end{figure}

Storing a \geot~requires keeping information about the empirical measure $\colorr$ to compute the branching probability in Step 1 of \starupdate. This does not require to store the full training sample, but requires at least an index table recording the $\{\mbox{leaves}\} \times \{\mbox{observations}\}$ association, for a storage cost in between $\Omega(m)$ and $O(mT)$ where $m$ is the size of the training sample. There is a simple way to get rid of this constraint and approximate the \geot~by a set of \textit{generative trees} (\gt s) of \cite{ngGT}.

\paragraph{Models} Simply put, a \gt~is a simplified equivalent representation of a generative forest with 1 tree only. In this case, the branching probabilities in Step 1 of \starupdate~depend only on the tree's star node position. Thus, instead of recomputing them from scratch each time an observation needs to be generated, we compute them beforehand and use them to label the arcs in addition to the features' domains, as shown in Figure \ref{fig:gt-def-1}. This representation is equivalent to that of generative trees \cite{ngGT}. If we have several trees, we do this process independently for each tree and end up with an \textit{ensemble of generative trees} (\eogt). The additional memory footprint for storing probabilities ($O(\sum_i |\leafset(\tree_i)|)$) is offset by the fact that we do not have anymore to store associations between leaves and observations for generative forests, for a potentially significant reduction is storing size. However, we cannot use anymore \starupdate~as is for data generation since we cannot compute exactly the probabilities in Step 1. Two questions need to be addressed: is it possible to use an ensemble of generative trees to generate data with good approximation guarantees (and if so, how) and of course how do we train such models. 

\paragraph{Data generation} We propose a simple solution based on how well an \eogt~can approximate a generative forest. It relies on a simple assumption about $\colorr$ and $\mathcal{X}$. Taking as references the parameters in \starupdate, we now make two simplifications on notations, first replacing $\tree.\anychild{\node}^\star$ by $\anychild{\node}^\star$ (tree implicit), and then using notation $\mathcal{C}_{\texttt{v}} \defeq \xxset{\anychild{\node}^\star} \cap \mathcal{C}$ for any $\texttt{v} \in \{\texttt{t}, \texttt{f}\}$, the reference to the tree / star node being implicit. The assumption is as follows.
\begin{assumption}\label{assum-u}
There exists $\varkappa \in (0,\infty)$ such that at any Step 1 of \starupdate, for any $\texttt{v} \in \{\texttt{t}, \texttt{f}\}$, we have 
\begin{eqnarray}
\frac{p_{{\color{red}{\meas{R}}}}\left[\mathcal{C}_{\texttt{v}} | \xxset{\anychild{\node}^\star}\right]}{p_{\coloru}\left[\mathcal{C}_{\texttt{v}} | \xxset{\anychild{\node}^\star}\right]} & \in & \left[\exp(-\varkappa), \exp(\varkappa)\right]. \label{defVarkappa}
\end{eqnarray}
\end{assumption}
A simple condition to ensure the existence of $\varkappa$ is a requirement weaker than \textbf{(c)} in Definition \ref{defWLAG}: at all Steps 1 of \starupdate, $p_{\colorr}\left[\mathcal{C}_{\texttt{t}} | \mathcal{C} \right] \in (0,1)$, which also guarantees $p_{\colorr}\left[\mathcal{C}_{\texttt{f}} | \mathcal{C} \right] \in (0,1)$ and thus postulates that the branching in Step 1 of \starupdate~never reduces to one choice only. The next Lemma shows that it is indeed possible to combine the generation of an ensemble of generative trees with good approximation properties, and provides a simple algorithm to do so, which reduces to running \starupdate~with a specific approximation to branching probabilities in Step 1. For any \geot~$\colorg$ and $\mathcal{C} \in \mathcal{P} (\colorg)$, $\mathrm{depth}_{\colorg}(\mathcal{C})$ is the sum of depths of the leaves in each tree whose support intersection is $\mathcal{C}$. We need the definition of the \textit{expected} depth of $\colorg$.
\negspace
\begin{definition}\label{def-exdepth}
  The expected depth of \geot~$\colorg$ is $\exdepth(\meas{\colorg}) \defeq \sum_{\mathcal{C} \in \mathcal{P} (\colorg)} p_{\meas{\colorg}}[\mathcal{C}] \cdot \mathrm{depth}_{\colorg}(\mathcal{C})$.
  \end{definition}
$\exdepth(\meas{\colorg})$ represents the expected complexity to sample an observation (see also Lemma \ref{lem-geot-eogt}).
\begin{lemma}\label{lem-geot-eogt}
In Step 1 of \starupdate, suppose $p_{\colorr}\left[\mathcal{C}_{\texttt{t}} \cap \mathcal{C} | \mathcal{C} \right]$ is replaced by
  \begin{eqnarray}
\hat{p}_{\node^\star} \defeq \frac{p_{\coloru}\left[\mathcal{C}_{\texttt{t}}| \xxset{\rightchild{\node}^\star}\right] \cdot p_{\node^\star}}{p_{\coloru}\left[\mathcal{C}_{\texttt{t}}| \xxset{\rightchild{\node}^\star}\right] \cdot p_{\node^\star} + p_{\coloru}\left[\mathcal{C}_{\texttt{f}}| \xxset{\leftchild{\node}^\star}\right]\cdot (1-p_{\node^\star})}, \label{new-p}
  \end{eqnarray}
  with $p_{\node^\star} \defeq p_{{\color{red}{\meas{R}}}}\left[\xxset{\rightchild{\node}^\star} | \xxset{\node^\star}\right]$ the probability labeling arc $(\node^\star, \rightchild{\node})$ in its generative tree. Under Assumption \ref{assum-u}, if we denote $\colorg$ the initial \geot~and $\hat{\colorg}$ the \eogtp~using \eqref{new-p}, the following bound holds on the $\kl$ divergence between $\colorg$ and $\hat{\colorg}$:
  \begin{eqnarray}
  \kl(\meas{\colorg} \| \hat{\meas{\colorg}}) \leq 2\varkappa \cdot \exdepth(\meas{\colorg}), \quad \mbox{where $\exdepth(\meas{\colorg})$ is given in Definition \ref{def-exdepth}}\label{boundGGhat}.
  \end{eqnarray}
\end{lemma}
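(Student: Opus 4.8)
The plan is to use that $\hat{\colorg}$ is built from the same trees as $\colorg$, so they share the partition $\mathcal{P}(\colorg)$ of \eqref{defPART}, and that both generators sample uniformly inside the selected cell. Hence $\mathrm{d}\meas{\colorg}/\mathrm{d}\coloru$ and $\mathrm{d}\hat{\meas{\colorg}}/\mathrm{d}\coloru$ are piecewise constant on $\mathcal{P}(\colorg)$, equal on a cell $\samplesupport$ to $p_{\meas{\colorg}}[\samplesupport]/p_{\coloru}[\samplesupport]$ and $p_{\hat{\meas{\colorg}}}[\samplesupport]/p_{\coloru}[\samplesupport]$ respectively, so $\mathrm{d}\meas{\colorg}/\mathrm{d}\hat{\meas{\colorg}}$ is the constant $p_{\meas{\colorg}}[\samplesupport]/p_{\hat{\meas{\colorg}}}[\samplesupport]$ on $\samplesupport$; integrating, $\kl(\meas{\colorg}\|\hat{\meas{\colorg}}) = \sum_{\samplesupport\in\mathcal{P}(\colorg)} p_{\meas{\colorg}}[\samplesupport]\log(p_{\meas{\colorg}}[\samplesupport]/p_{\hat{\meas{\colorg}}}[\samplesupport])$ (cells of zero $\meas{\colorg}$-mass drop out). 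I would then use a fixed admissible order — e.g. the iterative one of \iterativeupdatesupport — which, for each $\samplesupport$, produces a single chain $\mathcal{X} = \mathcal{C}^{(0)}\supseteq\cdots\supseteq\mathcal{C}^{(D)} = \samplesupport$, $D = \mathrm{depth}_{\colorg}(\samplesupport)$, in which at step $k$ the exact generator branches with probability $q_k$ and the \eogt~generator with the surrogate $\hat{q}_k$ of \eqref{new-p}. By Lemma \ref{leminv} (unrolled), $p_{\meas{\colorg}}[\samplesupport] = \prod_k q_k$, and by construction $p_{\hat{\meas{\colorg}}}[\samplesupport] = \prod_k \hat{q}_k$, so $\log(p_{\meas{\colorg}}[\samplesupport]/p_{\hat{\meas{\colorg}}}[\samplesupport]) = \sum_{k=1}^{D}\log(q_k/\hat{q}_k)$ — a chain-rule-for-KL identity. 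Everything then reduces to the per-branching estimate $|\log(q_k/\hat{q}_k)|\le 2\varkappa$.

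The core step is that estimate. At a given branching, let $\node^\star$ be the current star node, so $\mathcal{C}\subseteq\xxset{\node^\star}$ by Lemma \ref{lemsub}, and set $a\defeq p_{\colorr}[\mathcal{C}_{\texttt{t}}|\xxset{\rightchild{\node}^\star}]$, $\hat{a}\defeq p_{\coloru}[\mathcal{C}_{\texttt{t}}|\xxset{\rightchild{\node}^\star}]$, $b\defeq p_{\colorr}[\mathcal{C}_{\texttt{f}}|\xxset{\leftchild{\node}^\star}]$, $\hat{b}\defeq p_{\coloru}[\mathcal{C}_{\texttt{f}}|\xxset{\leftchild{\node}^\star}]$, $p\defeq p_{\node^\star}$. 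Expanding the exact conditional $p_{\colorr}[\mathcal{C}_{\texttt{t}}|\mathcal{C}]$ through the two children and using $p_{\colorr}[\xxset{\rightchild{\node}^\star}|\xxset{\node^\star}] = p$ (the children partitioning $\xxset{\node^\star}$ up to a $\colorg$- and hence $\colorr$-null set, so the complementary weight is $1-p$) gives $q = ap/(ap + b(1-p))$ — precisely the form of \eqref{new-p} with the empirical conditionals $a,b$ in place of the uniform ones $\hat{a},\hat{b}$. Therefore $q/\hat{q} = (a/\hat{a})\cdot(\hat{a}p+\hat{b}(1-p))/(ap+b(1-p))$. By Assumption \ref{assum-u}, $a/\hat{a},\, b/\hat{b}\in[e^{-\varkappa},e^{\varkappa}]$; the first factor lies in $[e^{-\varkappa},e^{\varkappa}]$, and since $\hat{a}p+\hat{b}(1-p)$ is squeezed between $e^{-\varkappa}$ and $e^{\varkappa}$ times $ap+b(1-p)$, so is the second, giving $|\log(q/\hat{q})|\le 2\varkappa$. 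The same computation with the two children swapped handles the complementary branch, $|\log((1-q)/(1-\hat{q}))|\le 2\varkappa$; hence $|\log(q_k/\hat{q}_k)|\le 2\varkappa$ for whichever branch is taken, and, as a by-product, $\hat{q}_k\ge e^{-2\varkappa}q_k > 0$ whenever $q_k>0$, so $\meas{\colorg}\ll\hat{\meas{\colorg}}$ and the KL is finite.

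Assembling the pieces: $|\log(p_{\meas{\colorg}}[\samplesupport]/p_{\hat{\meas{\colorg}}}[\samplesupport])| \le \sum_{k=1}^{D} 2\varkappa = 2\varkappa\,\mathrm{depth}_{\colorg}(\samplesupport)$ for every cell, so $\kl(\meas{\colorg}\|\hat{\meas{\colorg}}) \le 2\varkappa\sum_{\samplesupport\in\mathcal{P}(\colorg)} p_{\meas{\colorg}}[\samplesupport]\,\mathrm{depth}_{\colorg}(\samplesupport) = 2\varkappa\cdot\exdepth(\meas{\colorg})$ by Definition \ref{def-exdepth} and $p_{\meas{\colorg}}[\samplesupport] = p_{\colorr}[\samplesupport]$ (Lemma \ref{leminv}); the right-hand side is independent of the admissible order picked, so the bound holds for \eogt~generation however the trees are sequenced in \starupdate.

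The hard part is the per-branching identity: one must rewrite $p_{\colorr}[\mathcal{C}_{\texttt{t}}|\mathcal{C}]$ in exactly the posterior form \eqref{new-p} so that the discrepancy with $\hat{q}$ collapses onto the two ratios controlled by Assumption \ref{assum-u} — this is where Lemma \ref{lemsub} ($\mathcal{C}\subseteq\xxset{\node^\star}$) and the fact that a split partitions its parent's support (up to a null set, so the normalizing denominators line up term by term) do the real work. The remaining ingredients — the piecewise-constant reduction of the KL, the telescoping, and identifying $\sum_{\samplesupport} p_{\meas{\colorg}}[\samplesupport]\,\mathrm{depth}_{\colorg}(\samplesupport)$ with $\exdepth(\meas{\colorg})$ — are routine; one small remark worth a line is that $\mathrm{depth}_{\colorg}(\samplesupport)$ is well defined on positive-mass cells because distinct leaves of a consistent tree have $\colorg$-null intersection, so each such cell comes from a unique tuple of leaves.
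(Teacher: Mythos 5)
Your argument matches the paper's own proof essentially step by step: reduce the KL to a sum over cells of $\mathcal{P}(\colorg)=\mathcal{P}(\hat{\colorg})$, telescope along a fixed admissible sequence, rewrite the exact branching probability $p_{\colorr}[\mathcal{C}_{\texttt{t}}|\mathcal{C}]$ in the posterior form $ap/(ap+b(1-p))$ via Lemma~\ref{lemsub} and the children partitioning $\xxset{\node^\star}$, and bound the ratio to $\hat{p}_{\node^\star}$ by $e^{2\varkappa}$ using Assumption~\ref{assum-u} twice. The only cosmetic differences are that you establish the two-sided per-branching bound $|\log(q_k/\hat{q}_k)|\le 2\varkappa$ (and hence absolute continuity) where the paper only records the one-sided $q_k\le e^{2\varkappa}\hat{q}_k$, which already suffices.
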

\begin{proof}
Because $\mathcal{C} \subseteq \xxset{\Upsilon.\node^\star}$ at any call of \starupdate, we have
\begin{eqnarray}
  \lefteqn{p_{{\color{red}{\meas{R}}}}\left[\xxset{\Upsilon.\rightchild{\node}^\star} \cap \mathcal{C} | \mathcal{C} \right]}\nonumber\\
  & = & p_{\meas{R}}\left[\mathcal{C}_{\texttt{t}} | \mathcal{C} \right]\nonumber\\
    & = & \frac{p_{{\color{red}{\meas{R}}}}\left[\mathcal{C}_{\texttt{t}}\right]}{p_{{\color{red}{\meas{R}}}}\left[\mathcal{C}_{\texttt{t}}\right]+p_{{\color{red}{\meas{R}}}}\left[\mathcal{C}_{\texttt{f}}\right]}\\
                                                                                                     & \leq & \frac{\exp(\varkappa) p_{\coloru}\left[\mathcal{C}_{\texttt{t}}| \xxset{\Upsilon.\rightchild{\node}^\star}\right] p_{{\color{red}{\meas{R}}}}\left[\xxset{\Upsilon.\rightchild{\node}^\star}\right]}{\exp(-\varkappa) p_{\coloru}\left[\mathcal{C}_{\texttt{t}}| \xxset{\Upsilon.\rightchild{\node}^\star}\right] p_{{\color{red}{\meas{R}}}}\left[\xxset{\Upsilon.\rightchild{\node}^\star}\right] + \exp(-\varkappa) p_{\coloru}\left[\mathcal{C}_{\texttt{f}}| \xxset{\Upsilon.\leftchild{\node}^\star}\right]p_{{\color{red}{\meas{R}}}}\left[\xxset{\Upsilon.\leftchild{\node}^\star}\right]}\label{bsup1}\\
                                                                                                     & & = \exp(2\varkappa) \cdot \frac{p_{\coloru}\left[\mathcal{C}_{\texttt{t}}| \xxset{\Upsilon.\rightchild{\node}^\star}\right] p_{{\color{red}{\meas{R}}}}\left[\xxset{\Upsilon.\rightchild{\node}^\star} | \xxset{\Upsilon.\node^\star}\right]}{p_{\coloru}\left[\mathcal{C}_{\texttt{t}}| \xxset{\Upsilon.\rightchild{\node}^\star}\right] p_{{\color{red}{\meas{R}}}}\left[\xxset{\Upsilon.\rightchild{\node}^\star} | \xxset{\Upsilon.\node^\star}\right] + p_{\coloru}\left[\mathcal{C}_{\texttt{f}}| \xxset{\Upsilon.\leftchild{\node}^\star}\right]p_{{\color{red}{\meas{R}}}}\left[\xxset{\Upsilon.\leftchild{\node}^\star} | \xxset{\Upsilon.\node^\star}\right]}\label{bsup2}.
  \end{eqnarray}
  \eqref{bsup1} is due to the fact that, since $\mathcal{C} \subseteq \xxset{\Upsilon.\node^\star}$, $p_{{\color{red}{\meas{R}}}}\left[\mathcal{C}_{\texttt{v}}\right] = p_{{\color{red}{\meas{R}}}}\left[\mathcal{C}_{\texttt{v}} \cap \xxset{\Upsilon.\anychild{\node}^\star}\right] = p_{{\color{red}{\meas{R}}}}\left[\mathcal{C}_{\texttt{v}} | \xxset{\Upsilon.\anychild{\node}^\star}\right] p_{{\color{red}{\meas{R}}}}\left[\xxset{\Upsilon.\anychild{\node}^\star}\right]$, and then using \eqref{defVarkappa}. \eqref{bsup2} is obtained by dividing numerator and denominator by $p_{{\color{red}{\meas{R}}}}\left[\xxset{\Upsilon.\rightchild{\node}^\star}\right]+p_{{\color{red}{\meas{R}}}}\left[\xxset{\Upsilon.\leftchild{\node}^\star}\right] = p_{{\color{red}{\meas{R}}}}\left[\xxset{\Upsilon.\node^\star}\right]$.

  $\colorg$ and $\hat{\colorg}$ satisfy $\mathcal{P}({\colorg}) = \mathcal{P}(\hat{\colorg})$ so
  \begin{eqnarray}
  \kl(\meas{\colorg} \| \hat{\meas{\colorg}}) & = & \int \dmeas{\colorg} \log \frac{\dmeas{\colorg}}{\dmeas{\hat{\colorg}}}\\
  & = & \sum_{\mathcal{C} \in  \mathcal{P}({\colorg})} p_{\meas{\colorg}}[\mathcal{C}] \log \frac{p_{\meas{\colorg}}[\mathcal{C}]}{p_{\meas{\hat{\colorg}}}[\mathcal{C}]}.
\end{eqnarray}
Finally, $p_{\meas{\colorg}}[\mathcal{C}]$ and $p_{\meas{\hat{\colorg}}}[\mathcal{C}]$ are just the product of the branching probabilities in any admissible sequence. If we use the same admissible sequence in both generators, we can write $p_{\meas{\colorg}}[\mathcal{C}] = \prod_{j=1}^{n(\mathcal{C})} p_j$ and $p_{\meas{\hat{\colorg}}}[\mathcal{C}] = \prod_{j=1}^{n(\mathcal{C})} \hat{p}_j$, and \eqref{bsup2} directly yields $p_j \leq \exp(2\varkappa) \cdot \hat{p}_j, \forall j \in [n(\mathcal{C})]$, $n(\mathcal{C})$ being a shorthand for $\mathrm{depth}_{\colorg}(\mathcal{C}) = \mathrm{depth}_{\hat{\colorg}}(\mathcal{C})$ (see main file). So, for any $\mathcal{C} \in  \mathcal{P}({\colorg})$,
\begin{eqnarray}
\frac{p_{\meas{\colorg}}[\mathcal{C} ]}{p_{\meas{\hat{\colorg}}}[\mathcal{C} ]} & \leq & \exp(2\varkappa \cdot \mathrm{depth}_{\colorg}(\mathcal{C} )),
\end{eqnarray}
and finally, replacing back $n(\mathcal{C})$ by notation $\mathrm{depth}(\mathcal{C})$,
\begin{eqnarray}
  \kl(\meas{\colorg} \| \hat{\meas{\colorg}}) & \leq & 2\varkappa \cdot \sum_{\mathcal{C} \in  \mathcal{P}({\colorg})}  p_{\meas{\colorg}}[\mathcal{C} ] \cdot \mathrm{depth}(\mathcal{C} )\nonumber\\
  & & = 2\varkappa \cdot \exdepth(\meas{\colorg}),
\end{eqnarray}
as claimed. 
\end{proof}

Note the additional leverage for training and generation that stems from Assumption \ref{assum-u}, not just in terms of space: computing $p_{\coloru}\left[\mathcal{C}_{\texttt{v}} | \xxset{\anychild{\node}^\star}\right]$ is $O(d)$ and does not necessitate data so the computation of key conditional probabilities \eqref{new-p} drops from $\Omega(m)$ to $O(d)$ for training and generation in an \eogt. Lemma \ref{lem-geot-eogt} provides the change in \starupdate~to generate data.

\paragraph{Training} To train an \eogt, we cannot rely on the idea that we can just train a \geot~and then replace each of its trees by generative trees. To take a concrete example of how this can be a bad idea in the context of missing data imputation, we have observed empirically that a generative forest (\geot) can have many trees whose node's observation variables are the \textit{same} within the tree. Taken independently of the forest, such trees would only model marginals, but in a \geot, sampling is dependent on the other trees and Lemma \ref{leminv} guarantees the global accuracy of the forest. \textit{However}, if we then replace the \geot~by an \eogt~with the same trees and use them for missing data imputation, this results in imputation at the mode of the marginal(s) (exclusively), which is clearly suboptimal. To avoid this, we have to use a specific training for an \eogt, and to do this, it is enough to change a key part of training in \texttt{splitPred}, the computation of probabilities $p_{\meas{\colorr}}[.]$ used in \eqref{defPOPRISK}. Suppose $\varkappa$ very small in Assumption \ref{assum-u}. To evaluate a split at some leaf, we observe (suppose we have a single tree)
  \begin{eqnarray*}
p_{{\color{red}{\meas{R}}}}[\xxset{\leftchild{\leaf}}] = p_{{\color{red}{\meas{R}}}}[\xxset{\leaf}] \cdot p_{\meas{\colorr}}[\xxset{\leftchild{\leaf}} | \xxset{\leaf}] \approx p_{{\color{red}{\meas{R}}}}[\xxset{\leaf}] \cdot p_{\meas{\coloru}}[\xxset{\leftchild{\leaf}} | \xxset{\leaf}]
  \end{eqnarray*}
  (and the same holds for $\rightchild{\leaf}$). Extending this to multiple trees and any $\mathcal{C} \in \mathcal{P}(\mathcal{T})$, we get the computation of any $p_{{\color{red}{\meas{R}}}}[\mathcal{C}_{\texttt{f}}]$ and $p_{{\color{red}{\meas{R}}}}[\mathcal{C}_{\texttt{t}}]$ needed to measure the new \eqref{defPOPRISK} for a potential split. Crucially, it does not necessitate to split the empirical measure at $\mathcal{C}$ but just relies on computing $p_{\meas{\coloru}}[\mathcal{C}_{\texttt{v}} | \mathcal{C}], \texttt{v} \in \{\texttt{f}, \texttt{t}\}$: with the product measure and since we make axis-parallel splits, it can be done in $O(1)$, thus substantially reducing training time. 

\paragraph{More with ensembles of generative trees} we can follow the exact same algorithmic steps as for generative trees to perform missing data imputation and density estimation, but use branching probabilities in the trees to decide branching, instead of relying on the empirical measure at tuples of nodes, which we do not have access to anymore. For this, we rely on the approximation \eqref{new-p} in Lemma \ref{lem-geot-eogt}. A key difference with a \geot~is that no tuple can have zero density because branching probabilities are in $(0,1)$ in each generative tree.

\section{Appendix on experiments}\label{app-exps}

\subsection{Domains}\label{sec-doms}

\begin{table}[t]
\begin{center}
\begin{tabular}{|cccc|r|r|r|r|}
\hline \hline
  Domain & Tag & Source & Missing data $?$ & \multicolumn{1}{c|}{$m$} & \multicolumn{1}{c|}{$d$} & \multicolumn{1}{c|}{$\#$ Cat.}  & \multicolumn{1}{c|}{$\#$ Num.} \\ \hline
  \domainname{iris} & -- & UCI & No & 150 & 5 & 1 & 4\\
  $^*$\domainname{ringGauss} & \domainname{ring} & -- & No & 1 600 & 2 & -- & 2\\
  $^*$\domainname{circGauss} & \domainname{circ} & -- & No & 2 200 & 2 & -- & 2\\
  $^*$\domainname{gridGauss} & \domainname{grid} & -- & No & 2 500 & 2 & -- & 2\\
  \domainname{forestfires} & \domainname{for} & OpenML & No & 517 & 13 & 2 & 11 \\
  $^*$\domainname{randGauss} & \domainname{rand} & -- & No & 3 800 & 2 & -- & 2\\
  \domainname{tictactoe} & \domainname{tic} & UCI & No & 958 & 9 & 9 & -- \\
  \domainname{ionosphere} & \domainname{iono} & UCI & No & 351 & 34 & 2 & 32 \\
  \domainname{student$\_$performance$\_$mat} & \domainname{stm} & UCI & No & 396 & 33 & 17 & 16 \\
  \domainname{winered} & \domainname{wred} & UCI & No & 1 599 & 12 & -- & 12 \\
  \domainname{student$\_$performance$\_$por} & \domainname{stp} & UCI & No & 650 & 33 & 17 & 16 \\
  \domainname{analcatdata$\_$supreme} & \domainname{ana} & OpenML & No & 4 053 & 8 & 1 & 7 \\
  \domainname{abalone} & \domainname{aba} & UCI & No & 4 177 & 9 & 1 & 8 \\
  \domainname{kc1} & -- & OpenML & No & 2 110 & 22 & 1 & 21 \\
  \domainname{winewhite} & \domainname{wwhi} & UCI & No & 4 898 & 12 & -- & 12\\
  \domainname{sigma-cabs} & -- & Kaggle & Yes & 5 000 & 13 & 5 & 8 \\
  \domainname{compas} & \domainname{comp} & OpenML & No & 5 278 & 14 & 9 & 5 \\
  \domainname{artificial$\_$characters} & \domainname{arti} & OpenML & No & 10 218 & 8 & -- & 8 \\
  \domainname{jungle$\_$chess} & \domainname{jung} & OpenML & No & 44 819 & 8 & 1 & 6 \\
  \domainname{open-policing-hartford} & -- & SOP & Yes & 18 419 & 20 & 16 & 4 \\
    \domainname{electricity} & \domainname{elec} & OpenML & No & 45 312 & 9 & 2 & 7 \\
\hline\hline
\end{tabular}
\end{center}
  %\vspace{-0.3cm}
\caption{Public domains considered in our experiments ($m=$ total number
  of examples, $d=$ number of features), ordered in
  increasing $m \times d$. "Cat." is a shorthand for categorical (nominal / ordinal / binary); "Num." stands for numerical (integers / reals). ($^*$) = simulated, URL for OpenML: \texttt{https://www.openml.org/search?type=data\&sort=runs\&id=504\&status=active}; SOP = Stanford Open Policing project, \texttt{https://openpolicing.stanford.edu/} (see text). "Tag" refers to tag names used in Tables \ref{tab:gen-us-vs-arf-big} and \ref{tab:gen-us-vs-ctgan-big}.}
  \label{t-s-uci}
\end{table}

\domainname{ringGauss} is the seminal 2D ring Gaussians appearing in numerous GAN papers \cite{xzzBG}; those are eight (8) spherical Gaussians with equal covariance, sampling size and centers located on  regularly spaced (2-2 angular distance) and at equal distance from the origin. \domainname{gridGauss} was generated as a decently hard task from \cite{dbplamcAL}: it consists of 25 2D mixture spherical Gaussians with equal variance and sampled sizes, put on a regular grid. \domainname{circGauss} is a Gaussian mode surrounded by a circle, from \cite{xzzBG}. \domainname{randGauss} is a substantially harder version of \domainname{ringGauss} with 16 mixture components, in which covariance, sampling sizes and distances on sightlines from the origin are all random, which creates very substantial discrepancies between modes.

\subsection{Algorithms configuration and choice of parameters}\label{sec-algos}

\paragraph{\topdownGT} We have implemented \topdownGT~in Java, following Algorithm \ref{alg-topdownGT}'s blueprint. Our implementation of \texttt{tree} and \texttt{leaf} in Steps 2.1, 2.2 is simple: we pick the heaviest leaf among all trees (with respect to $\colorr$). The search for the best split is exhaustive unless the variable is categorical with more than a fixed number of distinct modalities (22 in our experiments), above that threshold, we pick the best split among a random subset. We follow \cite{ngGT}'s experimental setting: in particular, the input of our algorithm to train a generator is a .csv file containing the training data \textit{without any further information}. Each feature's domain is learned from the training data only; while this could surely and trivially be replaced by a user-informed domain for improved results (\textit{e.g.} indicating a proportion's domain as $[0\%,100\%]$, informing the complete list of socio-professional categories, etc.) --- and is in fact standard in some ML packages like \texttt{weka}'s ARFF files, we did not pick this option to alleviate all side information available to the GT learner. Our software automatically recognizes three types of variables: nominal, integer and floating point represented.

\noindent\textbf{Comments on implementation} For the interested reader, we give here some specific implementation details regarding choices mades for two classical bottlenecks on tree-based methods (this also applies to the supervised case of learning decision trees): handling features with large number of modalities and handling continuous features.\\

We first comment the case where the number of modalities of a feature is large. The details can be found in the code in \texttt{File Algorithm.java} (class \texttt{Algorithm}), as follows:
\begin{itemize}
  \item method \texttt{public GenerativeModelBasedOnEnsembleOfTrees learn$\_$geot()} implements both GF.Boost (for generative forests) and its extension to training ensembles of generative trees (Appendix) as a single algorithm.
\item method \texttt{Node choose$\_$leaf(String how$\_$to$\_$choose$\_$leaf)} is the method to choose a leaf (Step 2.2). Since it picks the heaviest leaf among all trees, it also implements Step 2.1 (we initialize all trees to their roots; having a tree = root for generation does not affect generation).
\item method \texttt{public boolean split$\_$generative$\_$forest(Node leaf, HashSet <MeasuredSupportAtTupleOfNodes> tol)} finds the split for a Generative Forest, given a node chosen for the split. Here is how it works:
  \begin{enumerate}
    \item It first computes the complete description of possible splits (not their quality yet), using method \texttt{public static Vector<FeatureTest> ALL$\_$FEATURE$\_$TESTS(Feature f, Dataset ds)} in \texttt{Feature.java}. The method is documented: for continuous features, the list of splits is just a list of evenly spaced splits.
\item then it calls \texttt{public SplitDetails split$\_$top$\_$down$\_$boosting$\_$generative$\_$forest$\_$fast(Node leaf, boolean [] splittable$\_$feature, FeatureTest [][] all$\_$feature$\_$tests, HashSet <MeasuredSupportAtTupleOfNodes> tol)}, which returns the best split as follows: (i) it shuffles the potential list of splits and then (ii) picks the best one in the sublist containing the first \texttt{Algorithm.MAXIMAL$\_$NUMBER$\_$OF$\_$SPLIT$\_$TESTS$\_$TRIES$\_$PER$\_$BOOSTING$\_$ITERATION} elements (if the list is smaller, the search for the best is exhaustive); this class variable is currently = 1000.
\end{enumerate}
\end{itemize}

Last, we comment how we handle continuous variables: the boosting theory tells us that finding a moderately good split (condition (b) Definition \ref{defWLAG}, main file) is enough to get boosting-compliant convergence (Theorem \ref{th-boost}, main file), so we have settled for a trivial and efficient mechanism: cut-points are evenly spaced and the number is fixed beforehand. See method \texttt{public static Vector<FeatureTest> ALL$\_$FEATURE$\_$TESTS(Feature f, Dataset ds)} in \texttt{Feature.java} for the details. It turns out that this works very well and shows our theory was indeed concretely used in the design/implementation of the training algorithm.
  
  \paragraph{\mice} We have used the R \mice~package V 3.13.0 with two choices of methods for the round robin (column-wise) prediction of missing values: \textsc{cart} \cite{bfosCA} and random forests (\textsc{rf}) \cite{vgMM}. In that last case, we have replaced the default number of trees (10) by a larger number (100) to get better results. We use the default number of round-robin iterations (5). We observed that random forests got the best results so, in order not to multiply the experiments reported and perhaps blur the comparisons with our method, we report only \mice's results for random forests.

  \paragraph{\textsc{TensorFlow}} To learn the additional Random Forests involved in experiments \textsc{gen-discrim}, we used Tensorflow Decision Forests library\footnote{\url{https://github.com/google/yggdrasil-decision-forests/blob/main/documentation/learners.md}}. We use 300 trees with max depth 16. Attribute sampling: sqrt(number attributes) for classification problems, number attributes / 3 for regression problems (Breiman rule of thumb);  the min $\#$examples per leaf is 5.

  \paragraph{\textsc{CT-GAN}} We used the Python implementation\footnote{\url{https://github.com/sdv-dev/CTGAN}} with default values \cite{xscvMT}.

  \paragraph{Adversarial Random Forests} We used the R code of the generator \forge~made available from the paper \cite{wbkwAR}\footnote{\url{https://github.com/bips-hb/arf_paper}}, learning forests containing a variable number of trees in $\{10, 50, 100, 200\}$. We noted that the code does not run when the dataset has missing values and we also got an error when trying to run the code on \texttt{kc1}.

  \paragraph{Vine Copulas AutoEncoders} We used the Python code available from the paper \cite{tavCA}\footnote{\url{https://github.com/sdv-dev/Copulas}.}, which processes only fully numerical datasets. We got a few errors when trying to run the code on \texttt{compas}.

  \paragraph{Forest Flows} We used the R code available from the paper \cite{jmfkGA}\footnote{\url{https://github.com/SamsungSAILMontreal/ForestDiffusion}.}. Hyperparameters used were default parameters\footnote{\url{https://htmlpreview.github.io/?https://github.com/SamsungSAILMontreal/ForestDiffusion/master/R-Package/Vignette.html}.}. 
  
  \paragraph{Kernel Density Estimation} We used the R code available in the package \texttt{npudens} with default values following the approach of \cite{lrNE}. We tried several kernels but ended up with sticking to the default choices, that seemed to provide overall some of the best results. Compared to us with \geot, KDE's code is extremely compute-intensive as on domains below \domainname{tictactoe} in Table \ref{t-s-uci}: it took orders of magnitude more than ours to get the density values on all folds, so we did not run it on the biggest domains. Notice that in our case, computation time includes not just training the generative model, but also, \textit{at each applicable iteration} $J$ in \topdownGT, the computation of the same number of density values as for KDE.
  
  \paragraph{Computers used} We ran part of the experiments on a Mac Book Pro 16 Gb RAM w/ 2 GHz Quad-Core Intel Core i5 processor, and part on a desktop Intel(R) Xeon(R) 3.70GHz with 12 cores and 64 Gb RAM. CT-GANs and VCAEs were run on the desktop, the other algorithms on the laptop.
  
\subsection{Supplementary results}\label{sec-exp-suppl}

Due to the sheer number of tables to follow, we shall group them according to topics

\subsubsection{Interpreting our models: '\textsc{scrutinize}'}\label{sec-interpreting}

\newsavebox\treeone
\begin{lrbox}{\treeone}\begin{minipage}{\textwidth}
\begin{verbatim}
------------------------------------------------------------------- 
(name = #0.0 | depth = 5 | #nodes = 11)
[#0:root] internal {5278} (juv_fel_count <= 0 ? #1 : #2)
|-[#1] internal {4823} (age_cat_Greaterthan45 in {1} ? #3 : #4)
| |-[#3] leaf {1081}
| \-[#4] internal {3742} (race_Caucasian in {1} ? #5 : #6)
|   |-[#5] leaf {1373}
|   \-[#6] internal {2369} (race_African-American in {1} ? #7 : #8)
|     |-[#7] internal {2266} (two_year_recid in {1} ? #9 : #10)
|     | |-[#9] leaf {1203}
|     | \-[#10] leaf {1063}
|     \-[#8] leaf {103}
\-[#2] leaf {455}
------------------------------------------------------------------- 
\end{verbatim}
    \end{minipage}\end{lrbox}

\newsavebox\treethree
\begin{lrbox}{\treethree}\begin{minipage}{\textwidth}
\begin{verbatim}
--------------------------------------------------------------------------
(name = #0.2 | depth = 8 | #nodes = 17)
[#0:root] internal {5278} (juv_other_count <= 1 ? #1 : #2)
|-[#1] internal {4947} (age <= 56 ? #3 : #4)
| |-[#3] internal {4596} (sex in {1} ? #5 : #6)
| | |-[#5] internal {3632} (age_cat_Lessthan25 in {0} ? #7 : #8)
| | | |-[#7] internal {2759} (age <= 24 ? #9 : #10)
| | | | |-[#9] leaf {34}
| | | | \-[#10] internal {2725} (age_cat_25-45 in {0} ? #11 : #12)
| | | |   |-[#11] leaf {620}
| | | |   \-[#12] internal {2105} (age <= 43 ? #13 : #14)
| | | |     |-[#13] internal {2016} (c_charge_degree_F in {1} ? #15 : #16)
| | | |     | |-[#15] leaf {1330}
| | | |     | \-[#16] leaf {686}
| | | |     \-[#14] leaf {89}
| | | \-[#8] leaf {873}
| | \-[#6] leaf {964}
| \-[#4] leaf {351}
\-[#2] leaf {331}
--------------------------------------------------------------------------
\end{verbatim}
    \end{minipage}\end{lrbox}

\newsavebox\treetwo
\begin{lrbox}{\treetwo}\begin{minipage}{\textwidth}
\begin{verbatim}
------------------------------------------------------------------ 
(name = #0.1 | depth = 7 | #nodes = 15)
[#0:root] internal {5278} (juv_misd_count <= 1 ? #1 : #2)
|-[#1] internal {4961} (priors_count <= 23 ? #3 : #4)
| |-[#3] internal {4837} (priors_count <= 6 ? #5 : #6)
| | |-[#5] internal {3945} (priors_count <= 3 ? #7 : #8)
| | | |-[#7] internal {3293} (c_charge_degree_M in {0} ? #9 : #10)
| | | | |-[#9] internal {2012} (priors_count <= 2 ? #11 : #12)
| | | | | |-[#11] internal {1761} (priors_count <= 0 ? #13 : #14)
| | | | | | |-[#13] leaf {839}
| | | | | | \-[#14] leaf {922}
| | | | | \-[#12] leaf {251}
| | | | \-[#10] leaf {1281}
| | | \-[#8] leaf {652}
| | \-[#6] leaf {892}
| \-[#4] leaf {124}
\-[#2] leaf {317}
------------------------------------------------------------------ 
\end{verbatim}
    \end{minipage}\end{lrbox}

\newsavebox\treefour
\begin{lrbox}{\treefour}\begin{minipage}{\textwidth}
\begin{verbatim}
------------------------------------------------------------------------
(name = #0.0 | depth = 8 | #nodes = 17)
[#0:root] internal {5278} (juv_fel_count <= 0 ? #1 : #2)
|-[#1] internal {5087} (age_cat_Greaterthan45 in {0} ? #3 : #4)
| |-[#3] internal {4004} (age <= 41 ? #5 : #6)
| | |-[#5] internal {3760} (race_African-American in {0} ? #7 : #8)
| | | |-[#7] leaf {1321}
| | | \-[#8] internal {2439} (age <= 19 ? #9 : #10)
| | |   |-[#9] leaf {13}
| | |   \-[#10] internal {2426} (age <= 39 ? #11 : #12)
| | |     |-[#11] internal {2347} (age <= 34 ? #13 : #14)
| | |     | |-[#13] internal {2024} (two_year_recid in {0} ? #15 : #16)
| | |     | | |-[#15] leaf {886}
| | |     | | \-[#16] leaf {1138}
| | |     | \-[#14] leaf {323}
| | |     \-[#12] leaf {79}
| | \-[#6] leaf {244}
| \-[#4] leaf {1083}
\-[#2] leaf {191}
------------------------------------------------------------------------
\end{verbatim}
  \end{minipage}\end{lrbox}
\newsavebox\treefive
\begin{lrbox}{\treefive}\begin{minipage}{\textwidth}
\begin{verbatim}
---------------------------------------------------------------------------
(name = #0.1 | depth = 8 | #nodes = 17)
[#0:root] internal {5278} (juv_misd_count <= 2 ? #1 : #2)
|-[#1] internal {5236} (juv_other_count <= 2 ? #3 : #4)
| |-[#3] internal {5193} (age <= 52 ? #5 : #6)
| | |-[#5] internal {4660} (priors_count <= 26 ? #7 : #8)
| | | |-[#7] internal {4651} (sex in {1} ? #9 : #10)
| | | | |-[#9] internal {3731} (juv_misd_count <= 0 ? #11 : #12)
| | | | | |-[#11] internal {3491} (race_Caucasian in {0} ? #13 : #14)
| | | | | | |-[#13] internal {2186} (age_cat_Lessthan25 in {1} ? #15 : #16)
| | | | | | | |-[#15] leaf {573}
| | | | | | | \-[#16] leaf {1613}
| | | | | | \-[#14] leaf {1305}
| | | | | \-[#12] leaf {240}
| | | | \-[#10] leaf {920}
| | | \-[#8] leaf {9}
| | \-[#6] leaf {533}
| \-[#4] leaf {43}
\-[#2] leaf {42}
---------------------------------------------------------------------------
\end{verbatim}
  \end{minipage}\end{lrbox}

\setlength\tabcolsep{2pt}
\begin{table}[t]
  \centering
  \resizebox{\columnwidth}{!}{\begin{tabular}{cc}
                   \resizebox{0.21\textwidth}{!}{\begin{minipage}{\textwidth}
                     \includegraphics[trim=150bp 60bp 290bp 110bp,clip,width=\columnwidth]{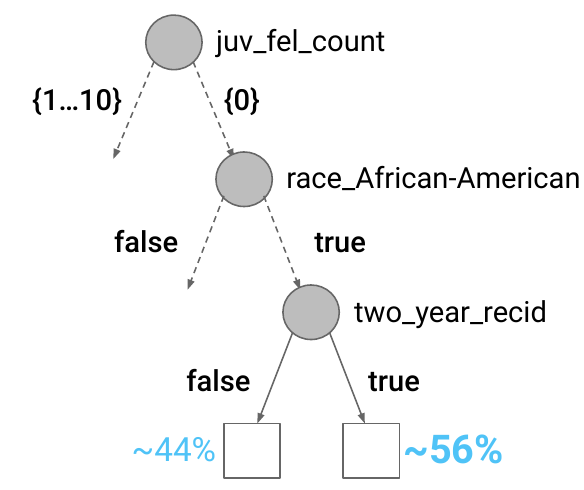}
                     \end{minipage}} & \resizebox{0.35\textwidth}{!}{\usebox\treefour} 
  \end{tabular}}
 \bignegspace
  \caption{\textsc{scrutinize}: Tree in an example of Generative Forest on sensitive domain \domainname{compas}, with 17 nodes and part of its graph sketched following Figure \ref{fig:gf-1}'s convention, modeling a bias learned from data. Ensemble of such small trees can be \textit{very} accurate: on missing data imputation, they compete with or beat  \mice~comparatively using 7 000 trees for imputation (see text for details).}
  \label{tab:compass-gf-1}
   \bignegspace
 \bignegspace
  \end{table}

Table \ref{tab:compass-gf-1} provides experimental results on sensitive real world domain \domainname{compas}. It shows that it is easy to flag potential bias / fairness issues in data by directly estimating conditional probabilities: considering the \geot~of this example and ignoring variable \texttt{age} for simplicity, conditionally to having 0 felony count and race being African-American, the probability of generating an observation with \texttt{two$\_$year$\_$recid = true} is $.562$ (=1138/2024).

\subsubsection{More examples of Table \ref{circgauss-intro} (\mainfile)}\label{sec-tables}

Tables \ref{tab:gen-full-sigma_cabs} completes Tables \ref{tab:gen-full-kc1} and \ref{tab:compass-gf-1} (main file), displaying that even a stochastic dependence can be accurately modeled.

\setlength\tabcolsep{0.5pt}

\begin{table}[t]
  \centering
  \resizebox{\columnwidth}{!}{{\tiny\begin{tabular}{c|cccc?c}\Xhline{2pt}
& $T$=$J$=20($^*$) & $T$=$J$=100($^*$) & $T$=100,$J$=200 & $T$=100,$J$=500 & Ground truth\\ \Xhline{2pt}
    \rotatebox{90}{Ens. of gen. trees}  &  \includegraphics[trim=0bp 0bp 0bp 0bp,clip,width=\cinq\columnwidth]{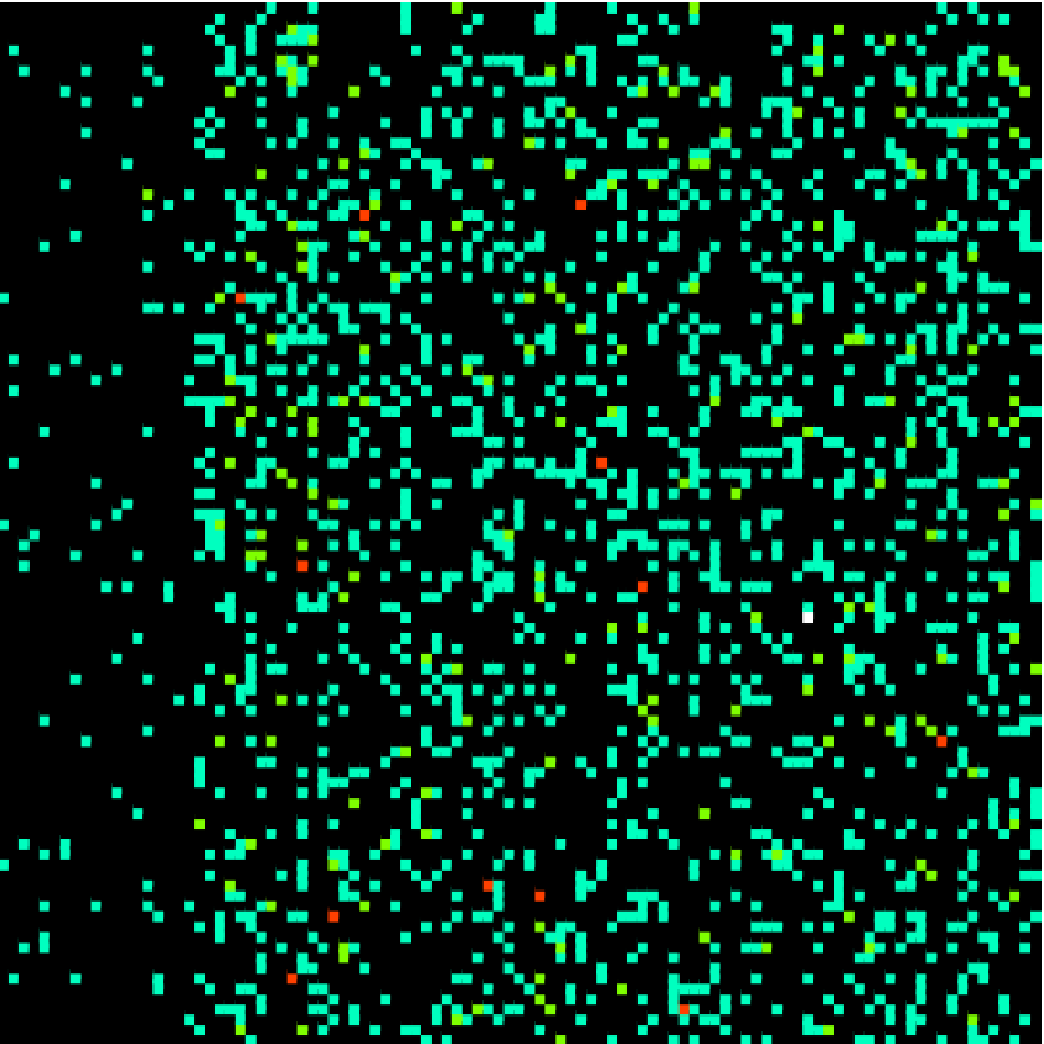}  &  \includegraphics[trim=0bp 0bp 0bp 0bp,clip,width=\cinq\columnwidth]{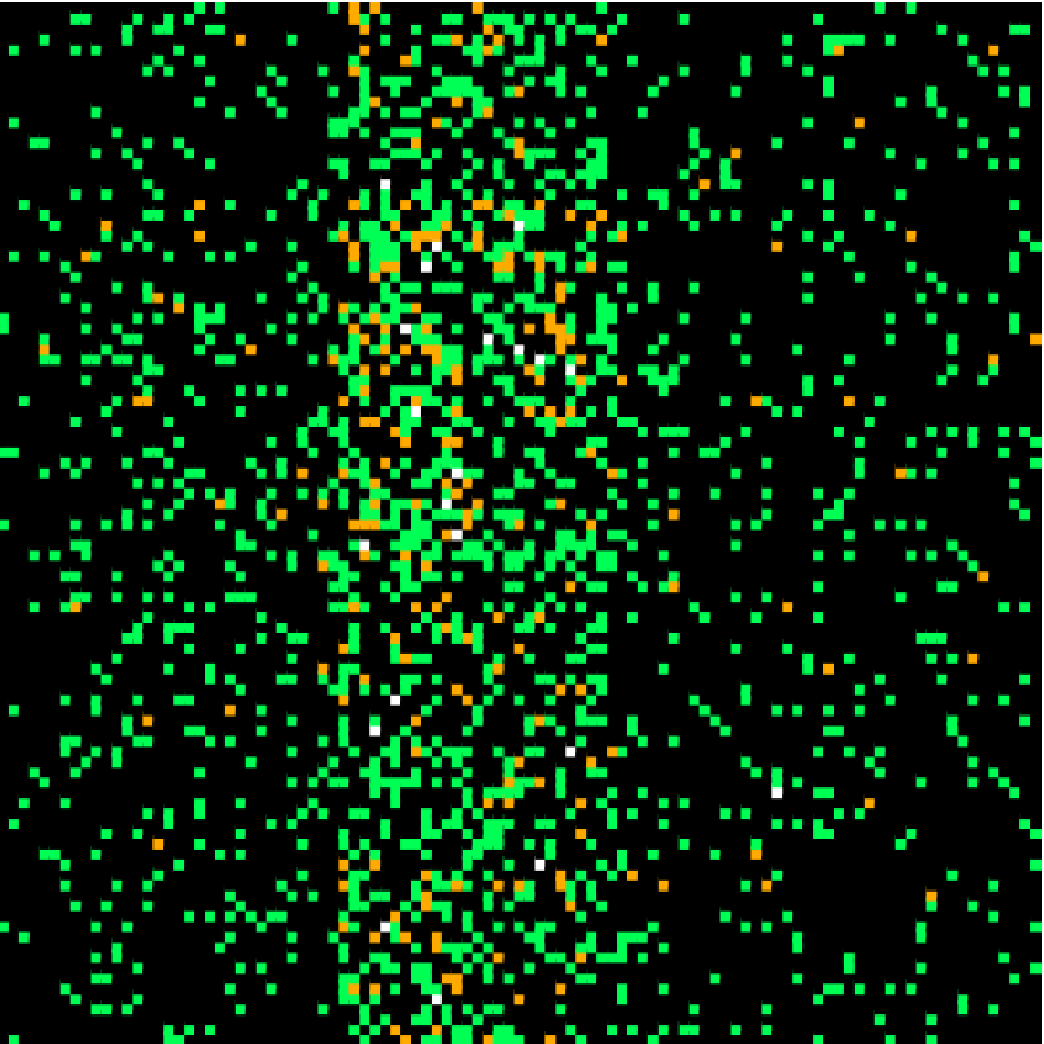}  &  \includegraphics[trim=0bp 0bp 0bp 0bp,clip,width=\cinq\columnwidth]{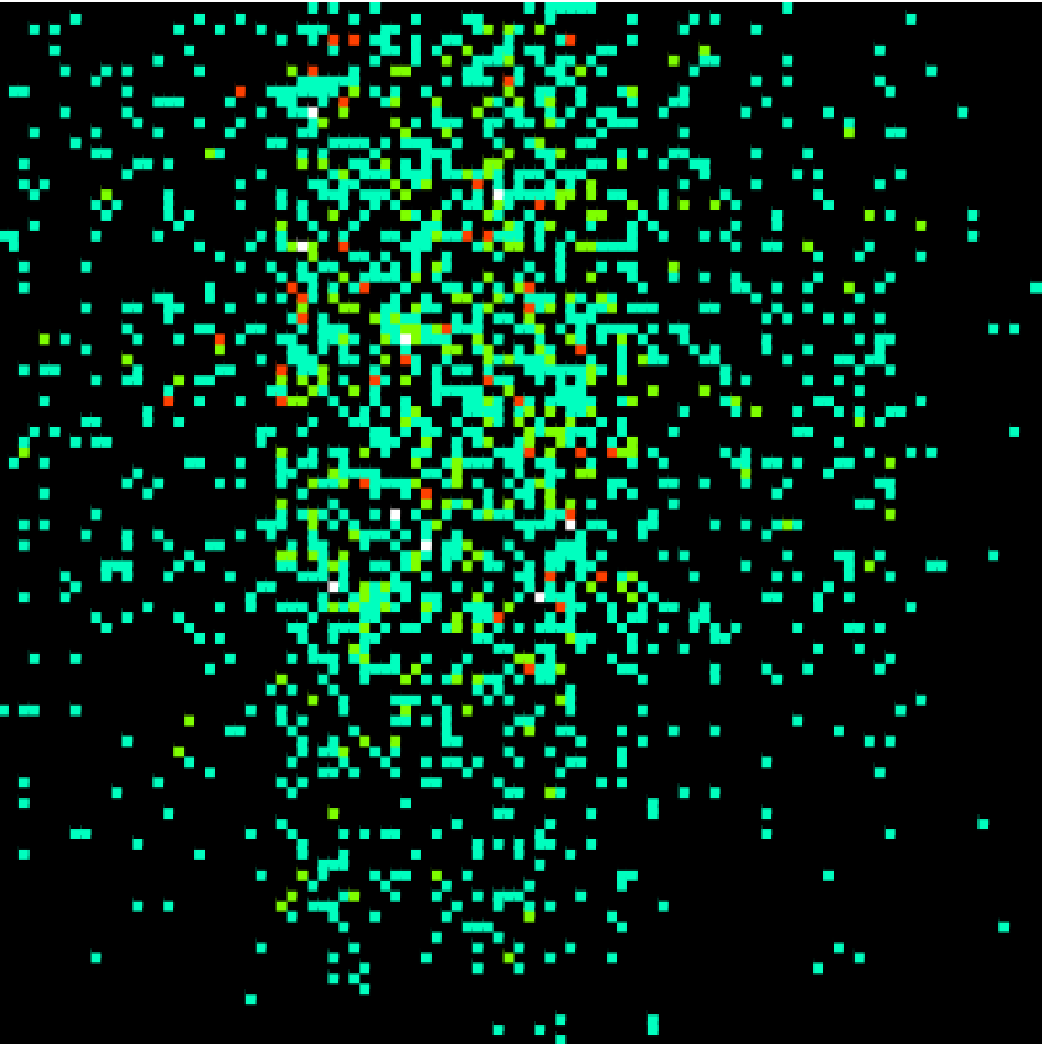}  &  \includegraphics[trim=0bp 0bp 0bp 0bp,clip,width=\cinq\columnwidth]{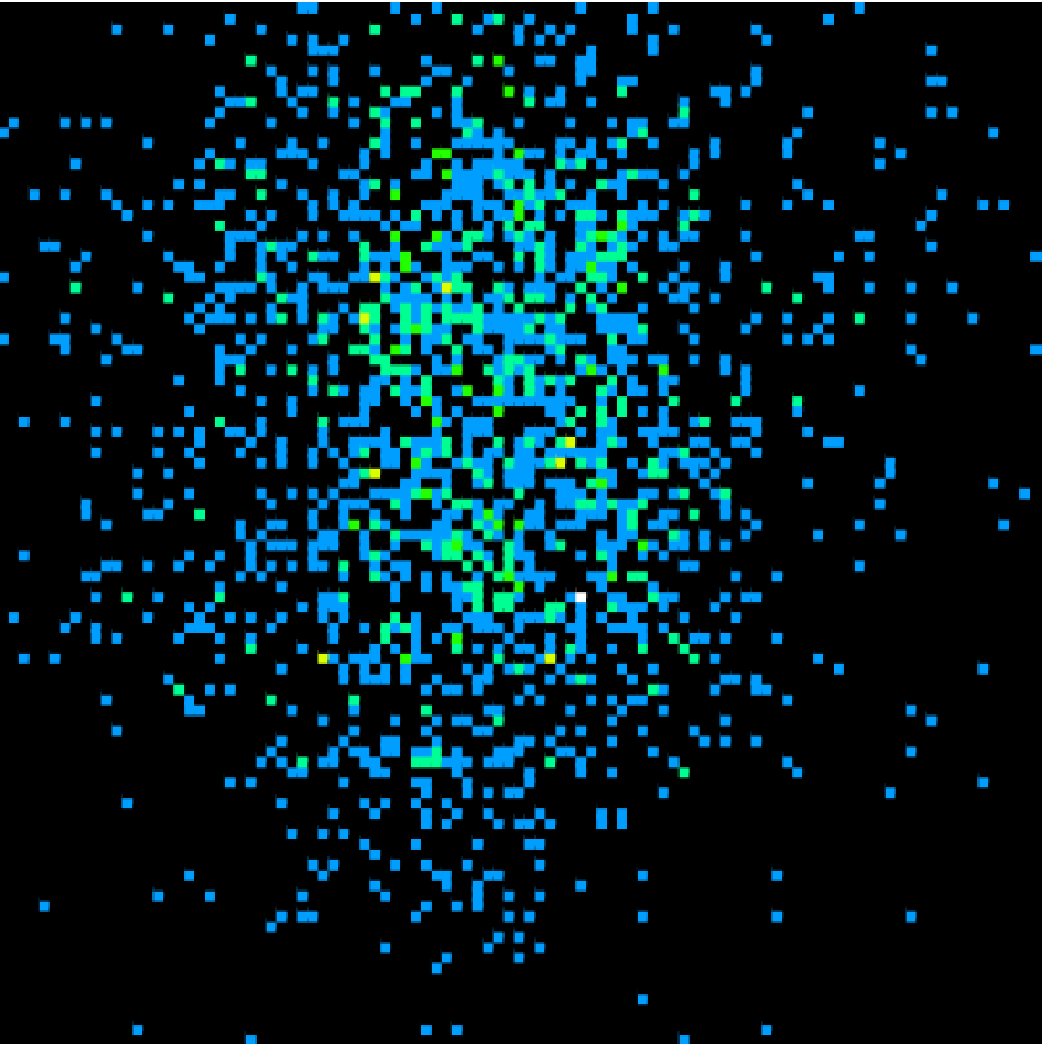}  &  \includegraphics[trim=0bp 0bp 0bp 0bp,clip,width=\cinq\columnwidth]{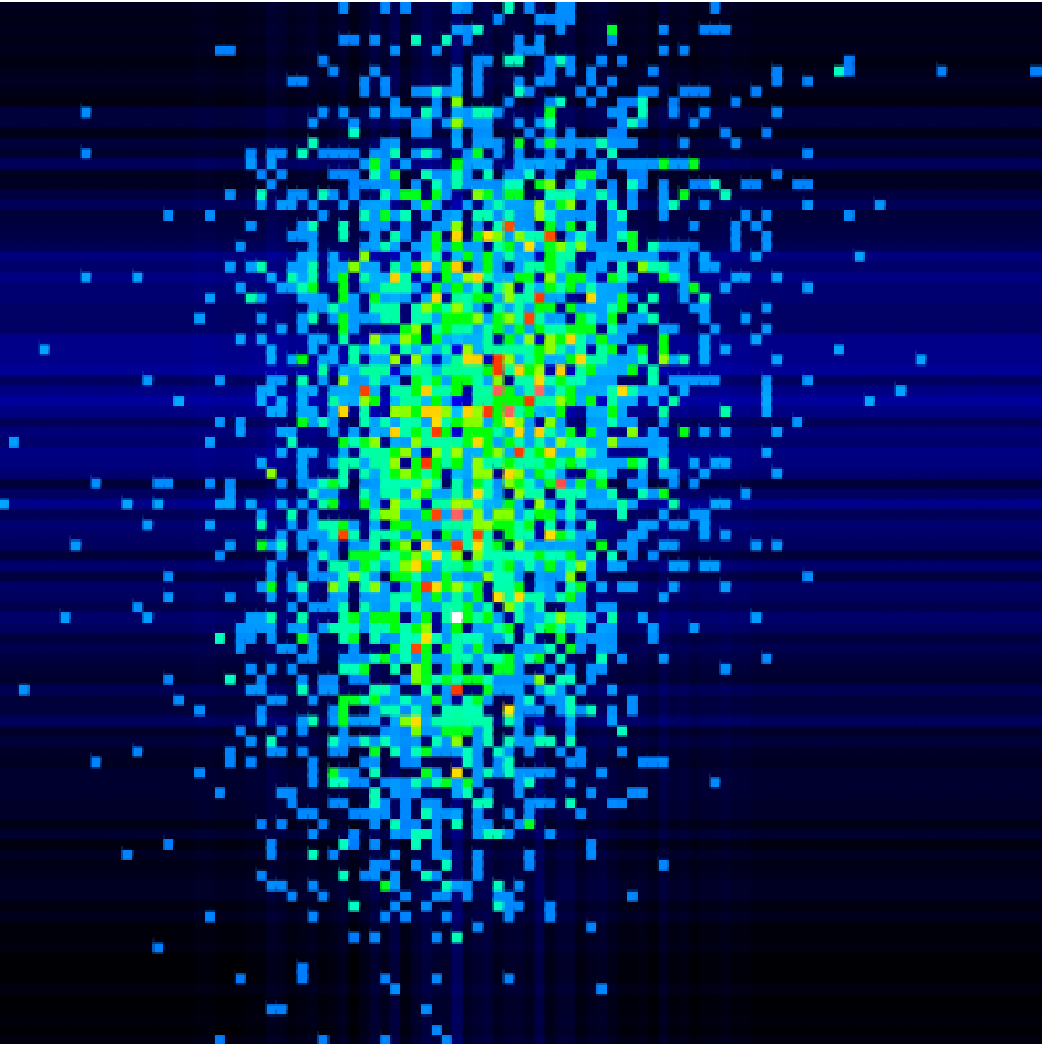}  \\\Xhline{2pt}
    \rotatebox{90}{Generative forests}  &  \includegraphics[trim=0bp 0bp 0bp 0bp,clip,width=\cinq\columnwidth]{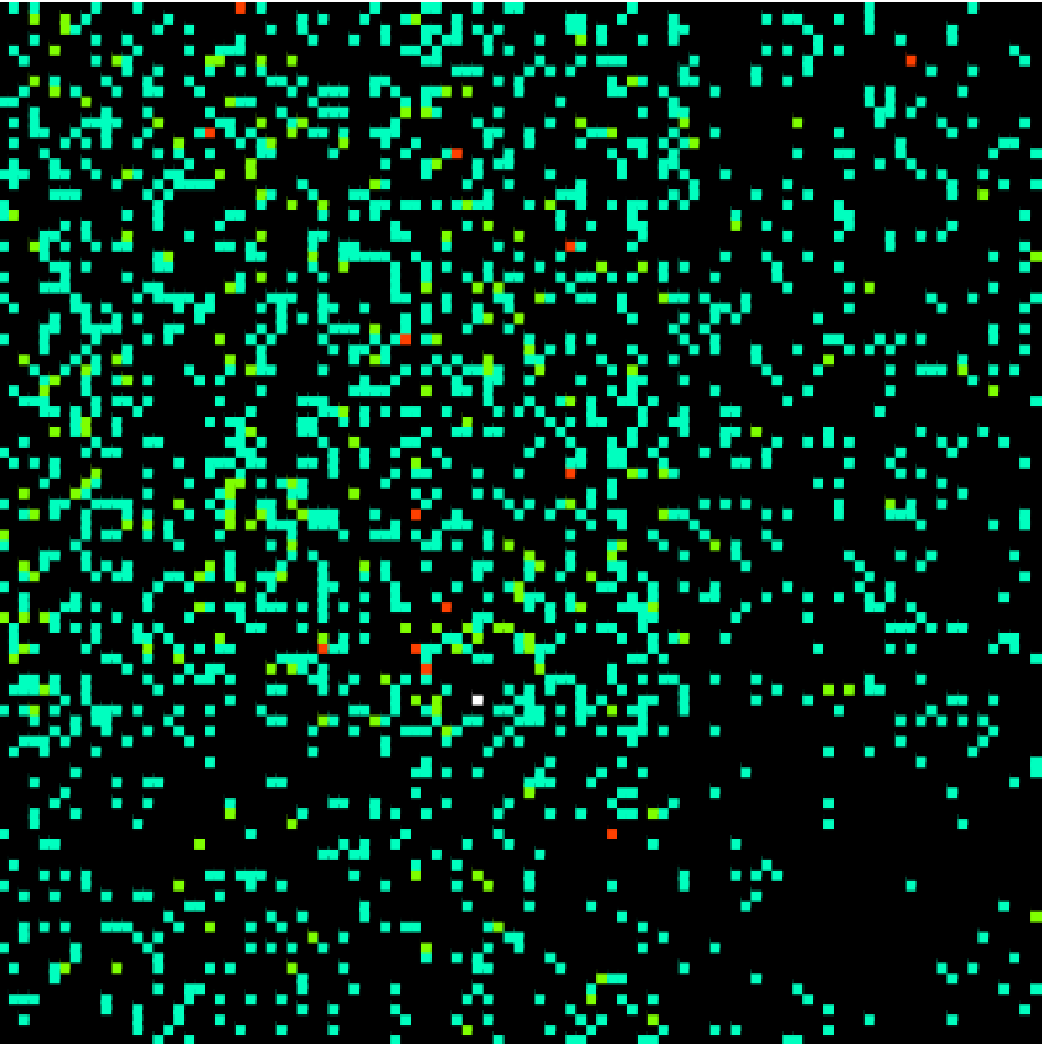}  &  \includegraphics[trim=0bp 0bp 0bp 0bp,clip,width=\cinq\columnwidth]{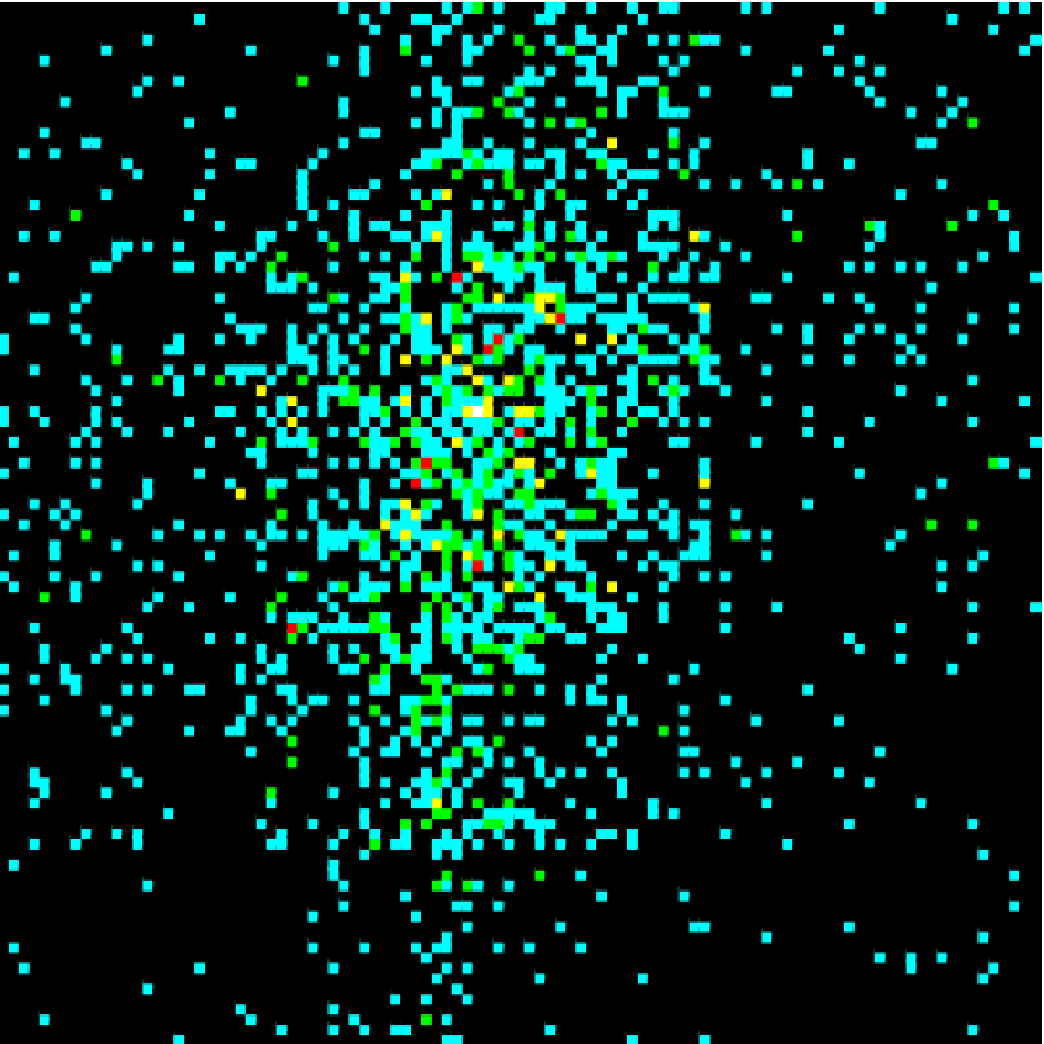}  &  \includegraphics[trim=0bp 0bp 0bp 0bp,clip,width=\cinq\columnwidth]{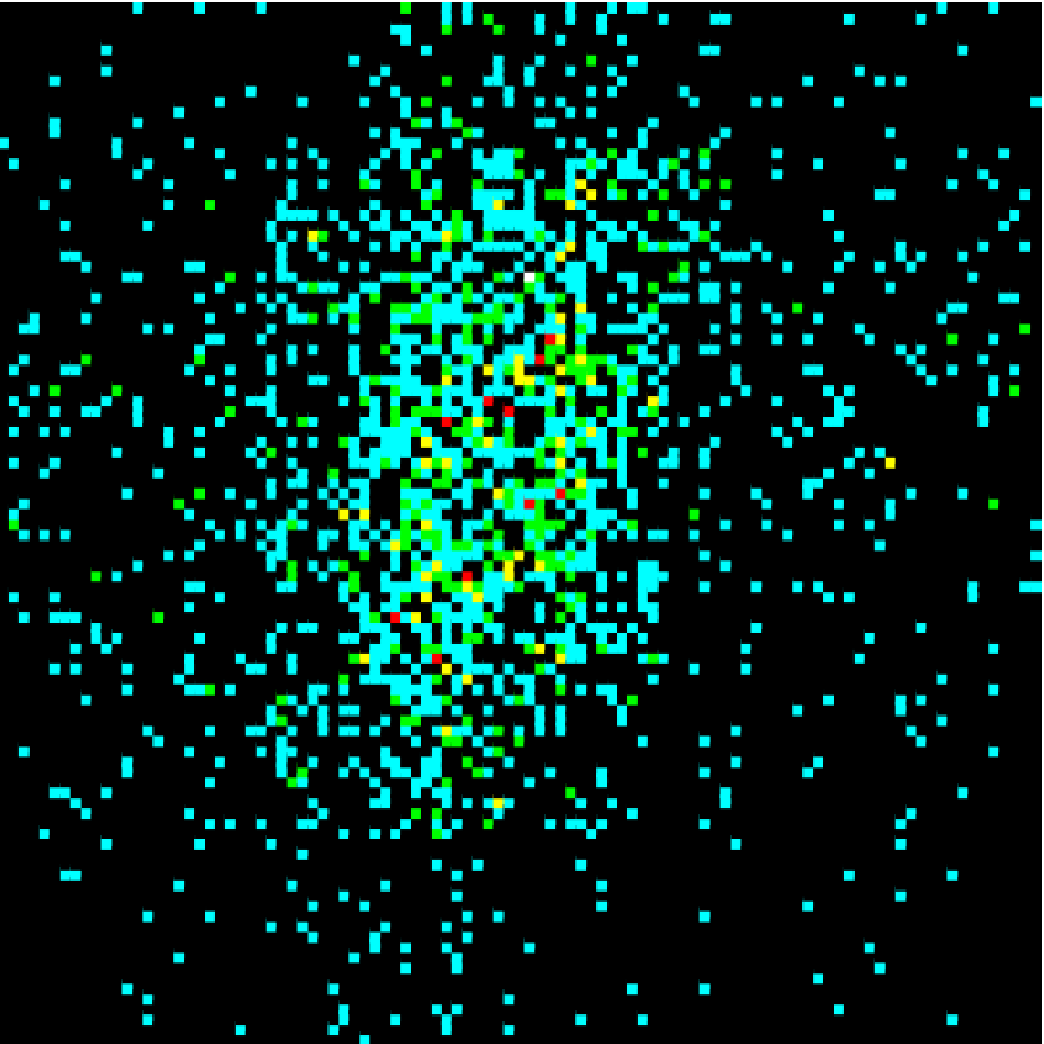}  &  \includegraphics[trim=0bp 0bp 0bp 0bp,clip,width=\cinq\columnwidth]{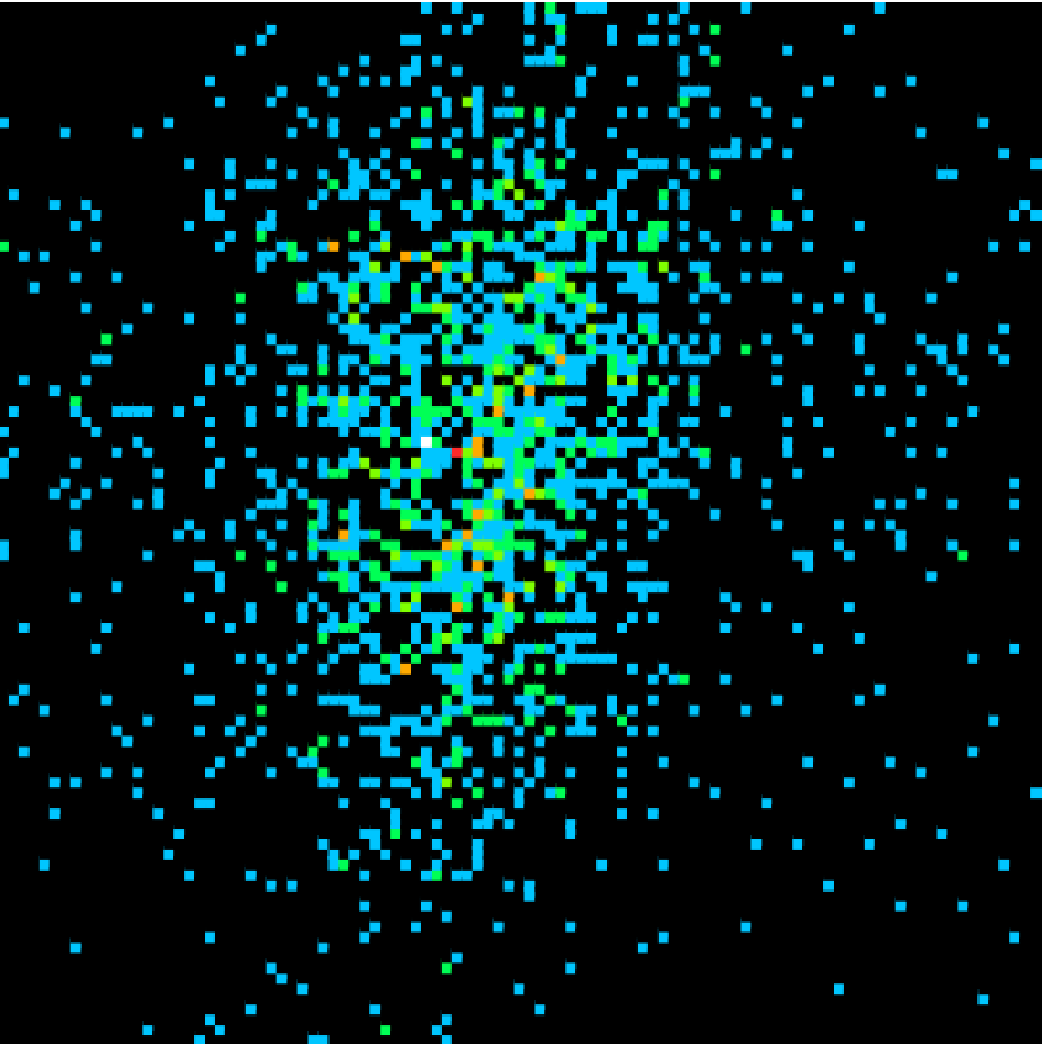}  &  \includegraphics[trim=0bp 0bp 0bp 0bp,clip,width=\cinq\columnwidth]{Experiments/generation_plots/sigma_cabs/sigma_cabs_impute5_geot_generated_S1_I100_T100_GEOT_S1_I500_T100_GEOT1__X_Life_Style_Index_Y_Customer_Rating_jointdensity_plot_domaindensity.eps}  \\\Xhline{2pt}
                                 \end{tabular}}
                                 }
  \caption{2D density plots ($x=$ Life Style Index and $y=$ Customer Rating) for data generated on \texttt{sigma$\_$cabs}, from models learned with $5\%$ missing features (MCAR), using using ensembles of generative trees (top) and generative forests (bottom), for varying total number of trees $T$ and total number of splits $J$ (columns). "$^*$" = all trees are stumps. The rightmost column recalls the domain ground truth for comparison. Each generated dataset contains $m=2000$ observations.}
    \label{tab:gen-full-sigma_cabs}
\end{table}

We provide in Table \ref{tab:tree-vs-stumps} the density learned on all our four 2D (for easy plotting) simulated domains and not just \domainname{circgauss} as in Table \ref{circgauss-intro} (\mainfile). We see that the observations made in Table \ref{circgauss-intro} (\mainfile) can be generalized to all domains. Even when the results are less different between 50 stumps in a generative forest and 1 generative tree with 50 splits for \domainname{ringgauss} and \domainname{randgauss}, the difference on \domainname{gridgauss} is stark, 1 generative tree merging many of the Gaussians.

\setlength\tabcolsep{1pt}
\newcommand{\trois}{0.25}
\begin{table}
  \centering
  \includegraphics[trim=0bp 0bp 0bp 0bp,clip,width=0.8\columnwidth]{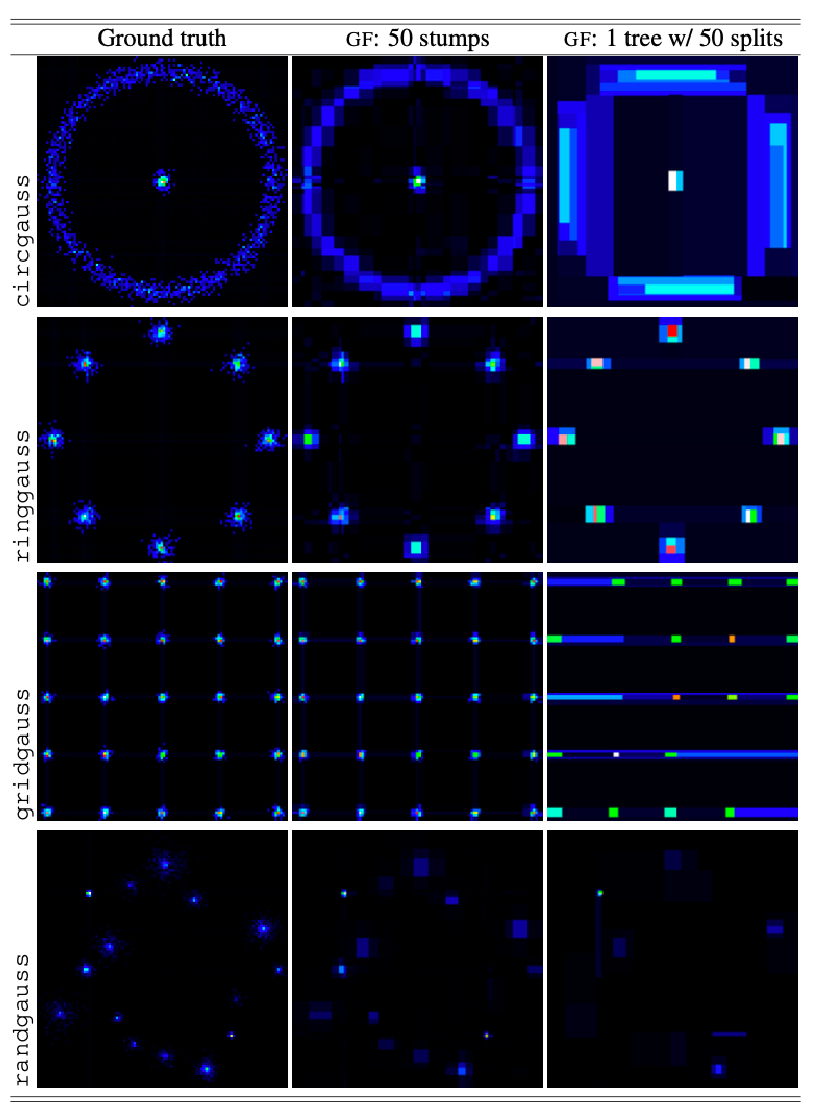}
    \caption{Comparison of the density learned by a Generative Forest (\geot) consisting of a single tree learned by \topdownGT~with $n$ total splits (\textit{right}) and a set of $n$ \geot~stumps (1 split per tree) learned by \topdownGT~(\textit{center}). The left picture is the domain, represented using the same heatmap. On each domain, 5\% of the data is missing (MCAR = missing completely at random).}
    \label{tab:tree-vs-stumps}
  \vspace{-0.2cm}
\end{table}

\newpage

\subsubsection{The generative forest of Table \ref{circgauss-intro} (\mainfile) developed further} \label{sec-gf}
One might wonder how a set of stumps gets to accurately fit the domain as the number of stumps increases. Table \ref{tab:gen_succ_circgauss} provides an answer. From this experiment, we can see that 16 stumps are enough to get the center mode. The ring shape takes obviously more iterations to represent but still, is takes a mere few dozen stumps to clearly get an accurate shape, the last iterations just finessing the fitting.

\setlength\tabcolsep{0pt}

\newcommand{\dix}{0.1}

\begin{table}
  \centering
  \includegraphics[trim=0bp 0bp 0bp 0bp,clip,width=\columnwidth]{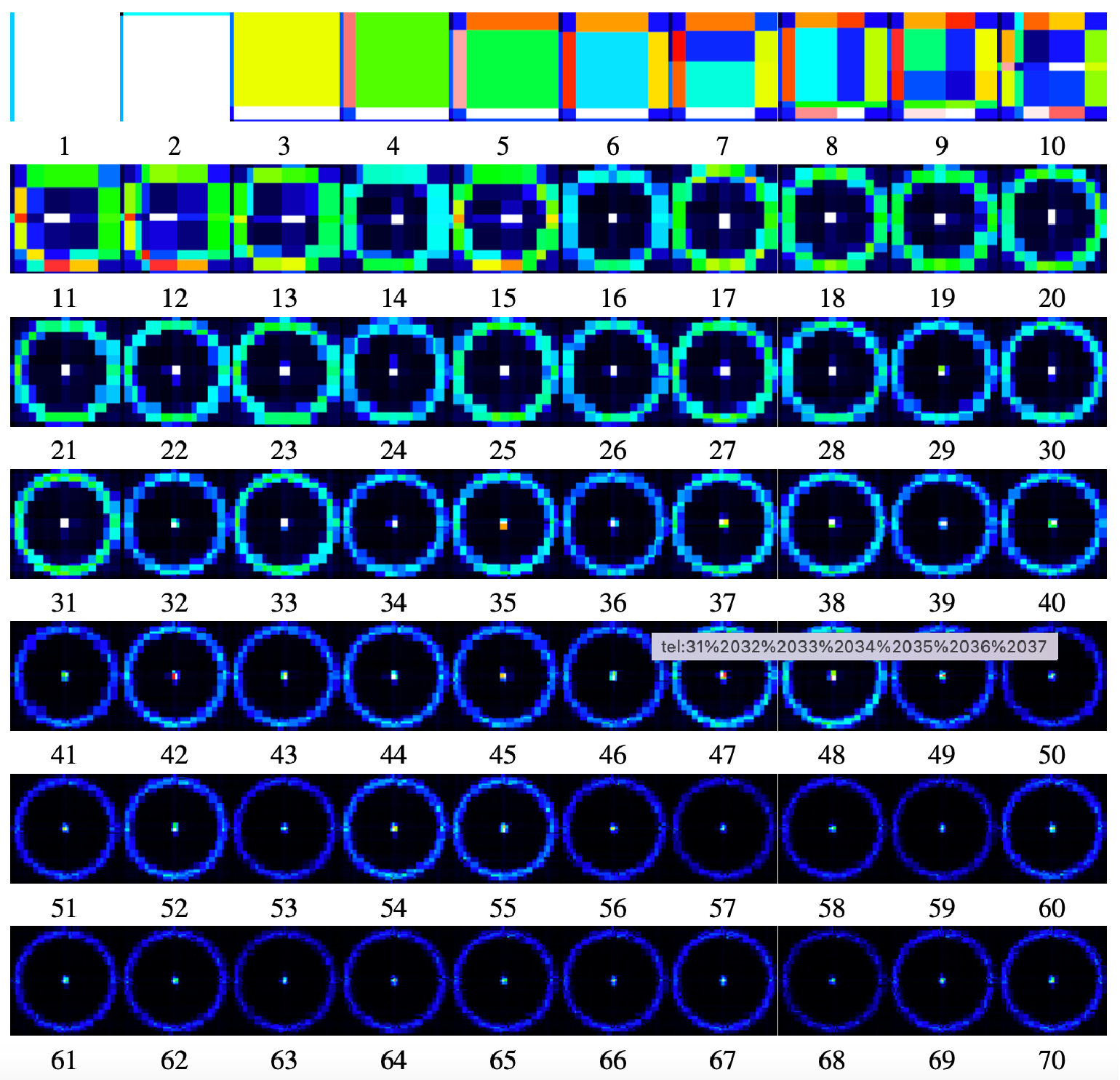}
    \caption{Densities learned on \texttt{circgauss} by \topdownGT~with generative forests consisting of stumps, for values of $T=J$ in $\{1, 2, ... 70\}$. The models quickly capture the ring and the middle dense Gaussian. The boxy-shape of the densities, due to the axis-parallel splits, get less noticeable as $T=J$ exceeds a few dozen.}
    \label{tab:gen_succ_circgauss}
\end{table}

\newpage

\subsubsection{Experiment \textsc{lifelike} \textit{in extenso}} \label{sec-tables-lifelike}

  \setlength\tabcolsep{4pt}

\begin{table*}[h]
    \centering
    \resizebox{1.0\textwidth}{!}{
      \begin{tabular}{l?rrr?rrr?rrr?rrr}\Xhline{2pt}
       Domain & \multicolumn{3}{c?}{Sinkhorn$\downarrow$} & \multicolumn{3}{c?}{Coverage$\uparrow$} & \multicolumn{3}{c?}{Density$\uparrow$} & \multicolumn{3}{c}{F1 measure$\downarrow$} \\
 & \multicolumn{1}{c}{us (\geot)} & \multicolumn{1}{c}{ARF} & pval & \multicolumn{1}{c}{us (\geot)} & \multicolumn{1}{c}{ARF} & pval & \multicolumn{1}{c}{us (\geot)} & ARF  & pval & \multicolumn{1}{c}{us (\geot)} & \multicolumn{1}{c}{ARF} & pval \\\hline
\domainname{ring} & \better{0.283$\pm$0.005} & 0.286$\pm$0.007 & 0.74 & \better{0.964$\pm$0.022} & 0.9600$\pm$0.008 & 0.53 & \better{0.999$\pm$0.055} & 0.976$\pm$0.030 & 0.42 & \better{0.060$\pm$0.047} & 0.086$\pm$0.047 & 0.23\\\hline
\domainname{circ} & 0.356$\pm$0.002& \better{0.355$\pm$0.005} & 0.74 & 0.954$\pm$0.014 & \better{0.962$\pm$0.021} & 0.27 & \better{0.968$\pm$0.027} & 0.954$\pm$0.011 & 0.04 & 0.520$\pm$0.021 & \better{0.507$\pm$0.032} & 0.67\\\hline
\domainname{grid} & \better{0.392$\pm$0.002} & 0.394$\pm$0.002 & 0.34 & \better{0.964$\pm$0.006} & 0.908$\pm$0.010 & \tugreen & \better{0.953$\pm$0.062} & 0.630$\pm$0.050 & \tugreen & \better{0.034$\pm$0.010} & 0.043$\pm$0.012 & 0.26\\\hline
\domainname{rand}  & 0.290$\pm$0.007 & \better{0.288$\pm$0.002} & 0.65 & 0.954$\pm$0.010 & \better{0.953$\pm$0.004} & 0.94 & \better{0.936$\pm$0.029} & 0.940$\pm$0.027 & 0.74 & \better{0.020$\pm$0.005} & 0.029$\pm$0.010 & 0.14 \\\hline
\domainname{wred}  & \better{1.027$\pm$0.037} & 1.099$\pm$0.031 & \tugreen & \better{0.954$\pm$0.013} & 0.929$\pm$0.010 & 0.04 & \better{0.874$\pm$0.042} & 0.801$\pm$0.028 & \tugreen & \better{0.503$\pm$0.023} & 0.531$\pm$0.028 & 0.01 \\\hline
\domainname{wwhi} & \better{1.120$\pm$0.013} & 1.150$\pm$0.021 & 0.03 & \better{0.952$\pm$0.008} & 0.946$\pm$0.012 & 0.22 & 0.940$\pm$0.019 & \better{0.941$\pm$0.013} & 0.88 & 0.510$\pm$0.027 & \better{0.500$\pm$0.027} & 0.09\\\hline
\domainname{comp} & 0.538$\pm$0.015 & \better{0.535$\pm$0.033} & 0.90 & \better{0.546$\pm$0.026} & 0.560$\pm$0.037 & 0.10 & \better{0.424$\pm$0.014} & 0.440$\pm$0.011 & 0.03 & \better{0.500$\pm$0.025} & 0.520$\pm$0.020 & 0.33\\\hline
\domainname{arti} & \better{0.830$\pm$0.007} & 0.849$\pm$0.010 & \tugreen & \better{0.932$\pm$0.018} & 0.892$\pm$0.014 & \tugreen & \better{0.897$\pm$0.004} & 0.747$\pm$0.013 & \tugreen & \better{0.457$\pm$0.057} & 0.512$\pm$0.037 & 0.02\\\Xhline{2pt}
         & \multicolumn{1}{c}{us (\geot)} & \multicolumn{1}{c}{CT} & pval & \multicolumn{1}{c}{us (\geot)} & \multicolumn{1}{c}{CT} & pval & \multicolumn{1}{c}{us (\geot)} & \multicolumn{1}{c}{CT}  & pval & \multicolumn{1}{c}{us (\geot)} & \multicolumn{1}{c}{CT} & pval \\\hline
\domainname{ring} & \better{0.283$\pm$0.005} & 0.351$\pm$0.042 & 0.02 & \better{0.964$\pm$0.022} & 0.798$\pm$0.05 & \tugreen & \better{0.999$\pm$0.055} & 0.757$\pm$0.025 & \tugreen & \better{0.060$\pm$0.047} & 0.100$\pm$0.073 & 0.75 \\\hline
\domainname{circ} & \better{0.356$\pm$0.002} &  0.435$\pm$0.075 & 0.08 & \better{0.954$\pm$0.014} & 0.734$\pm$0.041 & \tugreen & \better{0.968$\pm$0.027} & 0.401$\pm$0.033 & \tugreen & \better{0.520$\pm$0.021} & 0.746$\pm$0.033 & \tugreen \\\hline
\domainname{grid} & \better{0.392$\pm$0.002} & 0.408$\pm$0.019  & 0.12 & \better{0.964$\pm$0.006} & 0.828$\pm$0.053 & \tugreen & \better{0.953$\pm$0.062} & 0.649$\pm$0.058 & \tugreen & 0.034$\pm$0.010 & 0.034$\pm$0.011 & 0.98 \\\hline
\domainname{rand} & \better{0.290$\pm$0.007} & 0.327$\pm$0.024 & 0.01 & \better{0.954$\pm$0.010} & 0.659$\pm$0.049 & \tugreen & \better{0.936$\pm$0.029} & 0.582$\pm$0.035 & \tugreen & \better{0.020$\pm$0.005} & 0.079$\pm$0.053 & 0.08 \\\hline
\domainname{wred} & \better{1.027$\pm$0.037} & 1.384$\pm$0.047 & \tugreen & \better{0.954$\pm$0.013} & 0.808$\pm$0.016 & \tugreen & \better{0.874$\pm$0.042} & 0.589$\pm$0.129 & \tugreen & \better{0.503$\pm$0.023} & 0.654$\pm$0.030 & \tugreen \\\hline
\domainname{wwhi} & \better{1.120$\pm$0.013} & 1.158$\pm$0.009 & 0.01 & \better{0.952$\pm$0.008} & 0.894$\pm$0.026 & \tugreen & 0.940$\pm$0.019 & \better{0.953$\pm$0.035} & 0.50 & \better{0.510$\pm$0.027} & 0.581$\pm$0.043 & \tugreen \\\Xhline{2pt}
              & \multicolumn{1}{c}{us (\geot)} & \multicolumn{1}{c}{FF} & pval & \multicolumn{1}{c}{us (\geot)} & \multicolumn{1}{c}{FF} & pval & \multicolumn{1}{c}{us (\geot)} & \multicolumn{1}{c}{FF}  & pval & \multicolumn{1}{c}{us (\geot)} & \multicolumn{1}{c}{FF} & pval \\\hline
        \domainname{ring} & 0.283$\pm$0.005 & \better{0.276$\pm$0.001} & 0.09 & \better{0.964$\pm$0.022} & 0.957$\pm$0.020 & 0.31 & 0.999$\pm$0.055 & \better{1.045$\pm$0.020} & 0.07 & 0.060$\pm$0.047 & \better{0.051$\pm$0.031} & 0.75 \\\hline
\domainname{circ} & 0.356$\pm$0.003 & \better{0.354$\pm$0.003} & 0.30 & 0.954$\pm$0.014 & \better{0.956$\pm$0.015} & 0.74 & 0.968$\pm$0.027 & \better{0.989$\pm$0.027} & 0.06 & \better{0.520$\pm$0.021} & 0.530$\pm$0.028 & 0.55  \\\hline
\domainname{grid} & \better{0.392$\pm$0.002} & 0.393$\pm$0.001 & 0.53 & \better{0.964$\pm$0.006} & 0.954$\pm$0.013 & 0.07 & 0.953$\pm$0.062 & \better{1.013$\pm$0.050} & 0.01 & \better{0.034$\pm$0.010} & 0.045$\pm$0.007 & 0.02 \\\hline
\domainname{rand} & 0.290$\pm$0.007 & \better{0.288$\pm$0.001} & 0.55 & \better{0.954$\pm$0.010} & 0.927$\pm$0.011 & 0.04 & 0.936$\pm$0.029 & \better{1.000$\pm$0.008} & 0.01 & \better{0.020$\pm$0.005} & 0.028$\pm$0.008 & 0.11\\\hline
\domainname{wred} & \better{1.027$\pm$0.037} & 1.030$\pm$0.029 & 0.84 & 0.954$\pm$0.013 & \better{0.955$\pm$0.018} & 0.95 & 0.874$\pm$0.042 & \better{1.009$\pm$0.034} & \tdred & 0.503$\pm$0.023 & \better{0.458$\pm$0.052} & 0.13  \\\hline
\domainname{wwhi} & 1.120$\pm$0.013 & \better{1.097$\pm$0.007} & 0.05 & \better{0.952$\pm$0.008} & 0.945$\pm$0.007 & 0.18 & 0.940$\pm$0.019 & \better{0.970$\pm$0.023} & 0.03 & 0.510$\pm$0.027 & \better{0.498$\pm$0.041} & 0.48 \\\hline
\domainname{comp} & \better{0.537$\pm$0.015} & 0.891$\pm$0.007 & \tugreen & 0.546$\pm$0.026 & \better{0.548$\pm$0.027} & 0.74 & \better{0.424$\pm$0.014} & 0.404$\pm$0.013 & 0.03 & \better{0.500$\pm$0.025} & 0.510$\pm$0.032 & 0.36  \\\hline
\domainname{arti}  & \better{0.829$\pm$0.006} & 0.834$\pm$0.016 & 0.51 & \better{0.932$\pm$0.018} & 0.879$\pm$0.008 &\tugreen & \better{0.897$\pm$0.004} & 0.774$\pm$0.016 &\tugreen & \better{0.457$\pm$0.057} & 0.530$\pm$0.032 & \tugreen  \\\Xhline{2pt}
              & \multicolumn{1}{c}{us (\geot)} & \multicolumn{1}{c}{VC-G} & pval & \multicolumn{1}{c}{us (\geot)} & \multicolumn{1}{c}{VC-G} & pval & \multicolumn{1}{c}{us (\geot)} & \multicolumn{1}{c}{VC-G}  & pval & \multicolumn{1}{c}{us (\geot)} & \multicolumn{1}{c}{VC-G} & pval \\\hline
        \domainname{ring} & \better{0.283$\pm$0.005} & 0.392$\pm$0.005  & \tugreen & \better{0.964$\pm$0.022} & 0.364$\pm$0.037  & \tugreen & \better{0.999$\pm$0.055} & 0.132$\pm$0.013  & \tugreen & \better{0.060$\pm$0.047} & 0.291$\pm$0.035 & \tugreen \\\hline
\domainname{circ} & \better{0.356$\pm$0.002} &  0.415$\pm$0.012  & \tugreen & \better{0.954$\pm$0.014} &  0.620$\pm$0.020 & \tugreen & \better{0.968$\pm$0.027} & 0.265$\pm$0.015  & \tugreen & \better{0.520$\pm$0.021} &  0.805$\pm$0.012 & \tugreen \\\hline
\domainname{grid} & \better{0.392$\pm$0.002} & 0.400$\pm$0.003  & \tugreen & \better{0.964$\pm$0.006} & 0.165$\pm$0.037  & \tugreen & \better{0.953$\pm$0.062} & 0.042$\pm$0.012  & \tugreen & \better{0.034$\pm$0.010} &  0.065$\pm$0.002 & \tugreen \\\hline
\domainname{rand} & \better{0.290$\pm$0.007} &  0.414$\pm$0.003 & \tugreen & \better{0.954$\pm$0.010} & 0.317$\pm$0.033  & \tugreen & \better{0.936$\pm$0.029} & 0.156$\pm$0.018  & \tugreen & \better{0.020$\pm$0.005} & 0.224$\pm$0.019  & \tugreen \\\hline
\domainname{wred} & \better{1.027$\pm$0.037} & 1.105$\pm$0.042  & 0.01 & \better{0.954$\pm$0.013} & 0.913$\pm$0.013  & 0.02 & \better{0.874$\pm$0.042} & 0.821$\pm$0.041  & 0.04 & \better{0.503$\pm$0.023} & 0.537$\pm$0.007  & 0.03 \\\hline
\domainname{wwhi} & \better{1.120$\pm$0.013} &  1.181$\pm$0.019 & \tugreen & \better{0.952$\pm$0.008} & 0.938$\pm$0.008   & 0.06 & \better{0.940$\pm$0.019} & 0.907$\pm$0.026  & 0.01 & \better{0.510$\pm$0.027} & 0.527$\pm$0.028  & 0.03 \\\Xhline{2pt}
              & \multicolumn{1}{c}{us (\geot)} & \multicolumn{1}{c}{VC-C} & pval & \multicolumn{1}{c}{us (\geot)} & \multicolumn{1}{c}{VC-C} & pval & \multicolumn{1}{c}{us (\geot)} & \multicolumn{1}{c}{VC-C}  & pval & \multicolumn{1}{c}{us (\geot)} & \multicolumn{1}{c}{VC-C} & pval \\\hline
        \domainname{ring} & \better{0.283$\pm$0.005} & 0.394$\pm$0.016  & \tugreen & \better{0.964$\pm$0.022} & 0.346$\pm$0.042  & \tugreen & \better{0.999$\pm$0.055} & 0.126$\pm$0.020  & \tugreen & \better{0.060$\pm$0.047} &  0.326$\pm$0.015& \tugreen \\\hline
\domainname{circ} & \better{0.356$\pm$0.002} &  0.409$\pm$0.005  & \tugreen & \better{0.954$\pm$0.014} & 0.638$\pm$0.033  & \tugreen & \better{0.968$\pm$0.027} & 0.275$\pm$0.018  & \tugreen & \better{0.520$\pm$0.021} & 0.800$\pm$0.018  & \tugreen \\\hline
\domainname{grid} & \better{0.392$\pm$0.002} & 0.399$\pm$0.004  & 0.03 & \better{0.964$\pm$0.006} & 0.169$\pm$0.013  & \tugreen & \better{0.953$\pm$0.062} & 0.045$\pm$0.004  & \tugreen & \better{0.034$\pm$0.010} & 0.065$\pm$0.002 & \tugreen \\\hline
\domainname{rand} & \better{0.290$\pm$0.007} &  0.415$\pm$0.005 & \tugreen & \better{0.954$\pm$0.010} & 0.321$\pm$0.034  & \tugreen & \better{0.936$\pm$0.029} & 0.152$\pm$0.014  & \tugreen & \better{0.020$\pm$0.005} & 0.226$\pm$0.015  & \tugreen \\\hline
\domainname{wred} & \better{1.027$\pm$0.037} &  1.544$\pm$0.024 & \tugreen & \better{0.954$\pm$0.013} & 0.600$\pm$0.052  & \tugreen & \better{0.874$\pm$0.042} & 0.909$\pm$0.053  & 0.18 & \better{0.503$\pm$0.023} & 0.793$\pm$0.040  & \tugreen \\\hline
\domainname{wwhi} & \better{1.120$\pm$0.013} & 1.437$\pm$0.048  & \tugreen & \better{0.952$\pm$0.008} & 0.572$\pm$0.017  & \tugreen & \better{0.940$\pm$0.019} & 0.920$\pm$0.045  & 0.41 & \better{0.510$\pm$0.027} & 0.837$\pm$0.010  & \tugreen \\\Xhline{2pt}
              & \multicolumn{1}{c}{us (\geot)} & \multicolumn{1}{c}{VC-D} & pval & \multicolumn{1}{c}{us (\geot)} & \multicolumn{1}{c}{VC-D} & pval & \multicolumn{1}{c}{us (\geot)} & \multicolumn{1}{c}{VC-D}  & pval & \multicolumn{1}{c}{us (\geot)} & \multicolumn{1}{c}{VC-D} & pval \\\hline
        \domainname{ring} & \better{0.283$\pm$0.005} & 0.390$\pm$0.011  & \tugreen & \better{0.964$\pm$0.022} & 0.331$\pm$0.067  & \tugreen & \better{0.999$\pm$0.055} &  0.122$\pm$0.028 & \tugreen & \better{0.060$\pm$0.047} & 0.319$\pm$0.037 & \tugreen \\\hline
\domainname{circ} & \better{0.356$\pm$0.002} & 0.411$\pm$0.004   & \tugreen & \better{0.954$\pm$0.014} & 0.649$\pm$0.055  & \tugreen & \better{0.968$\pm$0.027} &  0.269$\pm$0.026 & \tugreen & \better{0.520$\pm$0.021} &  0.813$\pm$0.018 & \tugreen \\\hline
\domainname{grid} & \better{0.392$\pm$0.002} & 0.398$\pm$0.002  & \tugreen & \better{0.964$\pm$0.006} & 0.162$\pm$0.034  & \tugreen & \better{0.953$\pm$0.062} & 0.043$\pm$0.009  & \tugreen & \better{0.034$\pm$0.010} & 0.064$\pm$0.001  & \tugreen \\\hline
\domainname{rand} & \better{0.290$\pm$0.007} & 0.414$\pm$0.003  & \tugreen & \better{0.954$\pm$0.010} & 0.312$\pm$0.040  & \tugreen & \better{0.936$\pm$0.029} & 0.149$\pm$0.018  & \tugreen & \better{0.020$\pm$0.005} & 0.225$\pm$0.017  & \tugreen \\\hline
\domainname{wred} & \better{1.027$\pm$0.037} & 1.383$\pm$0.037   & \tugreen & \better{0.954$\pm$0.013} & 0.868$\pm$0.042  & 0.02 & \better{0.874$\pm$0.042} & 0.738$\pm$0.030  & \tugreen & \better{0.503$\pm$0.023} & 0.587$\pm$0.019  & \tugreen \\\hline
\domainname{wwhi} & \better{1.120$\pm$0.013} & 1.484$\pm$0.056  & \tugreen & \better{0.952$\pm$0.008} &  0.876$\pm$0.027 & \tugreen & \better{0.940$\pm$0.019} & 0.891$\pm$0.010  & \tugreen & \better{0.510$\pm$0.027} &  0.608$\pm$0.037 & \tugreen \\\Xhline{2pt}
              & \multicolumn{1}{c}{us (\geot)} & \multicolumn{1}{c}{VC-R} & pval & \multicolumn{1}{c}{us (\geot)} & \multicolumn{1}{c}{VC-R} & pval & \multicolumn{1}{c}{us (\geot)} & \multicolumn{1}{c}{VC-R}  & pval & \multicolumn{1}{c}{us (\geot)} & \multicolumn{1}{c}{VC-R} & pval \\\hline
        \domainname{ring} & \better{0.283$\pm$0.005} & 0.388$\pm$0.004  & \tugreen & \better{0.964$\pm$0.022} & 0.331$\pm$0.065  & \tugreen & \better{0.999$\pm$0.055} & 0.124$\pm$0.022  & \tugreen & \better{0.060$\pm$0.047} & 0.322$\pm$0.023 & \tugreen \\\hline
\domainname{circ} & \better{0.356$\pm$0.002} & 0.402$\pm$0.003   & \tugreen & \better{0.954$\pm$0.014} & 0.664$\pm$0.017  & \tugreen & \better{0.968$\pm$0.027} & 0.274$\pm$0.006  & \tugreen & \better{0.520$\pm$0.021} & 0.806$\pm$0.022  & \tugreen \\\hline
\domainname{grid} & \better{0.392$\pm$0.002} &  0.399$\pm$0.003 & 0.02 & \better{0.964$\pm$0.006} & 0.153$\pm$0.032  & \tugreen & \better{0.953$\pm$0.062} & 0.038$\pm$0.006 & \tugreen & \better{0.034$\pm$0.010} & 0.065$\pm$0.001  & \tugreen \\\hline
\domainname{rand} & \better{0.290$\pm$0.007} &  0.417$\pm$0.009 & \tugreen & \better{0.954$\pm$0.010} & 0.315$\pm$0.023  & \tugreen & \better{0.936$\pm$0.029} & 0.145$\pm$0.021  & \tugreen & \better{0.020$\pm$0.005} & 0.229$\pm$0.022  & \tugreen \\\hline
\domainname{wred} & \better{1.027$\pm$0.037} & 1.365$\pm$0.098  & \tugreen & \better{0.954$\pm$0.013} & 0.839$\pm$0.027  & \tugreen & \better{0.874$\pm$0.042} & 0.716$\pm$0.072  & 0.01 & \better{0.503$\pm$0.023} & 0.596$\pm$0.047  & 0.02 \\\hline
\domainname{wwhi} & \better{1.120$\pm$0.013} &  1.353$\pm$0.038 & \tugreen & \better{0.952$\pm$0.008} &  0.747$\pm$0.014 & \tugreen & \better{0.940$\pm$0.019} & 0.942$\pm$0.029  & 0.82 & \better{0.510$\pm$0.027} & 0.729$\pm$0.024  & \tugreen \\\Xhline{2pt}
\end{tabular} 
  }
 \bignegspace
 \caption{\textsc{lifelike}: comparison of Generative Forests (us, \geot), using smaller generative forest models ($T=200$ trees, $J=$500 total splits) to the same contenders as in Table \ref{tab:gen-us-vs-all-big} (main file, we also add the option Center for VCAE; we did not push it in the main file for space constraints but its results are on par with the other VCAE results). Conventions are the same as in Table \ref{tab:gen-us-vs-all-big}. See text for details. }
    \label{tab:gen-us-vs-all-small}
 \negspace
\end{table*}
  
The objective of the experiment is to evaluate whether a generative model is able to create "realistic" data. Tabular data is hard to evaluate visually, unlike \textit{e,g.} images or text, so we have considered a simple evaluation pipeline: we create for each domain a 5-fold stratified experiment. After a generative model has been trained, we generate the same number of observations as in the test fold and compute the optimal transport (OT) distance between the generated sample and the fold's test sample. To fasten the computation of OT costs, we use Sinkhorn's algorithm \cite{cSD} with an $\epsilon$-entropic regularizer, for some $\epsilon = 0.5$ which we observed was the smallest in our experiments to run with all domains without leading to numerical instabilities. To balance the importance of categorical features (for which the cost is binary, depending on whether the guess is right or not) and numerical features, we normalize numerical features with the domain's mean and standard deviation prior to computing OT costs. We then compare our method to three possible contenders: Adversarial Random Forests (ARF) \cite{wbkwAR}, CT-GANs \cite{xscvMT}, Forest Flows \cite{jmfkGA} and Vine copula autoencoders (VCAE) \cite{tavCA}. All these approaches rely on models that are very different from each other. Adversarial Random Forests (ARF) and Forest Flows (FFs) are tree-based generators. ARFs work as follows to generate one observation: one first sample uniformly at random a tree, then samples a leaf in the tree. The leaf is attached to a distribution which is then used to sample the observation. As we already pointed out in Section \ref{sec-rel}, there are two main differences with our models. From the standpoint of the model's distribution, if one takes the (non-empty) support from a tuple of leaves, its probability is obtained from a weighted average of the tree's distributions in \cite{wbkwAR} while it is obtained from a product of theirs in our case. Hence, at similar tree sizes, the modelling capability of the set of trees tips in our favor, but it is counterbalanced by the fact that their approach equip leaves with potentially complex distributions (e.g. truncated Gaussians) whereas we stick to uniform distributions at the leaves. A consequence of the modeling of ARFs is that each tree has to separately code for a good generator: hence, it has to be big enough or the distributions used at the leaves have to be complex enough. In our case, as we already pointed out, a small number of trees (sometimes even \textit{stumps}) can be enough to get a fairly good generator, even on real-world domains. Forest Flows use tree-based models to estimate parameters of diffusion models. Their tree-based models are not necessarily big but they need a lot of them to carry out training and generation, which is a big difference with us. Our next contender, CT-GAN \cite{xscvMT}, fully relies on neural networks so it is \textit{de facto} substantially different from ours. The last, VCAE \cite{tavCA} relies on a subtle combination of deep nets and graphical models learned in the latent space.

\noindent \textbf{Table \ref{tab:gen-us-vs-all-big} (main file) with smaller generative forests} Table \ref{tab:gen-us-vs-all-small} provides the equivalent of Table \ref{tab:gen-us-vs-all-big} (with one additional contender, which did not fit in the space allotted for the main file), but in which our generative forests are much smaller: $T=$200 trees, $j=$500 total splits (so each tree is on average barely bigger than a decision stump). One can check that our models are still competitive with respect to all contenders. The best contender, Forest Flows, beat our models on density, but it must be remembered that Forest Flows uses many tree-based models to carry out both training and generation.

\noindent \textbf{Focus on Sinkhorn for different parameterizations of contenders} We now investigate how changing the parameterization of various contenders affect their performance against our approach. Here,  ARFs are trained with a number of trees $T \in \{10, 50, 100, 200\}$ (ARFs include an algorithm to select the tree size so we do not have to select it). CT-GANs are trained with a number of epochs $E \in \{10, 100, 300, 1000\}$. 

We compare those models with generative forests with $T=500$ trees and trained for a total of $J=2 000$ iterations, for all domains considered. Compared to the trees learned by ARFs, the total number of splits we use can still be small compared to theirs, in particular when they learn $T\geq 100$ trees. Each table against ARFs and CT-GANs has a different size and contains only a subset of the whole 21 domains: ARFs do not process domains with missing values and we also got issues running contenders on several domains: those are discussed in \supplement, Section \ref{sec-algos}.

  \setlength\tabcolsep{4pt}
  \begin{table*}[h]
    \centering
    \resizebox{0.9\textwidth}{!}{
  \begin{tabular}{c?c?cc|cc|cc|cc}\Xhline{2pt}
    domain & us (\geot) & \multicolumn{8}{c}{Adversarial Random Forests (ARFs)} \\
        tag   & $T$=500, $J$=2 000 & $T=10$ & $p$-val & $T=50$ & $p$-val & $T=100$ & $p$-val & $T=200$ & $p$-val \\\Xhline{2pt}
    \domainname{iris} & 0.423$\pm$0.033 & {0.530$\pm$0.064} & \green{0.0039} & 0.517$\pm$0.081 & \green{0.0866} & 0.466$\pm$0.050 & 0.1452 & 0.502$\pm$0.043 & 0.0044 \\ \hline
    \domainname{ring} & 0.285$\pm$0.008 & {0.289$\pm$0.006} & {0.1918} & 0.286$\pm$0.007 & 0.8542 & 0.288$\pm$0.010 & 0.7205 & 0.286$\pm$0.007 & 0.6213 \\ \hline
    \domainname{circ} & 0.351$\pm$0.005 & 0.354$\pm$0.002 & 0.2942 & 0.356$\pm$0.008 & 0.2063 & 0.350$\pm$0.002	& {0.9050} & 0.355$\pm$0.005 & 0.3304\\ \hline
    \domainname{grid} & 0.390$\pm$0.002 &  0.394$\pm$0.003 & 0.1376 & 0.391$\pm$0.002 & 0.3210 & 0.392$\pm$0.001 & 0.2118 & 0.394$\pm$0.002 & \green{0.0252} \\ \hline
    \domainname{for} & 1.108$\pm$0.105 & 1.272$\pm$0.281 & 0.2807 & 1.431$\pm$0.337 & 0.1215 & 1.311$\pm$0.255	& \green{0.0972} & {1.268$\pm$0.209} & \green{0.0616} \\ \hline
    \domainname{rand} & 0.286$\pm$0.003 & 0.286$\pm$0.003 & 0.9468 & 0.290$\pm$0.005 & 0.2359 & 0.289$\pm$0.005 & 0.3990 & 0.288$\pm$0.002 & 0.3685\\ \hline
    \domainname{tic} & 0.575$\pm$0.002 & {0.577$\pm$0.001} & {0.2133} & {0.578$\pm$0.001} & {0.1104} & {0.577$\pm$0.001} & {0.2739} & {0.577$\pm$0.001} & {0.2773}\\ \hline
    \domainname{iono} & 1.167$\pm$0.074 & 1.431$\pm$0.043 & \green{0.0005} & 1.469$\pm$0.079	& \green{0.0001} & 1.431$\pm$0.058 & \green{0.0015} & 1.420$\pm$0.056 & \green{0.0035}\\ \hline
    \domainname{stm} & 0.975$\pm$0.026 & 1.010$\pm$0.015 & \green{0.0304} & 1.015$\pm$0.018& \green{0.0231} & 1.001$\pm$0.010 & \green{0.0956} & 1.014$\pm$0.021 & \green{0.0261} \\ \hline
    \domainname{wred} & 0.980$\pm$0.032 & 1.086$\pm$0.036 & \green{0.0003} & 1.072$\pm$0.055	& \green{0.0133} & 1.086$\pm$0.030 & \green{0.0003} & 1.099$\pm$0.031 & \green{0.0001} \\ \hline
        \domainname{stp} & 0.976$\pm$0.018 & 0.999$\pm$0.011 & \green{0.0382} & 1.012$\pm$0.010 & \green{0.0250} & 1.006$\pm$0.011 & \green{0.0280} & 1.003$\pm$0.003 & \green{0.0277}\\ \hline
    \domainname{ana}  & 0.368$\pm$0.014 & 0.370$\pm$0.012 & 0.7951 & 0.382$\pm$0.015 & 0.1822 & {0.368$\pm$0.006} & 0.9161 & {0.369$\pm$0.007} & 0.8316\\\hline
    \domainname{aba}  & 0.490$\pm$0.028 & 0.505$\pm$0.046 & 0.5251 & 0.484$\pm$0.024 & 0.7011 & 0.503$\pm$0.035	& 0.6757 & 0.481$\pm$0.031 & 0.4632\\\hline
    \domainname{wwhi}  & 1.064$\pm$0.003 &  1.159$\pm$0.011	& \green{$\epsilon$} & 1.159$\pm$0.006 & \green{$\epsilon$} & 1.170$\pm$0.014 & \green{$\epsilon$} & 1.150$\pm$0.021 & \green{0.0005}\\\hline
    \domainname{comp}  & 0.532$\pm$0.008 &  0.531$\pm$0.012	& 0.8278 & 0.534$\pm$0.009 & 0.8438 & 0.551$\pm$0.023 & 0.1371 & 0.535$\pm$0.033 & 0.8642\\\hline
    \domainname{arti}  & 0.821$\pm$0.004 &  0.849$\pm$0.004	& \green{0.0007} & 0.847$\pm$0.010 & \green{0.0005} & 0.843$\pm$0.009 & \green{0.0006} & 0.849$\pm$0.010 & \green{0.0005}\\\hline
    \domainname{jung}  & 0.929$\pm$0.002 &  0.929$\pm$0.002	& 0.6044 & 0.930$\pm$0.002 & 0.6536 & 0.929$\pm$0.002 & 0.5937 & 0.930$\pm$0.001 & 0.2447\\\hline
    \domainname{elec}  & 1.483$\pm$0.033 &  1.577$\pm$0.088	& 0.1052 & 1.543$\pm$0.017 & \green{0.0081} & 1.552$\pm$0.039 & \green{0.0522} & 1.581$\pm$0.025 & \green{0.0095}\\\Xhline{2pt}
    \multicolumn{2}{r|}{wins / lose for us $\rightarrow$} & 15 / 1 & & 17 / 1 & & 16 / 1 & & 17 / 1 & \\\Xhline{2pt}
  \end{tabular} 
  }
 \bignegspace
  \caption{\textsc{lifelike}: comparison of ARFs \cite{wbkwAR} with different number $T$ of trees, and meduim-sized Generative Forests (us, \geot) where the number of trees and number of iterations are \textbf{fixed} ($T=500$ trees, total of $J=2000$ splits in trees). Values shown = average over the 5-folds $\pm$ std dev. . The $p$-values for the comparison of our results ("us") and ARFs are shown. Values appearing in \green{green} in $p$-vals mean (i) the average regularized OT cost for us (\geot) is smaller than that of ARFs and (ii) $p<0.1$, \textit{i.e.} our method outperforms ARFs (there is no instance of us being statistically significantly beaten by ARFs). $\varepsilon$ means $p$-val $<10^{-4}$. Domain details in Table \ref{t-s-uci}.}
    \label{tab:gen-us-vs-arf-big}
 \negspace
\end{table*}

  \paragraph{Results vs Adversarial Random Forests} Table \ref{tab:gen-us-vs-arf-big} digs into results obtained against adversarial random forests on the Sinkhorn metric. A first observation, not visible in the table, is that ARFs indeed tend to learn big models, typically with dozens of nodes in each tree. For the largest ARFs with 100 or 200 trees, this means in general a total of thousands of nodes in models. A consequence, also observed experimentally, is that there is sometimes little difference in performance in general between models with a different number of trees in ARFs as each tree is in fact an already good generator. In our case, this is obviously not the case. Noticeably, the performance of ARFs is not monotonic in the number of trees, so there could be a way to find the appropriate number of trees in ARFs to buy the slight (but sometimes significant) increase in performance in a domain-dependent way. In our case, there is obviously a dependency in the number of trees chosen as well. From the standpoint of the optimal transport metric, even when ARFs make use of distributions at their tree leaves that are much more powerful than ours and in fact fit to some of our simulated domains (Gaussians used in \texttt{circgauss}, \texttt{randgauss} and \texttt{gridgauss}), we still manage to compete or beat ARFs on those simulated domains. 

  Globally, we manage to beat ARFs on almost all runs, and very significantly on many of them. Since training generative forests does not include a mechanism to select the size of models, we have completed this experiment by another one on which we learn much smaller generative forests. The corresponding Table is in \supplement, Section \ref{sec-tables-lifelike}. Because we clearly beat ARFs on \domainname{student$\_$performance$\_$mat} and  \domainname{student$\_$performance$\_$por} in Table \ref{tab:gen-us-vs-arf-big} but are beaten by ARFs for much smaller generative forests, we conclude that there could exist mechanisms to compute the "right size" of our models, or to prune big models to get models with the right size. Globally however, even with such smaller models, we still manage to beat ARFs on a large majority of cases, which is a good figure given the difference in model sizes. The ratio "results quality over model size" tips in our favor and demonstrates the potential in using all trees in a forest to generate an observation (us) vs using a single of its trees (ARFs), see Figure \ref{fig:generation-sketch}.
  
  \paragraph{Results vs CT-GANs} Table \ref{tab:gen-us-vs-ctgan-big} summarizes our results against CT-GAN using various numbers of training epochs ($E$). In our case, our setting is the same as versus ARFs: we stick to $T=500$ trees trained for a total number of $J=2000$ iterations in the generative forest, for all domains. We see that generative forests consistently and significantly outperform CT-GAN on nearly all cases. Furthermore, while CT-GANs performances tend to improve with the number of epochs, we observe that on a majority of datasets, performances at 1 000 epochs are still far from those of generative forests and already induce training times that are far bigger than for generative forests: for example, more than 6 hours per fold for \domainname{stm} while it takes just a few seconds to train a generative forest. Just like we did for ARFs, we repeated the experiment vs CT-GAN using much smaller generative forests ($T=200, J=500$). The results are presented in  \supplement, Section \ref{sec-tables-lifelike}. There is no change in the conclusion: even with such small models, we still beat CT-GANs, regardless of the number of training epochs, on all cases (and very significantly on almost all of them).

  \begin{table}[t]
    \centering
    \resizebox{\textwidth}{!}{
  \begin{tabular}{c?c?cc|cc|cc|cc}\Xhline{2pt}
    domain & us (\geot) & \multicolumn{8}{c}{CT-GANs} \\
        tag   & $T$=500, $J$=2 000 & $E=10$ & $p$-val & $E=100$ & $p$-val & $E=300$ & $p$-val & $E=1000$ & $p$-val \\\Xhline{2pt}
    \domainname{ring} & 0.285$\pm$0.008 & {0.457$\pm$0.058} & \green{0.0032} & 0.546$\pm$0.079 & \green{0.0018} & 0.405$\pm$0.044 & \green{0.0049} & 0.351$\pm$0.042 & \green{0.0183}  \\ \hline
        \domainname{circ} & 0.351$\pm$0.005 & 0.848$\pm$0.300 & \green{0.0213} & 0.480$\pm$0.040 & \green{0.0019} & 0.443$\pm$0.014 & \green{$\varepsilon$} & 0.435$\pm$0.075 & \green{0.0711} \\ \hline
    \domainname{grid} & 0.390$\pm$0.002 & 0.749$\pm$0.417 & 0.1268 & 0.459$\pm$0.031 & \green{0.0085} & 0.426$\pm$0.022 & \green{0.0271} &  0.408$\pm$0.019 & 0.1000\\ \hline
    \domainname{for} & 1.108$\pm$0.105 & 9.796$\pm$5.454 & \green{0.0049} & 1.899$\pm$0.289 & \green{0.0045} & 1.532$\pm$0.178 & \green{0.0244} & 1.520$\pm$0.311 & \green{0.0410} \\ \hline
    \domainname{rand} & 0.286$\pm$0.003 & 0.746$\pm$0.236 & \green{0.0119} & 0.491$\pm$0.063 & \green{0.0021} & 0.368$\pm$0.015 & \green{0.0002} & 0.327$\pm$0.024 & \green{0.0288} \\ \hline
    \domainname{tic} & 0.575$\pm$0.002 & 0.601$\pm$0.003 & \green{0.0002} & 0.581$\pm$0.001 & \green{0.0082} & 0.586$\pm$0.006 & \green{0.0294} & 0.584$\pm$0.003 & \green{0.0198}  \\ \hline
    \domainname{iono} & 1.167$\pm$0.074 & 2.263$\pm$0.035 & \green{$\varepsilon$} & 2.084$\pm$0.052 & \green{$\varepsilon$} & 1.984$\pm$0.136 & \green{$\varepsilon$} & 1.758$\pm$0.137 & \green{0.0002} \\ \hline
    \domainname{stm} & 0.980$\pm$0.032 & 1.511$\pm$0.074 & \green{$\varepsilon$} & 1.189$\pm$0.045 & \green{0.0002} & 1.168$\pm$0.054 & \green{0.0056} & 1.167$\pm$0.041 & \green{0.0013}  \\ \hline
    \domainname{wred} & 0.980$\pm$0.032& 2.836$\pm$0.703 & \green{0.0044} & 2.004$\pm$0.090 & \green{$\varepsilon$} & 1.825$\pm$0.127 & \green{0.0001} & 1.384$\pm$0.047 & \green{$\varepsilon$}  \\ \hline
    \domainname{stp} & 0.976$\pm$0.018& 1.660$\pm$0.161 & \green{0.0006} & 1.141$\pm$0.018 & \green{0.0001} & 1.206$\pm$0.046 & \green{0.0007} & 1.186$\pm$0.052 & \green{0.0019}  \\ \hline
  \domainname{ana}  & 0.368$\pm$0.014 & 0.542$\pm$0.135 & \green{0.0050} & 0.439$\pm$0.056 & \green{0.0202} & 0.404$\pm$0.012 & \green{0.0397} & 0.436$\pm$0.036 & \green{0.0051}  \\ \hline
  \domainname{aba}  & 0.490$\pm$0.028 & 1.689$\pm$0.053 & \green{$\varepsilon$} & 1.463$\pm$0.094 & \green{$\varepsilon$} & 0.754$\pm$0.040 & \green{0.0009} & 0.657$\pm$0.026 & \green{0.0006} \\ \hline
    \domainname{wwhi}  & 1.064$\pm$0.003 & 1.770$\pm$0.160 & \green{0.0005} & 1.849$\pm$0.064 & \green{$\varepsilon$} & 1.284$\pm$0.029 & \green{$\varepsilon$} & 1.158$\pm$0.009 & \green{$\varepsilon$}  \\ \hline
    \domainname{arti}$^*$  & 0.821$\pm$0.004 &  1.388$\pm$0.109$_4$ 	& \green{0.0020} & 1.209$\pm$0.018$_4$  & \green{0.0005} & 1.026$\pm$0.013$_4$  & \green{0.0001} & 0.959$\pm$0.013$_3$ & \green{0.0038}\\ \Xhline{2pt}
    \multicolumn{2}{r|}{wins / lose for us $\rightarrow$} & 14 / 0 & & 14 / 0 & & 14 / 0 & & 14 / 0 & \\\Xhline{2pt}
  \end{tabular} 
  }
  \caption{Comparison of results with respect to CT-GANs \cite{xscvMT} trained with
    different number of epochs $E$. On some domains, indicated with a "$^*$", CT-GANs crashed on some folds. For such domains, we indicate in index to CT-GANs results the number of folds (out of 5) for which this did \textit{not} happen. In such cases, we restricted the statistical comparisons with us to the folds for which CT-GANs did not crash: the statistical tests take this into account; instead of providing all corresponding average performances for us, we keep giving the average performance for us on all five folds (leftmost column), which is thus indicative. Other conventions follow Table \ref{tab:gen-us-vs-arf-big}.}
    \label{tab:gen-us-vs-ctgan-big}
  \end{table}

  \begin{table}[t]
    \centering
    \resizebox{\textwidth}{!}{
  \begin{tabular}{c?c?cc|cc|cc|cc}\Xhline{2pt}
    domain & us (\geot) & \multicolumn{8}{c}{Adversarial Random Forests (ARFs)} \\
        tag   & $T$=200, $J$=500 & $T=10$ & $p$-val & $T=50$ & $p$-val & $T=100$ & $p$-val & $T=200$ & $p$-val \\\Xhline{2pt}
    \domainname{iris} & 0.529$\pm$0.099 & {0.530$\pm$0.064} & 0.9890 & 0.517$\pm$0.081 & 0.8645 & 0.466$\pm$0.050 & 0.1099 & 0.501$\pm$0.043 & 0.6548 \\ \hline
    \domainname{ring} & 0.283$\pm$0.005 & {0.289$\pm$0.006} & \green{0.0651} & 0.286$\pm$0.007 & 0.4903 & 0.288$\pm$0.010 & 0.4292 & 0.286$\pm$0.007 & 0.7442 \\ \hline
    \domainname{circ} & 0.356$\pm$0.002 & 0.354$\pm$0.002 & 0.4561 & 0.356$\pm$0.008 & 0.7882 & 0.350$\pm$0.002	& \magenta{0.0591} & 0.355$\pm$0.005 & 0.7446\\ \hline
    \domainname{grid} & 0.392$\pm$0.002 &  0.394$\pm$0.003 & 0.4108 & 0.391$\pm$0.002 & 0.3403 & 0.392$\pm$0.001 & 0.8370 & 0.394$\pm$0.002 & 0.3423 \\ \hline
    \domainname{for} & 1.086$\pm$0.124 & 1.272$\pm$0.281 & 0.1420 & 1.431$\pm$0.337 & 0.1142 & 1.311$\pm$0.255	& 0.1215 & {1.268$\pm$0.209} & \green{0.0370} \\ \hline
    \domainname{rand} & 0.290$\pm$0.007 & 0.286$\pm$0.003 & 0.4488 & 0.290$\pm$0.005 & 0.9553 & 0.289$\pm$0.005 & 0.9004 & 0.288$\pm$0.002 & 0.6542\\ \hline
    \domainname{tic} & 0.574$\pm$0.001 & {0.577$\pm$0.001} & \green{0.0394} & {0.578$\pm$0.001} & \green{0.0325} & {0.577$\pm$0.001} & \green{0.0657} & {0.577$\pm$0.001} & \green{0.0914}\\ \hline
    \domainname{iono} & 1.203$\pm$0.068 & 1.431$\pm$0.043 & 0.7307 & 1.469$\pm$0.079	& 0.3402 & 1.431$\pm$0.058 & 0.7046 & 1.420$\pm$0.056 & 0.8737\\ \hline
    \domainname{stm} & 1.044$\pm$0.013 & 1.010$\pm$0.015 & \magenta{0.0080} & 1.015$\pm$0.018& \magenta{0.0094} & 1.001$\pm$0.010 & \magenta{0.0037} & 1.014$\pm$0.021 & \magenta{0.0329} \\ \hline
    \domainname{wred} & 1.027$\pm$0.037 & 1.086$\pm$0.036 & \green{0.0327} & 1.072$\pm$0.055	& 0.1432 & 1.086$\pm$0.030 & \green{0.0095} & 1.099$\pm$0.031 & \green{0.0044} \\ \hline
    \domainname{stp} & 1.029$\pm$0.027 & 0.999$\pm$0.011 & \magenta{0.0826} & 1.012$\pm$0.010 & 0.3421 & 1.006$\pm$0.011 & \magenta{0.0542} & 1.003$\pm$0.003 & \magenta{0.0764}\\ \hline
  \domainname{ana}  & 0.367$\pm$0.018 & 0.370$\pm$0.012 & 0.8229 & 0.382$\pm$0.015 & 0.2318 & {0.368$\pm$0.006} & 0.9264 & {0.369$\pm$0.007} & 0.8722\\\hline
  \domainname{aba}  & 0.476$\pm$0.017 & 0.505$\pm$0.046 & 0.2264 & 0.484$\pm$0.024 & 0.2162 & 0.503$\pm$0.035	& 0.1847 & 0.481$\pm$0.031 & 0.7468\\\hline
    \domainname{wwhi}  & 1.120$\pm$0.013 &  1.159$\pm$0.011	& \green{0.0013} & 1.159$\pm$0.006 & \green{0.0014} & 1.170$\pm$0.014 & \green{0.0026} & 1.150$\pm$0.021 & \green{0.0327}\\\hline
    \domainname{comp}  &  0.538$\pm$0.015 &  0.531$\pm$0.012	& 0.5639 & 0.534$\pm$0.009 & 0.5706  & 0.551$\pm$0.023 & 0.4057  & 0.535$\pm$0.033 & 0.9017 \\\hline
    \domainname{arti}  & 0.830$\pm$0.007 &  0.849$\pm$0.004	&  \green{0.0078} & 0.847$\pm$0.010 & \green{0.0012}  & 0.843$\pm$0.009 & \green{0.0003}  & 0.849$\pm$0.010 & \green{0.0010}\\\hline
    \domainname{jung}  & 0.928$\pm$0.001 &  0.929$\pm$0.002	& 0.1720 & 0.930$\pm$0.002 & \green{0.0646} & 0.929$\pm$0.002 & 0.2675 & 0.930$\pm$0.001 & \green{0.0081}\\\hline
    \domainname{elec}  & 1.503$\pm$0.035  &  1.577$\pm$0.088	& 0.1222  & 1.543$\pm$0.017 & \green{0.0880}  & 1.552$\pm$0.039 & 0.1228  & 1.581$\pm$0.025 & \green{0.0027}  \\\Xhline{2pt}
    \multicolumn{2}{r|}{wins / lose for us $\rightarrow$} & 13 / 5 & & 11 / 5 & & 12 / 5 & & 12 / 6 & \\\Xhline{2pt}
  \end{tabular}
  }
  \caption{Comparison of results for ARFs \cite{wbkwAR} with different number $J$ of trees, and small Generative Forests (us, \geot) where the number of trees and number of iterations are \textbf{fixed} ($T=200$ trees, total of $J=500$ splits in trees). Values are shown on the form average over the 5-folds $\pm$ standard deviation. The $p$-values for the comparison of our results ("us") and ARFs are shown. Values appearing in \green{green} in $p$-vals mean (i) the average regularised OT cost for us (\geot) is smaller than that of ARFs and (ii) $p<0.1$, meaning we can conclude that our method outperforms ARFs. Values appearing in \magenta{magenta} in $p$-vals mean (i) the average regularised OT cost for us (\geot) is larger than that of ARFs and (ii) $p<0.1$, meaning we can conclude that our method is outperformed by ARFs. The last row summarizes the number of times we win / lose against ARFs according to the average metric used. To reduce size, domains are named by a tag: the correspondence with domains and their characteristics is in Table \ref{t-s-uci}.}
    \label{tab:gen-us-vs-arf-small}
  \end{table}

  \begin{table}[t]
    \centering
    \resizebox{\textwidth}{!}{
  \begin{tabular}{c?c?cc|cc|cc|cc}\Xhline{2pt}
    domain & us (\geot) & \multicolumn{8}{c}{CT-GANs} \\
        tag   & $T$=200, $J$=500 & $E=10$ & $p$-val & $E=100$ & $p$-val & $E=300$ & $p$-val & $E=1000$ & $p$-val \\\Xhline{2pt}
    \domainname{ring} & 0.283$\pm$0.005 & {0.457$\pm$0.058} & \green{0.0029} & 0.546$\pm$0.079 & \green{0.0020} & 0.405$\pm$0.044 & \green{0.0048} & 0.351$\pm$0.042 & \green{0.0299}  \\ \hline
        \domainname{circ} & 0.356$\pm$0.002 & 0.848$\pm$0.300 & \green{0.0217} & 0.480$\pm$0.040 & \green{0.0024} & 0.443$\pm$0.014 & \green{0.0001} & 0.435$\pm$0.075 & \green{0.0816} \\ \hline
    \domainname{grid} & 0.392$\pm$0.002 & 0.749$\pm$0.417 & 0.1279 & 0.459$\pm$0.031 & \green{0.0096} & 0.426$\pm$0.022 & \green{0.0233} &  0.408$\pm$0.019 & 0.1239\\ \hline
    \domainname{for} & 1.086$\pm$0.124 & 9.796$\pm$5.454 & \green{0.0241} & 1.899$\pm$0.289 & \green{0.0028} & 1.532$\pm$0.178 & \green{0.0020} & 1.520$\pm$0.311 & \green{0.0313} \\ \hline
    \domainname{rand} & 0.290$\pm$0.007 & 0.746$\pm$0.236 & \green{0.0129} & 0.491$\pm$0.063 & \green{0.0017} & 0.368$\pm$0.015 & \green{0.0003} & 0.327$\pm$0.024 & \green{0.0124} \\ \hline
    \domainname{tic} & 0.574$\pm$0.001 & 0.601$\pm$0.003 & \green{$\varepsilon$} & 0.581$\pm$0.001 & \green{0.0024} & 0.586$\pm$0.006 & \green{0.0299} & 0.584$\pm$0.003 & \green{0.0228}  \\ \hline
    \domainname{iono} & 1.203$\pm$0.068 & 2.263$\pm$0.035 & \green{$\varepsilon$} & 2.084$\pm$0.052 & \green{$\varepsilon$} & 1.98$\pm$0.136 & \green{0.0001} & 1.758$\pm$0.137 & \green{0.0005} \\ \hline
    \domainname{stm} & 1.044$\pm$0.013 & 1.511$\pm$0.074 & \green{0.0001} & 1.189$\pm$0.045 & \green{0.0106} & 1.168$\pm$0.054 & \green{0.0346} & 1.167$\pm$0.041 & \green{0.0036}  \\ \hline
    \domainname{wred} & 1.027$\pm$0.037& 2.836$\pm$0.703 & \green{0.0049} & 2.004$\pm$0.090 & \green{$\varepsilon$} & 1.825$\pm$0.127 & \green{$\varepsilon$} & 1.384$\pm$0.047 & \green{0.0002}  \\ \hline
    \domainname{stp} & 1.029$\pm$0.027& 1.660$\pm$0.161 & \green{0.0010} & 1.141$\pm$0.018 & \green{0.0004} & 1.206$\pm$0.046 & \green{0.0015} & 1.186$\pm$0.052 & \green{0.0057}  \\ \hline
  \domainname{ana}  & 0.367$\pm$0.018 & 0.542$\pm$0.135 & \green{0.0399} & 0.439$\pm$0.056 & \green{0.0196} & 0.404$\pm$0.012 & \green{0.0231} & 0.436$\pm$0.036 & \green{0.0018}  \\ \hline
  \domainname{aba}  & 0.476$\pm$0.017 & 1.689$\pm$0.053 & \green{$\varepsilon$} & 1.463$\pm$0.094 & \green{$\varepsilon$} & 0.754$\pm$0.040 & \green{$\varepsilon$} & 0.657$\pm$0.026 & \green{$\varepsilon$} \\ \hline
  \domainname{wwhi}  & 1.120$\pm$0.013 & 1.770$\pm$0.160 & \green{0.0009} & 1.849$\pm$0.064 & \green{$\varepsilon$} & 1.284$\pm$0.029 & \green{0.0006} & 1.158$\pm$0.009 & \green{0.0117}  \\\hline
  \domainname{arti}$^*$  & 0.830$\pm$0.007& 1.388$\pm$0.109$_4$ & \green{0.0021} & 1.209$\pm$0.018$_4$ & \green{$\varepsilon$} & 1.026$\pm$0.013$_4$ & \green{0.0002} & 0.959$\pm$0.013$_3$ & \green{0.0051}  \\\Xhline{2pt}
    \multicolumn{2}{r|}{wins / lose for us $\rightarrow$} & 14 / 0 & & 14 / 0 & & 14 / 0 & & 14 / 0 & \\\Xhline{2pt}
  \end{tabular} 
  }
  \caption{Comparison of results with respect to CT-GANs \cite{xscvMT} trained with
    different number of epochs $E$. $\varepsilon$ means $p$-val $<10^{-4}$. As already mentioned in Table \ref{tab:gen-us-vs-ctgan-big}, in some domains, indicated with a "$^*$", CT-GANs crashed on some folds. For such domains, we indicate in index to CT-GANs results the number of folds (out of 5) for which this did \textit{not} happen. In such cases, we restricted the statistical comparisons with us to the folds for which CT-GANs did not crash: the statistical tests take this into account; instead of providing all corresponding average performances for us, we keep giving the average performance for us on all five folds (leftmost column), which is thus indicative. Other conventions follow Table \ref{tab:gen-us-vs-arf-small}.}
    \label{tab:gen-us-vs-ctgan-small}
  \end{table}

\noindent \textbf{More results with small generative forests} Tables \ref{tab:gen-us-vs-arf-small} and \ref{tab:gen-us-vs-ctgan-small} present the results of generative forests vs adversarial random forests and CT-GANs, when the size of our models is substantially smaller than in the main file ($T=200, J=500$). We still manage to compete or beat ARFs on simulated domains for which modelling the leaf distributions in ARFs should represent an advantage (Gaussians used in \texttt{circgauss}, \texttt{randgauss} and \texttt{gridgauss}), even more using comparatively very small models compared to ARFs. We believe this could be due to two factors: (i) the fact that in our models all trees are used to generate each observation (which surely facilitates learning small and accurate models) and (ii) the fact that we do not generate data during training but base our splitting criterion on an \textit{exact} computation of the reduction of Bayes risk in growing trees. If we compute aggregated statistics comparing, for each number of trees in ARFs, the number of times we win / lose versus ARFs, either considering only significant $p$-values or disregarding them, our generative forests always beat ARFs on a majority of domains. Globally, this means we win on 34 out of 52 cases. The results vs CT-GANs are of the same look and feel as in the main file with larger generative forests: generative forests beat them in all cases, and very significantly in most of them. 

  \subsubsection{Comparison with "the optimal generator": \textsc{gen-discrim}} \label{sec-full-comp-ideal}

  Rather than compare our method with another one, the question we ask here is "can we generate data that looks like domain's data" ? We replicate the experimental pipeline of \cite{ngGT}. In short, we shuffle a 3-partition (say $\colorr_1, \colorr_2, \colorr_3$) of the training data in a $3! = 6$-fold CV, then train a generator $\colorg_1$ from $\colorr_1$ (we use generative forests with \topdownGT), then train a random forest (\textsc{rf}) to distinguish between $\colorg_1$ and $\colorr_2$, and finally estimate its test accuracy on $\colorg_1$ vs $\colorr_3$. The \textit{lower} this accuracy, the less distinguishable are fakes from real and thus the \textit{better} the generator. We consider 3 competing baselines to our models: (i) the "optimal" one, \textsc{copy}, which consists in replacing $\colorg_1$ by a set of real data, (ii) the "worst" one, \textsc{unif}(orm), which uniformly samples data, and (iii) \topdownGT~for $T=1$, which learns a generative tree (\gt). We note that this pipeline is radically different from the one used in \cite{wbkwAR}. In this pipeline, domains used are supervised (the data includes a variable to predict). A generator is trained from some training data, then generates an amount of data equal to the size of the training data. Then, two classifiers are trained, one from the original training data, one from the generated data, to predict for the variable to predict and their performances are compared on this basis (the higher the accuracy, the better). There is a risk in this pipeline that we have tried to mitigate with our sophisticated pipeline: if the generator consists just in copying its training data, it is guaranteed to perform well according to \cite{wbkwAR}'s metric. Obviously, this would not be a good generator however. Our pipeline would typically prevent this, $\colorr_2$ being different from $\colorr_3$. \\
  \noindent\textbf{Results} Tables \ref{tab:gen_discrim_full} and \ref{tab:gen_discrim_full_pval} provide the results we got, including statistical $p$-values for the test of comparing our method to \textsc{copy}. They display that it is possible to get \geot s generating realistically looking data: for \texttt{iris}, the Student $t$-tests show that we can keep $H_0$ = "\geot s perform identically to \textsc{copy}" for $J=T=40$ ($p>.2$). For \texttt{mat}, the result is quite remarkable not because of the highest $p$-val, which is not negligible ($>0.001$), but because to get it, it suffices to build a \geot~in which the total number of feature occurrences ($J=80$) is barely three times as big as the domain's dimension ($d=33$). The full tables display a clear incentive to grow further the \geot s as domains increase in size. However, our results show, pretty much like the experiment against Adversarial Random Forests, that there could be substantial value of learning the right model size: \domainname{iris} and \domainname{student$\_$performance$\_$por} clearly show nontrivial peaks of $p$-values, for which we would get the best models compared to \textsc{copy}.

\setlength\tabcolsep{1pt}
\newpage

\begin{table}
  \centering
  \includegraphics[trim=0bp 0bp 0bp 0bp,clip,width=0.9\columnwidth]{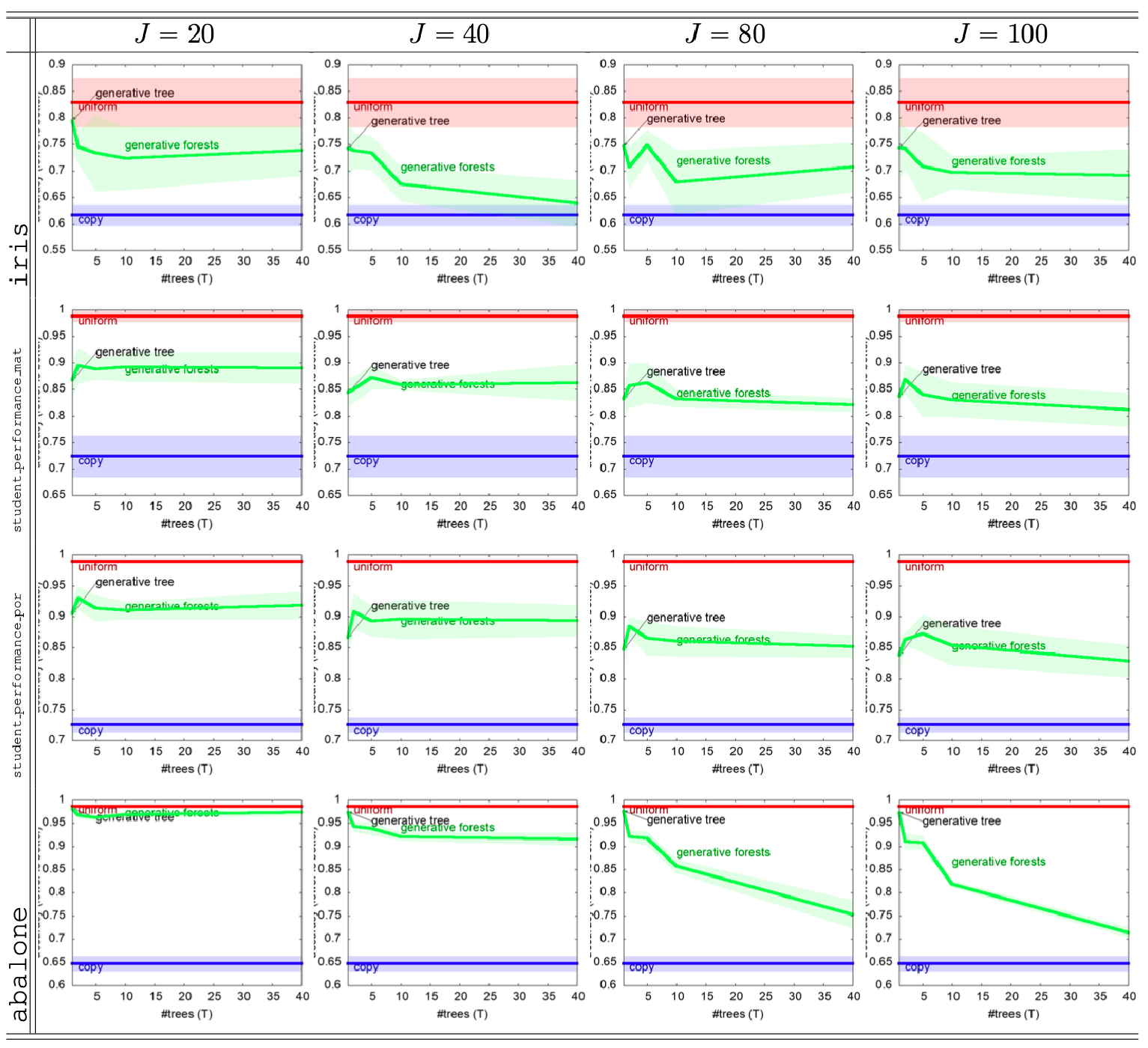}
  \caption{Results of generative forests on \textsc{gen-discrim} for four UCI domains, ordered, \textbf{from top to bottom, in increasing domain size}. In each row, the accuracy of distinguishing fakes from real is plotted (the \textit{lower}, the \textit{better} the technique) for four contenders: \textsc{uniform} (generates observation uniformly at random), \textsc{copy} ("optimal" contender using real data as generated), \topdownGT~inducing generative forests and generative trees (indicated for $T=1$). A clear pattern emerges, that as the domain size grows, increasing $J$ buys improvements and, if $J$ is large enough, increasing $T$ also improves results. See Table \ref{tab:gen_discrim_full_pval} for comparisons (us vs \textsc{copy}) in terms of $p$-values.}
    \label{tab:gen_discrim_full}
\end{table}

\newpage

\begin{table}
  \centering
  \includegraphics[trim=0bp 0bp 0bp 0bp,clip,width=0.9\columnwidth]{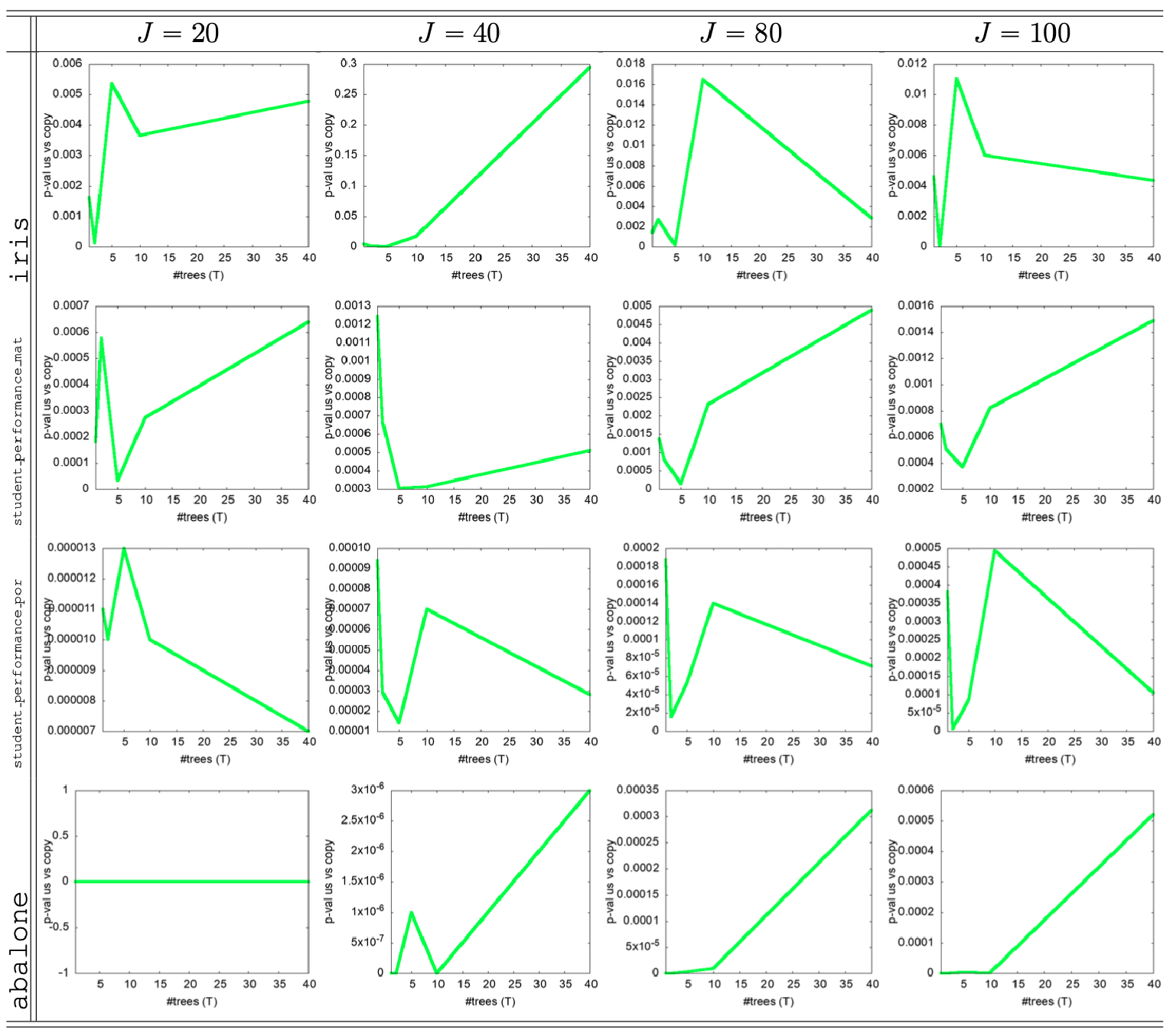}
  \caption{$p$-values for Student tests with $H_0$ being "\geot~and \textsc{copy} perform identically on \textsc{gen-discrim}'' (for the domains in Table \ref{tab:gen_discrim_full}). Large values indicate we can keep $H_0$ and thus consider that \geot~is successful at generating "realistic" data. Clearly, \geot~are successful for \texttt{iris} ($p$ increases with $T$ up to $p\sim 0.3$ for $J=40$). For the other domains, Table \ref{tab:gen_discrim_full} suggests increasing $J$ or $T$. The curves for \texttt{abalone} reveal a dramatic relative increase of $p$ as $(J,T)$ increases, starting from $p\sim 0, \forall T$ for $J=20$.}
    \label{tab:gen_discrim_full_pval}
\end{table}

\subsubsection{Full comparisons with \mice~on missing data imputation} \label{sec-full-comp-mdi}
The main file provides just two examples of domains for the comparison, \domainname{abalone} and \domainname{analcatdata$\_$supreme}. We provide here more complete results on the domain used in the main file and results on more domains in the form of one table for each additional domain:
\begin{itemize}
\item Table \ref{tab:mdi-full-analcatdata_supreme_gf_eogt_vs_mice_rf}: experiments on \domainname{analcatdata$\_$supreme} completing the results shown in Table \ref{tab:mdi-abaanalcat} (\mainfile);
\item Table \ref{tab:mdi-full-abalone_gf_eogt_vs_mice_rf}: experiments on \domainname{abalone} completing the results shown in Table \ref{tab:mdi-abaanalcat} (\mainfile);
  \item [] (tables below are ordered in increasing domain size, see Table \ref{t-s-uci})
\item Table \ref{tab:mdi-full-iris_gf_eogt_vs_mice_rf}: experiments on \domainname{iris};
\item Table \ref{tab:mdi-full-ringgauss_gf_eogt_vs_mice_rf}: experiments on \domainname{ringgauss};
\item Table \ref{tab:mdi-full-circgauss_gf_eogt_vs_mice_rf}: experiments on \domainname{circgauss};
\item Table \ref{tab:mdi-full-gridgauss_gf_eogt_vs_mice_rf}: experiments on \domainname{gridgauss};
  \item Table \ref{tab:mdi-full-randgauss_gf_eogt_vs_mice_rf}: experiments on \domainname{randgauss};
\item Table \ref{tab:mdi-full-student_performance_mat_gf_eogt_vs_mice_rf}: experiments on \domainname{student$\_$performance$\_$mat};
\item Table \ref{tab:mdi-full-student_performance_por_gf_eogt_vs_mice_rf}: experiments on \domainname{student$\_$performance$\_$por};
 \item Table \ref{tab:mdi-full-kc1_gf_eogt_vs_mice_rf}: experiments on \domainname{kc1};
 \item Table \ref{tab:mdi-full-sigma_cabs_gf_eogt_vs_mice_rf}: experiments on \domainname{sigma$\_$cabs};
 \item Table \ref{tab:mdi-full-compas_gf_eogt_vs_mice_rf}: experiments on \domainname{compas};
 \item Table \ref{tab:mdi-full-open_policing_hartford_gf_eogt_vs_mice_rf}: experiments on \domainname{open$\_$policing$\_$hartford};
 \end{itemize}
 The large amount of pictures to process may make it difficult to understand at first glance how our approaches behave against \mice, so we summarize here the key points:

 \begin{itemize}
 \item first and most importantly, there is not one single type of our models that perform better than the other. Both Generative Forests and Ensembles of Generative Trees can be useful for the task of missing data imputation. For example, \domainname{analcatdata$\_$supreme} gives a clear advantage to Generative Forests: they even beat \mice~with random forests having 4 000 trees (that is hundreds of times more than our models) on both categorical variables (perr) and numerical variables (rmse). However, on \domainname{circgauss}, \domainname{student$\_$performance$\_$por} and \domainname{sigma$\_$cabs}, Ensembles of Generative Trees obtain the best results. On \domainname{sigma$\_$cabs}, they can perform on par with \mice~if the number of tree is limited (which represents 100+ times less trees than \mice's models);
 \item as is already noted in the main file (\mainfile) and visible on several plots (see \textit{e,g,} \domainname{sigma$\_$cabs}), overfitting can happen with our models (both Generative Forests and Ensembles of Generative Trees), which supports the idea that a pruning mechanism or a mechanism to stop training would be a strong plus to training such models. Interestingly, in several cases where overfitting seems to happen, increasing the number of splits can reduce the phenomenon, at fixed number of trees: see for example \domainname{iris} ($50 \rightarrow 100$ iterations for \eogt s), \domainname{gridgauss}, \domainname{randgauss} and \domainname{kc1} (\geot s);
 \item some domains show that our models seem to be better at estimating continuous (regression) rather than categorical (prediction) variables: see for example \domainname{abalone}, \domainname{student$\_$performance$\_$mat}, \domainname{student$\_$performance$\_$por} and our biggest domain, \domainname{open$\_$policing$\_$hartford}. Note from that last domain that a larger number of iterations or models with a larger number of trees may be required for the best results, in particular for Ensembles of Generative Trees;
   \item small models may be enough to get excellent results on real world domains whose size would, at first glance, seem to require much bigger ones: on \domainname{compas}, we can compete (rmse) or beat (perr) \mice~whose models contain 7 000 trees with just 20 stumps;
   \item it is important to remember that our technique does not just offer the capability to do missing data imputation: our models can also generate data and compute the full density conditional to observed values, which is not the case of \mice's models, crafted with the sole purpose of solving the task of missing data imputation. Our objective was not to beat \mice, but rather show that Generative Forests and Ensembles of Generative Trees can also be useful for this task.
   \end{itemize}

\setlength\tabcolsep{0pt}
% Experiments/missing_data_imputation/old/analcatdata_supreme/plot_perr_summary_impute5_vs_analcatdata_supreme_5_rf_GEOT0_J20.png
\begin{table}[H]
  \centering
  \includegraphics[trim=0bp 0bp 0bp 0bp,clip,width=\columnwidth]{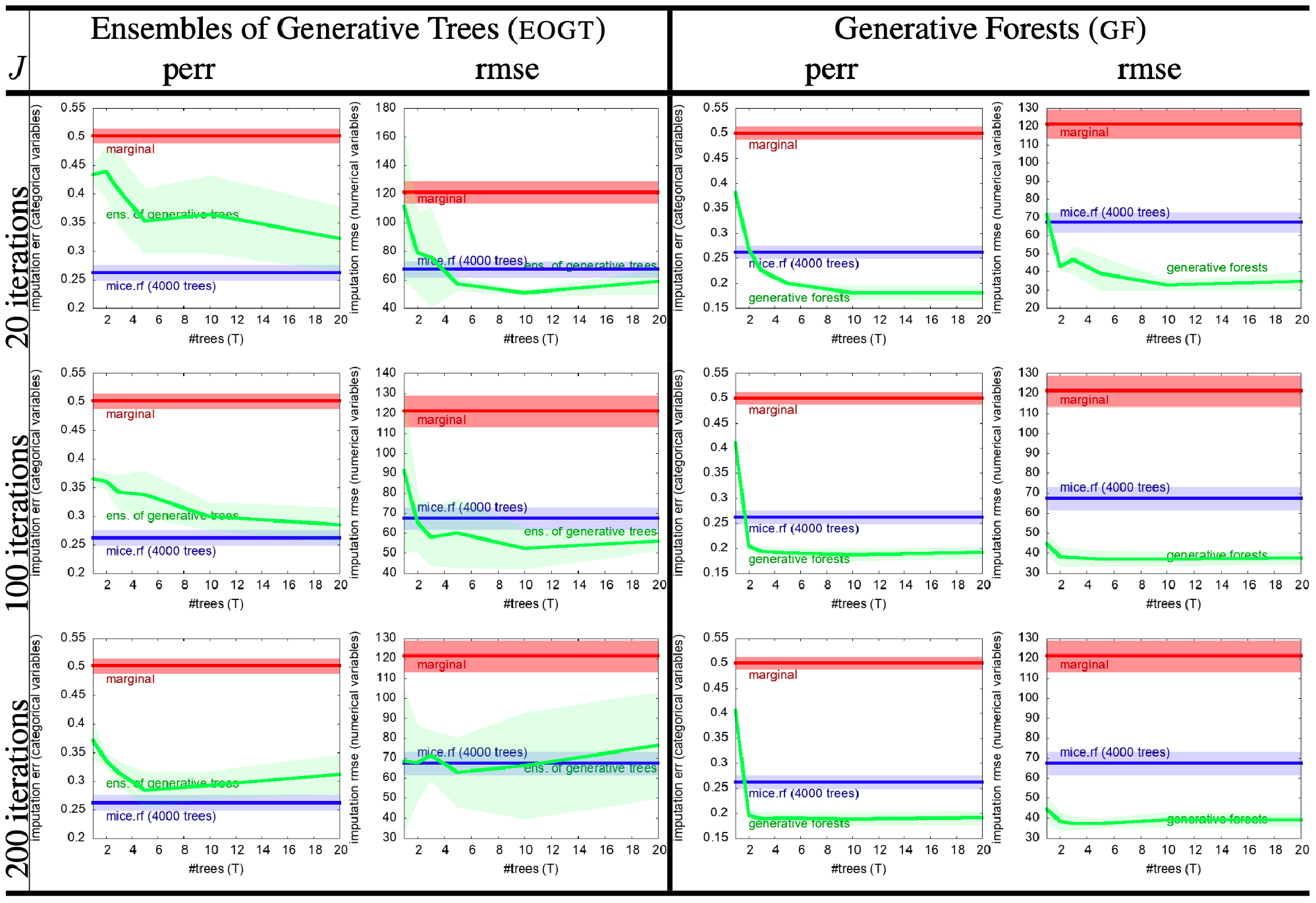}
    \caption{Missing data imputation: results on \texttt{analcatdata$\_$supreme} with $5\%$ missing features (MCAR), for \topdownGT~learning \geot s and \eogt s, vs \texttt{mice} using \texttt{RF} (random forests) with 100 trees each. Each row corresponds to a different total number of iterations $J$ in \topdownGT. Since the domain contains both numerical and categorical variables, de compute separately the errors on categorical variables (using the error probability, denoted perr) and the error on numerical variables (using the root mean square error, rmse). In each plot, we display both the corresponding results of \topdownGT, the results of \texttt{mice} (indicating also the total number of trees used to impute the dataset) and the result of a fast imputation baseline, the marginal computed from the training sample. In each plot, the $x$ axis displays the number of trees $T$ in \topdownGT~and each curve is displayed with its average $\pm$ standard deviation in shaded color. The result for $T=1$ equivalently indicates the performance of a single generative tree (\gt) with $J$ splits \cite{ngGT}.}
    \label{tab:mdi-full-analcatdata_supreme_gf_eogt_vs_mice_rf}
\end{table}

\newpage
  
\setlength\tabcolsep{0pt}
\newpage

\begin{table}[H]
  \centering
  \includegraphics[trim=0bp 0bp 0bp 0bp,clip,width=\columnwidth]{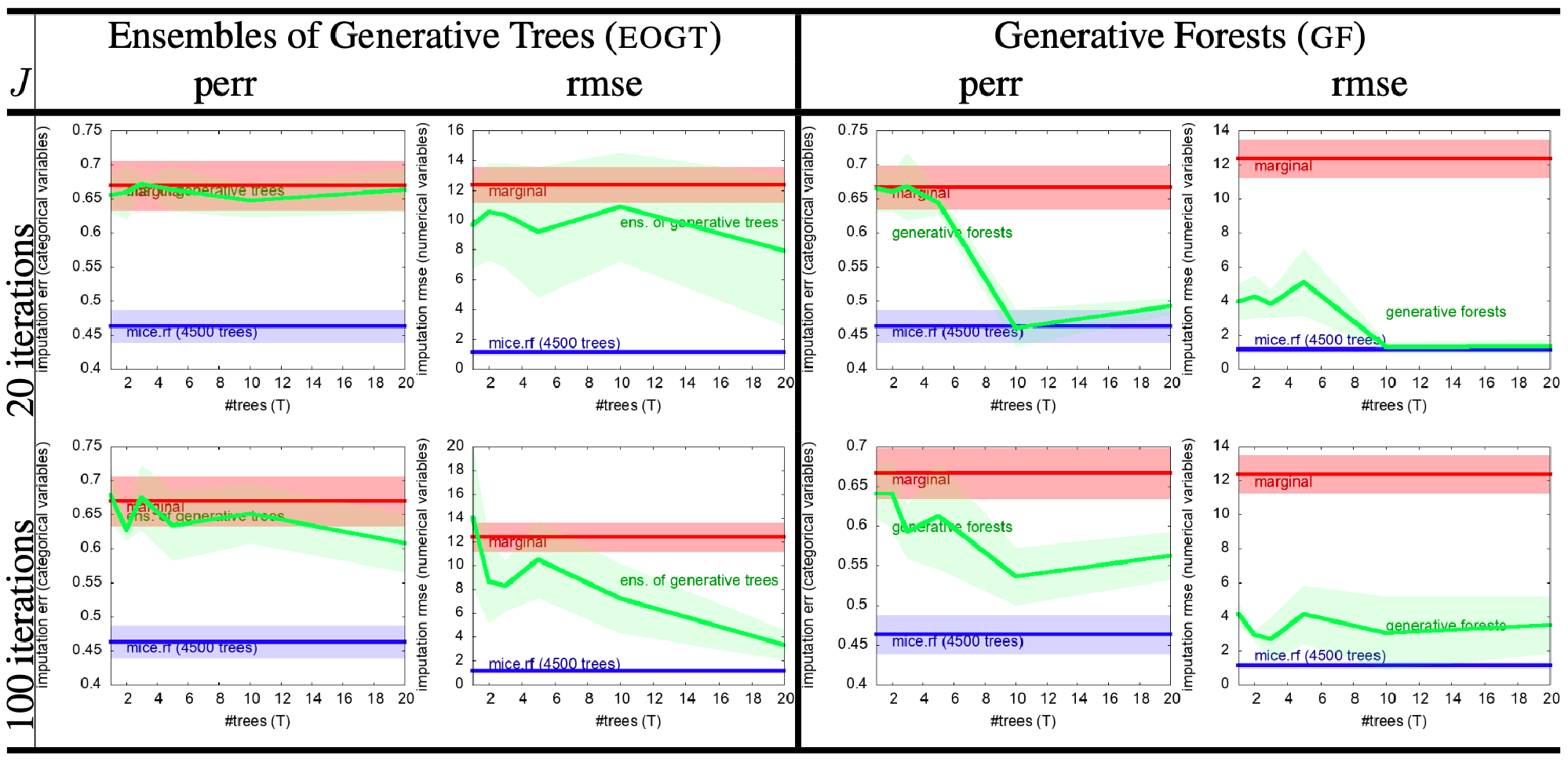}
    \caption{Missing data imputation: results on \texttt{abalone} with $5\%$ missing features (MCAR), for \topdownGT~learning \geot s and \eogt s, vs \texttt{mice} using \texttt{RF} (random forests). Conventions follow Table \ref{tab:mdi-full-analcatdata_supreme_gf_eogt_vs_mice_rf}.}
    \label{tab:mdi-full-abalone_gf_eogt_vs_mice_rf}
\end{table}

\newpage

\setlength\tabcolsep{0pt}

\begin{table}[H]
  \centering
  \includegraphics[trim=0bp 0bp 0bp 0bp,clip,width=\columnwidth]{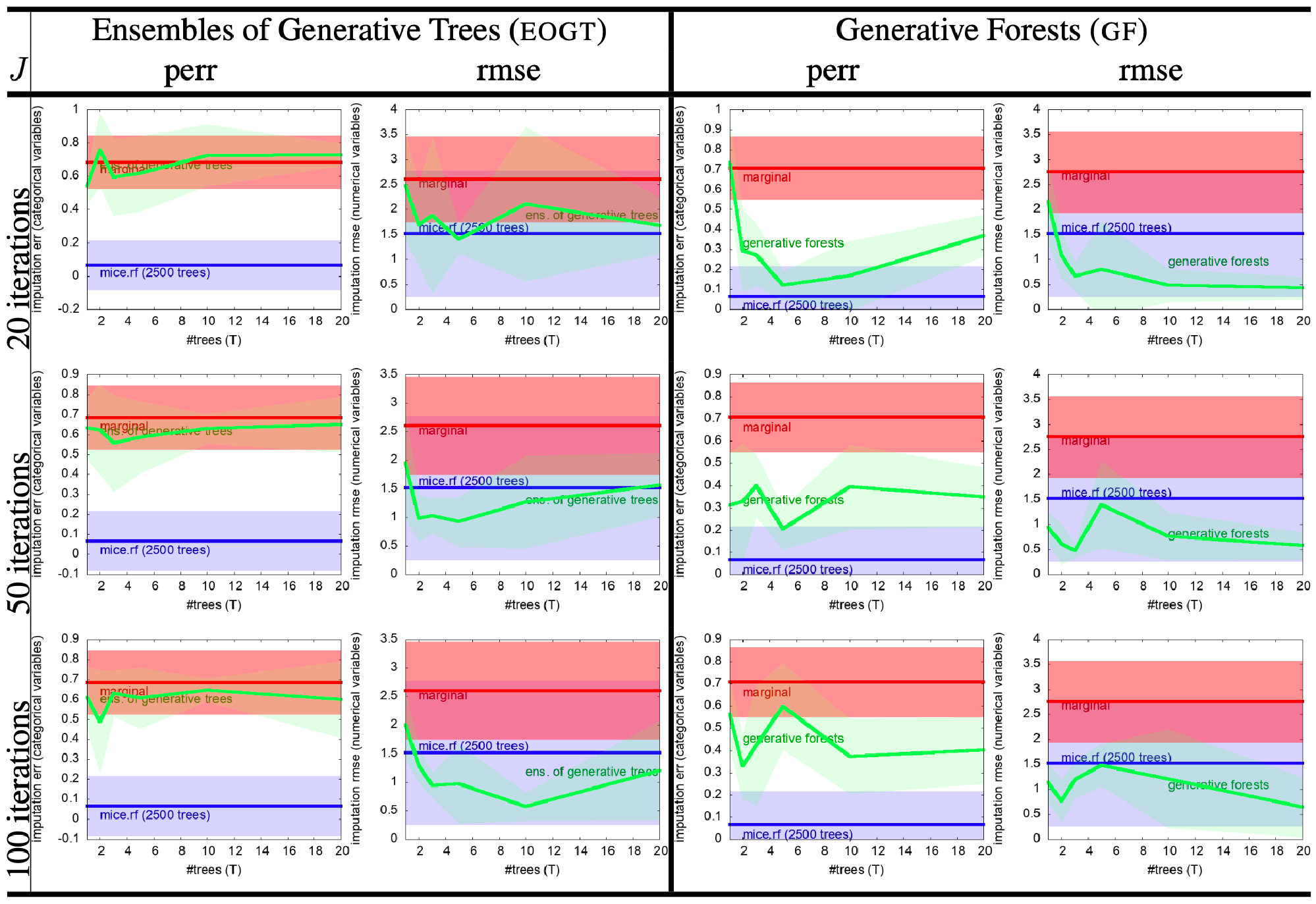}
    \caption{Missing data imputation: results on \texttt{iris} with $5\%$ missing features (MCAR), for \topdownGT~learning \geot s and \eogt s, vs \texttt{mice} using \texttt{RF} (random forests). Conventions follow Table \ref{tab:mdi-full-analcatdata_supreme_gf_eogt_vs_mice_rf}.}
    \label{tab:mdi-full-iris_gf_eogt_vs_mice_rf}
\end{table}

\newpage

\setlength\tabcolsep{0pt}

\begin{table}[H]
  \centering
  \includegraphics[trim=0bp 0bp 0bp 0bp,clip,width=0.7\columnwidth]{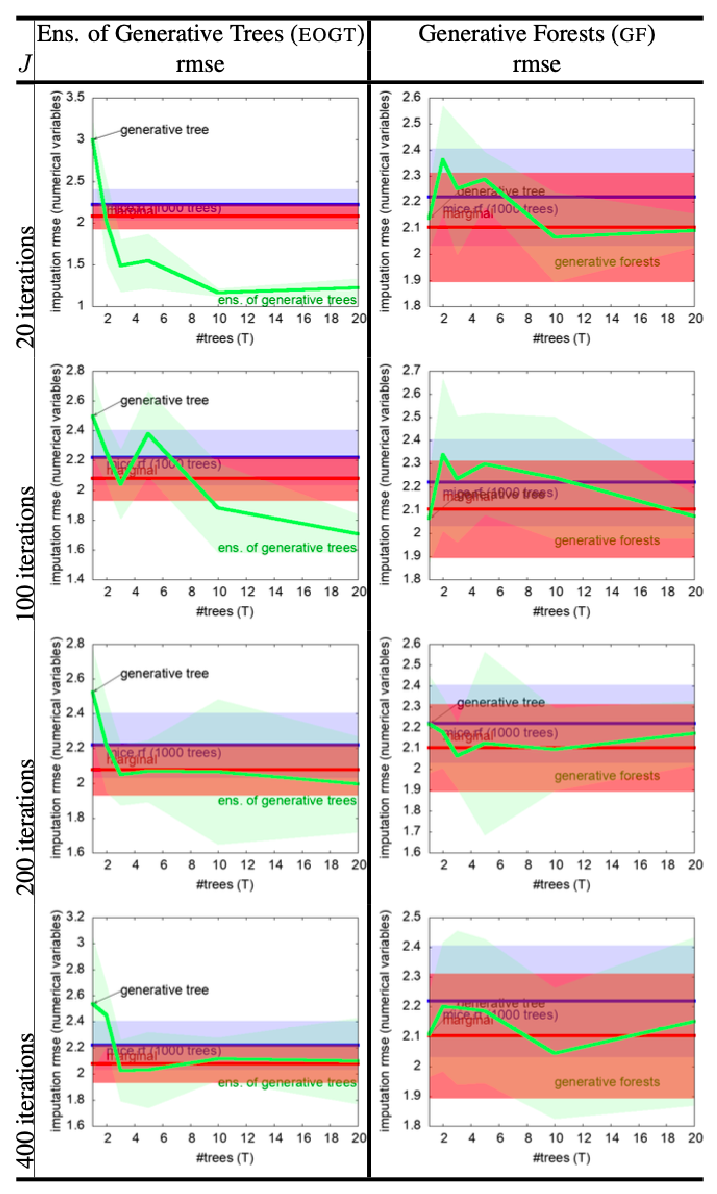}
    \caption{Missing data imputation: results on \texttt{ringgauss} with $5\%$ missing features (MCAR), for \topdownGT~learning \geot s and \eogt s, vs \texttt{mice} using \texttt{RF} (random forests). Since there are no nominal attributes in the domain, we have removed column perr. Conventions otherwise follow Table \ref{tab:mdi-full-analcatdata_supreme_gf_eogt_vs_mice_rf}.}
    \label{tab:mdi-full-ringgauss_gf_eogt_vs_mice_rf}
\end{table}

\newpage

\setlength\tabcolsep{0pt}

\begin{table}[H]
  \centering
  \includegraphics[trim=0bp 0bp 0bp 0bp,clip,width=0.7\columnwidth]{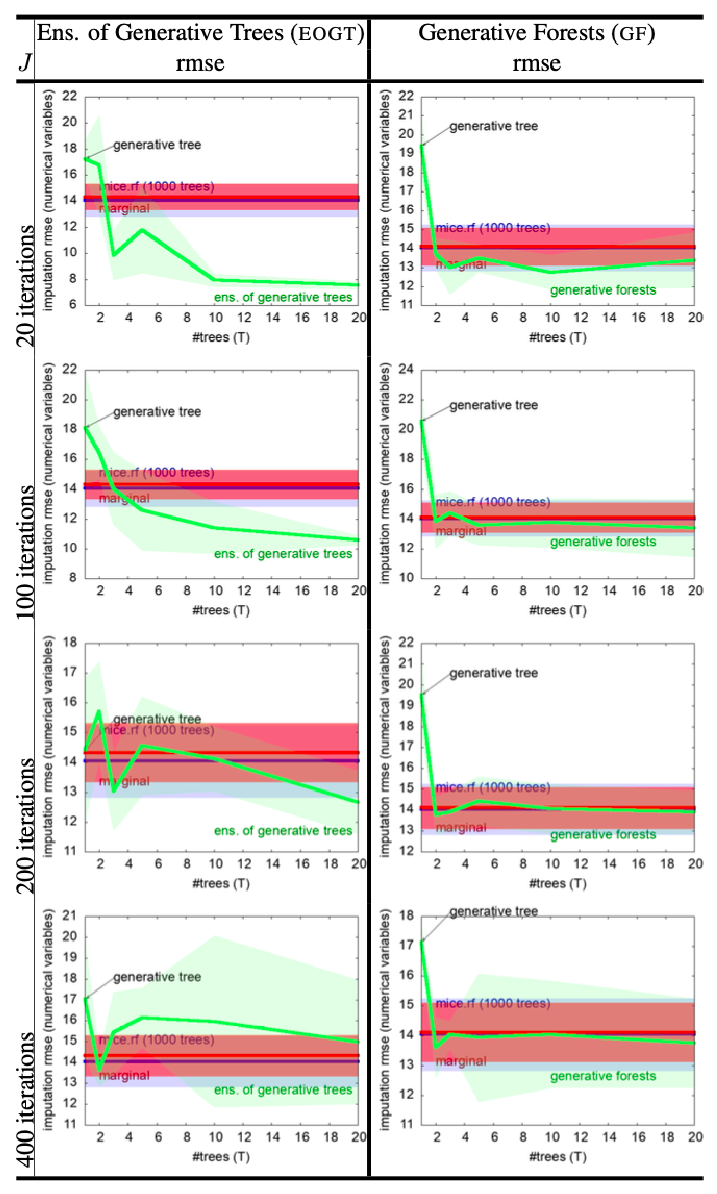}
    \caption{Missing data imputation: results on \texttt{circgauss} with $5\%$ missing features (MCAR), for \topdownGT~learning \geot s and \eogt s, vs \texttt{mice} using \texttt{RF} (random forests). Since there are no nominal attributes in the domain, we have removed column perr. Conventions otherwise follow Table \ref{tab:mdi-full-analcatdata_supreme_gf_eogt_vs_mice_rf}.}
    \label{tab:mdi-full-circgauss_gf_eogt_vs_mice_rf}
\end{table}

\newpage

\setlength\tabcolsep{0pt}

\begin{table}[H]
  \centering
  \includegraphics[trim=0bp 0bp 0bp 0bp,clip,width=0.7\columnwidth]{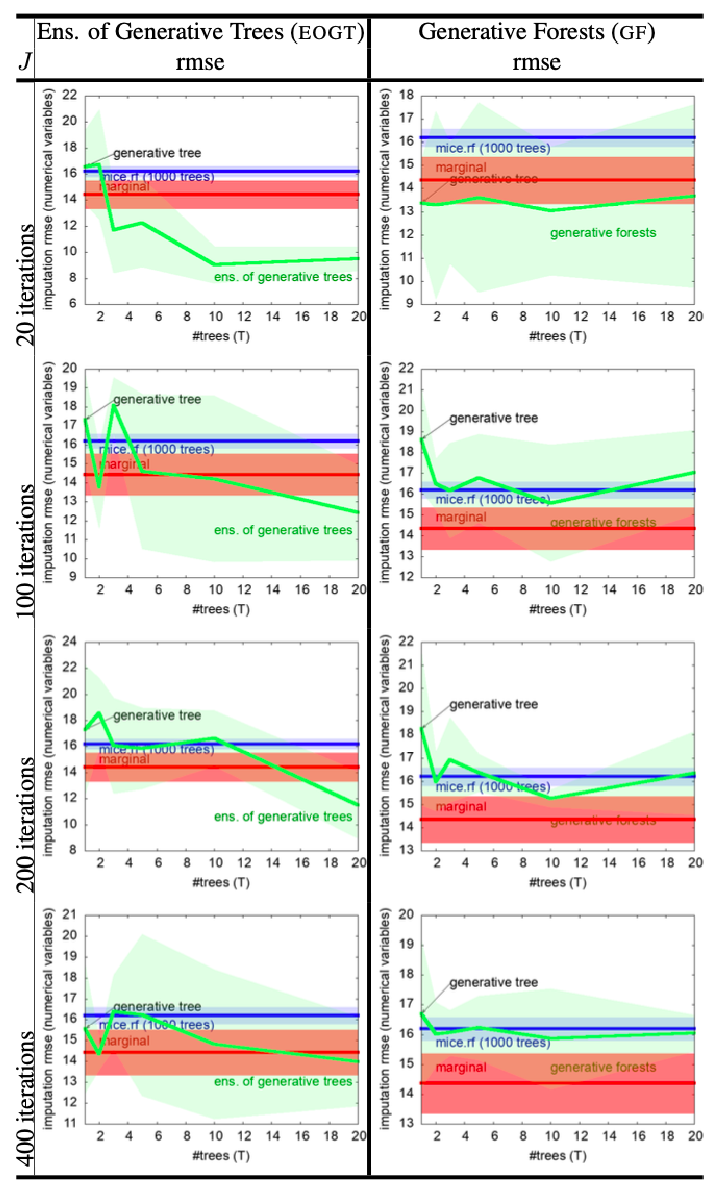}
    \caption{Missing data imputation: results on \texttt{gridgauss} with $5\%$ missing features (MCAR), for \topdownGT~learning \geot s and \eogt s, vs \texttt{mice} using \texttt{RF} (random forests). Since there are no nominal attributes in the domain, we have removed column perr. Conventions otherwise follow Table \ref{tab:mdi-full-analcatdata_supreme_gf_eogt_vs_mice_rf}.}
    \label{tab:mdi-full-gridgauss_gf_eogt_vs_mice_rf}
\end{table}

\newpage

\setlength\tabcolsep{0pt}

\begin{table}[H]
  \centering
  \includegraphics[trim=0bp 0bp 0bp 0bp,clip,width=0.7\columnwidth]{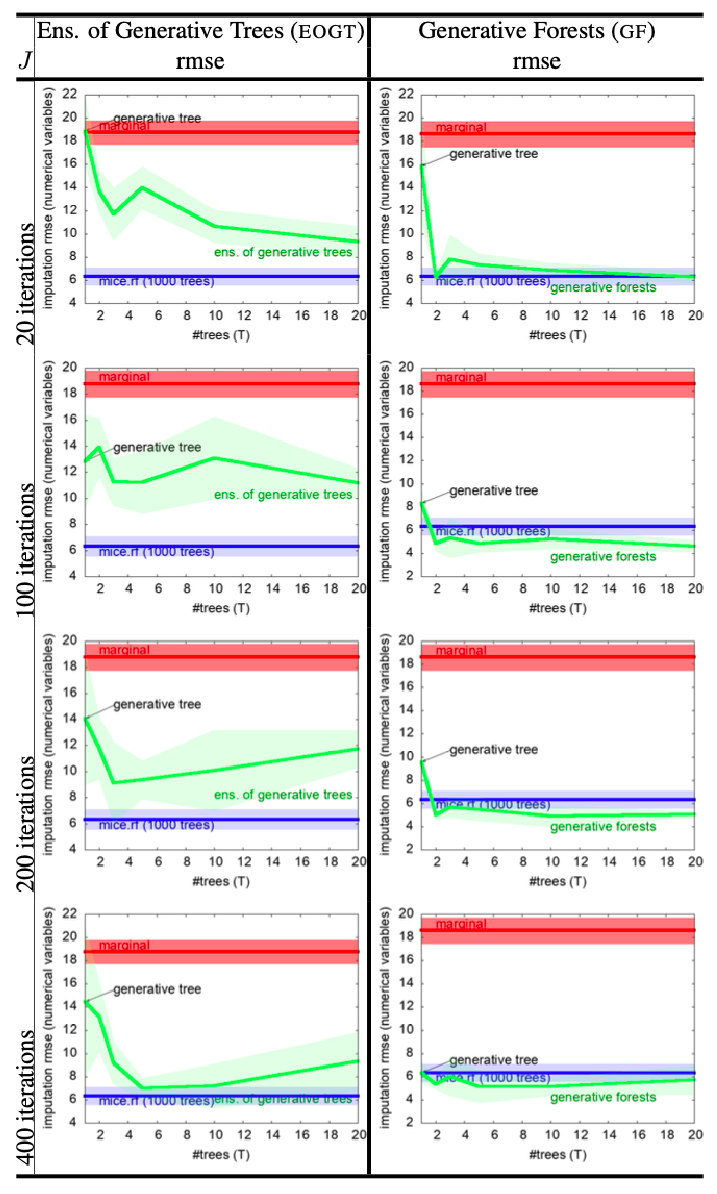}
    \caption{Missing data imputation: results on \texttt{randgauss} with $5\%$ missing features (MCAR), for \topdownGT~learning \geot s and \eogt s, vs \texttt{mice} using \texttt{RF} (random forests). Since there are no nominal attributes in the domain, we have removed column perr. Conventions otherwise follow Table \ref{tab:mdi-full-analcatdata_supreme_gf_eogt_vs_mice_rf}.}
    \label{tab:mdi-full-randgauss_gf_eogt_vs_mice_rf}
\end{table}

\newpage
  
\setlength\tabcolsep{0pt}

\begin{table}[H]
  \centering
  \includegraphics[trim=0bp 0bp 0bp 0bp,clip,width=\columnwidth]{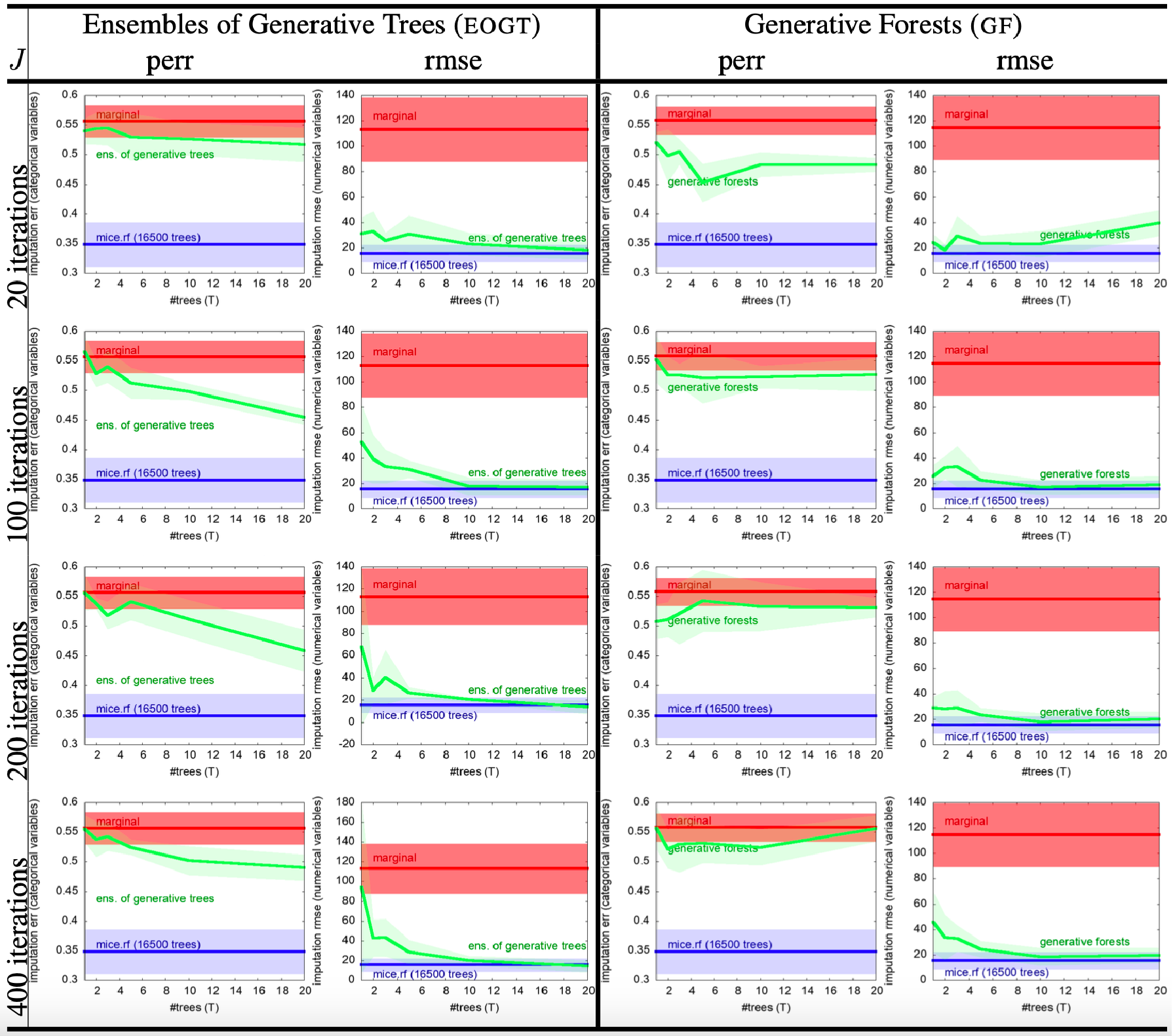}
    \caption{Missing data imputation: results on \texttt{student$\_$performance$\_$mat} with $5\%$ missing features (MCAR), for \topdownGT~learning \geot s and \eogt s, vs \texttt{mice} using \texttt{RF} (random forests). Conventions follow Table \ref{tab:mdi-full-analcatdata_supreme_gf_eogt_vs_mice_rf}.}
    \label{tab:mdi-full-student_performance_mat_gf_eogt_vs_mice_rf}
\end{table}

\newpage
  
\setlength\tabcolsep{0pt}

\begin{table}[H]
  \centering
  \includegraphics[trim=0bp 0bp 0bp 0bp,clip,width=\columnwidth]{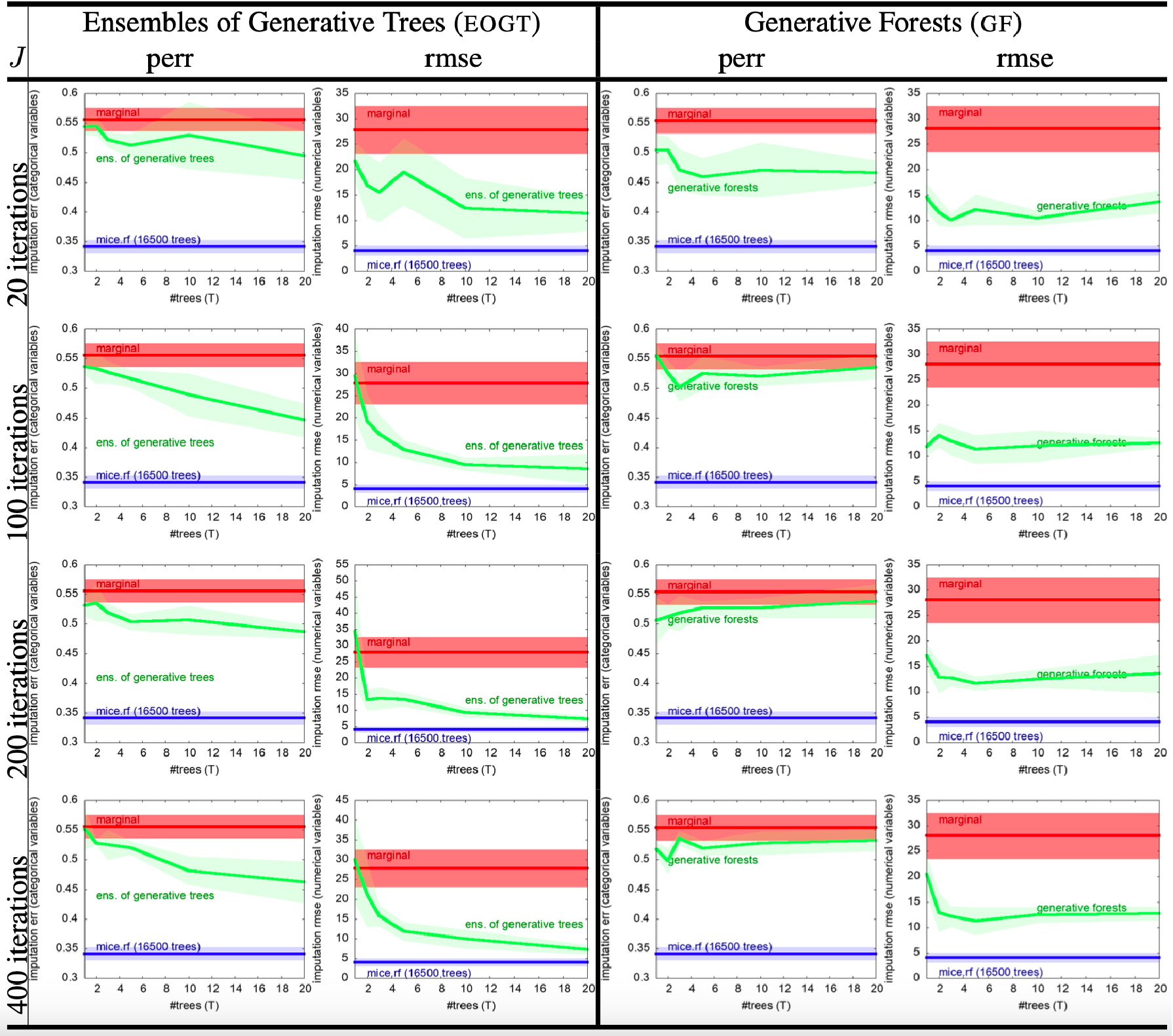}
    \caption{Missing data imputation: results on \texttt{student$\_$performance$\_$por} with $5\%$ missing features (MCAR), for \topdownGT~learning \geot s and \eogt s, vs \texttt{mice} using \texttt{RF} (random forests). Conventions follow Table \ref{tab:mdi-full-analcatdata_supreme_gf_eogt_vs_mice_rf}.}
    \label{tab:mdi-full-student_performance_por_gf_eogt_vs_mice_rf}
\end{table}

\newpage

\setlength\tabcolsep{0pt}

\begin{table}[H]
  \centering
  \includegraphics[trim=0bp 0bp 0bp 0bp,clip,width=\columnwidth]{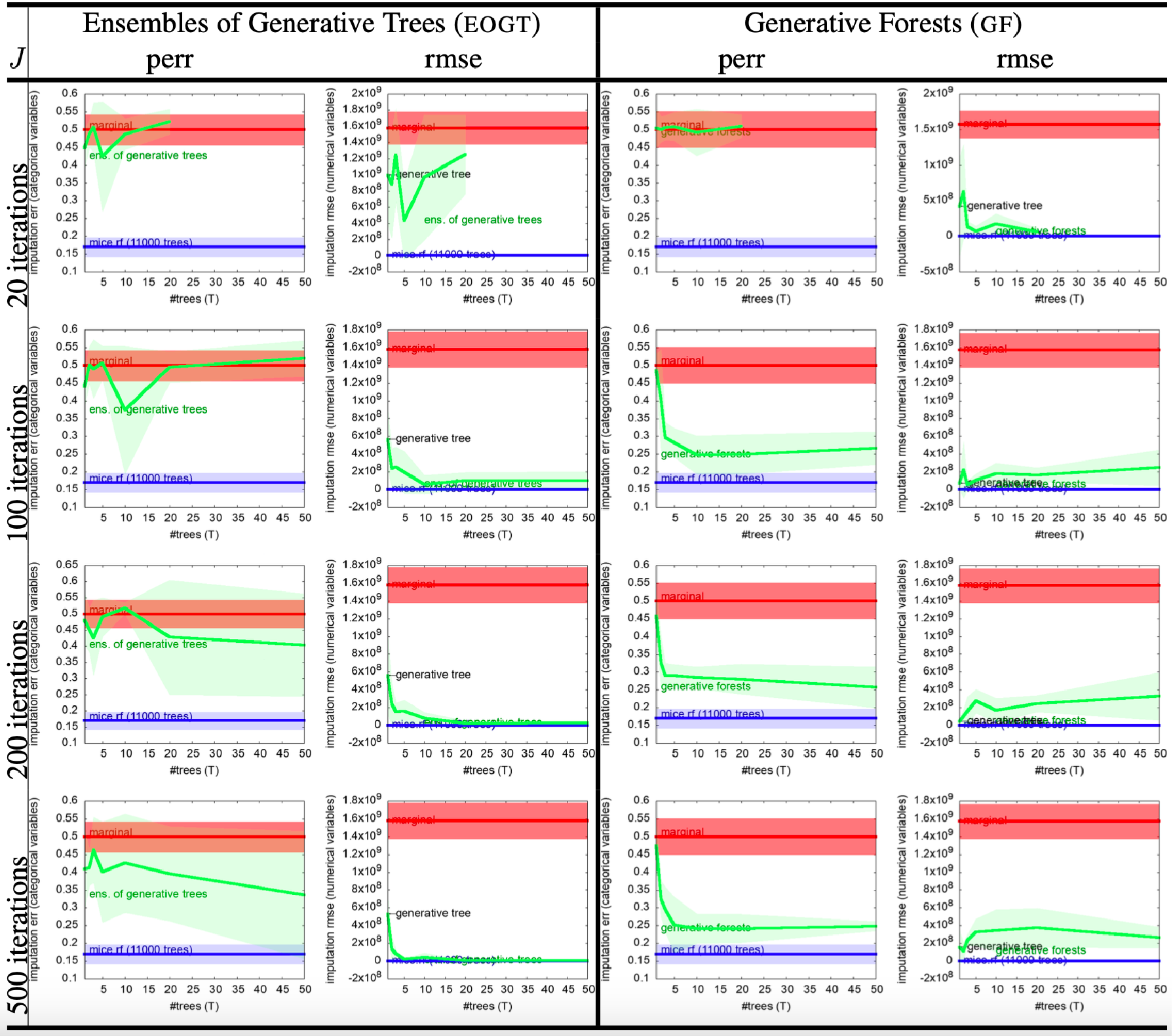}
    \caption{Missing data imputation: results on \texttt{kc1} with $5\%$ missing features (MCAR), for \topdownGT~learning \geot s and \eogt s, vs \texttt{mice} using \texttt{RF} (random forests). Conventions follow Table \ref{tab:mdi-full-analcatdata_supreme_gf_eogt_vs_mice_rf}.}
    \label{tab:mdi-full-kc1_gf_eogt_vs_mice_rf}
  \vspace{-0.2cm}
\end{table}

\newpage

\setlength\tabcolsep{0pt}

\begin{table}[H]
  \centering
  \includegraphics[trim=0bp 0bp 0bp 0bp,clip,width=\columnwidth]{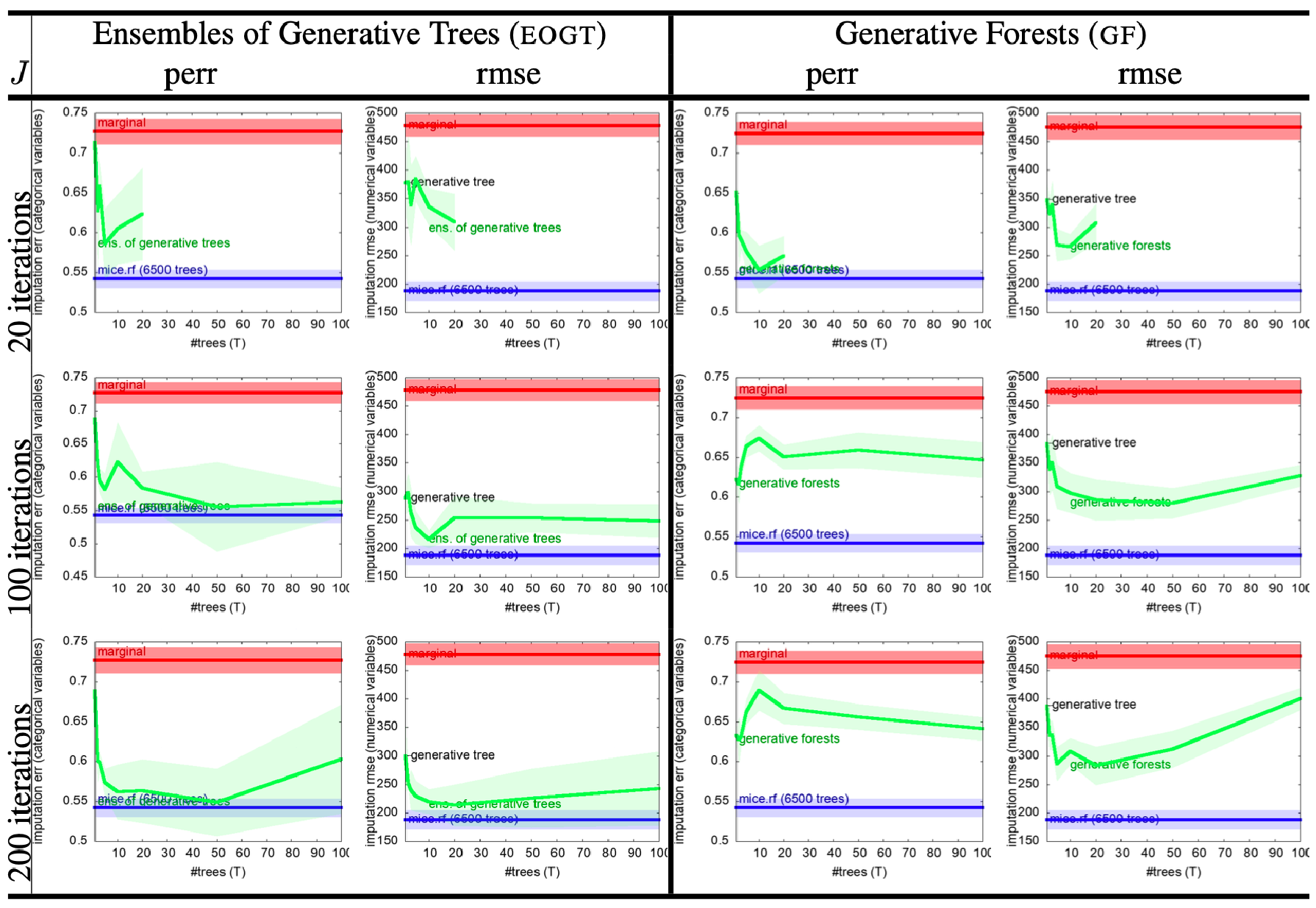}
    \caption{Missing data imputation: results on \texttt{sigma$\_$cabs} with $5\%$ missing features (MCAR), for \topdownGT~learning \geot s and \eogt s, vs \texttt{mice} using \texttt{RF} (random forests). Conventions follow Table \ref{tab:mdi-full-analcatdata_supreme_gf_eogt_vs_mice_rf}.}
    \label{tab:mdi-full-sigma_cabs_gf_eogt_vs_mice_rf}
\end{table}

\newpage

\setlength\tabcolsep{0pt}

\begin{table}[H]
  \centering
  \includegraphics[trim=0bp 0bp 0bp 0bp,clip,width=\columnwidth]{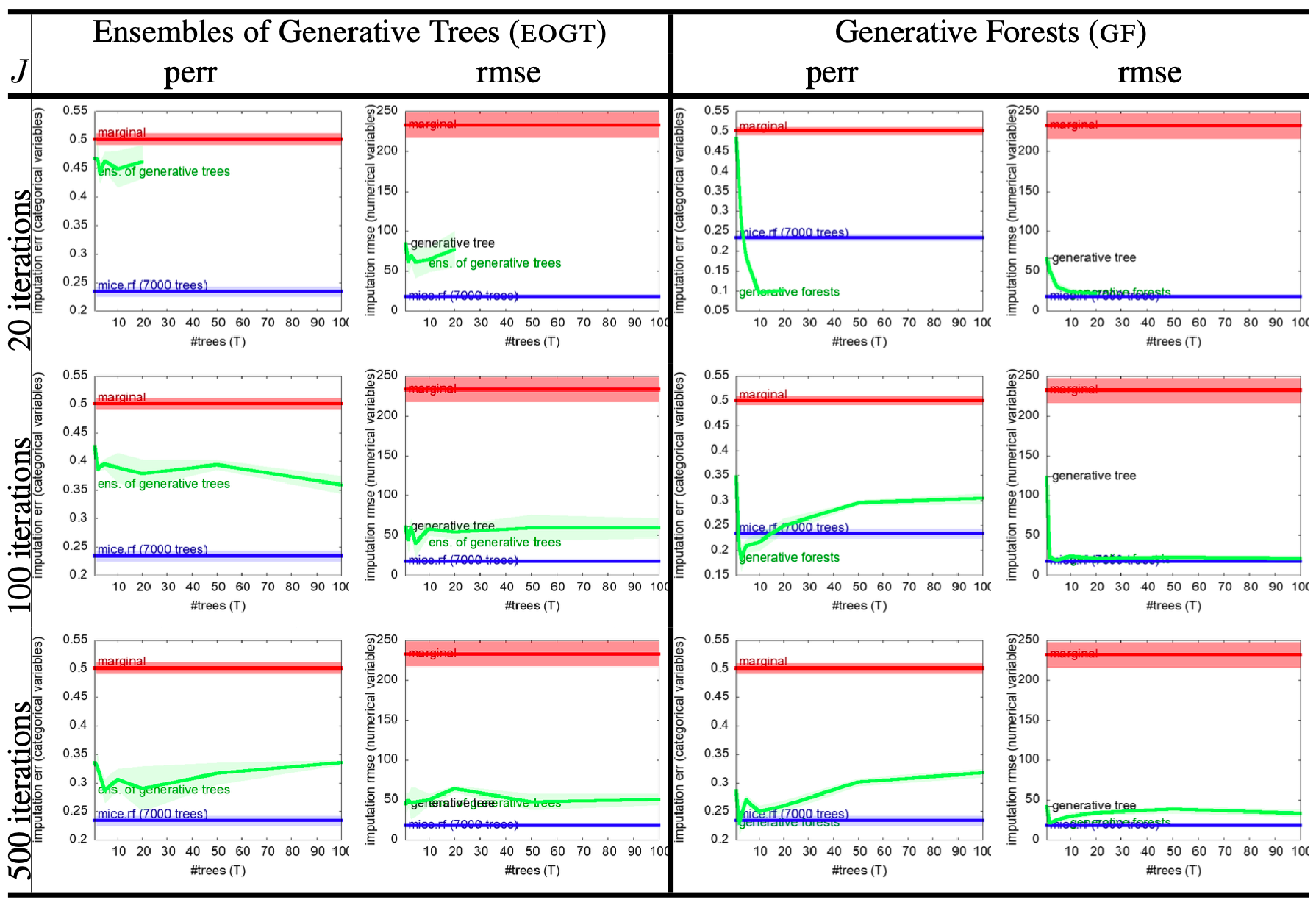}
    \caption{Missing data imputation: results on \texttt{compas} with $5\%$ missing features (MCAR), for \topdownGT~learning \geot s and \eogt s, vs \texttt{mice} using \texttt{RF} (random forests). Conventions follow Table \ref{tab:mdi-full-analcatdata_supreme_gf_eogt_vs_mice_rf}.}
    \label{tab:mdi-full-compas_gf_eogt_vs_mice_rf}
\end{table}

\newpage

\setlength\tabcolsep{0pt}

\begin{table}[H]
  \centering
  \includegraphics[trim=0bp 0bp 0bp 0bp,clip,width=\columnwidth]{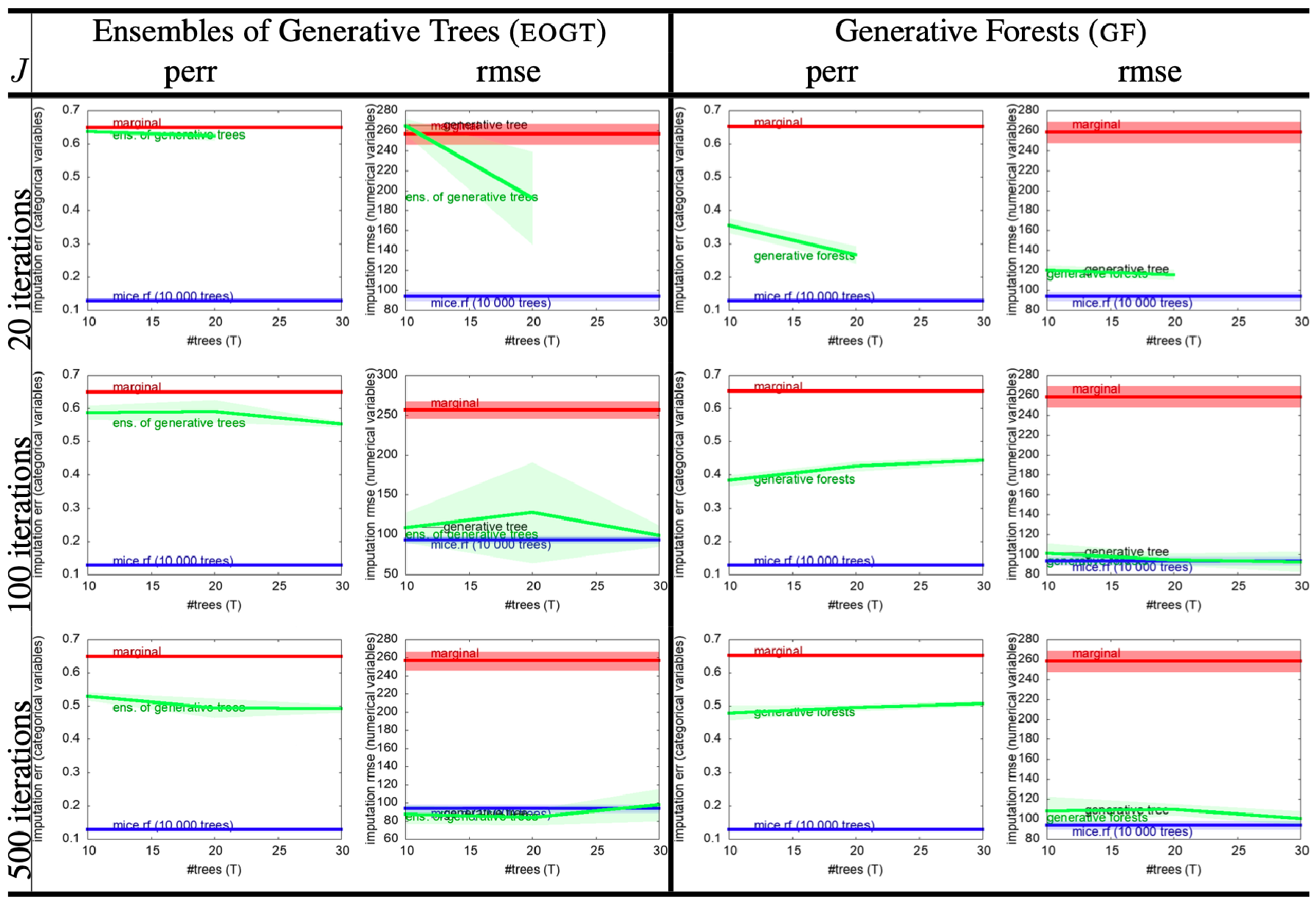}
    \caption{Missing data imputation: results on \texttt{open$\_$policing$\_$hartford} with $5\%$ missing features (MCAR), for \topdownGT~learning \geot s and \eogt s, vs \texttt{mice} using \texttt{RF} (random forests). Conventions follow Table \ref{tab:mdi-full-analcatdata_supreme_gf_eogt_vs_mice_rf}.}
    \label{tab:mdi-full-open_policing_hartford_gf_eogt_vs_mice_rf}
\end{table}

\newpage

\subsubsection{Full comparisons with KDE on density estimation} \label{sec-full-comp-kde}

For each domain we either use the expected density, or the expected negative log-density, as the metric to be maximized. We consider the expected density to cope with the eventuality of zero (pointwise) density predictions. Tables \ref{tab:gen-full-kde-1} and \ref{tab:gen-full-kde-2} present the results against KDE (details in Table \ref{tab:gen-full-kde-1}). Table \ref{tab:mdi-gridgauss} singles out a result on the domain \domainname{gridGauss}, which displays an interesting pattern. The leaf chosen for a split in \topdownGT~is always the heaviest leaf; hence, as long as the iteration number $j\leq T$ (the maximum number of trees, which is 500 in this experiment), \topdownGT~grows stumps. In the case of \domainname{gridGauss}, like in a few other domains, there is a clear plateau in performances for $j\leq 500$, which does not hold anymore as $j > 500$. This, we believe, characterizes the fact that on these domains, which all share the property that their dimension is small, training stumps attains a limit in terms of performances. Table \ref{tab:mdi-abalone} shows that by changing basic parameters -- here, decreasing the number of trees, increasing the total number of iterations --, one can obtain substantially better results.

\begin{table*}[t]
  \centering
  \includegraphics[trim=150bp 40bp 170bp 80bp,clip,width=0.6\columnwidth]{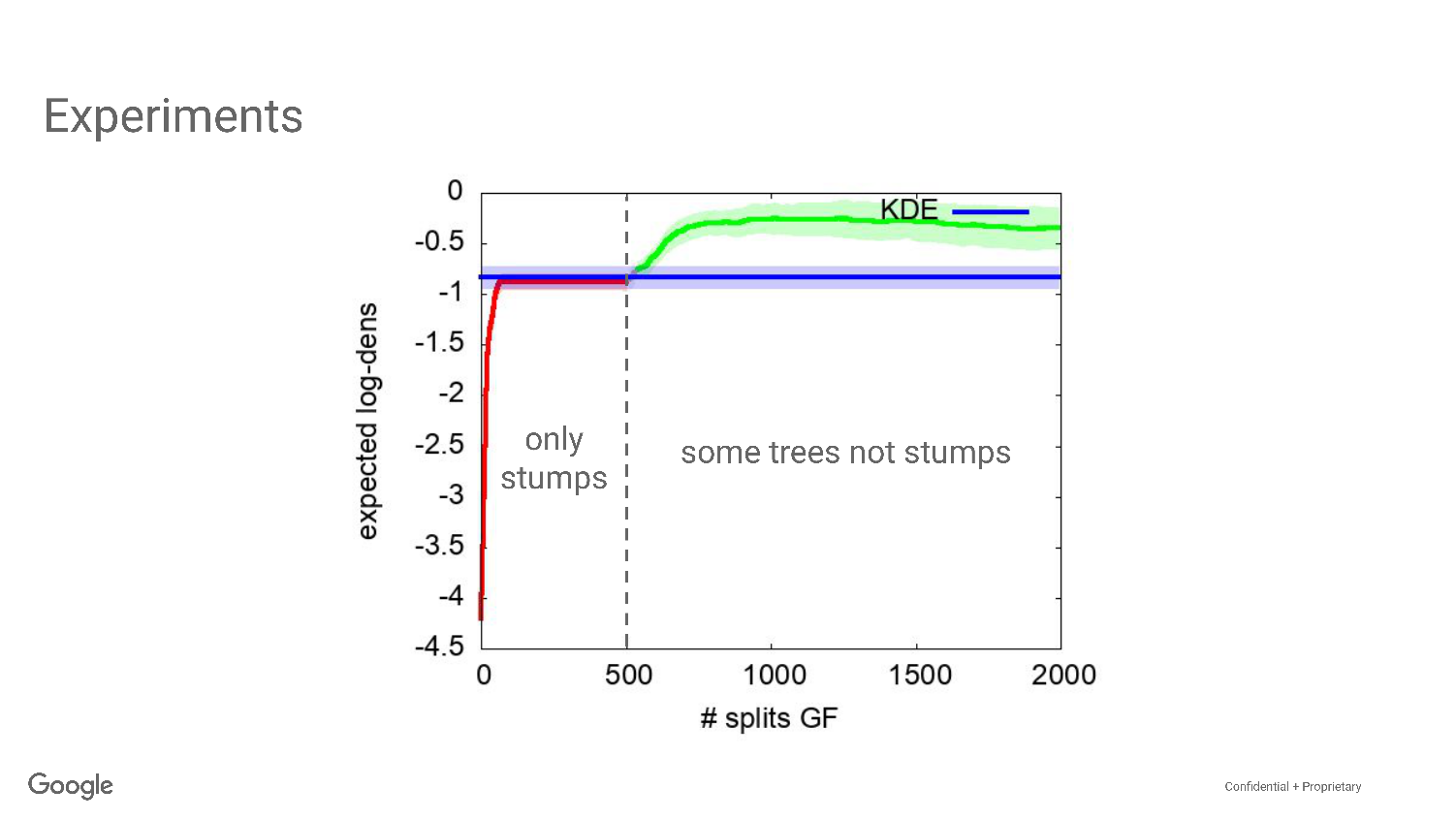} 
  \caption{\textsc{density}: zoom on the \domainname{gridGauss} domain: the \geot~is grown by \topdownGT~in a way that first learns a complete set of stumps (there are a total of $T=500$ trees in the final model) and then grows some of the trees. The plateau for $j\leq 500$ leaves displays that we quickly reach a limit in terms of maximal performances of stumps, this being probably dues to the fact that the dimension of the domain is small (2). We can however remark that with just stumps, we still attain close performances to KDE (note that we plot the expected log-densities; see text for details).}
    \label{tab:mdi-gridgauss}
\end{table*}

\begin{table*}[t]
  \centering
  \begin{tabular}{c|c}\Xhline{2pt}
    \includegraphics[trim=0bp 0bp 0bp 0bp,clip,width=\troisbis\columnwidth]{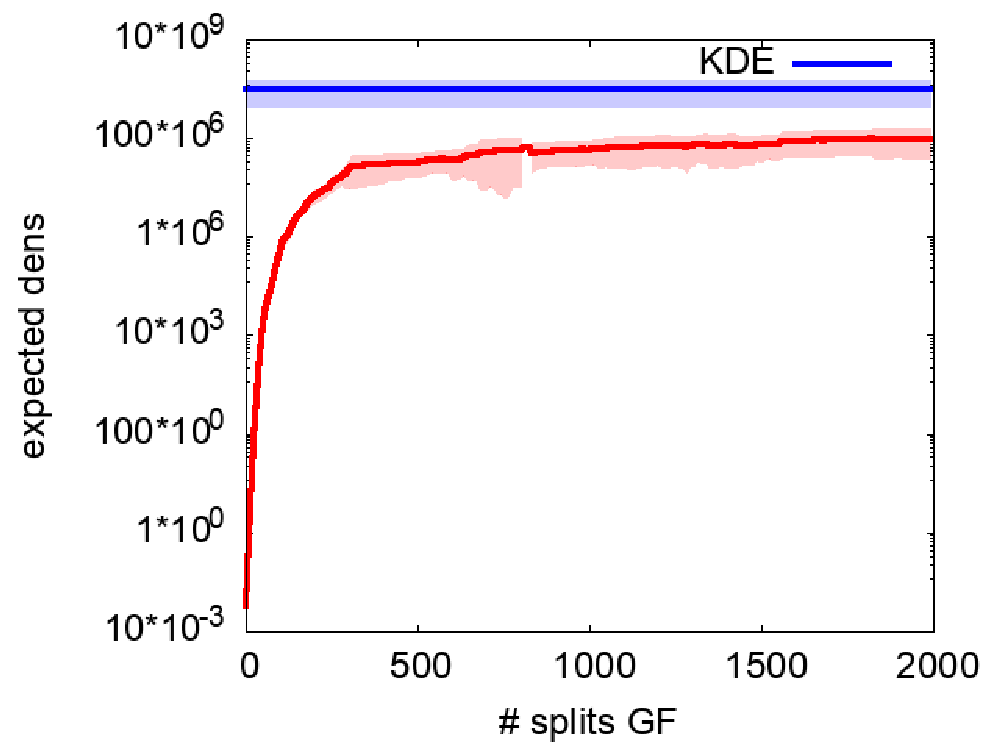}& \includegraphics[trim=0bp 0bp 0bp 0bp,clip,width=\troisbis\columnwidth]{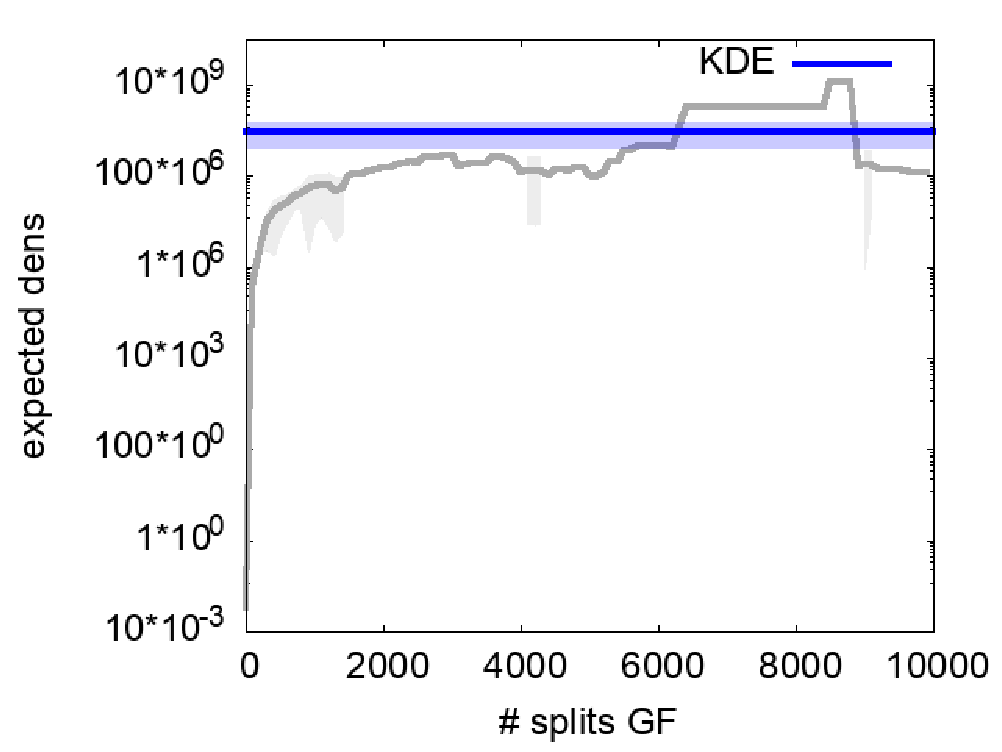}\\
  $T=500, J=2000$  & $T=15, J=10000$ \\ \Xhline{2pt}
  \end{tabular}
  \caption{\textsc{density}: zoom on the \domainname{abalone} domain: with $T=500$ trees, total of $J=2000$ as in Tables \ref{tab:gen-full-kde-1} and \ref{tab:gen-full-kde-2}, we are clearly beaten by KDE (left). A much smaller of bigger trees ($T=15,J=10000$) is enough to become competitive and beat KDE, albeit not statistically significantly (right). The right picture also displays the interest into getting pruning or early stopping algorithms to prevent eventual overfitting.}
    \label{tab:mdi-abalone}
\end{table*}

\begin{table}
  \centering
  \begin{tabular}{c|c|c}\Xhline{2pt}
     \hspace{\deltaad} \includegraphics[trim=0bp 0bp 0bp 0bp,clip,width=\troisbis\columnwidth]{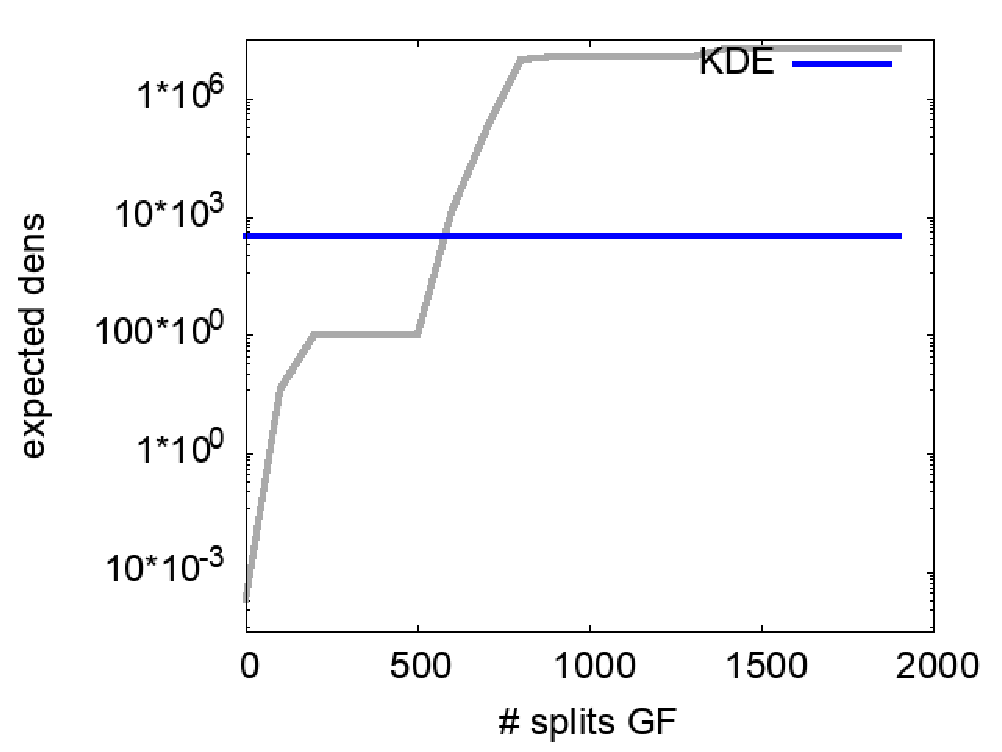} \hspace{\deltaad} & \hspace{\deltaad} \includegraphics[trim=0bp 0bp 0bp 0bp,clip,width=\troisbis\columnwidth]{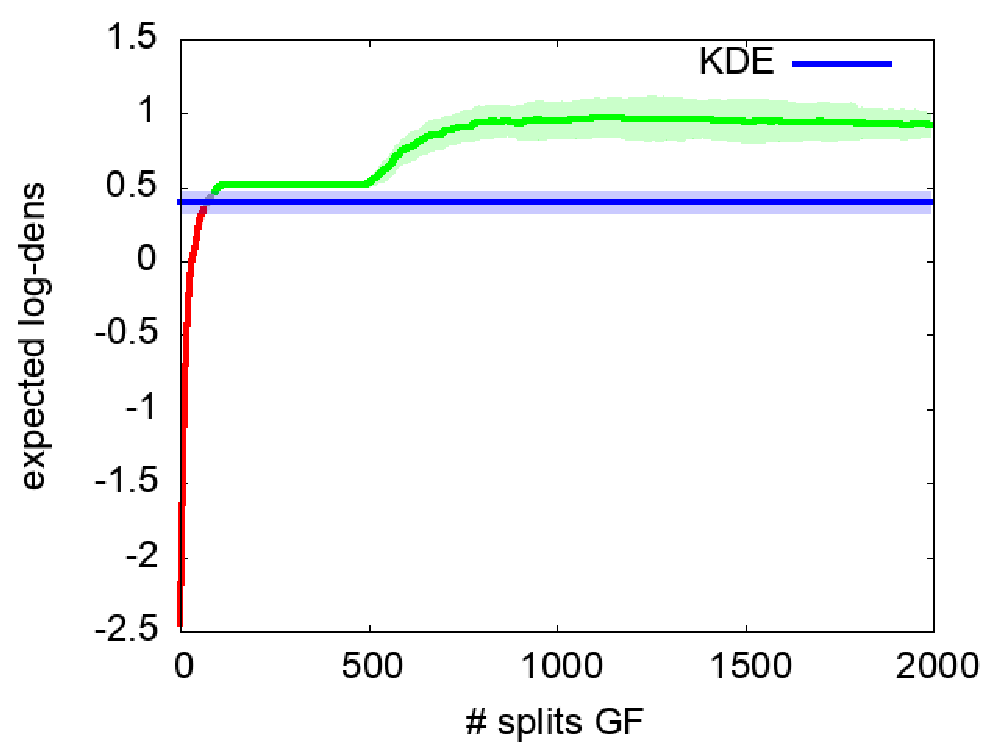} \hspace{\deltaad}  & \hspace{\deltaad}  \includegraphics[trim=0bp 0bp 0bp 0bp,clip,width=\troisbis\columnwidth]{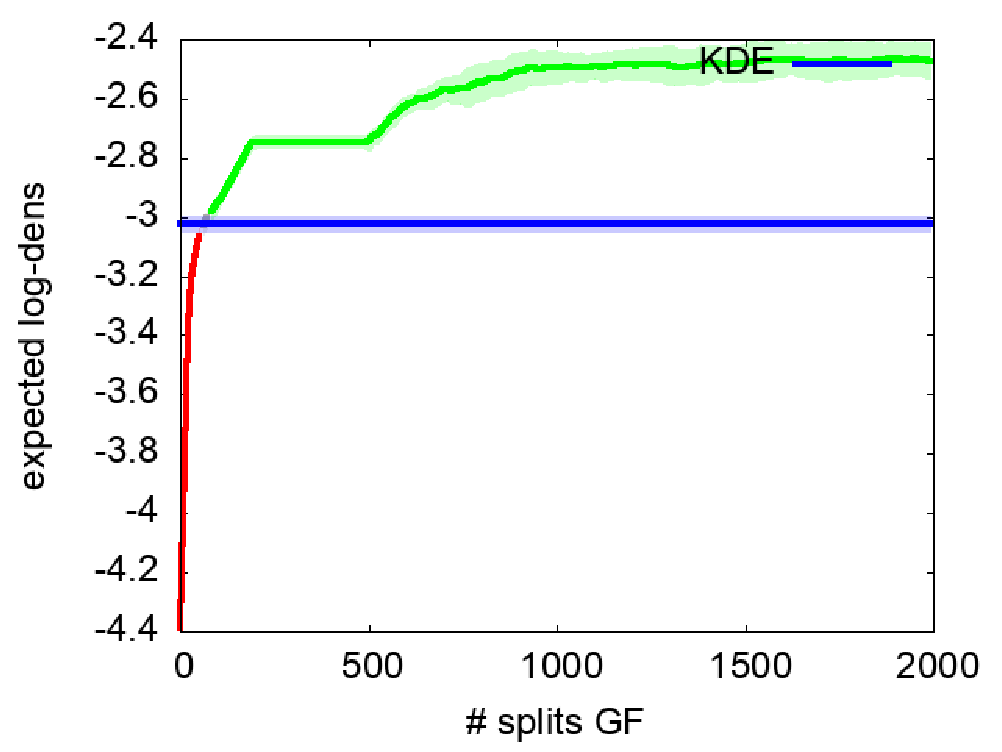}  \hspace{\deltaad} \\
  \hspace{\deltaad}  \domainname{iris} \hspace{\deltaad} & \hspace{\deltaad} \domainname{ringGauss} \hspace{\deltaad} & \hspace{\deltaad} \domainname{circGauss} \hspace{\deltaad} \\  \Xhline{2pt}
     \hspace{\deltaad} \includegraphics[trim=0bp 0bp 0bp 0bp,clip,width=\troisbis\columnwidth]{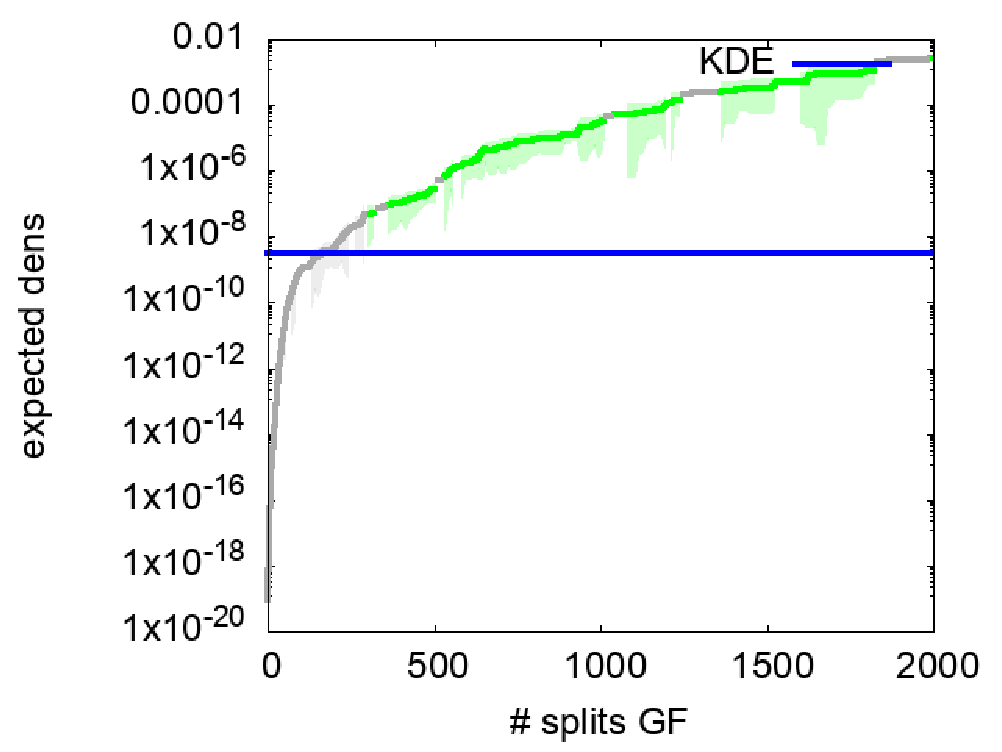} \hspace{\deltaad}  & \hspace{\deltaad}  \includegraphics[trim=0bp 0bp 0bp 0bp,clip,width=\troisbis\columnwidth]{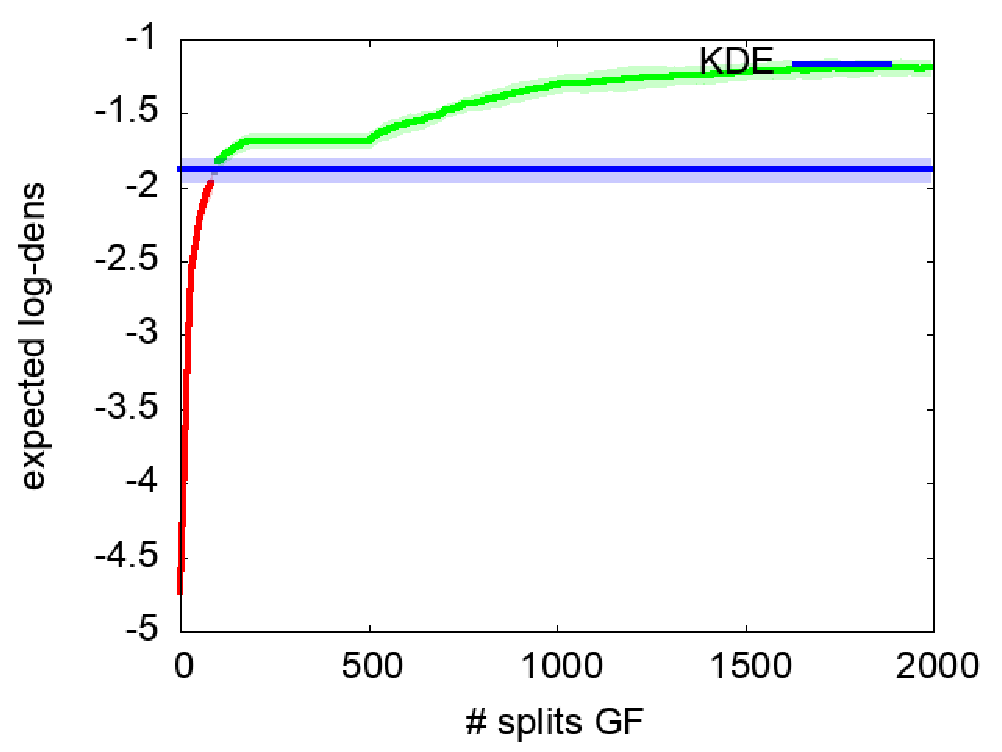}  \hspace{\deltaad} &  \hspace{\deltaad} \includegraphics[trim=0bp 0bp 0bp 0bp,clip,width=\troisbis\columnwidth]{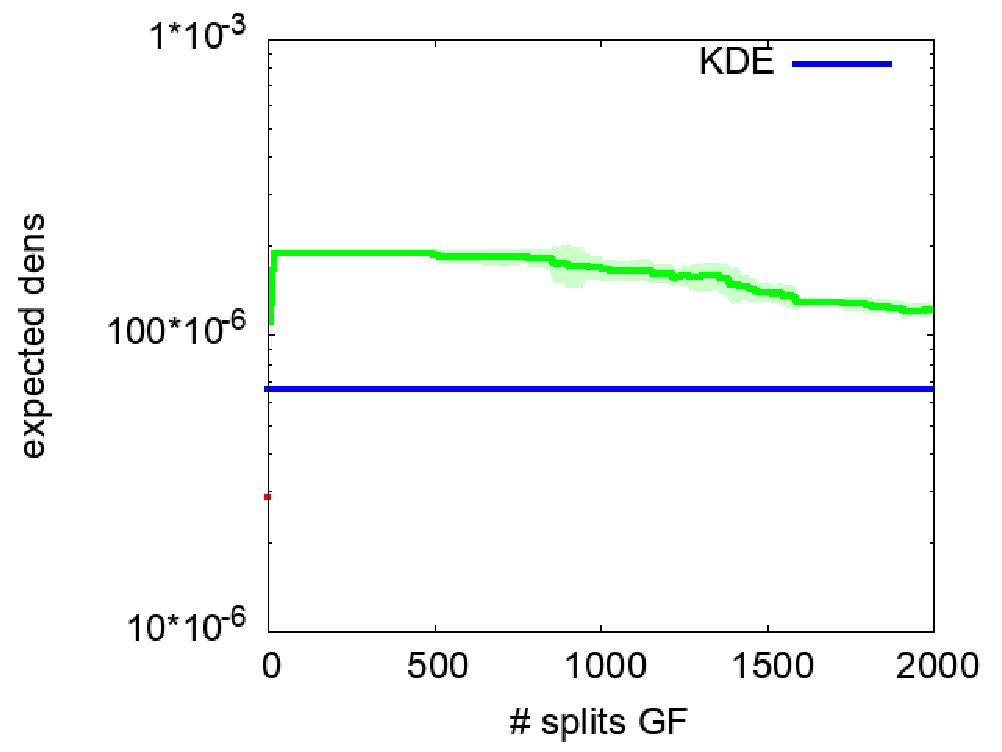} \hspace{\deltaad}\\
  \hspace{\deltaad}  \domainname{forestfires} \hspace{\deltaad} & \hspace{\deltaad} \domainname{randGauss} \hspace{\deltaad} & \hspace{\deltaad} \domainname{tictactoe} \hspace{\deltaad} \\  \Xhline{2pt}
     \hspace{\deltaad} \includegraphics[trim=0bp 0bp 0bp 0bp,clip,width=\troisbis\columnwidth]{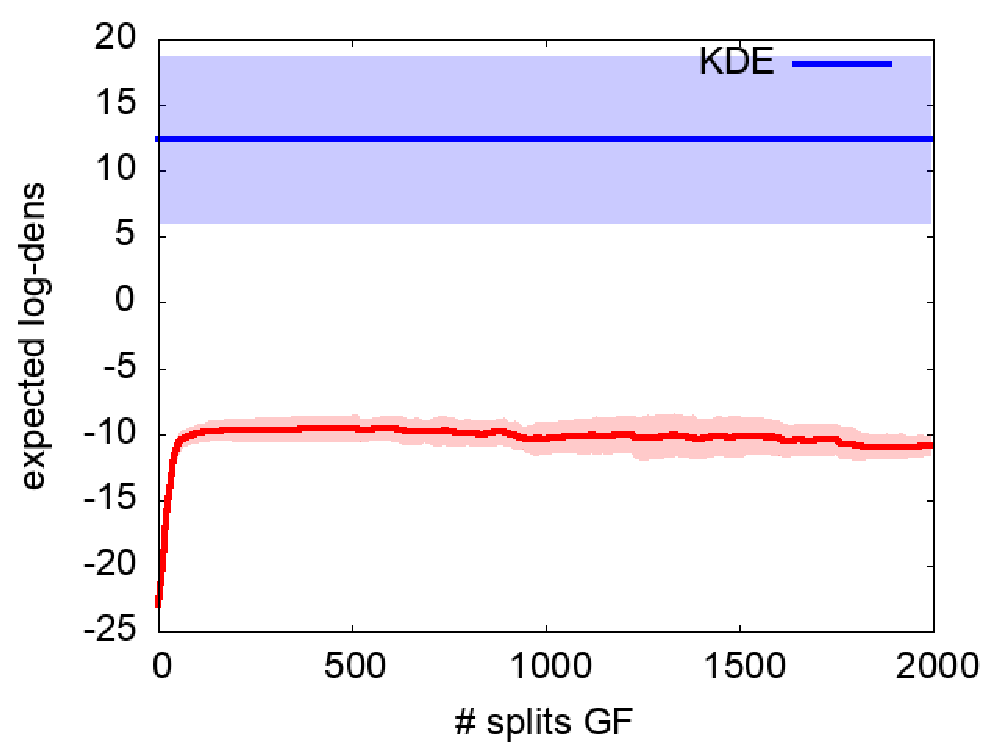} \hspace{\deltaad}  & \hspace{\deltaad}  \includegraphics[trim=0bp 0bp 0bp 0bp,clip,width=\troisbis\columnwidth]{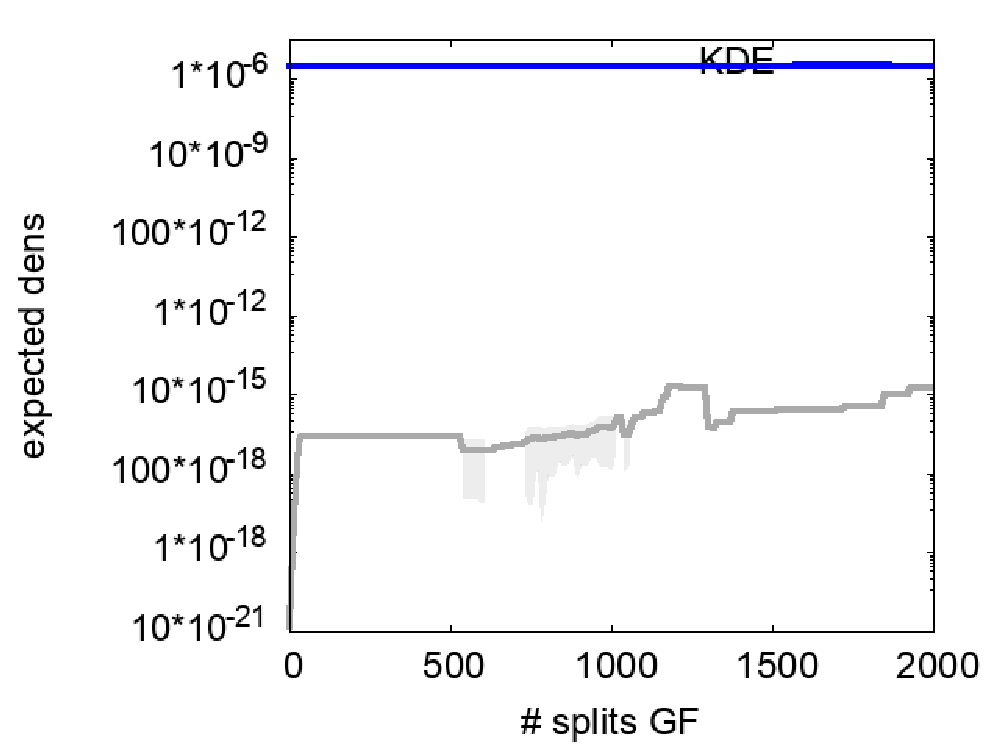}  \hspace{\deltaad} &  \hspace{\deltaad} \includegraphics[trim=0bp 0bp 0bp 0bp,clip,width=\troisbis\columnwidth]{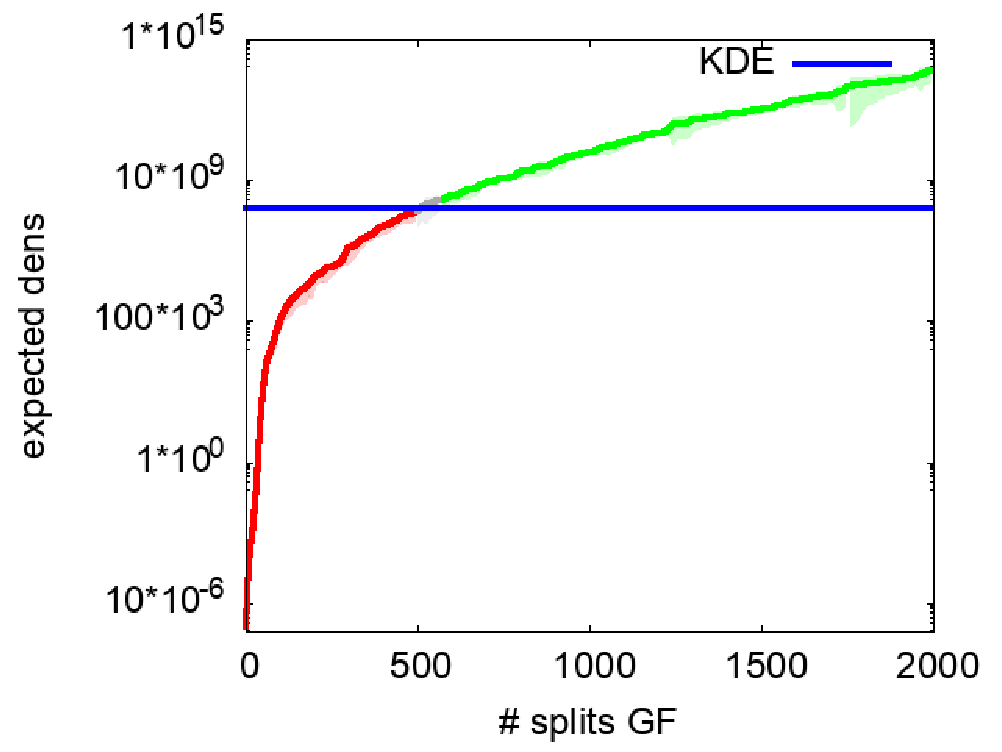} \hspace{\deltaad}  \\
  \hspace{\deltaad} \domainname{ionosphere} \hspace{\deltaad} & \hspace{\deltaad} \domainname{student$\_$performance$\_$mat} & \hspace{\deltaad}  \domainname{winered} \hspace{\deltaad}  \\  \Xhline{2pt}
  \end{tabular}
  \caption{Comparison of results for Kernel Density Estimation (KDE) and Generative Forests (us, \geot) with parameters $T=500$ trees, total of $J=2000$ splits in trees, on a first batch of domains, put in increasing size according to Table \ref{t-s-uci}. For each domain, we compute for \geot~at regular intervals the average $\pm$ standard deviation of either (i) density values or (ii) negative log density values for the test fold (sometimes, KDE gives 0 estimated density which prevents the computation of (ii)). Warning: some $y$ scales are logscale. We then compute the baseline of KDE and display its average $\pm$ standard deviation in blue on each picture. We perform, at each applicable iteration of \topdownGT~(\textit{i.e.} each iteration for many domains, which can be seen from the plots), a Student paired $t$-test over the five folds to compare the results with KDE. If we are statistically better ($p=0.1$), our plot is displayed in {\color{darkgreen} green}; if KDE is better, our plot is displayed in {\color{red} red}; otherwise it is displayed in {\color{Gray} gray}. When standard deviations are not displayed, it means average $\pm$ standard deviation exceeds the plot's $y$ scale.}
    \label{tab:gen-full-kde-1}
  \end{table}

  \begin{table}
  \centering
  \begin{tabular}{c|c|c}\Xhline{2pt}
     \hspace{\deltaad} \includegraphics[trim=0bp 0bp 0bp 0bp,clip,width=\troisbis\columnwidth]{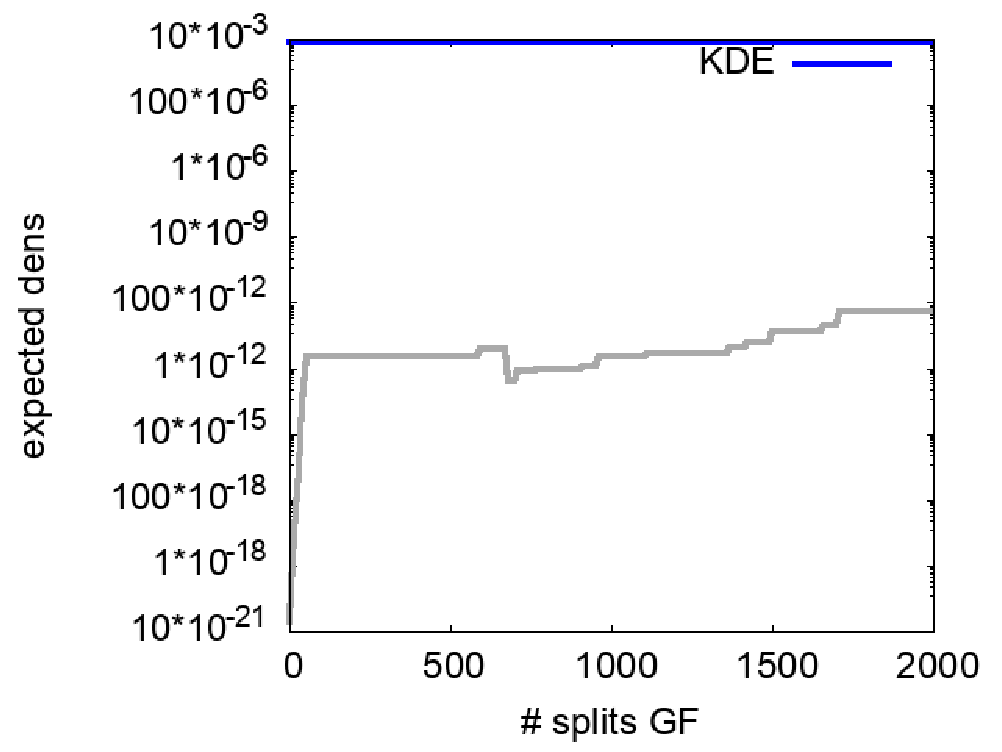} \hspace{\deltaad}  & \hspace{\deltaad}  \includegraphics[trim=0bp 0bp 0bp 0bp,clip,width=\troisbis\columnwidth]{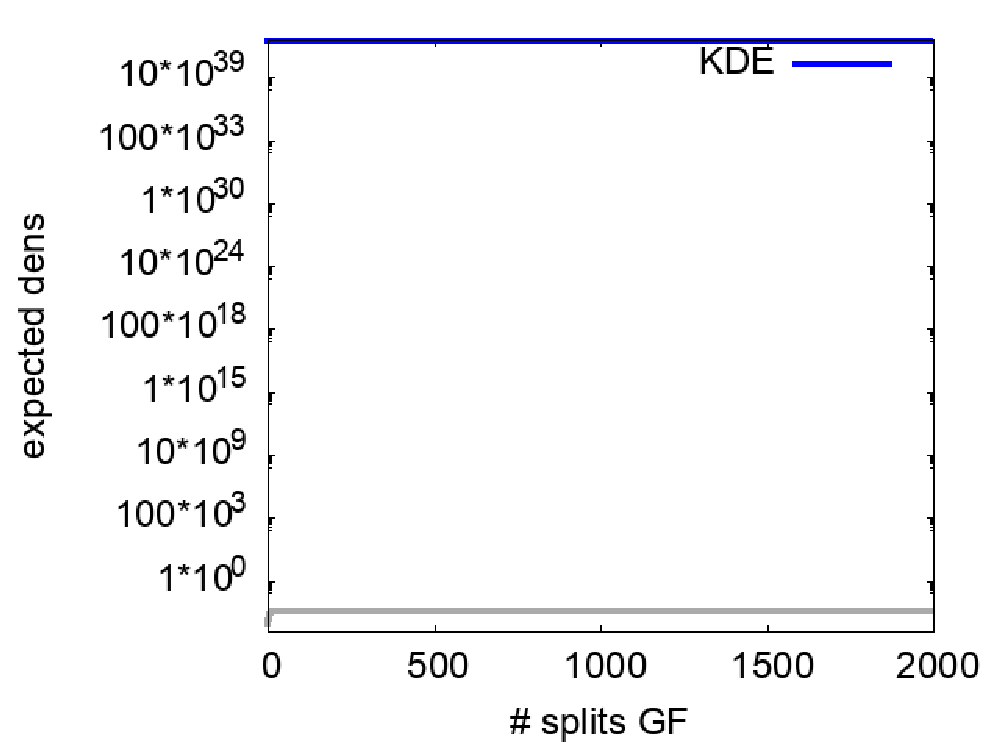}  \hspace{\deltaad} & \hspace{\deltaad} \includegraphics[trim=0bp 0bp 0bp 0bp,clip,width=\troisbis\columnwidth]{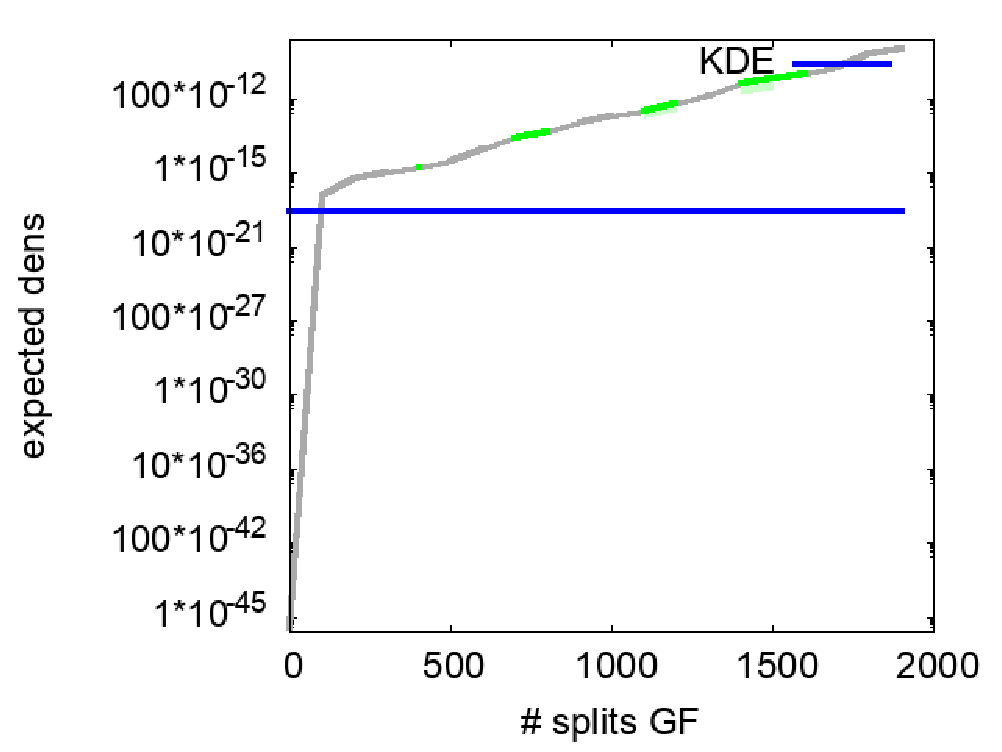} \hspace{\deltaad}  \\
  \hspace{\deltaad} \domainname{student$\_$performance$\_$por} \hspace{\deltaad} & \hspace{\deltaad} \domainname{analcatdata$\_$supreme} \hspace{\deltaad} & \hspace{\deltaad} \domainname{kc1} \hspace{\deltaad}  \\  \Xhline{2pt}
     \hspace{\deltaad}  \includegraphics[trim=0bp 0bp 0bp 0bp,clip,width=\troisbis\columnwidth]{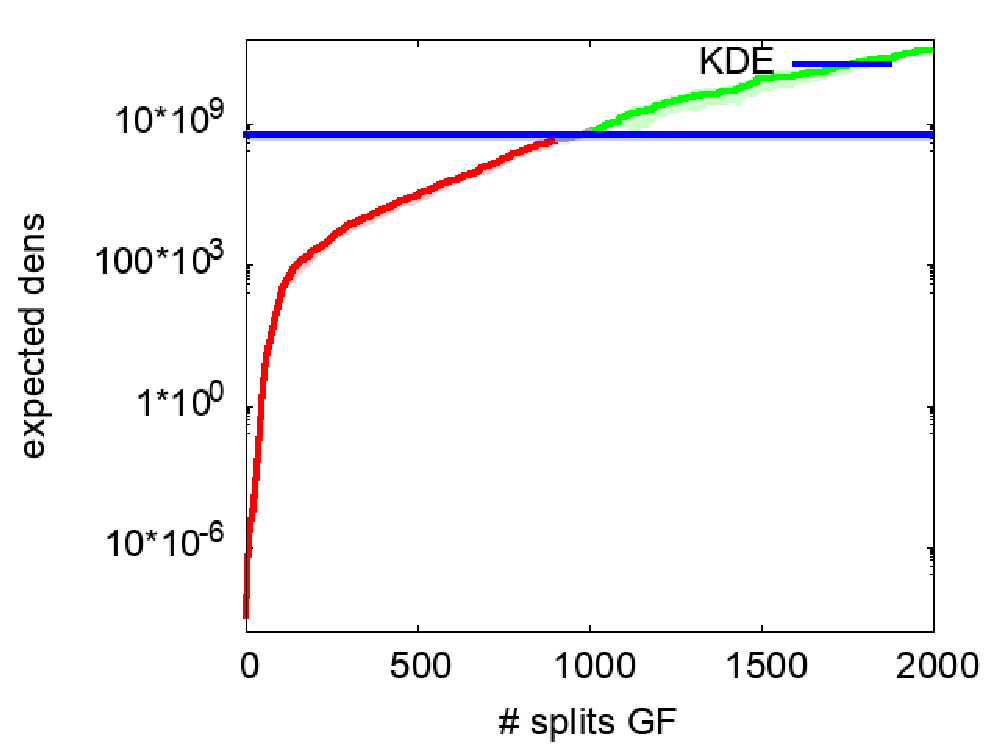}  \hspace{\deltaad} & \hspace{\deltaad} \includegraphics[trim=0bp 0bp 0bp 0bp,clip,width=\troisbis\columnwidth]{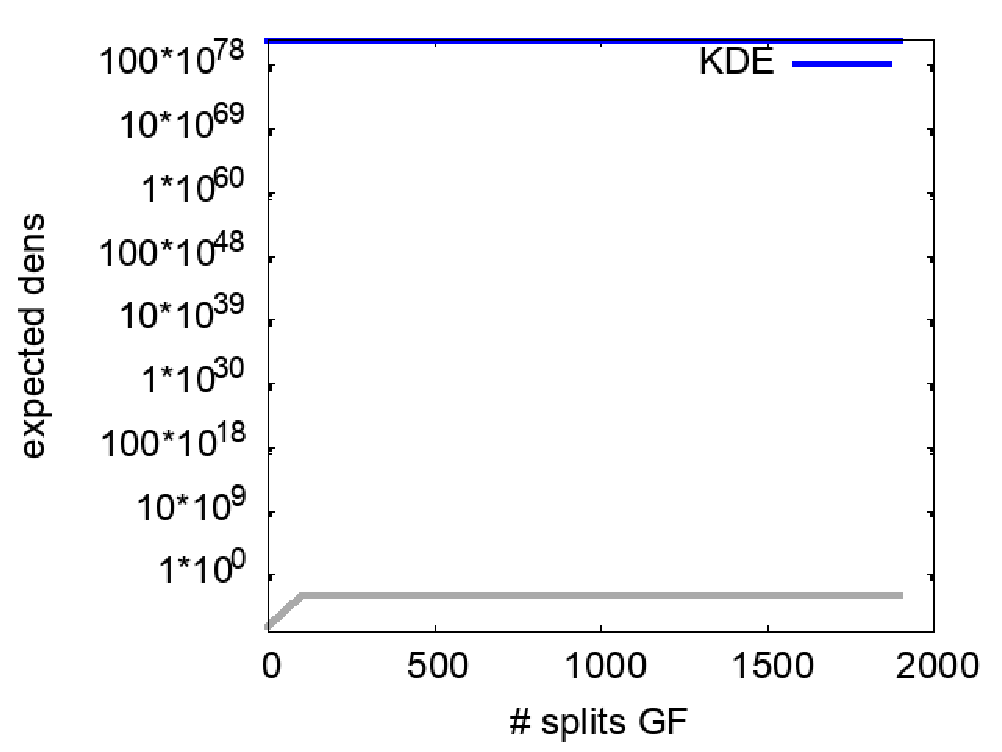} \hspace{\deltaad} & \hspace{\deltaad} \includegraphics[trim=0bp 0bp 0bp 0bp,clip,width=\troisbis\columnwidth]{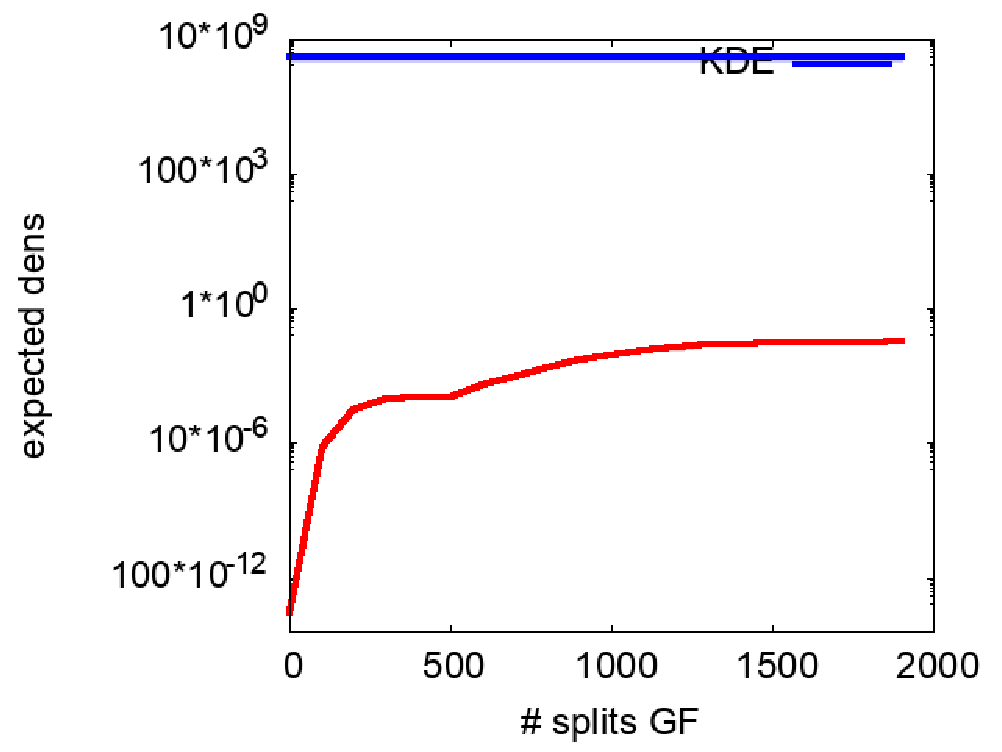} \hspace{\deltaad}  \\
  \hspace{\deltaad} \domainname{winewhite} \hspace{\deltaad} &  \hspace{\deltaad}  \domainname{compas} \hspace{\deltaad} & \hspace{\deltaad} \domainname{artificial$\_$characters} \hspace{\deltaad} \\  \Xhline{2pt}
  \end{tabular}
  \caption{Comparison of results for Kernel Density Estimation and Generative Forests (us, \geot) where the number of trees and number of iterations are ($T=500$ trees, total of $J=2000$ splits in trees), on a second batch of domains. Conventions follow Table \ref{tab:gen-full-kde-1}.}
    \label{tab:gen-full-kde-2}
\end{table}

\end{document}